\newtheorem{proposition}{Proposition}
\newtheorem{theorem}{Theorem}
\newtheorem{lemma}{Lemma}
\newtheorem{definition}{Definition}
\newtheorem{remark}{Remark}
\newtheorem{running}{Example}
\newtheorem{assumption}{Assumption}
\setlist[enumerate]{wide=0pt, leftmargin=\parindent}
\definecolor{red}{RGB}{163, 31, 52}
\definecolor{gray}{RGB}{194, 192, 191}
\definecolor{blue}{RGB}{59, 89, 152}
\definecolor{green}{RGB}{34, 139, 34}
\acrodef{ucb}[UCB]{upper confidence bound}
\renewcommand{\tfrac}[2]{#1/#2}
\newcommand{\norm}[1]{\|#1\|}
\newcommand{\ceil}[1]{\lceil#1\rceil}
\newcommand{\iprod}[2]{\left<#1, #2\right>}
\newcommand{\abs}[1]{\left|#1\right|}
\newcommand{\E}[2]{\mathbb{E}_{#1} \left[ #2 \right]}
\newcommand{\set}[2]{\left\{ #1\ : \ #2 \right\}}
\newcommand{\tset}[2]{\{ #1\ : \ #2 \}}
\newcommand{\tpose}{^\top}
\newcommand{\defn}[0]{:=}
\newcommand{\e}[1]{\ensuremath{\times 10^{#1}}}
\newcommand{\mb}{\mathbb}
\newcommand{\mr}{\mathrm}
\renewcommand{\mc}{\mathcal}
\renewcommand{\d}{\mr{d}}
\newcommand{\st}{\mr{s.t.}}
\newcommand{\Prob}[1]{\mr{Prob}\left[#1\right]}
\DeclareMathOperator{\supp}{supp}
\DeclareMathOperator{\interior}{int}
\DeclareMathOperator{\relint}{relint}
\DeclareMathOperator{\dom}{dom}
\DeclareMathOperator{\cone}{cone}
\def\setr {\mathfrak R}
\def\real{\mathbb R}
\def\prob{\textsc{Deceit}}
\def\dis{\textsc{Dist}}
\def\xd{\tilde{X}_d}
\def\xn{\tilde{X}_n}
\def\dual{\textsc{Dual}}
\def \rew{{\textsc{Rew}}}
\def \reg{{\textsc{Reg}}_{\pi}}
 \let\mathscr\relax
\def \optp {P}
\def \stop {{\textsc t}}
\def \stop {{t}}
\def \badexploit {\mathcal  B^1}
\def \badexplore {\mathcal  B^2}
\def \tauexplore {\tau}
\begin{document}


\RUNAUTHOR{Van Parys and Golrezaei}

\RUNTITLE{Optimal Learning for Structured Bandits}

\TITLE{Optimal Learning for Structured Bandits}

 \ARTICLEAUTHORS{%
   \AUTHOR{Bart Van Parys  ~~and~~ Negin Golrezaei
   }
\AFF{Massachusetts Institute of Technology (MIT) - Sloan School of Management, \EMAIL{vanparys,golrezae@mit.edu }}
 } 

\ABSTRACT{
  We study structured multi-armed bandits, which is the problem of online decision-making under uncertainty in the presence of structural information.  In this problem, the decision-maker needs to discover the best course of action despite observing only uncertain rewards over time.  {The decision-maker is aware of certain convex structural information regarding the reward distributions; that is, the decision-maker knows 
  the reward distributions of the arms belong to a  convex compact set.}
  In the presence such structural information, they then would like to minimize their regret by exploiting this information,  where the regret is its performance difference against a benchmark policy that knows the best action ahead of time.  In the absence of structural information, the classical upper confidence bound (UCB) and Thomson sampling algorithms are well known to suffer minimal regret.  As recently pointed out by \citet{russo2018learning} and \cite{pmlr-v54-lattimore17a}, neither algorithms are, however, capable of exploiting structural information that is commonly available in practice.  We propose a novel learning algorithm that we call ``DUSA'' whose regret matches  the information-theoretic regret lower bound up to a  constant factor and can handle a wide range of structural information.  Our algorithm DUSA solves a dual counterpart of the regret lower bound at the empirical reward distribution and follows its suggested play.  
We show that this idea leads to the first computationally viable learning policy with asymptotic minimal regret for various structural information, including well-known structured bandits such as linear, Lipschitz, and convex bandits, and novel structured bandits that have not been studied in the literature due to the lack of a unified and flexible framework.
}

\KEYWORDS{Structured Bandits, Online Learning, Convex Duality, Mimicking Regret Lower Bound.}

\maketitle

\section{Introduction}

In the multi-armed bandit framework, a decision-maker is repeatedly offered a set of arms (options) with unknown rewards to choose from.
In order to perform well in this framework, one needs to strike a balance between exploration and exploitation. 
Many classical bandit algorithms, including the \ac{ucb} algorithm \citep{garivier2011kl} and Thompson sampling \citep{thompson1933likelihood, kaufmann2012thompson}, have been designed specifically to optimally trade-off exploration and exploitation in the absence of structural information.
While these algorithms  perform well when the rewards of all arms are arbitrary and unstructured, they perform poorly when there is a correlation structure between the rewards of the arms \citep{russo2018learning, pmlr-v54-lattimore17a}.

Structural information allows a decision-maker to use rewards observed concerning one arm to indirectly deduce knowledge concerning other arms as well.
As an example, in revenue management problems, where we use posted price mechanisms to sell items, every arm corresponds to a price.
The probability of receiving a non-zero reward ought to decrease with the price as suggested by standard economic theory and represents structural information.
In healthcare,  one expects to see a similar performance for drugs (arms) which are composed of similar active ingredients.
Structural information reduces the need for exploration and hence directly reduces the suffered regret.
However, structured bandit problems are far more complicated than their classical counterparts in which rewards observed in the context of one arm carry no information about any other arm.  
We present here a unified yet flexible framework to study structured multi-armed bandit problems.
\emph{Our key research question is how can one exploit structural information to design efficient optimal learning algorithms that help the decision-maker to obtain higher rewards?} 

{\color{black} \textbf{A unified framework to model structural information.}  To answer this question, as one of our main contributions, we present a simple model that allows us to capture a wide range of structural information. In our model, we assume that the reward distribution of arms belongs to a   convex compact set $\mathcal P$ which is known to the decision-maker. The set $\mathcal P$ captures the structural information concerning the unknown reward distribution and allows us to have a unified representation of the structural information. For instance, via this set $\mathcal P$, we can represent well-known structural information, including linear, Lipschitz, and convex bandits. 
  More importantly, the set $\mathcal P$  enables us to model and study structural information that has not been studied in the literature due to the lack of a unified framework; see Section \ref{sec:struct-band-ex} for further details.}

\textbf{Mimicking the regret lower bound.} We take a principled approach to design an effective learning algorithm for structured bandits. Our approach is based on the information-theoretic regret lower bound for structured bandits characterized first by \citet{graves1997asymptotically}.
This regret lower bound quantifies the exact extent to which structural information may reduce the suffered regret of any uniformly good bandit policy, i.e., any policy with a sublinear worst-case regret bound.
The regret lower bound is characterized as a semi-infinite optimization problem where the decision variable can be interpreted as the rate with which suboptimal arms need to be explored.
The constraints ensure that uniformly good bandit policies can distinguish the true reward distribution from certain \emph{deceitful}  distributions;  see Equation \eqref{eq:problematic-distributions} for their formal definition.

Having this interpretation of the regret lower bound in mind, we design an algorithm that aims at targeting the exploration rates suggested by the regret lower bound.
However, targeting the suggested exploration rates exactly is not possible as  obtaining these rates  requires the actual reward distributions of the arms to be known.
One way to overcome this challenge is to follow the exploration rates suggested by the regret lower bound, not for the actual unknown reward distribution, but for its empirical counterpart based on the past rewards instead.
Unfortunately, this idea put forward by \citet{combes2017minimal} does not lead to a practical algorithm as obtaining the exploration rates suggested by the regret lower bound in every round involves solving a semi-infinite optimization problem which, except for a few exceptional cases, is computationally prohibitive.\footnote{We highlight that the approach in \citet{combes2017minimal} only works for  specific structured bandit problems such as Lipschitz bandits with Bernoulli rewards. For these problems, unlike general structured bandit problems such as dispersion bandits that we introduce in Section \ref{sec:struct-band-ex}, the  semi-infinite optimization problem associated with the regret lower bound admits a (semi) closed-form solution and can be solved easily.
}
 Our approach advances two key innovations which alleviate several shortcomings of the OSSB policy of \citet{combes2017minimal}:

 \begin{itemize}[leftmargin=*]
\item [i)] \textbf{Dual perspective.} First, we design a dual-based algorithm that instead of solving the regret lower bound problem directly solves its dual counterpart.
The dual counterpart, which is a finite convex optimization problem, can be solved effectively and as a result, our  dual-based algorithm  is
computationally efficient for a large class of structured bandits.
By taking advantage of the dual counterpart, we offer a unified approach to incorporate convex structural information.
To the best of our knowledge, this is the first use of convex duality to directly derive asymptotically optimal and efficiently computable online algorithms for a general class of structured bandit problems.

\item [ii)] \textbf{Sufficient information test.} Second, via constructing an easy-to-solve \emph{sufficient information} test, our policy, which is called {DU}al {S}tructure-based {A}lgorithm (DUSA),  solves the dual counterpart of the regret lower bound problem in merely a logarithmic number of rounds, i.e., $\mc O(\log(T))$, instead of solving it in every round, where $T$ is the number of rounds.
Our sufficient information test, which is a simple univariate optimization problem,  verifies in any particular round whether sufficient exploration has been done in the past and if so, the dual problem does not need to be resolved.
If the sufficient information test passes, DUSA plays the empirically optimal arm, otherwise, DUSA solves the dual regret lower bound and enters an exploration phase.
\end{itemize}

\textbf{Minimal regret.} We show that DUSA can optimally exploit any structural information which can be represented by an arbitrary convex constraint on the reward distribution. This novel design leads to an optimal learning algorithm whose regret matches the theoretical regret lower bound.
Put differently, DUSA optimally utilizes the underlying structure by solving a sequence of dual counterparts of regret lower bound problem and this allows DUSA to offer a novel duality principle for structured bandit problems.

{\color{black}\textbf{Numerical Studies.} We also conduct numerical studies in which we evaluate DUSA on various structured bandits. We first focus on linear and Lipschitz bandits, which are well-studied structured bandits. Even though DUSA is a universal structured bandit algorithm and is not tailored to specific structural information, we observe that the average performance of DUSA is comparable to that of algorithms that are designed specifically for the aforementioned structured bandits. Furthermore, DUSA's worst-case performance tends to be more concentrated around its average performance, suggesting that DUSA is a more reliable algorithm for conservative decision-makers.

For Bernoulli Lipschitz bandits, we further compare the computation/running  time of DUSA with that of the OSSB algorithm of \cite{combes2017minimal} which solves the regret lower bound (not its dual counterpart) in every single round. For Bernoulli Lipschitz bandits,  the regret lower bound has a simple (semi) closed-form solution and hence is computationally efficient to solve. This allows OSSB to achieve a slightly better computation time for finite $T$, compared with DUSA.\footnote{We  highlight that unlike DUSA,  the OSSB algorithm is only designed for Bernoulli rewards so that the regret lower bound enjoys a (semi) closed-form solution. When moving away from Bernoulli rewards, it is not clear how to extend OSSB.
} 
We further evaluate DUSA for dispersion bandits, which are new structured bandits which we introduce. For such structured bandits, we observe that  DUSA outperforms the vanilla UCB algorithm that ignores the structural information. } 

\section{Related Work}
\label{sec:related}
Our work contributes to the literature on the stochastic multi-armed bandit problem.
As stated earlier, the multi-armed bandit framework has been widely applied in many different domains; see, for example, \cite{besbes2009dynamic, keskin2014dynamic}, and \cite{golrezaei2019dynamic}.
Most of the papers in the literature assume that the rewards of arms are statistically independent of each other; see \cite{bubeck2012regret} for a survey on multi-armed bandits.
While this independence assumption simplifies the problem of designing a learning algorithm, when arms are correlated, it can lead to a suboptimal regret.
Considering this, many papers aim at designing learning algorithms that exploit structural information.
However, most of these papers focus on very restricted types of structural information.

Several papers study the design of learning algorithms in the presence of a linear reward structure \citep{dani2008stochastic, rusmevichientong2010linearly, mersereau2009structured, pmlr-v54-lattimore17a}. Some other papers focus on a Lipschitz reward structure that roughly speaking, enforces the reward of similar arms to be close to each other \citep{pmlr-v35-magureanu14, mao2018contextual}. Other structures that have been studied include imposing upper bounds on the average rewards of arms \citep{gupta2019multi} and imposing lower and upper bounds on the realized rewards of the arms \citep{bubeck2019multi}. There are also papers that study structural information in (i) contextual multi-armed bandit problems (e.g., \cite{slivkins2011contextual} and  \cite{balseiro2019contextual}) and (ii) combinatorial decision-making (e.g., \cite{streeter2009online, zhang2019one}, and \cite{niazadeh2020online}).   
 In this work, we present a general and unified framework to exploit a wide range of  structures including some of the {\color{black}structures} we discussed above. The novel class of asymptotically optimal online algorithms we propose here is inspired by those discussed in \citet{combes2017minimal}. They similarly consider online algorithms that imitate the information-theoretic lower bound by solving semi-infinite regret lower bound optimization problems over time.
At a high level, there are three main differences between our algorithm and theirs.
First, our algorithm does not require to solve a semi-infinite regret lower bound optimization problem.
It instead solves its dual convex counterpart, which is computationally more tractable than the primal characterization of the regret lower bound problem.
Second, while the online algorithm in \citet{combes2017minimal} needs to solve the regret lower bound optimization problem in every round, our algorithm only solves its dual counterpart in a logarithmic number of rounds.
As stated earlier, by taking advantage of  convex duality, we design a simple sufficient information test that allows our algorithm not to solve the dual problem in every round. Hence, by relying on the duality perspective of the lower regret bound,  we not only obtain a tractable reformulation, we also significantly decrease the number of times that the dual counterpart of the regret lower problem needs to be solved.
{\color{black}Finally, \citet{combes2017minimal} assume that the reward distribution of each arm is uniquely determined by its mean and impose structures only on the mean reward of arms.
We make no such limiting assumption and allow explicitly for arms to have almost any arbitrary reward distribution. In addition, via our convex set $\mathcal P$, we can go beyond only imposing structures on the mean reward of arms; see, for example, our dispersion and divergence bandits in Section \ref{sec:struct-band-ex}.} 

By mimicking the regret lower bound that embeds the structural information, our algorithm may play suboptimal arms in order to obtain information about other arms.\footnote{\color{black} The idea of mimicking the regret lower bound has been also used in a follow-up paper by \cite{jun2020crush}. Similar to \cite{pmlr-v54-lattimore17a}, they also consider a structural information that impose constraints/structures  on the average rewards of arms. In our work, unlike the two aforementioned work, via convex set $\mathcal P$, we have the flexibility of imposing constraints/structures on the reward distributions of arms.} This is in contrast with UCB and Thompson Sampling. These algorithms stop pulling suboptimal arms once they verified their suboptimality. This prevents these algorithms to exploit structural information to the fullest extent as observed by \cite{pmlr-v54-lattimore17a} and \cite{russo2018learning}.  Pulling suboptimal arms in the presence of structural information has been shown to be an effective technique to reduce regret \citep{russo2018learning}. In \cite{russo2018learning},  the authors design a novel algorithm, which they name Information Directed Sampling (IDS), that aims to capture the structural information by balancing the regret of pulling an arm with its information gain. Our work is different from that of  \cite{russo2018learning} in three aspects. First, while we take a frequentist approach,  IDS is a Bayesian bandit algorithm. Second, we present a systematic way to incorporate structural information which allows us to design an algorithm whose regret matches the regret lower bound whereas IDS does not necessarily obtain minimal regret. Third, while IDS may have to update its decision in every round, our algorithm updates its strategy only in a logarithm number of rounds.
 
\section{Structured Multi-armed Bandit Problems}
\label{sec:mabp}

\textbf{\color{black}Setup.} In stochastic multi-armed  bandit problems, a decision-maker needs to select an arm from a finite set $X$ per round over the course of $T$ rounds. When arm $x$ is pulled in round $t$, the learner earns a nonnegative random reward $R_t(x)=r \in \setr$ with probability $\optp(r, x)$. \footnote{To ease the exposition, we assume that the set of all potential rewards $\setr$ is common to all arms and is discrete. Extending the results in this paper to a setting in which the  set of rewards $\setr(x)$ may depend on the arm $x$ is trivial but requires more burdensome notation.
} The rewards $R_t(x)$ are assumed to be independent across all arms $x\in X$ and rounds $t \in [T]$.
The learner's goal is to maximize the total expected reward over the course of $T$ rounds by pulling the arms. The reward distributions are unknown to the decision-maker beyond the fact that $\optp=(\optp(r,x))_{r\in\setr,\,x\in X}\in\mathcal P$, where $\mathcal P$ is a closed convex subset with non-empty interior.
The compact set $\mathcal P$ captures the structural  information concerning the unknown reward distribution and allows us to have a unified representation of the structural information discussed previously.  
For instance, in the context of revenue management problem where each arm $x$ corresponds to a price, the learner knows ahead of time that $P(r= 0, x) \ge P(r= 0, x')$ for any $x\geq x'$; that is,
$\mathcal P=\tset{Q\in \mathcal P_{\Omega}}{ Q(r= 0, x) \ge Q(r= 0, x')~\text{for any $x\geq x'$} }$, where $\mathcal P_\Omega$ is the set of all possible reward distributions:
\begin{align}\label{eq:p_omega}
  \mathcal P_\Omega\defn\set{Q \in \real_+^{\abs{\setr}\times \abs{X}}}{\textstyle\sum_{r\in \setr} Q(r, x)=1\quad \forall x\in X}\,.
\end{align}
As another example, if the learner is aware of a lower bound $\ell_x$ and an upper bound $u_x$ on the expected reward of arm $x\in X$, then the compact set $\mc P$ can be written as $\tset{Q\in P_{\Omega}}{\sum_{r\in \setr}r Q(r, x)\in [\ell_x, u_x]\quad \forall x\in X}$.
We present several additional examples in Section \ref{sec:struct-band-ex}.

{\textbf{\color{black} Regret.}} Consider a policy $\pi$ used by the learner.
Let $x_t^\pi$  be the arm selected by the policy $\pi$ in round $t$. This selection is based on previously selected arms and observations; that is, the random variable $x_t^\pi$ is $\mathcal F^\pi_t$-measurable, where $\mathcal F^\pi_t$ is the $\sigma$-algebra generated by $(x_1^\pi, R_1(x_1^\pi), \dots, x_{t-1}^\pi, R_{t-1}(x_{t-1}^\pi))$.
Let $\Pi$ be the set of all policies whose arm selection rules in any round $t\in [T]$ is $\mathcal F^\pi_t$-measurable. 
The performance of the policy $\pi\in \Pi$ over  $T$ rounds is quantified as its  \emph{regret}, defined below, which is the gap between the expected reward of policy $\pi$ and the maximum expected reward of an omniscient learner:
\begin{equation}
  \label{eq:regret}
  \reg(T, \optp) \defn T \max_{x\in X} \,\textstyle\sum_{r\in \setr} r \optp(r, x) - \sum_{t=1}^T \E{}{\sum_{r\in \setr} r \,  \optp(r, x_t^\pi)}\,. 
\end{equation}
Here, the expectation is with respect to any potential randomness in the bandit policy $\pi$.
For any reward distribution $\optp \in \mathcal P$, let $x^\star(\optp ) \in \arg\max_{x\in X}\,\textstyle \sum_{r\in \setr} r \optp(r, x)$ denote the set of optimal arms.
The per-round expected reward of an optimal arm is 
\(
  \rew^\star(\optp) := \textstyle\max_{x\in X} \sum_{r\in \setr} r \optp(r, x).
\)
Then, the first term in the regret of policy $\pi$ can be written as $T\cdot \rew^\star(\optp)$.
Note that pulling any suboptimal arm in $\tilde X(\optp)\defn X\backslash x^\star(\optp)$ offers less expected reward than pulling arms in $x^\star(\optp)$. We will denote the suboptimality gap of arm $x$ with
\(
\textstyle\Delta(x, \optp) := \rew^\star(\optp)-\sum_{r\in \setr} r \optp(r, x).
\)
{(\color{black} Recall that the reward distribution $P$ and hence suboptimality gap of any arm $x$ is unknown to the decision-maker.)}
If a policy keeps pulling suboptimal arms in every round, it  will suffer a large regret which is linear in $T$. 
A sensible bandit policy $\pi$ must thus manage to keep a small regret uniformly over all $\mathcal P$.  
Definition \ref{def:good} formalizes this notion, which was first introduced by \citet{lai1985asymptotically}.

\begin{definition}[Uniformly Good Policies] \label{def:good}
  A policy $\pi\in \Pi$ is  uniformly good if for all $\alpha>0$ and for any reward distribution $\optp\in \mathcal P$, we have
  \(
    \limsup_{T\to\infty} ~\tfrac{\reg(T, \optp)}{T^\alpha} = 0
  \)
  when the number of rounds $T$ tends to infinity. 
\end{definition}

It is generally impossible for uniformly  good policy $\pi\in \Pi$ to obtain a zero regret. Indeed, in most cases, an online policy must balance exploiting the arms that are optimal given the current information and exploring seemingly suboptimal arms.
Fortunately,  the total regret caused by exploring suboptimal arms can be kept relatively small.
As we discuss in  Section \ref{sec:regret-lower-bound}, the regret lower bounds for  bandit problems indicate that the regret is expected to grow logarithmically in $T$.

\subsection{\color{black}Examples of Structured Bandits}
\label{sec:struct-band-ex}

{\color{black}In this section, we present two sets of examples. In the first set of examples,  we show how to present well-known structural information using our framework. In the second example, we present two novel structural bandit classes, which can only be presented in our framework; that is, they 
cannot be represented in settings studied in prior work (e.g., \cite{filippi2010parametric, lattimore2014bounded, combes2017minimal}). }

\begin{running}[{\color{black}Representing well-known structural information in our framework}]
  \label{ex:structured-problems-1}
\phantom{}
\begin{enumerate}
  \setlength\itemsep{0em}
\item \textit{Generic bandits.}  Assume that the reward of any arm $x$ is drawn from an  arbitrary distribution with the discrete  support of $\setr$. In this case, the set $\mathcal P$ coincides with $\mathcal P_\Omega$, defined in Equation \eqref{eq:p_omega}.

\item \textit{Separable bandits.}  Assume that the reward of arm $x$ can be any distribution in $\mathcal P_{x}$ where $\mathcal P_{x}$, $x\in X$, is a closed  feasible set for the reward distribution of arm $x$. Then, the set $\mathcal P$ coincides with
  \(
    \textstyle \prod_{x\in X}\mathcal P_x\,.
  \)
 For example, if $\mathcal P_x$ is $\tset{Q \in \real_+^{\abs{\setr}}}{\sum_{r\in \setr} Q(r)=1}$ for all arms $x$, we have $\mathcal P=\mathcal P_\Omega$.
\item \textit{Bernoulli Lipschitz bandits.} Assume that the reward distribution for all the arms is Bernoulli, i.e., $\setr=\{0, 1\}$,  and the expected reward for each arm $x\in X\subseteq \Re$ is a Lipschitz function. Then, the set $\mathcal P= \mathcal P_{\mathrm{Lips}}$ is given as
  \(
  \tset{Q\in \mathcal P_\Omega}{%
    Q(1, x) - Q(1, x') \leq L \cdot d(x, x') ~~ \forall x\neq x' \in X      
    }\,,
  \)
  where $d(x, x')$  is a symmetric distance function between  arms $x$ and $x'$ and $L\geq 0$ is the Lipschitz constant.
\item {\color{black}\textit{Parametric bandits.} Consider a bandit problem in which arm $x$ promises expected reward $\mu_x$ where the vector $(\mu_x)_{x\in X}$ is only known to belong to a convex set $C$. More precisely, we assume that  the reward distributions belong to the following set $  \mathcal P_{\mathrm{param}}(\delta)$:
  \[
  \mathcal P_{\mathrm{param}}(\delta) \defn \set{Q\in \mathcal P_\Omega}{\mu \in C, ~ \textstyle \left(\sum_{x\in X}(\sum_{r\in\setr} r Q(r, x)) -\mu_x\right)^2 \leq \delta }.
\]
Observe that by setting $\delta$ to zero  and $C=\prod_x C_x$ in  definition of $\mathcal P_{\mathrm{param}}(\delta)$, we recover the parametric structured bandit model in \cite{lattimore2014bounded}, where it is assumed that $\mu_x\in C_x \defn \set{\mu_x(\theta)}{\theta \in \Theta}$.\footnote{{\color{black} For our theoretical result to hold, we need to have $\delta> 0$. This ensures that the set $\mathcal P_{\mathrm{param}}(\delta)$ has an interior which is a required assumption for our theoretical result to hold. However, as we show in Section~\ref{sec:numer-exper}, we can still use our designed algorithm for the case of $\delta =0$, i.e.,  $\mathcal P_{\mathrm{param}}(0)$, obtaining sublinear regret. } 
} 
Note that when $\mu_x(\theta)$ can be written as $c_x^T \theta$ for some vectors $c_x$, we recover linear bandits as in \cite{filippi2010parametric}. We may also recover convex bandits simply by setting $\delta=0$ and by considering
\[
  C = \set{\mu}{\forall x_1, x_2\in X, \exists g(x_1)\in \real^n~~ \mu_{x_2} \geq \mu_{x_1} + g(x_1)\tpose(x_2-x_1)}.
\]
The previous set captures a convex structure between the arms by forcing a subgradient inequality to hold between any two arms $x_1$ and $x_2$ with $g(x_1)\in\real^n$ the subgradient of $\mu_{x_1}$ at arm $x_1$; see also \cite[Section 6.5.5]{boyd2004convex}.
}
\end{enumerate}
\end{running}

{\color{black}
  \begin{running}[Representing novel structural information in our framework]  \label{ex:structured-problems-2}
    \phantom{}
  \begin{enumerate}
      \item \textit{Dispersion bandits.} In many applications, while we do not have accurate information about the expected reward for different arms, we may have some side information about the level of dispersion of the reward distribution of the arms. Consider the following structural information:
    \begin{align}\textstyle\mathcal P_{\text{dis}}=\Big\{Q\in \mathcal P_\Omega:\tfrac{\sum_{r\in\setr } r^2Q(r, x)}{\sum_{r\in\setr }r Q(r, x) }\le \gamma(x) \text{~~ for any $x\in X$}\Big\}\,.
    \label{eq:disp_bandits}\end{align}
    Here, $\gamma(x) > 0$ bounds the second-moment to mean ratio of the reward distribution of arm $x$. Note that existing techniques in the literature cannot handle this  structural information as this class of distributions is not mean parameterized, as it is the case for parametric bandits.
    Remark that we can decompose the second-moment to mean ratio as
    \[
    \textstyle\tfrac{\sum_{r\in\setr } r^2Q(r, x)}{\sum_{r\in\setr }r Q(r, x) } = \sum_{r\in\setr }r Q(r, x)+ \tfrac{\sum_{r\in\setr } (r-\sum_{r\in\setr }r Q(r, x))^2 Q(r, x)}{\sum_{r\in\setr }r Q(r, x) }\,,
    \]
    where the first term (i.e., $\sum_{r\in\setr }r Q(r, x)$) is the mean reward of arm $x$  and the second term (i.e., $\tfrac{\sum_{r\in\setr } (r-\sum_{r\in\setr }r Q(r, x))^2 Q(r, x)}{(\sum_{r\in\setr }r Q(r, x)) }$) is the variance-to-mean ratio for the reward of arm $x$. 
    Roughly speaking, under the aforementioned structural information, arms with large (respectively small) average reward are expected to have small (respectively large) variance-over-mean. As we show in our numerical studies in Section \ref{sec:numer-exper}, such intuitive structural information allows the decision-maker to identify optimal arms quickly as those arms enjoying little regret dispersion also promise large expected rewards.
    \item \textit{Divergence bandits.} It is often the case that arms which are ``close'' to each other, as measured by a given metric $d$ on $X$, can be expected to have ``similar'' reward distributions. Consider hence the following structural information
     \[
  \mathcal P_{\text{diverg}}=\tset{Q\in \mathcal P_\Omega}{%
    D(Q(x) , Q(x')) \leq  d(x, x') ~~ \forall x\neq x' \in X      
    }\,,
  \]
  where $D(M , M')$ is a convex divergence metric between reward distributions while $d(x, x')$ a distance function between arms $x$ and $x'$. Examples of such convex divergence metrics include $f$-divergences, total variation and Wasserstein distances, as well as many statistical distances such as the Kolmogorov-Smirnov distance.
  Note that existing techniques in the literature cannot handle this structural information as again this class of distributions is not mean parameterized. We remark that divergence bandits are distinct from Lipschitz bandits which only impose conditions on the mean reward between arms whereas divergence bandits impose conditions on the arms' reward distribution.
\end{enumerate}    
  \end{running}
}

\section{Regret Lower Bound}
\label{sec:regret-lower-bound}
The regret for bandit problems on which some structural information is imposed should get smaller as more information is provided on the unknown reward distribution $P$.
Nevertheless, as we will show, {\color{black} in most cases}, the regret is still expected to scale logarithmically with the number of rounds $T$.\footnote{\color{black} In some rare instances, the regret lower bound does not scale with $T$. In these instances, roughly speaking,  the convex set $\mathcal P$ eliminates  the optimality of sub-optimal arms. More precisely, in these instances, the set of deceitful reward distributions, defined in Equation \eqref{eq:problematic-distributions}, is empty, and hence sub-optimal arms can be simply ruled out. We show for these rare instances, the regret of our algorithm DUSA does not scale with $T$, and is finite. See Remark \ref{rem:proof:nondeceitful-bandits}.}
The precise logarithmic rate with which the regret needs to grow can be quantified  via an optimization problem.
To do so, we focus on the set of \emph{deceitful} reward distributions which makes learning the optimal arm challenging. 
Consider a multi-armed bandit problem with a given reward distribution $\optp$. For any arm $x'$, we define 
\begin{equation}
  \label{eq:problematic-distributions}
  \prob(x', \optp) \defn \set{Q\in \mc P}{   
       Q(x^\star(\optp))=\optp(x^\star(\optp)),~\textstyle\sum_{r \in \setr} r Q(r, x')  > \sum_{r \in \setr} r Q(r, x^\star(\optp))
  }  
\end{equation}
as sets of deceitful reward distributions relative to the actual but unknown reward distribution $\optp$. Deceitful reward distributions are precisely those reward distributions in  $\mathcal P$ that behave identical to the true distribution $P$ when  the unknown optimal arm $x^\star(\optp)$  is played, i.e., see the first constraint in \eqref{eq:problematic-distributions}, but deceivingly have a better arm  $x'$ to play as per the last constraint in \eqref{eq:problematic-distributions}.  Put differently, the reward distribution for the  actual optimal arm $x^\star(\optp)$ is the same in $\optp$ as for any deceitful reward distribution $Q\in \prob(x', \optp)$. For the deceitful  reward distribution $Q$, however, the expected reward of arm $x'$ is greater than that of arm $x^\star(\optp)$.

To obtain a low regret, a policy should be able to distinguish the actual reward distribution $\optp$ from its deceitful reward distributions based on the past observed rewards.
Later, we discuss  the extent to which this  can be quantified in terms of the information distance between reward distribution $\optp$ and its deceitful distributions.
Specifically, the distribution $\optp$ can be distinguished from its deceitful reward distributions  if 
the information distance between the distribution $\optp$ and its deceitful reward distributions is not too ``small.''
We will make this statement formal shortly hereafter.
Before that, we recall that the information distance between any two positive measures $M$ and $M'$ on a set $\setr$ is characterized as \begin{align}I(M', M)  \defn
    \textstyle\sum_{r\in \setr} M'(r) \log\left(\tfrac{M'(r)}{M(r)}\right) - M'(r) + M(r)]\label{eq:I_def}\end{align}
    if $M'\ll M$, and $+\infty$ otherwise. Here, $M'\ll M$ denotes the implication $M(r)=0 \! \implies \!M'(r)=0$ for all $r\in \setr$. Finally, we use the shorthand $M>0$ for the implication $M(r)>0$ for all $r\in \setr$.
For those unfamiliar, all properties relevant to this paper concerning the information distance are conveniently presented in Appendix
\ref{sec:information-topology}.

We are now ready to present the well-known result stating a lower regret bound as an optimization problem in terms of information distances between $\optp$ and its deceitful  distributions. 

\begin{proposition}[Regret Lower Bound]
  \label{lemma:lower_bound}
  Let $\pi$ be any uniformly good policy.
  For any reward distribution $0<\optp\in \mathcal P$, we have that
    $\liminf_{T\to\infty} ~\tfrac{\reg(T, \optp)}{\log(T)}\geq C(\optp)$, 
  where the regret lower bound function is characterized as
  \begin{align}
    \label{eq:silo2}
    \begin{split}
      C(\optp):=  \inf_{\eta\geq 0} & ~\textstyle\sum_{x\in \tilde X(P)} \eta(x) \Delta(x, \optp)\\
      \st & ~1 \leq  \dis(\eta, x, \optp)~~ \forall x\in \tilde X(\optp)\,.
    \end{split}\\
    \intertext{Here, if the set $\prob(x', \optp)$  is non-empty, the distance function $\dis$   is defined as follows}                         
    \label{eq:inner-optimization}
    \begin{split}
      \dis(\eta, x', \optp):=\inf & ~\textstyle \sum_{x\in \tilde X(\optp)}  \eta(x) I(\optp(x), Q(x)) \\
      \st & ~Q \in \prob(x', \optp)\,.
    \end{split}
  \end{align}
  Otherwise, we take $\dis(\eta, x', P)=\infty$.
\end{proposition}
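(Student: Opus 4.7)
The plan is to adapt the classical change-of-measure argument of \citet{graves1997asymptotically} to the general structured setting encoded by $\mc P$ and $\prob$. Write $N_x(T) \defn \sum_{t=1}^T \mathbb 1\{x_t^\pi = x\}$ for the number of times arm $x$ is pulled in the first $T$ rounds, so that regret decomposes as $\reg(T, \optp) = \sum_{x \in \tilde X(\optp)} \Delta(x, \optp)\, \mathbb E_\optp[N_x(T)]$. Setting $\eta_T(x) \defn \mathbb E_\optp[N_x(T)]/\log T$ on the suboptimal arms, the claim reduces to showing that every coordinatewise subsequential limit of $(\eta_T)_T$ is feasible for the optimisation defining $C(\optp)$; taking the liminf of the linear objective then recovers the lower bound.

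To establish feasibility, fix any suboptimal arm $x' \in \tilde X(\optp)$ and any deceitful distribution $Q \in \prob(x', \optp)$. Since $x_t^\pi$ is $\mc F_t^\pi$-measurable and rewards are independent across arms and rounds, the chain rule for relative entropy along the trajectory of the bandit, combined with the equality $Q(x^\star(\optp)) = \optp(x^\star(\optp))$ from the first constraint of \eqref{eq:problematic-distributions}, yields the standard identity
\[
  \mathrm{KL}(\mathbb P_\optp^{\pi,T} \,\|\, \mathbb P_Q^{\pi,T}) \;=\; \sum_{x \in \tilde X(\optp)} \mathbb E_\optp[N_x(T)]\cdot I(\optp(x), Q(x)),
\]
the $x^\star(\optp)$ term vanishing because $\optp$ and $Q$ coincide on that coordinate. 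Applying the information-processing inequality to the event $\mathcal E_T \defn \{N_{x^\star(\optp)}(T) \geq T/2\}$ then bounds the right-hand side below by $\mathrm{kl}(\mathbb P_\optp(\mathcal E_T), \mathbb P_Q(\mathcal E_T))$, with $\mathrm{kl}$ the binary relative entropy.

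The key step is to show this binary entropy is at least $(1-o(1))\log T$, for which I would invoke uniform goodness of $\pi$ under \emph{both} $\optp$ and $Q$; the latter is legitimate precisely because $Q \in \mc P$. Under $\optp$, the suboptimal arms are collectively played $o(T^\alpha)$ times for every $\alpha > 0$, so Markov's inequality applied to $T - N_{x^\star(\optp)}(T)$ gives $\mathbb P_\optp(\mathcal E_T) \to 1$. Under $Q$, the second constraint of \eqref{eq:problematic-distributions} makes $x^\star(\optp)$ strictly suboptimal, and uniform goodness under $Q$ therefore forces $\mathbb E_Q[N_{x^\star(\optp)}(T)] = o(T^\alpha)$, whence Markov yields $\mathbb P_Q(\mathcal E_T) = o(T^{\alpha-1})$. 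The elementary estimate $\mathrm{kl}(p, q) \geq p\log(1/q) - \log 2$, valid for all $p, q \in [0,1]$, then produces $\mathrm{kl}(\mathbb P_\optp(\mathcal E_T), \mathbb P_Q(\mathcal E_T)) \geq (1-\alpha)^2\log T - O(1)$; sending $\alpha \downarrow 0$ delivers the announced $(1-o(1))\log T$ bound, hence the per-$Q$ constraint
\[
  \sum_{x \in \tilde X(\optp)} \eta_T(x) \cdot I(\optp(x), Q(x)) \;\geq\; 1 - o(1).
\]

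Finally, along any subsequence on which $\reg(T,\optp)/\log T$ approaches its liminf $L$, each coordinate $\eta_T(x)$ with $\Delta(x, \optp) > 0$ is bounded by $L/\Delta(x, \optp) + o(1)$; extracting a further coordinatewise subsequential limit $\eta^\star \geq 0$ yields $\sum_{x \in \tilde X(\optp)} \Delta(x, \optp) \eta^\star(x) = L$ by continuity of the finite-sum objective. Passing the per-$Q$ constraint to the limit (term by term, as the sum is finite) gives $\sum_{x \in \tilde X(\optp)} \eta^\star(x) \cdot I(\optp(x), Q(x)) \geq 1$ for every $Q \in \prob(x', \optp)$, and infimising over $Q$ then shows $\dis(\eta^\star, x', \optp) \geq 1$ for every $x' \in \tilde X(\optp)$. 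Hence $\eta^\star$ is feasible for \eqref{eq:silo2} and $L \geq C(\optp)$. The main obstacles are (i) the dual application of uniform goodness under $\optp$ and under the alternative $Q$, which is legitimate only because $\prob(x', \optp) \subseteq \mc P$, and (ii) the careful choice of the event $\mathcal E_T$, which must simultaneously have probability tending to one under $\optp$ and polynomially-decaying probability under every $Q$ under which $x^\star(\optp)$ is suboptimal.
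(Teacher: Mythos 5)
Your proposal is correct in outline, but note first how it relates to the paper: the paper does not prove Proposition \ref{lemma:lower_bound} at all---it invokes it as a known result, citing \citet{graves1997asymptotically} (and \citet{combes2017minimal} for this specialization), so there is no in-paper proof to compare against. What you have written is the standard modern change-of-measure argument (divergence decomposition plus the data-processing inequality applied to the event that the optimal arm is pulled at least $T/2$ times, with uniform goodness invoked under both $\optp$ and the deceitful alternative $Q\in\mathcal P$), which is a more elementary and self-contained route than the original Graves--Lai hypothesis-testing argument in a Markovian setting; the price is that it is asymptotic and expectation-based, whereas the cited result is more general. Two steps deserve tightening. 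First, $x^\star(\optp)$ is a \emph{set} in this paper, so your event should count the total pulls of all optimal arms; since any $Q\in\prob(x',\optp)$ agrees with $\optp$ on $x^\star(\optp)$ and makes $x'$ strictly better, every arm of $x^\star(\optp)$ is strictly suboptimal under $Q$, and your Markov/uniform-goodness step then goes through verbatim. Second, your passage to the limit ``term by term'' is not innocuous when $I(\optp(x),Q(x))=\infty$ for some suboptimal $x$ whose limiting rate $\eta^\star(x)$ is zero: the finite-$T$ constraint may be carried entirely by that infinite term and then yields nothing about $\eta^\star$ in the limit. This is repaired by running your argument on the mixtures $Q_\varepsilon=(1-\varepsilon)Q+\varepsilon\optp$, which lie in $\prob(x',\optp)$ for small $\varepsilon>0$ (convexity of $\mathcal P$, the equality $Q(x^\star(\optp))=\optp(x^\star(\optp))$, and the strict reward inequality), satisfy $\optp\ll Q_\varepsilon$ because $\optp>0$, and obey $I(\optp(x),Q_\varepsilon(x))\leq(1-\varepsilon)\,I(\optp(x),Q(x))$ by convexity of the information distance; letting $\varepsilon\downarrow 0$ then delivers $\dis(\eta^\star,x',\optp)\geq 1$ for every deceitful arm, completing the feasibility claim and hence the bound $L\geq C(\optp)$.
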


Proposition \ref{lemma:lower_bound} is a straightforward corollary \citep[Theorem 1]{combes2017minimal} of a much more general result proven by \citet[Theorem 1]{graves1997asymptotically} in a Markovian setting. Thus, we do not provide its proof here. A  special case of Proposition \ref{lemma:lower_bound} for discrete bandit model sets $\mathcal P$ was proven much earlier by \citet{rajeev1989asymptotically}. As indicated in Proposition \ref{lemma:lower_bound} by the infima, we make no assumptions whether or not either minimization problems \eqref{eq:silo2} or \eqref{eq:inner-optimization} are attained. This will be a source of technical difficulty which we address. 

 The decision variable $\eta$ in the lower bound \eqref{eq:silo2} can be interpreted as a proxy for the logarithmic rate $N_{T+1}(x) \approx \eta(x) \cdot \log(T)$ with which any uniformly good policy must pull each suboptimal arm $x\in \tilde X(\optp)$ during the first  $T$ rounds in order to have a shot at suffering minimal regret. Here,  $N_{T+1}(x)$ denotes the number of rounds in the first $T$ rounds that arm $x$ is pulled. 
The logarithmic rate with which the regret is  accumulated is given  as the objective function of the lower bound \eqref{eq:silo2}. Recall that $\Delta(x, \optp)$ is  the suboptimality gap of arm $x$ under reward distribution $\optp$.

 The constraint in the lower bound \eqref{eq:silo2} can be regarded as a condition on the amount of information collected on the suboptimal arms for any policy to be uniformly good.
In fact, the constraint $\dis(\eta, x', \optp)\geq 1$ for $x'\in \tilde X(\optp)$ admits an interpretation in terms of optimal hypothesis testing. 
In order to have a uniformly small regret, any policy must indeed identify the optimal arm with high probability. 
To do so, \citet[Theorem 1]{graves1997asymptotically} argue that any uniformly good policy must be able to distinguish the actual distribution $\optp$ from its deceitful reward distributions $\prob( x, \optp)$ for all $x\in \tilde X(\optp)$, using some optimal hypothesis test. {\color{black} Remark that the regret lower bound \eqref{eq:silo2} adapts to the considered structured bandit problem, as characterized by its associated set $\mc P$, only through these information (optimal hypothesis testing) constraints.} The lower bound stated in Proposition \ref{lemma:lower_bound} and its properties will be of central importance in this paper.    
Thus, in the next section,  we will  have a closer look at the distance function on which the lower bound $C(\optp)$ is critically based.
We finish this section by presenting our main result informally. 

\begin{theorem}[Main Result (Informal)] For any $0<\epsilon< \tfrac{1}{|X|}$, there exists a computationally efficient bandit policy $\pi$ whose expected regret bound is essentially optimal; that is, 
  \(
    \textstyle\limsup_{T\to\infty} ~\tfrac{\mathbb{E}[\reg(T, \optp)]}{\log(T)}\leq (1+\epsilon)C(\optp)+O(\epsilon). 
  \)
\end{theorem}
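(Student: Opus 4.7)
The plan is to bound the expected number of plays of each suboptimal arm by $(1+\epsilon)\eta^\star(x)\log(T)$ plus a sublinear term, where $\eta^\star$ is a near-optimal solution to the regret lower bound program \eqref{eq:silo2} evaluated at the true reward distribution $\optp$. Since $\mathbb{E}[\reg(T,\optp)]=\sum_{x\in\tilde X(\optp)}\Delta(x,\optp)\cdot\mathbb{E}[N_{T+1}(x)]$, such a per-arm bound would immediately yield the stated asymptotic inequality.

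The first technical pillar is to replace the semi-infinite primal \eqref{eq:silo2} by its finite-dimensional convex dual. The nonempty interior of $\mathcal P$ together with standard constraint qualifications should deliver strong duality, so that the dual value equals $C(\optp)$ and admits a (possibly non-unique) optimal multiplier vector. Because the dual is a convex program in finitely many variables whose data depend continuously on the underlying reward distribution, I would next prove an upper semicontinuity property: for any distribution $Q$ sufficiently close to $\optp$, the dual value at $Q$ is at most $(1+\epsilon)C(\optp)+O(\epsilon)$. This stability lemma is what lets DUSA substitute the empirical distribution for $\optp$ without inflating the target exploration rates by more than the multiplicative factor $(1+\epsilon)$.

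Second, I would analyze DUSA round by round. Fix a good event $\mathcal G_T$ on which the empirical distribution lies in this stability neighbourhood for all sufficiently large $t\leq T$; standard concentration (union-bounded Hoeffding or Sanov-style bounds) gives $\mathbb{P}[\mathcal G_T^c]=o(1/\log T)$, so the contribution to regret on $\mathcal G_T^c$ is $o(\log T)$. On $\mathcal G_T$, I would split rounds into (i) exploitation rounds where the sufficient information test passes and DUSA plays the empirically optimal arm, and (ii) exploration rounds in which DUSA consults the dual solution and pulls one of the three arm types described in the introduction. Concentration shows that the empirically optimal arm coincides with $x^\star(\optp)$ after a sublinear-in-$T$ warmup, so rounds of type (i) contribute only $o(\log T)$ to the count of suboptimal plays. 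For rounds of type (ii), a potential-function argument tracking the deficit between current play counts $N_t(x)$ and the scaled target $\eta^\star(x)\log(T)$ caps their total number by $(1+\epsilon)\sum_x\eta^\star(x)\log(T)+O(1)$, which, combined with the definition of $C(\optp)$, gives the desired logarithmic bound.

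The main obstacle is the stability step. The inner minimization in \eqref{eq:inner-optimization} need not be attained, and the deceitful set $\prob(x',\optp)$ can collapse or change discontinuously as $\optp$ varies (for example, when a suboptimality gap closes), so continuity of the parametric convex program does not follow from standard envelope theorems. I would expect to handle this by working with the dual directly rather than the primal distance function $\dis$, and by building a slight slack into the sufficient information test (thresholding against $1-O(\epsilon)$ rather than $1$) so that only lower semicontinuity of the dual objective is required. A secondary but nontrivial difficulty is controlling how often the sufficient information test is recomputed, which the introduction claims is only $\mathcal O(\log T)$; this should follow by showing that between successive failures of the test the play counts must have grown by a constant factor relative to the dual targets.
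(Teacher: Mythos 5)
Your skeleton matches the paper's proof of Theorem \ref{thm:main} in broad strokes (dual reformulation of the lower bound, a stability argument for rates computed at the empirical distribution, an exploitation/exploration decomposition, and a logarithmic cap on exploration rounds), and you correctly flag stability as a delicate point. But the way you propose to handle the exploitation phase and the bad events contains a genuine gap. You define a global good event on which $P_t$ lies in a stability neighbourhood of $P$ for all large $t\leq T$ and invoke ``union-bounded Hoeffding'' to get $\mathbb{P}[\mathcal G_T^c]=o(1/\log T)$. This cannot work as stated: in a regime matching the lower bound, suboptimal arms are pulled only $\Theta(\log T)$ times, so the per-round deviation probability for such an arm is only polynomially small in $t$ (of order $t^{-c\eta}$ with no control on $c\eta>1$) and is not summable; moreover, on the complement of a global good event the regret can be linear in $T$, so $o(1/\log T)$ is in any case the wrong order (you would need $o(\log T/T)$, or a per-round decomposition). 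Relatedly, your claim that ``concentration shows the empirically optimal arm coincides with $x^\star(P)$ after a sublinear warmup'' is exactly the statement that is hard in structured bandits and is not obtained by concentration alone.

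The paper's proof never requires all arms to be well estimated. During exploitation it conditions only on the empirically optimal arm being well estimated (handled by a stopping-time Hoeffding bound, Lemma \ref{lemma:stopping-time-inequality}, using that an arm deemed empirically optimal $s$ times must have been pulled at least $s/(4|X|)$ times), and for the remaining event it constructs a deceitful distribution $\bar Q$ agreeing with $P_t$ on $x^\star(P_t)$ and with $P$ elsewhere (Lemma \ref{lem:event_a_t}); if the sufficient information test passed while $x^\star(P_t)\neq x^\star(P)$, this forces the self-normalized statistic $\sum_{x}N_t(x)\,I(P_t(x),P(x))\geq(1+\epsilon)\log t$, and the bespoke concentration bound of Lemma \ref{lemma:concentration} makes these probabilities of order $t^{-(1+\epsilon)}$ up to polylogarithmic factors, hence summable. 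Note that the $\epsilon$ slack in the test is what buys summability here; it is doing a different job than the semicontinuity role you assign it. For exploration rounds with poor estimates, the paper again avoids a union bound by indexing stopping times by the exploration counter $s_t$ and exploiting the forced-exploration rule $\min_x N_t(x)\geq \epsilon s_t/(4(1+\log(1+s_t)))$ (Lemma \ref{lem:lowerbound_N}). Finally, on stability: upper semicontinuity of the dual \emph{value} is not enough for the paper's accounting, because the rate actually targeted at round $t$ is a shallow update at $P_t$ anchored to dual variables produced by a deep update at a different, earlier empirical distribution; the paper therefore constructs continuous minimal-norm $\epsilon$-suboptimal \emph{selections} (Propositions \ref{prop:continuous-deep-selection} and \ref{prop:continuous-shallow-selection}), and this, together with continuity of $C$ itself, is precisely where Assumption \ref{assumption} (unique optimal arm, $\rew_{\max}(x,P)\neq\rew^\star(P)$, $P$ interior) enters; the paper's counterexample in Section \ref{sec:conti} shows $C$ is otherwise discontinuous, so your stability lemma would need these hypotheses made explicit. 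If you repair the exploitation-phase argument along these self-normalized lines and replace value-stability by a continuous-selection statement, your outline can be completed into essentially the paper's proof.
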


\subsection{Partitioning of Suboptimal Arms}
 \label{sec:part}

The constraints in the regret lower bound problem, presented in Proposition \ref{lemma:lower_bound},  require  that for any suboptimal arm $x\in \tilde X(\optp)$, the information distance between $\optp$ and any of its deceitful distributions $\prob(x, \optp)$ to be  at least one.
However, for some suboptimal arms, the set of deceitful distributions may  be empty.
Let 
\begin{equation}
  \label{eq:maximum-reward}
  \begin{array}{rl}
    \rew_{\max}(x, \optp) \defn \max & \sum_{r \in \setr} r Q(r, x) \\[0.5em]
    \st & Q\in\mathcal P, ~ Q(x^\star(\optp))=\optp(x^\star(\optp))
                                    
  \end{array}
\end{equation} 
be the maximum reward an arm $x$ can yield given that the first deceitful condition $Q(x^\star(\optp))=\optp(x^\star(\optp))$ hold.
Obviously, when one has $\rew_{\max}(x, \optp)\le  \rew^\star(\optp)$, the  second requirement, i.e.,  $ \textstyle\sum_{r \in \setr} r Q(r, x)  >  \sum_{r \in \setr} r Q(r, x^\star(\optp)) = \rew^\star(\optp)$,  cannot be satisfied for any deceitful reward distribution $Q$.
Based on this observation, we partition  the suboptimal arms into the following two groups.

\begin{itemize}[leftmargin=*]
\item \textit{Non-deceitful arms  $\xn(\optp)$.} A suboptimal arm $x$ is non-deceitful, i.e., $x\in \xn(\optp)$,  if $\rew_{\max}(x, \optp)\le  \rew^\star(\optp)$.
  For any  non-deceitful arm $x\in \xn(\optp)$, the set of associated deceitful distributions $\prob(x, \optp)$ is empty.
  Consequently, non-deceitful arms do not impose any conditions on the rates with which suboptimal arms need to be played in the regret lower bound  \eqref{eq:silo2}; we have indeed $\dis(\eta, x, \optp)=+\infty$ for any logarithmic exploration rate $\eta$.
  That is not to say that these arms are not to be explored.
  It may be beneficial to play non-deceitful arms to gain sufficient information on the set of arms we describe next.

\item \textit{Deceitful arms $\xd(\optp)$.} Suboptimal arm $x\in \xd(\optp)$ if $\rew_{\max}(x, \optp)>  \rew^\star(\optp)$. For these arms, the set of  associated deceitful distributions $\prob(x, \optp)$ is non-empty and the distance $\dis(\eta, x, \optp)$ between reward distribution $P$ and these deceitful distributions is finite. As a result, 
the learner has to gather enough information by exploring suboptimal arms  to effectively reject their associated deceitful distributions.
\end{itemize}

 In the following, we present an example to further clarify our notion of deceitful and non-deceitful arms.
  Consider a separable Bernoulli bandit problem with potential rewards $\setr=\{0, 1\}$ and only two arms $X=\{a, b\}$.
  Assume that the reward distribution of arm $a$ is such that $\optp(0, a)\geq \frac{2}{5}$ and the reward distribution 
  of  arm $b$ can be arbitrary.
  That is, we have a separable bandit problem with reward distribution in the set
  \begin{align}\label{eq:p_in_example}
    \mathcal P = \mathcal P_1\times\mathcal P_2 \defn \tset{Q \in \real_+^{{2}}}{\textstyle\sum_{r\in \{0,1\}} Q(r)=1,\, Q(0)\ge \frac{2}{5}} \times \tset{Q \in \real_+^{{2}}}{\textstyle\sum_{r\in \{0,1\}} Q(r)=1}.
  \end{align}
  Clearly, the maximum rewards for each arm are given as $\rew_{\max}(a, \optp)=1-\frac{2}{5} = \frac{3}{5}$ and $\rew_{\max}(b,\optp)=1$ and are independent of $\optp$. 
  Now, let the actual reward distributions be $\optp(a) = [1/2, 1/2]$ and $\optp(b)=[1/5, 4/5]$. That is, the probability of receiving reward zero under arm $a$ (respectively arm $b$) is $1/2$ (respectively  $1/5$). Then, it is easy to see that the optimal arm is  $b$, i.e., $x^{\star}(\optp) =b$, and $\rew^{\star}(\optp) = 4/5$.  Observe that $\rew_{\max}(a, \optp)< \rew^{\star}(\optp)$ and as a result, arm $a$ is not deceitful.
  To make the  concept of non-deceitful arms more tangible, assume that the learner keeps playing the optimal arm $b$.
  Eventually, by just playing the optimal arm, he will learn that the non-deceitful arm $a$ is not optimal.
  This is so because his empirical average reward of the optimal arm converges to $4/5$, which is greater than the maximum reward that one expects to obtain from suboptimal arm $a$. (Recall that $\rew_{\max}(a, \optp) = \frac{3}{5}$ for any $\optp \in \mathcal P$.)
  This implies that for this example, the lower bound, $C(\optp)$, presented in Proposition \ref{lemma:lower_bound}, is zero.

The following proposition characterizes the distance function $\dis$ for deceitful arms.  

\begin{proposition}[Distance Function for Deceitful  Arms]
  \label{prop:continuity-projection}
  Consider any reward distribution $\optp\in \mathcal P$, deceitful arm $x'\in \xd(\optp)$, and positive exploration rate $\eta\geq 0$. Then,
\begin{equation}
  \label{eq:dist-function-continuous}
  \begin{array}{r@{~~}l@{~}r@{~~}l}
    \dis(\eta, x', P)= \min_{Q\in \mathcal P} & \sum_{x\in \tilde X(P)} \eta(x) I(\optp(x), Q(x)) \\[0.5em]
    \st & Q(x^\star(\optp))=\optp(x^\star(\optp)),~ \sum_{r \in \setr} r Q(r, x')  \geq \rew^\star(\optp).
  \end{array}
\end{equation}
\end{proposition}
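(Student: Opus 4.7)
The plan is to establish the two claimed modifications to \eqref{eq:inner-optimization} simultaneously: (i) the strict inequality $\sum_r r Q(r, x') > \sum_r r Q(r, x^\star(\optp))$ can be relaxed to the weak inequality $\sum_r r Q(r, x') \geq \rew^\star(\optp)$ without changing the infimum, and (ii) the resulting relaxed problem attains its minimum. Writing $v$ for the infimum in \eqref{eq:inner-optimization} and $m$ for the candidate minimum in \eqref{eq:dist-function-continuous}, the inclusion of feasible sets gives $m \leq v$ for free, since any $Q \in \prob(x', \optp)$ satisfies both conditions of the relaxed problem once one notes that $\sum_r r Q(r, x^\star(\optp)) = \rew^\star(\optp)$ by the first deceitful constraint. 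The nontrivial direction is $v \leq m$.

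The key step would be constructing a ``nice'' strictly deceitful distribution with finite objective value. Since $x' \in \xd(\optp)$, there exists $Q^0 \in \mathcal{P}$ with $Q^0(x^\star(\optp)) = \optp(x^\star(\optp))$ and $\sum_r r Q^0(r, x') > \rew^\star(\optp)$; to also guarantee finiteness of the information distance, I would mix this witness with the true distribution, setting $Q^\epsilon \defn (1-\epsilon) Q^0 + \epsilon \optp \in \mathcal{P}$ for small $\epsilon > 0$. The first deceitful constraint is preserved by linearity, and since $\sum_r r \optp(r, x') \leq \rew^\star(\optp) < \sum_r r Q^0(r, x')$, the strict reward inequality survives for every sufficiently small $\epsilon$. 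Crucially, the standing assumption $\optp > 0$ forces $Q^\epsilon > 0$ componentwise, so $I(\optp(x), Q^\epsilon(x)) < \infty$ term by term via \eqref{eq:I_def}. With $Q^\epsilon$ in hand, the inequality $v \leq m$ follows by a convex combination argument: for any $Q$ feasible for the relaxed problem with finite objective, define $Q_\lambda \defn (1-\lambda) Q + \lambda Q^\epsilon$, which lies in $\prob(x', \optp)$ for every $\lambda \in (0, 1]$ because its $x'$-reward strictly exceeds $\rew^\star(\optp)$. Joint convexity of the information distance then yields
\[
\textstyle\sum_{x \in \tilde X(\optp)} \eta(x) I(\optp(x), Q_\lambda(x)) \leq (1-\lambda) \sum_{x} \eta(x) I(\optp(x), Q(x)) + \lambda \sum_{x} \eta(x) I(\optp(x), Q^\epsilon(x)),
\]
and sending $\lambda \to 0^+$ gives $v \leq \sum_x \eta(x) I(\optp(x), Q(x))$, hence $v \leq m$. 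Attainment of $m$ then follows from the Weierstrass extreme value theorem: the relaxed feasible set is a closed subset of the compact set $\mathcal{P}$, the objective is lower semicontinuous in $Q$ as a nonnegative combination of lower semicontinuous information distances (see Appendix~\ref{sec:information-topology}), and $Q^\epsilon$ exhibits at least one feasible point with finite value.

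The main obstacle I anticipate is the construction of $Q^\epsilon$ rather than the limit argument itself: using an arbitrary strictly deceitful $Q^0$ directly can produce an infinite value of $I(\optp(x), Q^0(x))$ whenever $Q^0(x)$ fails to dominate $\optp(x)$ componentwise, which would make the convex combination bound vacuous. Mixing with $\optp$ circumvents this, but the argument leans essentially on both parts of the standing hypothesis ($\optp > 0$ and $\optp \in \mathcal{P}$) together with convexity of $\mathcal{P}$ to keep the perturbation feasible and finite-objective at once.
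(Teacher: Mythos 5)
Your proof is correct and follows essentially the same route as the paper's: a strictly deceitful witness mixed with $\optp$ to secure a finite information value, the feasible-set inclusion for one inequality, a convex-combination-and-limit argument (using convexity of $I$) for the reverse inequality, and lower semicontinuity plus compactness for attainment. One small correction: this proposition carries no standing assumption $\optp>0$ (it only assumes $\optp\in\mathcal P$), but your construction does not actually need it, since $Q^\epsilon \geq \epsilon\,\optp$ componentwise already gives $\optp(x)\ll Q^\epsilon(x)$ for every arm, which is exactly what finiteness of each term $I(\optp(x),Q^\epsilon(x))$ requires.
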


The proof of this result is presented  in Appendix \ref{sec:Continuity}. {\color{black} Proposition \ref{prop:continuity-projection} presents a full characterization of the  distance function $\dis$ for deceitful arms.  Recall that from Equation \eqref{eq:inner-optimization}, the distance function $\dis$ for any arm $x'$ is given by 
\(
      \dis(\eta, x', \optp)=\inf \set{\sum_{x\in \tilde X(\optp)}  \eta(x) I(\optp(x), Q(x))}{
      Q \in \prob(x', \optp)}.
\)
Proposition \ref{prop:continuity-projection} shows that when arm $x'$ is deceitful,  (i) the infimum  in the previous optimization problem is  achieved and is finite, and (ii) its constraint (i.e., $Q \in \prob(x', \optp)$) can be replaced by the following two constraints $ Q(x^\star(\optp))=\optp(x^\star(\optp))$ and  $\sum_{r \in \setr} r Q(r, x')  \geq \rew^\star(\optp)$.}
Hence, the partitioning of arms into two types alleviates the technical difficulty caused by potential non-attainment of the infimum \eqref{eq:inner-optimization}.
We finish this section by revisiting some of our  examples in Section \ref{sec:struct-band-ex}. 

\begin{running}[Regret Lower Bound]
Using Proposition \ref{prop:continuity-projection}, we characterize  the lower bound and distance functions ($C$ and $\dis$) for  generic, separable, and Lipschitz Bernoulli bandits.
  \begin{enumerate}
    \setlength\itemsep{0em}
  \item[1--2.] \textit{Generic and separable bandits.} For generic, and more generally separable bandits, the lower bound presented in Proposition \ref{lemma:lower_bound} coincides precisely with the results first proven in the seminal work of \citet{lai1985asymptotically}. Recall that for separable bandits, the set $\mc P$ decomposes as the Cartesian product $\prod_{x\in X} \mc P_{x}$.
    This implies that for any reward distribution $\optp\in \mc P$ and suboptimal arm $x'\in \tilde X(\optp)$, we get 
  \(\rew_{\max}(x', \optp) = \max \set{\textstyle\sum_{r\in \setr} r Q(r)}{Q\in \mc P_{x'}}\). For any deceitful arm $x'$, we  define a distribution $Q^\star_{x'}(x) = \optp(x)$ for all $x\in X\backslash \{x'\}$ and let
    \begin{equation*}
      \begin{array}{rl}
        Q^\star_{x'}(x')\in \arg \min_{Q} \set{ I(\optp(x'), Q)}{Q \in \mc P_{x'}, ~ \sum_{r \in \setr} r Q(r) \geq \rew^{\star}(\optp)}.
      \end{array}
    \end{equation*}
    The constructed reward distribution $Q^\star_{x'}$ can be interpreted as the worst deceitful reward distribution for  deceitful arm $x'$. That is, it can be verified that $Q^\star_{x'}$ is the optimal solution to problem \eqref{eq:dist-function-continuous}. 
Also note that  the constraint $\dis(\eta, x', \optp)\geq 1$ in the regret lower bound problem 
     demands that $\eta(x') \geq \tfrac{1}{I(\optp(x'), Q^\star_{x'}(x'))}$ for each deceitful arm. This leads to  $\eta^\star(x', \optp) = \tfrac{1}{I(P(x'), Q^\star_{x'}(x'))}$ for $x'\in \xd(P)$ and $\eta^\star(x', \optp) = 0$ for $x'\in \xn(P)$, which is the minimizer in   problem \eqref{eq:silo2} defining the lower regret bound for all suboptimal arms $x' \in \tilde X(\optp)$.

  \item[3.] Bernoulli Lipschitz  bandits. For Bernoulli Lipschitz  bandits, the lower bound presented in Proposition \ref{lemma:lower_bound} coincides precisely with the results first proven in \citet{pmlr-v35-magureanu14}.
    It can be readily verified that here $\rew_{\max}(x', \optp) = \min(\max(\setr), \rew^\star(\optp)+L\cdot d(x', x^\star(\optp))$.
    Hence, all suboptimal arms are deceitful when $\rew^\star(\optp)<\max(\setr)$;
    otherwise if $\rew^\star(\optp)=\max(\setr)$,  then trivially all suboptimal arms are non-deceitful and consequently $C(\optp)=0$. 
To see why note that arm $x'$ is deceitful if $\rew_{\max}(x', \optp)> \rew^\star(\optp)$. Then,  when $\rew^\star(\optp)<\max(\setr)$, we certainly have 
\(\rew_{\max}(x', \optp)=\min(\max(\setr), \rew^\star(\optp)+L\cdot d(x', x^\star(\optp)) > \rew^\star(\optp)\,, \)
which shows that any suboptimal arm $x'$ is deceitful. On the other hand, when $\rew^\star(\optp)= \max(\setr)$, 
\(\rew_{\max}(x', \optp)=\min(\max(\setr), \rew^\star(\optp)+L\cdot d(x', x^\star(\optp))= \max(\setr) \le \rew^\star(\optp)\,, \)
which shows that any suboptimal arm $x'$ is not deceitful.
    Let for any suboptimal arm $x'$  
\[
      Q^\star_{x'}(1, x)  = \max(\optp(1, x), \rew^\star(\optp)-L\cdot d(x, x')),\quad Q^\star_{x'}(0, x)  = 1- Q^\star_{x'}(1, x),
  \]
    for all $x\in X$. Remark that $Q^\star_{x'}(1, x)$ matches the true distribution $P(1,x)$ everywhere expect when $x$ is close to $x'$. In other words, $Q^\star_{x'}(1, x)$ is only distorted in a neighborhood around $x'$; see also Figure \ref{fig:lipschitz}.
    The  reward distribution $Q^\star_{x'}$ can  be interpreted as the worst deceitful reward distribution for arm $x'$ as we  have that $\dis(\eta, x', \optp) = \sum_{x\in \tilde X(P)}\eta(x) I(\optp(x), Q_{x'}^\star(x))$.
    By Proposition \ref{lemma:lower_bound}, therefore, the regret lower bound enjoys a (semi) closed form solution in the form of the linear  program
    \begin{equation}
      \label{eq:lipschitz:parametric-linear}
      \begin{array}{rl}
        \textstyle C(\optp) = \min_{\eta\geq 0} \set{\sum_{x\in \tilde X(P)}  \eta(x) \Delta(x, \optp)}{ 1\leq \textstyle\sum_{x\in \tilde X(\optp)}\eta(x) I(\optp(x), Q_{x'}^\star(x)) ~~ \forall x'\in  \tilde X(\optp)}.
      \end{array}
    \end{equation}
  \end{enumerate}
\end{running}
   
\begin{figure}[ht]
  \centering
  \includegraphics[height=4.5cm]{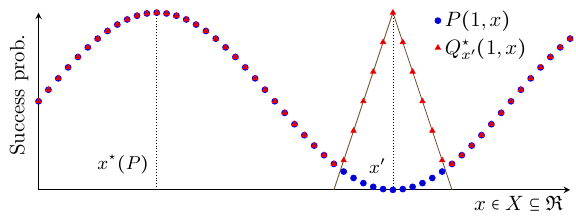}
  \caption{Worst-case deceitful reward distribution $Q^\star_{x'}$ for a deceitful arm $x'$ in a Bernoulli Lipschitz  bandit with reward distribution $P$ and optimal arm $x^\star(P)$. Here we let $d(x, x')=\abs{x-x'}$.} 
  \label{fig:lipschitz}
\end{figure}

\section{High-Level Ideas of Our Policy}\label{sec:high-level}

Our policy, which is called {DU}al {S}tructure-based {A}lgorithm (DUSA), is inspired by the lower bound formulation discussed in the previous section.
The lower bound problem \eqref{eq:silo2} suggests that to obtain minimal regret, each suboptimal arm $x$ in $\tilde X(\optp)$ should be explored at least $N_{t+1}(x)=\eta_{\epsilon}(x, \optp)\cdot \log(t)$ times. The logarithmic exploration rate $\eta_\epsilon(P)$ is an arbitrary $0<\epsilon$-suboptimal solution, i.e., $\eta_\epsilon(P)$ is a feasible solution to problem \eqref{eq:silo2} and achieves an objective value at most $\epsilon$ worse than $C(\optp)$, instead of an actual minimizer.  
We consider near optimal solutions to circumvent the technical difficulty that the infimum in \eqref{eq:silo2} may not be attained. We  discuss later that considering such near optimal exploration rates $\eta_\epsilon(P)$ is desirable even if actual minimizers do exist.

Following the previously discussed strategy is evidently not possible without knowing the true reward distribution $\optp$. Our idea is to keep track of the reward distribution of each arm  with the help of the empirical counterpart $P_t$ constructed based on the historical rewards observed before round $t$.
The policy then   ensures that the number of times each arm $x$ is pulled during the first $t$ rounds, which is denoted by $N_{t+1}(x)$,  is close to $\eta_{\epsilon}(x, P_t)\cdot \log(t)$ where the logarithmic rate $\eta_{\epsilon}(P_t)$ is an $0<\epsilon$-suboptimal solution of the following optimization problem
\begin{align}
\label{eq:lb2}
  \begin{split}
    C(P_t)=\inf_{\eta} & ~\textstyle\sum_{x\in \tilde X(P_t)} \eta(x) \Delta(x, P_t)\\
    \st & ~\eta\geq 0, ~1 \leq  \dis(\eta, x, P_t)~~ \forall x\in \xd(P_t).
  \end{split}
\end{align}

In the following, we first discuss why and when mimicking the regret lower bound leads to a good learning policy. This discussion leads to a mild assumption on the true reward distribution $P$. Afterward, we argue that in general  mimicking the regret lower bound, i.e., solving the semi-infinite problem \eqref{eq:lb2}, is  computationally expensive. Motivated by this, we present a computationally-efficient convex dual-based approach instead.

\subsection{When/Why Mimicking the Lower Bound Works} \label{sec:conti}
The proposed strategy  aims to play suboptimal arms at the logarithmic target rate $\eta_{\epsilon}(P_t)$ in any round $t$. Such a strategy  can  attain asymptotic optimality when  by converging the empirical distribution $P_t$ to the true reward distribution $P$, 
 (i) the empirical counterpart of the regret lower bound, i.e.,  $C(P_t)$, approaches the actual regret lower bound, i.e., $C(P)$, and (ii) the empirical (suboptimal) logarithmic target rate $\eta_{\epsilon}(P_t)$ converges to $\eta_{\epsilon}(P)$. To meet these requirements, at the reward distribution $P$,  the regret lower bound function $C(Q)$ should be continuous and the $\epsilon$-suboptimal exploration rates $\eta_\epsilon(Q)$ should admit a continuous selection at $P$. We note that $\eta_\epsilon(Q)$ may be a set-valued mapping\footnote{We indeed expect multiple $\epsilon$-suboptimal solutions to exist in problem (\ref{eq:lb2}).} and as a result, it is necessary to design a  selection rule that chooses one of the suboptimal solutions while ensuring that the designed rule is continuous in the reward distribution.

The non-uniqueness of $\epsilon$-suboptimal solutions, although at first glance perhaps perceived as an undesirable complication, is in fact a blessing in disguise. It indeed enables the existence of a continuous selection and echos our previous point that considering $\epsilon$-suboptimal solutions is beneficial as the exact minimizers $\eta^\star(Q)$ (should they exist) may fail to admit such continuous selection.

\begin{running}[Continued]
  \phantom{}
  \begin{enumerate}    
  \item[3.] \textit{Lipschitz Bernoulli bandits.}
    We have previously determined $\eta^\star(Q)$ in the context of Bernoulli Lipschitz Bandits as the minimizers in the parametric linear optimization problems stated in Equation \eqref{eq:lipschitz:parametric-linear}.
    However, it is well known \citep[Section 4.3]{bank1982parametric} that the minimizers in such parametric linear optimization problems are  upper semi-continuous but not necessarily lower semi-continuous mappings.
    In Figure \ref{fig:selection}, we attempt to illustrate the lack of continuous exploration rate selections from such upper semi-continuous mappings $\eta^\star(Q)$. In this figure, at the reward distribution $P$, the optimal exploration rate $\eta$ is not unique and can be any point in a vertical line.\footnote{Having multiple optimal solutions to the regret lower problem does not necessarily imply that we have multiple  optimal arms. To see why, consider an example with one optimal arm and two suboptimal arms whose expected rewards are identical. Suppose that the learner is aware of the fact that the expected reward of two arms is the same. In this case, the optimal solution to the regret lower problem may not be unique. However, the optimal arm is unique.} The illustrated optimal mapping $ \eta^\star(Q)$ is upper semi-continuous, but not lower semi-continuous. For this mapping, one cannot choose one of the optimal solutions at $P$ that leads to a continuous selection. However, this figure shows that this problem is alleviated by considering an $\epsilon$-suboptimal mapping instead. Here, the green area illustrates all the $\epsilon$-suboptimal solutions ($\eta_{\epsilon}(Q)$) and the red curve ($\eta'_{\epsilon}(Q)$) presents a particular continuous selection for the $\epsilon$-suboptimal mapping. 
  \end{enumerate}
\end{running}

The following mild assumption on the reward distribution $P$ is necessary  to guarantee the necessary  continuity properties.

\begin{assumption}\label{assumption}
 The reward distribution $P$ \begin{enumerate}
    {\color{black} \item 
 has a unique optimal arm, 
 \item satisfies 
 $
    \rew_{\max}(x, P)\neq \rew^{\star}(P)
 $ for any $x\in X$, and 
 \item is in the interior of $\mathcal P$.} 
  \end{enumerate}
  Here, $\rew_{\max}(x, P)$ is defined in Equation \eqref{eq:maximum-reward} and $\rew^{\star}(P) = \sum_{r \in \setr} r P(r, x^\star(\optp))$. Let $\mathcal P'\subseteq \mathcal P$ denote the set of all such distributions.
\end{assumption}

\begin{figure}[ht]
  \begin{subfigure}[t]{0.45\textwidth}
    \centering
    \includegraphics[height=4.5cm]{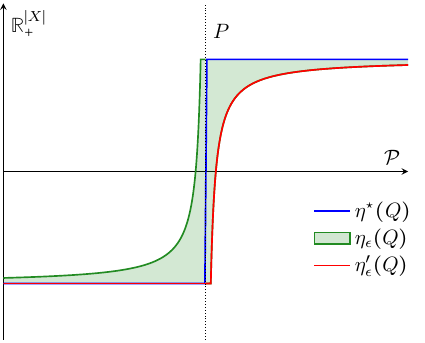}
    \caption{}
    \label{fig:selection}
  \end{subfigure}%
  \begin{subfigure}[t]{0.5\textwidth}  
    \centering
    \includegraphics[height=4.5cm]{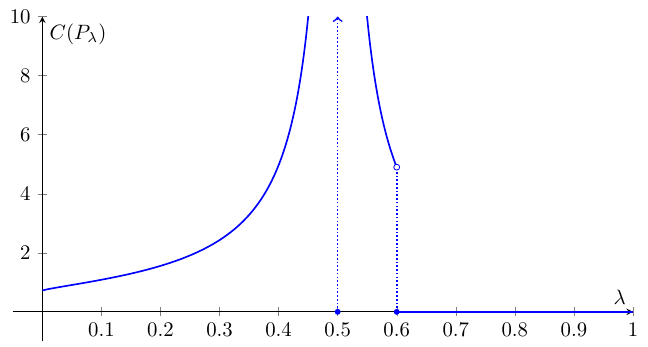}
    \caption{}
    \label{fig:discontinuity}
  \end{subfigure}  
  \caption{(a) The mapping $\eta^\star(Q)$ between a reward distribution $Q$ and the set of all optimal exploration rates is typically merely upper semi-continuous but not necessarily lower semi-continuous.
    Even though the optimal exploration rates may exist, there exists no continuous selection at the reward distribution $P$ where the optimal exploration rates are not uniquely defined. Considering $\epsilon$-suboptimal solutions for $\epsilon>0$ alleviates this problem. 
    (b) The  regret lower bound $C(P_\lambda)$ for the separable bandit problem discussed in Section \ref{sec:conti}. 
    }
\end{figure}

The first condition in Assumption \ref{assumption} requires the optimal arm under the true reward distribution to be unique. The second condition, i.e., $\rew_{\max}(x, P)\neq \rew^{\star}(P)$ for any $x\in X$, implies that there exists a neighborhood around the true reward distribution $P$ such that  for any distribution $Q$ within this neighborhood, the sets of deceitful arms $\xd(Q)$, non-deceitful arms $\xn(Q)$, and optimal arm $x^{\star}(Q)$ are the same. Recall that the suboptimal arm $x\in \xd(P)$ if $\rew_{\max}(x, P)> \rew^{\star}(P)$ and the suboptimal arm $x\in \xd(P)$ if $\rew_{\max}(x, P)\le  \rew^{\star}(P)$. As we show 
at the end of this section via counter example, both of these conditions are essential for the regret lower bound $C(P)$ as well its (suboptimal) solutions $\eta_{\epsilon}(P)$  to be continuous.
 The last  assumption that $P\in \interior (\mathcal P)$ ensures that our estimator $P_t$ eventually realizes in the set of all potential reward distributions $\mathcal P$.
 That is, we have asymptotically $\lim_{t\to\infty} \Prob{P_t\in \mathcal P} = 1$. We believe that this last assumption is not critical in that is tied to the naive empirical estimator $P_t$ that does not exploit the structural information. As a more sophisticated estimator would make the regret analysis of our policy far more complicated, we do not explore this approach in this work and leave it as a future direction.
 
 \begin{proposition}[Continuity of Regret Lower Bound]
  \label{prop:continuity-regret-lower-bound}
  Consider the regret lower bound  problem  \eqref{eq:silo2} characterizing the regret lower bound function. Recall that  $\mathcal P'\subseteq \mathcal P$ is the set of all  distributions that satisfy Assumption \ref{assumption}. Then,{\color{black}
  \begin{enumerate}
      \item the regret lower bound function $C(Q)$ is continuous on $\mathcal P'$, and 
      \item there exists an $0<\epsilon$-suboptimal rate selection $\eta^c_{\epsilon}(Q)$ which is continuous on $\mathcal P'$.
  \end{enumerate}}
  
\end{proposition}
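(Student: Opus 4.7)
The plan is a two-level application of Berge's Maximum Theorem: first to the inner distance function characterized in Proposition \ref{prop:continuity-projection}, then to the outer minimization \eqref{eq:silo2}. The required lower hemicontinuity of constraint correspondences will be supplied by Slater-type arguments that exploit the positive homogeneity of $d(\cdot,x,Q)$ in $\eta$ together with the strict inequalities furnished by Assumption \ref{assumption}. A preliminary observation is that each of the three conditions in Assumption \ref{assumption} defines an \emph{open} subset of $\mc P$, so $\mathcal P'$ is open and for every $P\in\mathcal P'$ there is a neighborhood $U\subseteq\mathcal P'$ on which the optimal arm $x^\star(\cdot)$, the partition $(\xd(\cdot),\xn(\cdot))$, and the signs of $\rew_{\max}(x,\cdot)-\rew^\star(\cdot)$ are all constant. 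This local stability is what makes it meaningful to compare the index sets in \eqref{eq:silo2} and \eqref{eq:dist-function-continuous} across nearby parameters.

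\textbf{Joint continuity of the distance function.} For any deceitful arm $x'\in \xd(P)$, I would apply Berge's Theorem to the minimization \eqref{eq:dist-function-continuous}. The objective $\sum_{x}\eta(x)I(Q(x),Q'(x))$ is jointly continuous in $(\eta,Q,Q')$ on the domain of interest because $Q\in U$ stays in the interior of $\mc P$, keeping both arguments of $I$ away from the boundary. The constraint correspondence $\Gamma(Q)=\{Q'\in\mc P:Q'(x^\star)=Q(x^\star),\;\sum_{r}rQ'(r,x')\geq \rew^\star(Q)\}$ is compact-valued (as a closed subset of the compact $\mc P$) and continuous: upper hemicontinuity is immediate from continuity of its defining functions, while lower hemicontinuity follows from the Slater point guaranteed by $x'\in\xd(Q)$, since the definition \eqref{eq:maximum-reward} of $\rew_{\max}$ furnishes a $Q^\circ$ with $Q^\circ(x^\star)=Q(x^\star)$ and $\sum_r r Q^\circ(r,x')>\rew^\star(Q)$. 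Berge's Theorem then yields that $d(\eta,x',Q)$ is jointly continuous in $(\eta,Q)$ on $\{\eta\geq 0\}\times U$.

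\textbf{Continuity of $C$.} For upper semi-continuity at $P$, fix $\epsilon'>0$ and pick $\eta^\star\geq 0$ feasible at $P$ with $\sum_x\eta^\star(x)\Delta(x,P)\leq C(P)+\epsilon'/2$. The key observation is that $d(\cdot,x,P)$ is positively homogeneous, i.e.\ $d(\lambda\eta,x,P)=\lambda d(\eta,x,P)$, so $(1+\delta)\eta^\star$ satisfies the \emph{strict} inequality $d\geq 1+\delta$ at $P$, and by the joint continuity established in the previous step it remains feasible for all $Q$ in a suitable neighborhood, giving $C(Q)\leq(1+\delta)\sum_x\eta^\star(x)\Delta(x,Q)\leq C(P)+\epsilon'$ once $\delta$ is small and $Q$ is close enough to $P$. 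For lower semi-continuity, I would take $Q_n\to P$ with $C(Q_n)\to\liminf_{Q\to P}C(Q)$ and near-optimal $\eta_n$'s; setting $\eta_n(x)=0$ on $\xn(P)$ without loss, and using that $\Delta(x,Q_n)\to\Delta(x,P)>0$ uniformly on $\xd(P)$, the sequence $\eta_n$ is bounded. Any limit point $\eta^\infty$ is feasible at $P$ by joint continuity of $d$, and the objective passes to the limit. This circumvents the technical worry that the infimum in \eqref{eq:silo2} might not be attained.

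\textbf{Continuous $\epsilon$-suboptimal selection.} To construct $\eta^c_\epsilon(Q)$, the plan is to perturb the problem in two ways: strengthen $d\geq 1$ to $d\geq 1+\delta$ (for small $\delta>0$) to enforce lower hemicontinuity of the constraint correspondence via the homogeneity Slater argument above, and add a small strictly convex regularizer $c\|\eta\|^2$ (for small $c>0$) to the objective so that the minimizer becomes unique and the feasible set is effectively restricted to a bounded region. The single-valued version of Berge's Theorem then delivers a continuous selection $\eta^c_\epsilon(Q)$. Calibrating $(\delta,c)$ in terms of $\epsilon$, using local boundedness of $C$ and the uniform norm bound on near-optimal $\eta$'s established in the previous step, ensures $\epsilon$-suboptimality of the selection. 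The principal obstacle throughout is verifying lower hemicontinuity of the various constraint correspondences; this is precisely where positive homogeneity of $d(\cdot,x,Q)$ in $\eta$ combined with the strict inequality $\rew_{\max}(x,P)\neq \rew^\star(P)$ from Assumption \ref{assumption} provides the necessary Slater points, and it is also what justifies considering $\epsilon$-suboptimal selections rather than exact minimizers, which need neither exist nor admit a continuous selection.
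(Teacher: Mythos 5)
Your overall architecture (local stability of the arm partition, Slater points from deceitfulness, positive homogeneity, near-optimal rather than exact minimizers) is in the right spirit, but the load-bearing step is not established and its justification is wrong. You claim that Berge's theorem applied to \eqref{eq:dist-function-continuous} yields joint continuity of $\dis(\eta,x',Q)$ on $\{\eta\geq 0\}\times U$, arguing that interiority of $Q$ keeps "both arguments of $I$ away from the boundary." It does not: the inner decision variable $Q'$ ranges over the whole constraint set, which contains distributions with zero entries, and there $I(Q(x),Q'(x))=+\infty$ precisely because $Q>0$. The objective is therefore an extended-real-valued, merely lower semi-continuous function, and the standard Berge theorem does not apply; the paper instead invokes a parametric-optimization result (Proposition \ref{prop:continuity-function-general}) that needs only global lower semi-continuity plus upper semi-continuity at one minimizer, verified by showing any minimizer $Q^\star$ of \eqref{eq:dist-function-continuous} satisfies $P\ll Q^\star$. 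More importantly, even after that repair, continuity of $\dis$ in $(\eta,Q)$ is only available for $\eta$ bounded away from zero (Lemma \ref{lemm:continuity-distance-function} assumes $\eta_0>0$): at an $\eta$ with zero components the terms $\eta(x)I(Q(x),Q'(x))$ with the convention $0\cdot\infty=0$ can jump, and the upper semi-continuity of $\dis$ needed there requires a separate mixing/diagonal argument through a Slater point that you do not supply. Your lower semi-continuity step for $C$ uses exactly this unproven property, since the limit point $\eta^\infty$ will typically vanish on some arms (your upper semi-continuity step also cites it, though that half can be fixed directly via joint lower semi-continuity of $I$, Lemma \ref{prop:relative-entropy}). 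The paper avoids the boundary issue altogether: it perturbs to $\eta\geq\delta\mathbf 1$, uses monotonicity of $\dis$ in $\eta$ (Lemma \ref{lemm:dist-function-properties}) to sandwich $C_\delta(Q)-\delta\sum_x\Delta(x,Q)\leq C(Q)\leq C_\delta(Q)$, and passes to a uniform limit of continuous functions. Any correct version of your argument must either reproduce such a perturbation or prove the boundary semicontinuity of $\dis$ explicitly.

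Two further points. First, "setting $\eta_n(x)=0$ on $\xn(P)$ without loss" is not without loss: the infimum in \eqref{eq:inner-optimization} sums over \emph{all} suboptimal arms, so lowering $\eta$ on non-deceitful arms decreases $\dis(\eta,x,Q_n)$ for deceitful $x$ and can destroy feasibility (this is why the paper notes non-deceitful arms may still need to be explored). The step is also unnecessary, since boundedness of $\eta_n$ on all of $\tilde X(P)$ already follows from the gaps $\Delta(x,Q_n)$ being bounded away from zero near $P$. Second, for the continuous $\epsilon$-suboptimal selection, adding a strictly convex regularizer gives uniqueness but does not restore the continuity (in particular lower hemicontinuity and closedness) of the constraint correspondence at rates with vanishing components, which is again the same unresolved point; the paper's construction in Proposition \ref{prop:continuous-deep-selection} imposes explicit lower bounds of the form $\eta(x)\geq \epsilon/(2\sum_{x}\Delta(x,Q))$ (and analogous slack in the dual constraints) precisely to keep the constraint functions continuous, and then obtains the selection via Michael's selection theorem together with a minimal-norm argument. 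Your plan for part 2 should be amended along these lines rather than relying on the regularizer alone.
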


The proof of this result, along with other continuity results, is presented  in Appendix \ref{sec:Continuity}. We now argue that the first two conditions in Assumption \ref{assumption} are necessary by considering a simple example. Consider  a separable Bernoulli bandit problem with only two arms $X=\{a, b\}$. The reward distribution of the first arm $a$ is such that $P(0, a)\geq \frac{2}{5}$ and the reward distribution 
of the second arm $b$ can be arbitrary. That is, $\mathcal P$ is given in Equation \eqref{eq:p_in_example}. 
For any $\lambda \in [0, 1]$, we define parameterized  reward distributions $P_\lambda(a) = [\frac{1}{2}, \frac{1}{2}]$ and $P_\lambda(b)=[1-\lambda, \lambda]$. That is, the probability of receiving reward zero under arm $a$ (respectively arm $b$) is $\frac{1}{2}$ (respectively  $1-\lambda$). Then, it is easy to see that the optimal arms and reward are given by
\[
  x^\star(P_\lambda)=
  \begin{cases}
    \{b\} & {\mathrm{if}~} \lambda \in (\frac{1}{2}, 1], \\
    \{a, b\} & {\mathrm{if}~} \lambda = \frac{1}{2}, \\
    \{a\} & {\mathrm{if}~} \lambda \in [0, \frac{1}{2})
  \end{cases}
  \hspace{2em}
  {\mathrm{and}}
  \hspace{2em}
  \rew^{\star}(P_\lambda)=
  \begin{cases}
    \lambda & {\mathrm{if}~} \lambda \in (\frac{1}{2}, 1], \\
    \frac{1}{2} & {\mathrm{if}~} \lambda \in [0, \frac{1}{2}].
  \end{cases}
\]
Observe that at $\lambda =\frac{1}{2}$, the optimal arm is not unique. That is, the first condition of Assumption \ref{assumption} is violated. 
Furthermore, at $\lambda=\frac{3}{5}$, we have $\rew_{\max}(a, P)=\rew^{\star}(P)$. That is, the second condition of Assumption \ref{assumption} is violated. Next, we show that these violations result in discontinuity of the regret lower bound. 
  
 For the parameter values $\lambda\in [\frac{3}{5}, 1]$, we have that $\rew^{\star}(P_\lambda)\geq \rew_{\max}(a, P_\lambda) =\frac{3}{5}$ and hence $C(P_\lambda)=0$.
Likewise, when $\lambda=\frac{1}{2}$,  both arms are optimal, and once again we have that $C(P_\lambda)=0$. 
It can be verified that
\[
  C(P_\lambda) = 
  \begin{cases}
    0 & {\mathrm{if}}~\lambda\in [\frac{3}{5}, 1] \bigcup \{\frac{1}{2}\},\\
    \frac{\lambda-\frac{1}{2}}{\frac{1}{2} \log(\frac{1/2}{\lambda})+\frac{1}{2}\log(\frac{1/2}{1-\lambda})} & {\mathrm{if}}~\lambda\in (1/2, \frac{3}{5}),\\
    \frac{1/2-\lambda}{\lambda \log(\frac{\lambda}{1/2})+(1-\lambda)\log(\frac{1-\lambda}{1/2})} & {\mathrm{if}~} \lambda\in [0, 1/2)\,,
  \end{cases}
\]
which we have visualized in Figure \ref{fig:discontinuity}. The function $ C(P_\lambda)$ is evidently discontinuous at  parameter value $\lambda$ of $\frac{1}{2}$ and $\frac{3}{5}$, as claimed.

\subsection{Dual-based  Representation of Regret Lower Bound}\label{sec:dual} To follow the proposed strategy, we may potentially need to solve the regret lower bound problem, i.e.,  problem \eqref{eq:lb2}, in each round.
Observe that this optimization problem is a semi-infinite minimization problem, which cannot in general be solved directly.
Even  verifying the feasibility of a fixed exploration rate $\eta$ requires  the solution to the auxiliary information minimization problem \eqref{eq:inner-optimization} defining our information distance function $\dis(\eta, x, P_t) $.
We will overcome this challenge by using an equivalent dual representation of this distance function instead.
This novel equivalent dual  representation  enables us to reformulate the regret lower bound problem \eqref{eq:silo2} and its empirical counterpart problem \eqref{eq:lb2} as standard finite convex minimization problems, which are computationally much more attractive compared to solving their semi-infinite representations directly.

We start with stating an equivalent formulation of the distance function with the help of conic hull of the feasible set $\mathcal P$. Let us denote  the conic hull of $\mathcal P$ as the set $\mathcal K=\cone(\mathcal P)\defn \tset{\theta \cdot Q\!}{\!\theta\geq 0, ~Q\in \mathcal P}\,.
$
Observe that this convex cone represents the same information as the set of reward distributions $\mathcal P$ as  we have the following equivalence:
$\mathcal P = \tset{Q\!\!}{\!\!\textstyle\sum_{x\in X, r\in \setr} Q(r, x)=\abs{X}}\bigcap \mathcal K.$
Working with the cone $\mathcal K$ instead of the set $\mathcal P$   helps us provide a unified dual formulation later on. 
This conic representation, along with Proposition \ref{prop:continuity-projection}, allows us to rewrite the information distance function for any deceitful arm $x'\in \xd(P)$ as the following minimization problem
\begin{equation}
  \label{eq:feasibility-cone}
  \begin{array}{rl@{\hspace{4em}}l}
    \dis( \eta, x', P) = \min_{Q} & \multicolumn{2}{l}{\sum_{x\in \tilde X(P)}\eta(x) I(P(x), Q(x))+\chi_\infty(P(x^\star(P))=Q(x^\star(P)))} \\[0.3em]
    \st  & Q \in \mathcal K,   & [{\mathrm{dual~variable:~}}\lambda\in \mathcal K^\star]\\[0.3em]
   & \sum_{x\in X,\, r\in \setr} Q(r, x) = \abs{X},  & [{\mathrm{dual~variable:~}} \beta\in\real]\\[0.3em]
                                         & \sum_{r \in \setr} r Q(r, x')  \geq \rew^\star(P), & [{\mathrm{dual~variable:~}}\alpha\in\real_+]
  \end{array}
\end{equation}
where $\chi_\infty( A)=0$ if event  $A$ occurs; $+\infty$ otherwise.
Note that in problem \eqref{eq:feasibility-cone},  all the nonlinear constraints in the minimization problem \eqref{eq:inner-optimization} are captured by the conic constraint  $Q\in \mathcal K$. 
We denote with $\lambda $ the dual variable of the cone constraint, $\beta\in \real$. Further,  $\alpha\in \real_+$ are the dual variables of the second and third constraints, respectively. Let the dual cone $\mathcal K^\star$ be defined as the unique cone that satisfies
\(
Q \in \mathcal K
\)
if and only if $\iprod{\lambda}{Q} \geq 0, ~ \forall \lambda \in \mathcal K^\star$, 
where we take as inner product $\iprod{\lambda}{Q} = \sum_{x\in X,\,r\in \setr} \lambda(r, x) Q(r, x)$.
Just as the cone $\mathcal K$ captures the same information as the set of reward distributions $\mathcal P$, so does its dual cone $\mathcal K^\star$. This is so because for any  closed convex cones, we have $\mathcal K^{\star\star}=\mathcal K$; see \citet[Section 2.6.1]{boyd2004convex}.

For any reward distribution $\optp\in {\mathcal P}$ and suboptimal arm $x'\in \tilde X(\optp)$, 
we now define our concave dual function:
\begin{align}
  \label{eq:dual}
  \begin{split}
    \dual&( \eta,  x', \optp~;~ \mu)\defn \hspace{-1.2em}\sum_{x\in \tilde X(P), r\in \setr}\hspace{-1.2em} {\textstyle \eta(x) \log\left(\frac{ \eta(x)-\lambda(r, x)-\beta-\alpha r \mb 1(x=x')}{\eta (x)}\right) \optp(r,x)} - \hspace{-1.21em} \sum_{x\in x^\star(\optp), r\in \setr} \hspace{-1.2em} \lambda(r, x)\optp(r, x)  \\
    &   + \alpha \rew^{\star}(\optp) + \beta|\tilde X(\optp)| + \chi_{-\infty}( \eta(x)-\lambda(r, x)-\beta-\alpha r \mb 1(x=x')\geq 0)\,.
  \end{split}      
\end{align}
For notational convenience, we collect all our dual variables in the dual vector $\mu\defn (\alpha, \beta, \lambda)\in \real_+\times \real\times \mathcal K^\star$.
Here, the concave characteristic function $\chi_{-\infty}(A)$ takes on the value zero when event $A$ happens; $-\infty$ otherwise. The following lemma characterizes the dual of the distance function $\dis$ using the $\dual$ function defined above.

\begin{lemma}[Dual Formulation of the Distance Function]
  \label{thm:dual-feasibility} For any reward distribution $P \in \mathcal P_\Omega$ and  arm $x'\in X$,  the following weak duality inequality
  \(
     \dual( \eta,  x', P~; ~\mu)\leq \dis( \eta,  x', P)
  \)
  holds,  where $\mu \in \real_+\times\real\times \mathcal K^\star$ is any feasible dual vector to problem \eqref{eq:feasibility-cone},  
the dual function $\dual$ and distance function $\dis$ are defined in Equations \eqref{eq:dual} and \eqref{eq:inner-optimization}, respectively.   Furthermore, for any reward distribution $\optp\in \mathcal P$ and  suboptimal deceitful arm  $x'\in  \xd(\optp)$, the following  strong duality equality holds
  \begin{equation}
    \label{eq:feasibility-dual}
      \textstyle\dis( \eta,  x', \optp) = \sup_{\mu} \set{\dual( \eta,  x', \optp~; ~\mu)}{\mu \in \real_+\times\real\times \mathcal K^\star}.
  \end{equation}

\end{lemma}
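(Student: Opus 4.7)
The statement is a standard Lagrangian duality result applied to the conic reformulation \eqref{eq:feasibility-cone} of the distance function. The plan is to derive the dual function $\dual$ by explicit Lagrangian minimization, deduce weak duality as an immediate corollary, and then invoke a Slater-type constraint qualification to upgrade to strong duality in the deceitful case.

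\textbf{Step 1: Lagrangian and weak duality.} I would form the Lagrangian of \eqref{eq:feasibility-cone} with $\lambda\in\mathcal K^\star$ for the conic constraint $Q\in\mathcal K$, $\beta\in\real$ for the scaling equality, and $\alpha\in\real_+$ for the reward inequality, while eliminating the equality $Q(x^\star(P))=P(x^\star(P))$ by direct substitution (this is the role of the $\chi_\infty$ penalty). After expanding the $I$-divergence terms, the Lagrangian decouples across coordinates $(x,r)$ with $x\in\tilde X(P)$, each $Q(r,x)$ carrying a linear coefficient $\eta(x)-\lambda(r,x)-\beta-\alpha r\,\mb 1(x=x')$ plus a logarithmic term $-\eta(x)P(r,x)\log Q(r,x)$. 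The first-order condition in $Q(r,x)>0$ yields the explicit minimizer $Q^\star(r,x)=\eta(x)P(r,x)/(\eta(x)-\lambda(r,x)-\beta-\alpha r\,\mb 1(x=x'))$, valid whenever the denominator is positive (and the case $P(r,x)=0$ is absorbed via $0\log 0 = 0$). Plugging this back reproduces exactly the expression \eqref{eq:dual}, the penalty $\chi_{-\infty}$ encoding the implicit dual feasibility $\eta(x)-\lambda(r,x)-\beta-\alpha r\,\mb 1(x=x')\geq 0$. Weak duality then follows from the generic inequality $\inf_Q L(Q,\mu)\leq L(\bar Q,\mu)\leq \dis(\eta,x',P)$ that holds for any primal feasible $\bar Q$ and any dual feasible $\mu=(\alpha,\beta,\lambda)$.

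\textbf{Step 2: Strong duality for deceitful arms.} For $x'\in\xd(P)$, Proposition \ref{prop:continuity-projection} already tells us that \eqref{eq:feasibility-cone} is a convex minimization with a finite, attained optimum. To turn weak duality into equality I would verify a Slater-type constraint qualification, namely the existence of some $Q$ in the relative interior of $\mathcal K$ that satisfies the equality $Q(x^\star(P))=P(x^\star(P))$ together with the strict reward inequality $\sum_r rQ(r,x')>\rew^\star(P)$. Since $x'$ is deceitful, $\rew_{\max}(x',P)>\rew^\star(P)$ supplies a witness $\bar Q\in\mathcal P$ that matches $P$ on $x^\star(P)$ and strictly exceeds $\rew^\star(P)$ at $x'$. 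Blending $\bar Q$ with an interior point of $\mathcal P$ (available under Assumption \ref{assumption}) in a small convex combination yields a candidate that lies in the relative interior of $\mathcal K$ and preserves both the equality (since both endpoints satisfy it) and the strict inequality (by continuity of the linear reward functional). Standard convex conic duality then closes the gap and gives $\dis(\eta,x',P)=\sup_\mu \dual(\eta,x',P;\mu)$.

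\textbf{Main obstacle.} The routine portion is the Legendre-transform computation of Step 1, which reduces to minimizing $p\log(p/q)+q-p+\mu q$ in $q>0$. The genuine subtlety is the Slater construction in Step 2: producing a point that is simultaneously in the relative interior of $\mathcal K$, satisfies the affine equality $Q(x^\star(P))=P(x^\star(P))$ exactly, and achieves strictly positive slack in the reward inequality. The equality might in principle cut the cone along its boundary, so care is needed to argue that the blending operation stays inside the relative interior; the cleanest route is to initialize from the interior point guaranteed by Assumption \ref{assumption} and exploit the fact that both $P$ and $\bar Q$ share the same marginal on $x^\star(P)$, so the equality is preserved automatically along the homotopy.
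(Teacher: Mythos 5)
Your Step 1 is fine and coincides with the paper's computation: the coordinate-wise Legendre transform of $q\mapsto p\log(p/q)-p+q$ produces exactly the expression \eqref{eq:dual}, and weak duality is immediate. The gap is in Step 2. You propose to verify Slater's condition directly for problem \eqref{eq:feasibility-cone}, i.e., to exhibit a point in the (relative) interior of $\mathcal K$ that satisfies the hard equality $Q(x^\star(P))=P(x^\star(P))$ exactly and has strict slack in the reward constraint. This is precisely what the paper says cannot be done in general, and your construction does not repair it: blending the deceitful witness $\bar Q$ with an interior point $U$ of $\mathcal P$ destroys the equality, since $U(x^\star(P))\neq P(x^\star(P))$ in general; the only natural endpoint that preserves the pinned block is $P$ itself, and the lemma does not assume $P\in\interior(\mathcal P)$ (nor $P>0$) — it is stated for an arbitrary $\optp\in\mathcal P$, and Assumption \ref{assumption}, which you invoke, is not in force here. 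Indeed, when $P(r,x^\star(P))=0$ for some $r$, or when $P$ lies on the boundary of $\mathcal P$, no point of $\interior(\mathcal K)$ can satisfy $Q(x^\star(P))=P(x^\star(P))$ at all, so the constraint qualification you rely on simply fails and your argument stalls exactly where the difficulty lies.

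The paper's proof supplies the missing device: it replaces the hard equality by a penalty, defining the perturbed distance $\dis_\gamma(\eta,x',P)$ in \eqref{eq:perturbed-problem} with the extra objective term $\gamma\, I(P(x^\star(P)),Q(x^\star(P)))$, so that the only dualized constraints are $Q\in\mathcal K$, the scaling equality, and the reward inequality. For this perturbed problem a Slater point does exist — a convex combination of an interior point $U\in\interior(\mathcal P)$ with a maximizer $Q_{\max}$ of \eqref{eq:maximum-reward}, which is strictly feasible precisely because $x'$ is deceitful, i.e., $\rew_{\max}(x',P)>\rew^\star(P)$ — giving strong duality $\dis_\gamma = \sup_\mu \dual_\gamma$. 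The unperturbed statement is then recovered by letting $\gamma\to\infty$: both $\dis_\gamma$ and $\dual_\gamma$ are nondecreasing in $\gamma$, $\dis(\eta,x',P)=\lim_{\gamma\to\infty}\dis_\gamma(\eta,x',P)$, and exchanging the two suprema over $\gamma$ and $\mu$ yields \eqref{eq:feasibility-dual}. Without this perturbation-and-limit step (or an equivalent mechanism for handling the pinned equality without an interior feasible point), your Step 2 does not go through.
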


In the following theorem, we characterize the regret lower bound function using the dual formulation in Lemma \ref{thm:dual-feasibility}. The proof of this lemma is presented in Appendix \ref{sec:duality-proof}. Note that all the duality results are presented in Appendix \ref{sec:proof_of_all_duality_results}.
 
\begin{theorem}[Regret Lower Bound -- Dual Formulation] \label{thm:lowerbound_dual}
Consider any reward distribution  $\optp\in\mathcal P$.
  The regret lower bound presented  in Proposition \ref{lemma:lower_bound} is characterized equivalently as 
  \begin{equation}
    \label{eq:silo-dual}
    \begin{array}{rl@{\hspace{2em}}l}
      C(\optp) = \inf_{\eta\geq 0, \, \mu} & \sum_{x\in \tilde X(\optp)} \eta(x) \Delta(x, \optp) &  \\[0.5em]
      \st       & \mu(x)\in \real_+\times\real\times \mathcal K^\star & x\in X, \\[0.5em]
                                           & 1\leq  \dual ( \eta, x, \optp~;~ \mu(x)) &  x\in \xd(P).
    \end{array}
  \end{equation}
\end{theorem}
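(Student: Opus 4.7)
The aim is to derive the dual regret-lower-bound characterization \eqref{eq:silo-dual} from the primal characterization \eqref{eq:silo2} (Proposition \ref{lemma:lower_bound}) by invoking the duality result for the inner information-distance problem (Lemma \ref{thm:dual-feasibility}). I will split the equality $C(P)=$(dual value) into two inequalities. Throughout, I will use two observations that follow from the partitioning of Section \ref{sec:part}: (i) for $x\in\xn(P)$ the primal constraint is trivially satisfied since $\dis(\eta,x,P)=+\infty$, so it suffices to treat the constraint indexed by $x\in\xd(P)$; and (ii) by its definition \eqref{eq:inner-optimization}, $\dis(\eta,x,P)$ is positively homogeneous of degree one in $\eta$, which will be the key technical lever.

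The easy direction (dual $\geq C(P)$) proceeds by weak duality. Let $(\eta,\mu)$ be any feasible point for problem \eqref{eq:silo-dual}. For every $x\in\xd(P)$, the weak duality part of Lemma \ref{thm:dual-feasibility} gives $\dis(\eta,x,P)\geq \dual(\eta,x,P;\mu(x))\geq 1$. Combined with observation (i) above, $\eta$ is feasible for the primal problem \eqref{eq:silo2} and attains the same objective value. Taking the infimum over all dual-feasible $(\eta,\mu)$ yields $C(P)\leq$ (dual value).

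The reverse inequality (dual $\leq C(P)$) is where the main subtlety lies. If one naively picks a primal-feasible $\eta$ with $\dis(\eta,x,P)\geq 1$ and tries to lift it to a dual-feasible pair by choosing $\mu(x)$ that nearly attains the supremum in \eqref{eq:feasibility-dual}, one only obtains $\dual(\eta,x,P;\mu(x))$ arbitrarily close to $1$ from below, not $\geq 1$, because the supremum is not known to be attained. The fix is a small scaling perturbation: fix $\varepsilon>0$ and pick a primal-feasible $\eta$ with $\sum_{x\in\tilde X(P)}\eta(x)\Delta(x,P)\leq C(P)+\varepsilon$, then consider $\tilde\eta:=(1+\varepsilon)\eta$. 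By positive homogeneity of $\dis$, we get $\dis(\tilde\eta,x,P)\geq 1+\varepsilon>1$ for every $x\in\xd(P)$, so by the strong duality half of Lemma \ref{thm:dual-feasibility} there exists $\mu(x)\in\real_+\times\real\times\mathcal K^\star$ with $\dual(\tilde\eta,x,P;\mu(x))\geq 1$. For remaining arms $x\notin\xd(P)$ one may set $\mu(x)=0\in\real_+\times\real\times\mathcal K^\star$ since $0$ lies in the cone. Then $(\tilde\eta,\mu)$ is feasible in \eqref{eq:silo-dual} with objective $(1+\varepsilon)\sum_x \eta(x)\Delta(x,P)\leq (1+\varepsilon)(C(P)+\varepsilon)$. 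Letting $\varepsilon\downarrow 0$ gives the desired bound.

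Edge cases to verify at the end: (a) if $\xd(P)=\emptyset$ then both problems reduce to $\inf\{\sum_x \eta(x)\Delta(x,P):\eta\geq 0\}=0$, matching $C(P)=0$ from Proposition \ref{lemma:lower_bound}; and (b) the primal is always feasible since, by Proposition \ref{prop:continuity-projection}, for each $x'\in\xd(P)$ we have $\dis(e_{x'},x',P)\in(0,\infty)$ where $e_{x'}$ is a coordinate vector, so a sufficiently large positive $\eta$ satisfies all constraints and $C(P)<\infty$. The main obstacle in the whole argument is the potential non-attainment of the supremum in \eqref{eq:feasibility-dual}, and the scaling-by-$(1+\varepsilon)$ trick---enabled purely by the linearity of $\dis$ in $\eta$---is what circumvents it cleanly.
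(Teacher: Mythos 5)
Your proposal is correct and follows essentially the same route the paper takes: the theorem is obtained by substituting the dual representation of the distance function from Lemma \ref{thm:dual-feasibility} into the primal lower bound \eqref{eq:silo2}, using weak duality for one inequality and strong duality (for deceitful arms, with nondeceitful arms dropping out since $\dis=+\infty$) for the other. Your $(1+\varepsilon)$-rescaling of $\eta$, justified by the positive homogeneity of $\dis$ (cf.\ Lemma \ref{lemm:dist-function-properties}), correctly handles the possible non-attainment of the supremum in \eqref{eq:feasibility-dual}, a detail the paper leaves implicit when passing from ``$1\leq\sup_\mu \dual$'' to the existence of a feasible $\mu(x)$.
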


Observe that the dual characterization of the regret lower bound is convex and finite dimensional.
We further highlight that the dual problem \eqref{eq:silo-dual}, unlike its primal counterpart \eqref{eq:silo2}, is no longer a nested optimization.
The complexity and computational issues of solving the dual lower bound representation are  discussed in Appendix \ref{sec:complexity}. 

\section{Dual Structure Algorithm (DUSA)}
\label{sec:dusa}

In this section, we present our policy,  called DUSA; see Algorithm \ref{eq:dusa}. 
The DUSA algorithm is parameterized by a positive number $\epsilon>0$ which we shall refer to as the accuracy parameter.
Our optimal algorithm is best understood by considering its asymptotic limit $\epsilon\to 0$.
Hence, most of our commentary regarding the intuition behind the algorithm will focus on this regime. 

\begin{algorithm}
  \textbf{Input.} Accuracy parameter $0<\epsilon$.
  \begin{itemize}[leftmargin=*]
  \item \textbf{Initialization.}
    \begin{itemize}[leftmargin=1em]
    \item During  the first $|X|$ rounds, pull each arm $x \in X$ once and  set 
      $  P_{\abs{X}+1}(R(x), x)=1$, where  $R(x)$ is the observed reward of arm $x$ in its corresponding round. 
    \item  Let $s_{\abs{X}+1}=0$ and  $N_{\abs{X}+1}(x) = 1$  for all $x\in X$. For all arms $x \in X$, initialize the dual variables $\lambda_{|X|+1}(x)\in \mathcal K^\star$, $\alpha_{|X|+1}(x)\in \real_+$, and  $\beta_{|X|+1}(x)\in \real$ and reference logarithmic rate $\eta_{|X|+1}'(x)>0$. 
    \end{itemize}
    
  \item  For $t\in \abs{X}+1, \dots$
    \begin{itemize}[leftmargin=1em]
    \item \textbf{Sufficient information condition.} For every deceitful arm $x\in \xd(P_t)$, solve the following convex univariate minimization problem
      \(
        \overline{\dual}_t(x) \defn \overline{\dual}\left(\tfrac{N_{t}}{\log(t)},  x, P_t~; ~\mu_t(x)\right)\defn
        {\displaystyle \max_{\rho\ge 0}} ~\dual\left( \tfrac{N_{t}}{\log(t)},  x,  P_{t}~;~ \rho \mu_{t}(x) \right).
      \)
    \item \textbf{Exploitation.}
      If $\overline{\dual}_t(x)\ge 1+\epsilon$ for all deceitful arms $x \in \xd(  P_{t})$, then $s_{t+1}=s_{t}$, $\mu_{t+1} = \mu_{t}$, $\eta'_{t+1}=\eta'_t$ and
       pull a least played empirically optimal arm
        \(
        x_t \in \textstyle\arg\min \set{N_t(x')}{x'\in\arg\max_{x\in X} \sum_{r\in \setr} r P_{t}(r, x)}.
      \)
    \item \textbf{Exploration.} Else,  set $s_{t+1}=s_t+1$. Furthermore, 
      \begin{itemize}[leftmargin=1em]
      \item If $\min_{x\in X} N_{t}(x) \leq {\color{black} \frac{\epsilon s_{t}}{1+\log(1+s_{t})} }$, pull the least pulled arm \(x_t = \underline x_t \in \arg\min_{x\in X} N_{t}(x)\,.\)
      \item Update exploration target rates $\eta_{t}=\text{\sf SU}(  P_{t}, \eta'_t~;~\mu_{t}, \epsilon)$ using shallow update Algorithm \ref{alg:shallow} and let
        \(
          \bar x_t \in \textstyle\arg\min_{x\in X} \tfrac{N_{t}(x)}{ \eta_{t}(x)}.
        \)
      \item
        {\color{black}Let $x_t^\star \in \textstyle\arg\min \set{N_t(x')}{x'\in X^\star_t}$ be  the  empirically optimal arm. If $N_{t}(x^\star_t) \leq N_t(\bar x_t)$, pull \(x_t = x_t^\star.\) Else, 
        pull 
        \(x_t = \bar x_t.\) }
      \item Update  dual variables $(\eta'_{t+1}, \mu_{t+1})=\text{\sf DU}(  P_{t}~;~ \epsilon)$ using deep update Algorithm \ref{alg:deep}.    
      \end{itemize} 

    \item  \textbf{Updating variables.}
      \begin{itemize}[leftmargin=1em]
      \item Observe reward of the pulled arm $R_t(x_t)$. For any $x\in X$, $x\neq x_t$, and $r\in \setr$, set $  P_{t+1}(r, x) =   P_{t}(r, x)$. Further, for any $r\ne R_t(x_t)$, $  P_{t+1}(r, x_t) = N_{t}(x_t)  P_t(r, x_t)/(N_{t}(x_t)+1)$, and $  P_{t+1}(R_t(x_t), x_t) = \big(1+N_{t}(x_t) P_t(R_t(x_t), x_t)\big)/(N_{t}(x_t)+1)$. Set $N_{t+1}(x_t) = N_{t}(x_t)+1$ and for any $x\ne x_t$,  $N_{t+1}(x)=N_{t}(x)$.
      \end{itemize}
    \end{itemize}
  \end{itemize}
  \caption{DUal Structure Algorithm (DUSA)}
  \label{eq:dusa}
\end{algorithm}

\textbf{Initialization phase.} DUSA starts with an initialization phase of $|X|$ rounds. During these first $|X|$ rounds, it pulls each arm $x\in X$ once.
Then, based on the observed rewards in these rounds, it initializes its estimate for the reward distribution. As before, the estimate $P_t$ is the empirical distribution of the observed rewards before round $t$. 
At the end of the initialization phase, i.e., round $t= |X|$,  DUSA initializes the dual variables for each arm $x$, i.e., $\alpha_{t+1}(x)$, $\beta_{t+1}(x)$, and $\lambda_{t+1}(x)$,  and  the \emph{reference logarithmic rate}, denoted by $\eta'_{t+1}(x)$.

\textbf{Sufficient information and resolving test.} After the initialization phase, DUSA aims to strike a balance between exploration and exploitation. To do so, it checks if for any empirically deceitful arm  $x\in \xd(  P_{t})$, the following  sufficient information condition holds:
\begin{align}
  \label{eq:test}
 1+\epsilon\leq\! \overline{\dual}_t(x)\defn \!\textstyle \overline{\dual}\left(\frac{N_{t}}{\log(t)},  x, P_t~; ~\mu_t(x)\right):= {\displaystyle\max_{\rho\ge 0}}~ \textstyle {\dual}\left( \frac{N_{t}}{\log(t)},  x,  P_{t}~;~ \rho\cdot\mu_{t}(x) \right).
\end{align}
Here, $\mu_{t}(x)=(\alpha_t(x), \beta_t(x), \lambda_t(x))$ are the dual variables for arm $x$ employed in round $t$. Remark that verifying the sufficient information condition merely requires the resolution of a univariate convex optimization problem. {(\color{black} See Section \ref{sec:numer-exper} for computation time of this simple univariate convex optimization problem in various structured bandits.)} We refer to previously defined function $\overline{\dual}$ as the \emph{dual-test} function.  Roughly speaking, our condition verifies whether sufficient information has been collected to distinguish the empirical  $P_t$ from its deceitful distributions; that is, whether or not $\dis(\tfrac{N_{t}}{\log(t)}, x', P_t)\ge  \overline{\dual}\left(\tfrac{N_{t}}{\log(t)}, x', P_t~;~\mu_t\right)\ge 1+\epsilon$ for any $x'\in \tilde X(P_t)$.
In Appendix \ref{sec:info}, we provide a further clarifying interpretation of this sufficient information test as the information distance between $P_t$ and a half-space containing all deceitful distributions $\prob(x', P)$. 
As we will show later, our sufficient information condition  helps DUSA  solve the lower bound problem  \eqref{eq:silo-dual} only in  $\mc O(\log(T))$ rounds. 

\textbf{Exploitation.} If the sufficient information condition holds for all deceitful arms  $x \in \xd(  P_{t})$, DUSA enters an exploitation phase and it selects  a least played empirically optimal arm, i.e.,
      \(
        x_t \in \textstyle\arg\min \set{N_t(x')}{x'\in X_t^*},
      \) where $X_t^{\star}=\arg\max_{x\in X} \sum_{r\in \setr} r P_{t}(r, x)$.

      \textbf{Exploration.} If the sufficient information test fails, DUSA enters an exploration phase as insufficient information has been collected on the empirically suboptimal arms $\tilde X(P_t)$ in the previous rounds. Let $s_t$ be the number of exploration rounds in the first $t-1$ rounds. Then, if $\min_{x\in X} N_{t}(x) \leq {\color{black}\epsilon \tfrac{s_{t}}{(1+\log(1+s_t))}}$, DUSA pulls the least pulled arm $\underline x_t$.
Otherwise, it considers a new $2 \epsilon$-suboptimal logarithmic target rate $\eta_t$ in
\begin{align}
  \label{eq:shallow-lower-bound}
  \begin{array}{rl}
    \overline{C}(P_t ~;~\mu_t):=\inf_{\eta\geq 0} & \sum_{x\in \tilde X(P_t)} \eta(x) \Delta(x, P_t)\\[0.5em] 
    \st & 1\le \overline{\dual}(\eta, x, {P_t};~  \mu_t(x))\quad \forall x \in \xd({P_t}).
  \end{array}
\end{align}
  This infimum can be computed efficiently because it is characterized as the minimum of a linear objective over an ordinary convex optimization constraint set. However, as $2\epsilon$-suboptimal solutions are not unique, a particular selection must be made.
  To do so, the shallow update (SU) Algorithm \ref{alg:shallow} selects the exploration rate closest to some reference logarithmic rate $\eta'_t$. As the optimization characterization \eqref{eq:minimal-primal-projection-1} is strictly convex, its minimizer when feasible must be unique. Our selection procedure thus circumvents the potential pitfall that the infimum \eqref{eq:shallow-lower-bound} may not in fact admit a minimizer and as outlined in Proposition \ref{prop:continuous-shallow-selection} enjoys much better continuity properties even if this infimum is in fact attained. 
  
  Let $\eta_t=\text{\sf SU}(  P_{t}, \eta'_t~;~\mu_{t}, \epsilon)$ be the output of the  shallow update Algorithm and define $\bar x_t \in \textstyle\arg\min_{x\in X} \tfrac{N_{t}(x)}{ \eta_{t}(x)}$.
  {\color{black} When $N_{t}(x^\star_t) \leq N_t(\bar x_t)$, the algorithm plays the empirically optimal arm $x^\star_t \in \textstyle\arg\min \set{N_t(x')}{x'\in X^\star_t}$. Otherwise, DUSA pulls a suboptimal arm
$
  \bar x_t \in \textstyle \arg\min_{x\in \tilde X(  P_{t})} \tfrac{N_{t}(x)}{ \eta_{t}(x)}
$ 
whose empirical logarithmic exploration rate  deviates from its optimal target rate to the largest relative extent.  
  The condition $N_{t}(x^\star_t) \leq N_t(\bar x_t)$ signals that  empirically optimal arm $x_t^\star$ has not been sufficiently explored to justify the forced exploration of $\bar x_t$. Hence, when $N_{t}(x^\star_t) \leq N_t(\bar x_t)$,   $x^\star_t$ is pulled instead of $\bar x_t$ to ensure $x^\star_t$ is explored enough.}

\begin{algorithm}
\textbf{Input.} Reward distribution $Q$, reference exploration target rate $\eta'$, dual variable $\mu =(\alpha, \beta, \lambda)$, and accuracy parameter $\epsilon>0$. Compute 
\\
   \begin{equation}
    \label{eq:minimal-primal-projection-1}
    \begin{array}{rl}
     \eta_\epsilon(Q) \defn \arg\min_{\eta\ge 0} & \sum_{x\in X} (\eta(x)-\eta'(x))^2 \\[0.5em]
      \st  
                                                   & 1\leq \overline{\dual}(\eta, x, Q~;~ \mu (x))  \quad \forall x\in \xd(Q), \\[0.5em]
      &  \sum_{x\in \tilde{X}(Q)}\eta(x) \Delta(x, Q) \leq \overline{C}(Q ~; ~ \mu)+2\epsilon.             
    \end{array}
  \end{equation}
  If no feasible solution exists, we simply set $\eta_\epsilon(Q) \defn +\infty$.\\
  \textbf{Return.} $\eta_\epsilon(Q)$.
\caption{Shallow Update $\text{\sf SU}(Q, \eta' ~;~\mu, \epsilon)$}
\label{alg:shallow}
\end{algorithm}

Finally, DUSA updates the reference  rate $\eta'_t$ and dual variables  $(\eta'_t, \mu_t)=\mathrm{DU}(P_t, \epsilon)$ using the deep update (DU) Algorithm \ref{alg:deep}. The deep update algorithm returns a $\epsilon$-suboptimal solution of the dual counterpart of the lower bound problem \eqref{eq:silo-dual} computed at the empirical reward distribution $P_t$. As the dual counterpart of the lower bound problem \eqref{eq:silo-dual} is an convex optimization problem,  such $\epsilon$-suboptimal solutions can be readily determined. However, for reasons discussed previously, these solutions are not unique and a particular selection must be made. Our deep update algorithm selects a certain minimal norm solution over a slightly restricted feasible set. As the optimization formulation \eqref{eq:minimal-primal-dual-projection-1} in the deep update algorithm is strictly convex, its minimizer is always unique. Note that the first and second  sets of constraints in problem \eqref{eq:minimal-primal-dual-projection-1} are the same as those in problem \eqref{eq:silo-dual}. This ensures that any feasible solution to problem \eqref{eq:minimal-primal-dual-projection-1} is a feasible solution to problem \eqref{eq:silo-dual}. The third set of constraints of problem \eqref{eq:minimal-primal-dual-projection-1} guarantees  any feasible solution to this problem to be an $\epsilon$-suboptimal solution to problem \eqref{eq:silo-dual}. Note that the optimal value $C(Q)$, which appears in the third constraint,  can be obtained simply by solving an ordinary convex optimization problem.
  The fourth set of constraints of problem \eqref{eq:minimal-primal-dual-projection-1} 
ensures  $\eta$ to be bounded away from zero and helps us establish continuity of the optimal solution to problem \eqref{eq:minimal-primal-dual-projection-1}. The last set of constraints of this problem guarantees the dual function $\dual$ to be  finite.\footnote{The dual function, which is defined in Equation  
\eqref{eq:dual}, is written as a function of the logarithm  of variable  $\omega = \tfrac{ \eta(x)-\lambda(r, x)-\beta(x)-\alpha(x) r \mb 1(x=x')}{\eta (x)}$. The last set of constraints of problem \eqref{eq:minimal-primal-dual-projection-1} insures that the $\omega$'s are bounded away from zero and consequently the dual functions are finite.}  
We would  like to stress that, thanks to our sufficient information test, we do not carry out either the deep or shallow update in every round. 

\begin{algorithm}[ht]
\textbf{Input.} Reward distribution $Q$ and accuracy parameter $\epsilon>0$. Let $\eta_\epsilon'(Q), \mu_\epsilon(Q))$ be
\begin{equation}
  \label{eq:minimal-primal-dual-projection-1}
  \begin{array}{r@{~~}l}
     \arg{\displaystyle\min_{\eta,\mu}} & \sum_{x\in X}\eta(x)^2+\sum_{x\in X}\norm{\mu(x)}_2^2 \\ [0.5em]
    \st  &\eta(x)\in \real_+, ~\mu(x)=(\alpha(x), \beta(x), \lambda(x))\in \mathcal K^\star\times\!\real_+\!\times\!\real~~\forall x\in X, \\[0.5em]
                                                                                  & 1\leq \dual(\eta, x, Q~;~ \mu(x))  \quad\forall x\in \xd(Q),\\[0.5em]
                                                                                  & \sum_{x\in \tilde{X}(Q)}\eta(x) \Delta(x, Q) \leq C(Q)+\epsilon, \\[0.5em]
                                                                                  & \eta(x) \geq \epsilon/(2\sum_{x\in \tilde{X}(Q)} \Delta(x, Q)) \quad\forall x\in \tilde X(Q),\\[0.5em]

                                                                                  & \eta (x)\geq \lambda(r, x)(x')\!+\!\beta(x')\!+\!\alpha(x') r \mb 1(x=x')\!+\!\frac{\epsilon}{(2\sum_{x\in \tilde{X}(Q)} \Delta(x, Q))} ~ \forall x'\in \xd(Q),\,x\in \tilde X(Q).
  \end{array}
\end{equation}
\textbf{Return.} $(\eta_\epsilon'(Q), \mu_\epsilon(Q)) $.
\caption{Deep Update $\textsc{DU}(Q~;~ \epsilon)$}
\label{alg:deep}
\end{algorithm}

As discussed earlier, it is critical to have a continuous selection rule as the $\epsilon$-suboptimal solutions are not unique. The subsequent proposition points out that our deep update selection rule is indeed well-defined and continuous on $\mc P'$ as required; see  the proof of this  result  in Appendix \ref{sec:continuous-selections}.

\begin{proposition}[Continuous Selection Rule for the Deep Update Algorithm]
  \label{prop:continuous-deep-selection}
  Our deep update selection rule $(\eta_\epsilon'(Q), \mu_\epsilon(Q))$, presented in Equation \eqref{eq:minimal-primal-dual-projection-1},  exists and is continuous on $\mathcal P'$, where $\mathcal P'$ consists of all the distributions in $\mathcal P$ that satisfy Assumption \ref{assumption}. 
\end{proposition}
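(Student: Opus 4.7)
The plan is to establish existence and uniqueness of the minimizer at each $Q \in \mathcal P'$, and then derive continuity from Berge's maximum theorem applied to the parametric feasibility correspondence $Q \mapsto F(Q)$ of problem \eqref{eq:minimal-primal-dual-projection-1}. For existence I would invoke Proposition \ref{prop:continuity-regret-lower-bound} (which gives $C(Q) < \infty$ on $\mathcal P'$) together with the strong duality of Theorem \ref{thm:lowerbound_dual} to produce an $(\epsilon/2)$-suboptimal primal-dual pair for \eqref{eq:silo-dual}, and then nudge it slightly upward in $\eta$ (rescaling $\mu$ correspondingly in the cone) to meet the strict lower-bound constraints in \eqref{eq:minimal-primal-dual-projection-1} while remaining below the $C(Q)+\epsilon$ threshold. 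The constraint $\dual(\eta, x, Q; \mu) \geq 1$ carves out a convex set because $\dual$ is jointly concave in $(\eta, \mu)$ --- the key terms $\eta \log((\eta-c)/\eta)$ are the perspective of the concave function $u \mapsto \log(1-u)$. Closedness and convexity of $F(Q)$, together with strict convexity and coercivity of the quadratic objective, deliver a unique minimizer $(\eta'_\epsilon(Q), \mu_\epsilon(Q))$.

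To promote uniqueness into continuity I would apply Berge's maximum theorem, which requires continuity (both upper and lower hemicontinuity) of $F$ on $\mathcal P'$. Upper hemicontinuity is the easy direction: every constraint function in \eqref{eq:minimal-primal-dual-projection-1} is jointly continuous in $(Q, \eta, \mu)$ on the region where the arguments of the logarithms defining $\dual$ stay positive, using continuity of $C$ from Proposition \ref{prop:continuity-regret-lower-bound} and of $\Delta(x, \cdot)$; coupled with a local coercivity bound that confines the relevant $(\eta, \mu)$ to a compact window, this ensures that limits of feasible pairs remain feasible.

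\textbf{The main obstacle is lower hemicontinuity}, which demands that for $Q_n \to Q$ in $\mathcal P'$ and any $(\eta, \mu) \in F(Q)$ we can construct approximating $(\eta_n, \mu_n) \in F(Q_n)$. The additive margins built into Algorithm \ref{alg:deep} are designed precisely for this purpose: the $\epsilon$-slack in $\sum_{x} \eta(x)\Delta(x,Q) \leq C(Q) + \epsilon$, combined with the additive $\epsilon/(2\sum_x \Delta(x,Q))$ buffers in the positivity and argument-of-logarithm constraints, guarantee that $F(Q)$ has a non-empty relative interior on $\mathcal P'$. Any interior point $(\tilde\eta, \tilde\mu)$ satisfies every constraint strictly, and the continuity of $C$, $\Delta$, and of $\dual$ --- the last one secured on the region where the final constraint of \eqref{eq:minimal-primal-dual-projection-1} keeps the log arguments uniformly bounded away from zero --- implies that $(\tilde\eta, \tilde\mu) \in F(Q_n)$ for all $n$ sufficiently large. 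I would then approximate any boundary point of $F(Q)$ by a sequence of such interior points and diagonalize to obtain the required $(\eta_n, \mu_n)$. Berge's maximum theorem thus yields continuity of the unique selection $Q \mapsto (\eta'_\epsilon(Q), \mu_\epsilon(Q))$ on $\mathcal P'$, which completes the proof.
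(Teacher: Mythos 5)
Your proposal is correct in substance and shares the paper's key technical step, but it converts set-valued regularity into continuity of the selection by a different mechanism. The shared core is lower (hemi)continuity of the feasible-set mapping of \eqref{eq:minimal-primal-dual-projection-1}: like the paper, you manufacture a strictly feasible (Slater) point from an $\epsilon$-suboptimal solution of \eqref{eq:silo-dual} by shifting $\eta$ upward and rescaling, using monotonicity and positive homogeneity of $\dual$ (Lemma \ref{lemm:dual-function-properties}), its joint concavity in $(\eta,\mu)$, and its continuity on the region where the built-in margins keep the logarithm arguments away from zero (Proposition \ref{prop:continuity-dual-function}); the paper packages exactly this via Proposition \ref{prop:continuity-mapping-general}. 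Where you diverge is the final step: the paper invokes Michael's selection theorem to obtain a locally bounded continuous selection and then cites the continuity of the minimal-norm selection of a lower semicontinuous, closed convex-valued mapping (Aubin--Frankowska), whereas you establish full continuity (upper and lower hemicontinuity) of the feasible correspondence, compactify via coercivity of the quadratic objective together with the uniformly feasible Slater point, and conclude with Berge's maximum theorem plus strict convexity to get a unique, continuous argmin. Your route is more elementary and self-contained (no selection theorems), at the cost of two extra verifications the paper's route sidesteps: closedness/upper hemicontinuity of the constraint set (which does hold here, because the margin constraints pass to the limit and keep $\dual$ continuous at limit points) and the compact-window reduction (essential, since $\eta(x^\star)$ and the dual variables of arms outside $\xd(Q)$ are unbounded on the feasible set). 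Two ingredients you leave implicit and should state: (i) on a neighborhood of each $P\in\mathcal P'$ the sets $x^\star(Q)$, $\tilde X(Q)$ and $\xd(Q)$ are constant (Assumption \ref{assumption} and Lemma \ref{lemm:stability-optimal-arm}); without this the constraint index sets and the very formula for $\dual$ vary with $Q$ and your joint-continuity claims do not parse; and (ii) to make the constraints $\dual(\eta,x,Q;\mu(x))\ge 1$ strict at the Slater point, nudging $\eta$ upward alone only preserves the inequality, so you need the positive-homogeneity rescaling of the whole pair $(\eta,\mu)$ by a factor $1+\delta$, which is precisely how the paper constructs its Slater point.
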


 To establish our main result stated in Theorem \ref{thm:main}, we also require the shallow update selection rule to be stable. The shallow update is stable in the sense that $\text{\sf SU}(P_t, \eta_\epsilon'(P_{t'})~;~\mu_\epsilon(P_{t'}), \epsilon)$ converges to the desired reference rate $\eta'_{\epsilon}(P)$ when both empirical distributions $P_t$ and $P_{t'}$ approach the true reward distribution $P$.  This notion of stability for the shallow update is formalized in the following proposition and will be crucial to establish the optimality of our DUSA policy.  

\begin{proposition}[Stability of the  Shallow Update Selection Rule]
  \label{prop:continuous-shallow-selection}
  For any $P\in \mathcal P'$, we have 
    \[
    \begin{array}{rl@{\quad}l}
      \lim_{\kappa\to 0} ~\max_{Q_1, Q_2} & \norm{ \text{\sf SU}(Q_1, \eta_\epsilon'(Q_2) ~;~\mu_\epsilon(Q_2), \epsilon) - \eta'_\epsilon(P)}_2& =\quad 0. \\[0.5em]
      \st & \norm{Q_1-P}_\infty \leq \kappa, ~ \norm{Q_2-P}_\infty \leq \kappa &
    \end{array}
  \]
\end{proposition}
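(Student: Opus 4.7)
The plan is to reduce the claim to three ingredients: (i) the joint convergence $(\eta_\epsilon'(Q_2),\mu_\epsilon(Q_2))\to(\eta_\epsilon'(P),\mu_\epsilon(P))$ as $Q_2\to P$, which is exactly Proposition~\ref{prop:continuous-deep-selection}; (ii) the identification $\text{\sf SU}(P,\eta_\epsilon'(P);\mu_\epsilon(P),\epsilon)=\eta_\epsilon'(P)$; and (iii) joint continuity of the map $(Q_1,\eta',\mu)\mapsto\text{\sf SU}(Q_1,\eta';\mu,\epsilon)$ at the limit point $(P,\eta_\epsilon'(P),\mu_\epsilon(P))$. Given (i)--(iii), the uniform convergence asserted in the proposition follows because the composition $(Q_1,Q_2)\mapsto\text{\sf SU}(Q_1,\eta_\epsilon'(Q_2);\mu_\epsilon(Q_2),\epsilon)$ is continuous on a compact neighborhood of $(P,P)$ and takes the value $\eta_\epsilon'(P)$ at $(P,P)$.

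For step (ii), I would check that $\eta_\epsilon'(P)$ is feasible in the strictly convex program \eqref{eq:minimal-primal-projection-1} at parameters $(P,\eta_\epsilon'(P),\mu_\epsilon(P))$. The dual-test constraints follow immediately: the deep update \eqref{eq:minimal-primal-dual-projection-1} enforces $\dual(\eta_\epsilon'(P),x,P;\mu_\epsilon(P)(x))\ge 1$ for every $x\in\xd(P)$, and taking $\rho=1$ in the definition of $\overline\dual$ gives $\overline\dual(\eta_\epsilon'(P),x,P;\mu_\epsilon(P)(x))\ge 1$. The slack constraint is handled via weak duality (Lemma~\ref{thm:dual-feasibility}), which yields $\overline C(P;\mu_\epsilon(P))\ge C(P)$, together with the deep-update inequality $\sum_x\eta_\epsilon'(P)(x)\Delta(x,P)\le C(P)+\epsilon\le\overline C(P;\mu_\epsilon(P))+2\epsilon$. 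Since the objective $\|\eta-\eta_\epsilon'(P)\|_2^2$ is strictly convex and vanishes at $\eta_\epsilon'(P)$, this point is the unique minimizer, establishing (ii).

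For step (iii) I would invoke a Berge-style argument for strictly convex parametric programs. Joint continuity of the objective in $(\eta,\eta')$ is immediate. Upper hemicontinuity of the feasible correspondence follows from joint continuity of $\overline\dual$ and $\overline C$ at the relevant point: the last family of strict inequalities in the deep update \eqref{eq:minimal-primal-dual-projection-1} keeps the arguments of the logarithms in $\dual$ uniformly bounded away from zero on a neighborhood of $(P,\eta_\epsilon'(P),\mu_\epsilon(P))$, so the pointwise supremum defining $\overline\dual$ can be localized to a compact range of $\rho$ and inherits joint continuity, while the value function $\overline C(Q;\mu)$ is continuous by standard convex-programming sensitivity. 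Lower hemicontinuity requires a Slater point: one inflates $\eta_\epsilon'(P)$ slightly, using concavity of $\dual$ in $\eta$ together with the positivity margin in \eqref{eq:minimal-primal-dual-projection-1} to ensure all dual-test constraints and the slack constraint become strict, and then uses continuity of $\overline\dual$ and $\overline C$ to preserve strict feasibility under small perturbations of $(Q,\mu)$. Combining upper and lower hemicontinuity with strict convexity yields continuity of the unique minimizer, which is step (iii).

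The main obstacle is verifying the Slater-type lower hemicontinuity uniformly in the perturbation parameters. The delicate point is that the constraint $\overline\dual(\eta,x,Q;\mu(x))\ge 1$ is defined through an embedded supremum over $\rho\ge 0$; to pass from strict feasibility at the limit to strict feasibility on a neighborhood I must show both that the optimal $\rho$ in the definition of $\overline\dual$ stays bounded and bounded away from zero locally, and that the strictness margin is uniform. This is where Assumption~\ref{assumption} (in particular $P\in\mathrm{int}(\mathcal P)$ and the local constancy of $x^\star(\cdot)$, $\xd(\cdot)$, $\xn(\cdot)$) and the explicit separations baked into the deep-update feasible set are essential, and careful bookkeeping of these slacks is the bulk of the technical work.
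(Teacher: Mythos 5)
Your overall strategy shares the key ingredients of the paper's argument---continuity of the deep update (Proposition \ref{prop:continuous-deep-selection}), the identity $\text{\sf SU}(P,\eta'_\epsilon(P);\mu_\epsilon(P),\epsilon)=\eta'_\epsilon(P)$ (your step (ii) is correct, including the inequality $\overline{C}(P;\mu_\epsilon(P))\geq C(P)$ from weak duality), and a Slater-type inflation of $\eta'_\epsilon(P)$---but step (iii) as written has a genuine gap. Routing the conclusion through full joint continuity of $\text{\sf SU}$ via a Berge-type argument forces you to establish upper hemicontinuity of the feasible correspondence, hence upper semicontinuity of $\overline{\dual}$ (a supremum over an unbounded range of $\rho$) and continuity of the value function $\overline{C}(Q;\mu)$, whose constraints are themselves embedded suprema. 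You assert the compact localization of $\rho$ and ``standard sensitivity'' for $\overline{C}$ without proof, and you yourself flag this as the unresolved obstacle; moreover, the $\text{\sf SU}$ feasible set is unbounded in the coordinate $\eta(x^\star(Q))$ (it enters neither the dual-test nor the budget constraint), so the compact-valuedness Berge requires also fails without an additional truncation step. As it stands, the chain (i)--(iii) therefore does not close.

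The useful observation is that none of this machinery is needed, which is exactly how the paper's proof avoids it. Since the $\text{\sf SU}$ objective is precisely $\sum_{x\in X}(\eta(x)-\eta'(x))^2$ with reference $\eta'=\eta'_\epsilon(Q_2)$, the distance from the $\text{\sf SU}$ output to $\eta'_\epsilon(Q_2)$ is bounded by the distance of \emph{any single feasible point} to $\eta'_\epsilon(Q_2)$; combined with the triangle inequality and Proposition \ref{prop:continuous-deep-selection}, only the ``lower'' (Slater/feasible-point) half of your step (iii) is required, and that half of your argument is sound. The paper exhibits the feasible point $\eta'_\epsilon(Q_2)/(1-\delta)$: Lemma \ref{lemma:cont-dual-2} gives $\overline{\dual}(\eta'_\epsilon(Q_2),x,Q_1;\mu_\epsilon(x,Q_2))\geq 1-\delta(\kappa,P)$, proved by evaluating at $\rho=1$ only and using that $\dual$ is affine in $Q$ with coefficients uniformly controlled thanks to the deep-update margins and boundedness of the continuous selection on a compact neighborhood; positive homogeneity (Lemma \ref{lemm:res-function-properties}) then restores feasibility after scaling, and the budget constraint is absorbed by choosing $\delta$ with $(C(P)+\epsilon)/(1-\delta)\leq C(P)+2\epsilon$, using $\overline{C}\geq C$ and continuity of $C$ on $\mathcal P'$. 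In particular, your concern about the optimizing $\rho$ in $\overline{\dual}$ staying bounded and bounded away from zero is moot: only lower bounds on $\overline{\dual}$ are ever needed, and $\rho=1$ suffices. If you discard the upper-hemicontinuity half of your Berge argument and keep your inflation construction together with the feasible-point bound on the $\text{\sf SU}$ objective, your proof reduces to essentially the paper's.
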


The proof of Proposition \ref{prop:continuous-shallow-selection} is  in Appendix \ref{sec:proof:shallow}. We now present our main result.

\begin{theorem}[Regret Bound of DUSA]\label{thm:regret}
  \label{thm:main}
  Let the true reward distribution $P\in \mathcal P'$ and consider any accuracy parameter $0<\epsilon<\tfrac{1}{\abs{X}}$. Then, DUSA suffers the following asymptotic regret:
  \begin{align}
    \label{eq:regret-upper-bound}
    \limsup_{T\to\infty}\textstyle ~\frac{\reg(T, \optp)}{\log(T)}&\leq \textstyle(1+\epsilon)(C(\optp)+ \epsilon\left(1+\sum_{x\in X} \Delta(x, \optp)\right))\,,\\
    \intertext{where the regret is computed against the benchmark that knows the best arm in advance.  Furthermore, the expected number of rounds in which DUSA enters the exploration phase in bounded by} 
    \label{eq:exploration-rounds-upper-bound}
    \limsup_{T\to\infty}\textstyle ~\frac{\E{}{s_T}}{\log(T)}& \leq (1+\epsilon)\abs{X}^2(\norm{\eta'_\epsilon(\optp)}_\infty+\epsilon)\,,
  \end{align}
  where the expectation is taken with respect to the randomness in the observed rewards and choices made by DUSA. 
\end{theorem}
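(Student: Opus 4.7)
The natural starting point is the identity $\reg(T, P) = \sum_{x \in \tilde X(P)} \Delta(x, P) \cdot \E{}{N_{T+1}(x)}$, which reduces the task to bounding, for each suboptimal $x$, how often DUSA pulls $x$. I split the plays of $x$ into three disjoint contributions: the single pull during the initialization phase (which contributes $O(1)$); pulls during exploitation rounds, which happen only when the empirical optimum $x^\star_t$ differs from the true optimum $x^\star(P)$; and pulls during exploration rounds. The exploitation contribution is $o(\log T)$ by standard Chernoff-type concentration arguments, because once every arm has been pulled a divergent number of times the event $x^\star_t \neq x^\star(P)$ has an exponentially decaying probability in $N_t(x^\star(P))$. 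The bulk of the proof is therefore to control the contribution from exploration rounds, and hence to bound $\E{}{s_T}$.

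\textbf{Convergence of $P_t$ and the dual variables.} The forced-exploration safeguard in the exploration branch --- pulling the least-played arm $\underline{x}_t$ whenever $\min_x N_t(x) \le \epsilon s_t / (1+\log(1+s_t))$ --- combined with the initialization phase guarantees that every arm is played infinitely often on any sample path on which $s_t$ itself diverges, while the sufficient-information test forces $s_t$ to diverge as soon as the $N_t(x)$ grow slower than $\log t$ at some deceitful $x$. In particular $N_t(x) \to \infty$ for every arm, so the strong law of large numbers gives $P_t \to P$ almost surely. Since $P \in \interior(\mathcal P)$ by Assumption \ref{assumption}, eventually $P_t \in \mathcal P'$, and Propositions \ref{prop:continuity-regret-lower-bound} and \ref{prop:continuous-deep-selection} yield $C(P_t) \to C(P)$ and $(\eta_\epsilon'(P_t), \mu_\epsilon(P_t)) \to (\eta_\epsilon'(P), \mu_\epsilon(P))$ almost surely. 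Since $\mu_t$ is refreshed only during exploration rounds, one reads off $\mu_t \to \mu_\epsilon(P)$ along this subsequence, and the stability statement of Proposition \ref{prop:continuous-shallow-selection} then extends this to the shallow-update output, giving $\eta_t \to \eta_\epsilon'(P)$ almost surely.

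\textbf{Bounding the exploration rounds.} The function $\overline{\dual}(\eta, x, P; \mu)$ is positive homogeneous of degree one in $\eta$ (this can be read off \eqref{eq:dual} by absorbing the scaling into $\rho$), and on $\mathcal P'$ it is jointly continuous in $(P, \mu)$ at $(P, \mu_\epsilon(P))$. Consequently, once $P_t$, $\mu_t$ and $\eta_t$ are within some small $\delta$ of their limits, the sufficient-information test $\overline{\dual}(N_t/\log t, x, P_t; \mu_t(x)) \ge 1+\epsilon$ is satisfied at every deceitful $x$ as soon as $N_t(x) \ge (1+\epsilon)(\eta_\epsilon'(x, P)+\epsilon)\log t$. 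An exploration round therefore only occurs when some deceitful arm still violates this bound, and DUSA's exploration rule then pulls either the underexplored suboptimal arm $\bar x_t$, the empirical optimum $x^\star_t$ (if itself underplayed relative to $\bar x_t$), or the least-played arm $\underline x_t$. A water-filling / pigeonhole argument on the counters $N_t(x)/\eta_t(x)$ gives, for each $x\in X$,
\[
\E{}{N_{T+1}(x)\,\mb 1\{\text{exploration}\}} \;\le\; (1+\epsilon)\bigl(\eta_\epsilon'(x,P)+\epsilon\bigr)\log T + o(\log T),
\]
and summing over $x$ (with an $\abs{X}$ overhead for the tie-breaking / forced-exploration contributions and another $\abs{X}$ for the $x^\star_t$-pulls during exploration) yields the stated bound \eqref{eq:exploration-rounds-upper-bound}.

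\textbf{Assembling the regret bound and the main obstacle.} Plugging the above estimate into the regret decomposition and using that $\eta_\epsilon'(P)$ is an $\epsilon$-suboptimal solution of \eqref{eq:silo-dual} yields
\[
\limsup_{T\to\infty} \frac{\reg(T,P)}{\log T} \;\le\; \sum_{x\in \tilde X(P)}(1+\epsilon)\bigl(\eta_\epsilon'(x,P)+\epsilon\bigr)\Delta(x,P) \;\le\; (1+\epsilon)\Bigl(C(P)+\epsilon\Bigl(1+\sum_{x\in X}\Delta(x,P)\Bigr)\Bigr),
\]
which is \eqref{eq:regret-upper-bound}. The main obstacle in making this program rigorous is the interplay between sporadic deep updates and the much more frequent shallow updates: the dual variables $\mu_t$ are only refreshed during exploration rounds, so the implication ``$P_t$ close to $P$ implies $\eta_t$ close to $\eta_\epsilon'(P)$'' must be controlled uniformly across the stretches between consecutive deep updates, which is exactly what the stability result of Proposition \ref{prop:continuous-shallow-selection} is designed to deliver but which must be combined with uniform concentration bounds for $P_t$ over such stretches. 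A secondary delicate point is converting the scalar test $\overline{\dual} \ge 1+\epsilon$ into per-arm lower bounds on $N_t(x)/\log t$; the non-uniqueness of $\epsilon$-suboptimal rates forces the use of the continuous selections of Algorithms \ref{alg:shallow} and \ref{alg:deep} rather than arbitrary minimizers, which is why their strictly convex objectives and the interior slack $\epsilon/(2\sum_x \Delta(x,Q))$ in \eqref{eq:minimal-primal-dual-projection-1} are essential to the argument.
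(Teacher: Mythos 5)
Your skeleton (initialization/exploitation/exploration decomposition, continuity of $C(\cdot)$ and of the deep/shallow selections, and a pigeonhole bound on exploration pulls via $\eta'_\epsilon$) matches the paper's, but there is a genuine gap in the exploitation phase. You argue that once all arms are pulled often, the event $x^\star_t \neq x^\star(P)$ has probability decaying exponentially in $N_t(x^\star(P))$. During exploitation rounds DUSA pulls only the \emph{empirically} optimal arm, so on a stretch where a wrong arm is deemed optimal the counter $N_t(x^\star(P))$ does not grow at all, and nothing forces it to: forced exploration is triggered only in exploration rounds, and the whole danger is precisely that the sufficient-information test keeps passing while the empirical optimum is wrong. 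The paper closes this in two steps you do not have: (i) when the empirically optimal arm is poorly estimated, a stopping-time argument shows that after $s$ such rounds the wrongly favored arm's \emph{own} counter exceeds $s/(4|X|)$, so the stopping-time concentration bound (Lemma \ref{lemma:stopping-time-inequality}) makes this contribution finite; (ii) when it is well estimated but still wrong, the fact that the test passed is converted—by constructing a deceitful distribution $\bar Q$ that agrees with $P_t$ on the empirically optimal arm and with $P$ on the suboptimal arms—into the event $\sum_{x\in X} N_t(x)\, I(P_t(x), P(x)) \geq (1+\epsilon)\log t$, whose probability is $O(t^{-(1+\epsilon)}\,\mathrm{polylog}(t))$ by the paper's new concentration inequality for the weighted information distance (Lemma \ref{lemma:concentration}). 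That inequality, and the reduction to it, are the heart of the exploitation analysis; without them the expected regret of exploitation rounds is not controlled.

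Relatedly, your reliance on almost-sure convergence ($P_t\to P$ by the SLLN, dual variables converging "eventually") cannot by itself deliver bounds on $\limsup_T \E{}{\reg(T,\optp)}/\log(T)$ or on $\E{}{s_T}$: one needs, for every $t$ (or every exploration count $s$), probability bounds on the bad events whose cumulative expected contribution is finite, which is exactly what the paper's $C_2$ computation does by combining the lower bound $\min_x N_t(x) \gtrsim \epsilon s_t/(1+\log(1+s_t))$ at exploration rounds with the stopping-time concentration bound, before invoking Proposition \ref{prop:continuous-shallow-selection} only on the "good" rounds. (Note also that $N_t(x)\to\infty$ for all arms is false in general—for nondeceitful instances suboptimal arms are pulled only finitely often—another sign the a.s. framing is not the right vehicle.) Your exploration-phase accounting and the final assembly into \eqref{eq:regret-upper-bound} and \eqref{eq:exploration-rounds-upper-bound} are otherwise in the same spirit as the paper's $W^1_T, W^2_T, W^3_T$ split, but the quantitative concentration machinery above must be supplied for the argument to go through.
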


Comparing the regret lower bound in Proposition \ref{lemma:lower_bound} with the regret upper bound found in Equation \eqref{eq:regret-upper-bound}, we can hence conclude that 
DUSA enjoys  the minimal asymptotic logarithmic regret  as the accuracy  parameter $\epsilon$ goes to zero.
The maximum target exploration rate $\norm{\eta'_\epsilon(P)}_\infty=\max_{x\in \tilde X(P)} \eta'(x,P)$ appearing in our bound on number of exploration rounds in Equation \eqref{eq:exploration-rounds-upper-bound} remains bounded uniformly for all accuracy parameter values $\epsilon>0$.  This is so because (i) we have from $\epsilon$-suboptimality that $\textstyle\sum_{x\in \tilde X(P)} \eta'_\epsilon( x, P) \Delta(x, P)\leq C(P)+\epsilon<\infty$, and (ii) $\Delta(x, P)>0$ for all suboptimal arms.
Hence, the number of times we have to update the target exploration rates and dual variables using either the shallow or deep update Algorithms \ref{alg:shallow} and \ref{alg:deep} grows merely logarithmically in the number of rounds $T$. Note  that the number of updates is equal to  the number of the exploration rounds. Theorem \ref{thm:main} does not bound the regret of DUSA for any finite number of rounds but instead is completely asymptotic in nature. {\color{black}Nevertheless, in the proof of Theorem \ref{thm:main}, which we will present in Appendix \ref{sec:proof}, non-asymptotic regret bounds are also derived. The non-asymptotic regret of DUSA can be upper bounded by  $C_{\rm{initialize}} + C_{\text{exploit}}+C_{\text{explore}}$,
where $C_{\rm{initialize}} =|X|$ and $C_{\text{exploit}}$ defined in Equation \eqref{eq:c_exploit} are, respectively,  upper bounds on the regret during the  initialization and exploitation rounds.
 Interestingly, thanks to our sufficient information test, $C_{\text{exploit}}$ does not contribute to the asymptotic regret. The term
$C_{\text{explore}}$, which is the dominating factor in  the asymptotic regret, is an upper bound on the regret during the exploration rounds.  In the proof of   Theorem \ref{thm:main}, the term $C_{\text{explore}}=C_2+C_3+C_4$ is itself decomposed into three components.
The terms $C_2$  (respectively $C_3+C_4$) is an  upper bound on the regret during exploration rounds when the empirical  reward distribution is ``far from'' (respectively ``close to'')   the true reward distribution.
 }

{\color{black}
  \begin{remark}[Nondeceitful Bandits]
    \label{remark:nondeceitful-bandits}
    When the bandit $P\in \mc P'$ is nondeceitful, i.e., $\tilde X_d(P)=\emptyset$, the regret lower bound function $C(P)=0$ vanishes which opens up the possibility of achieving a sublogarithmic regret. \citet{jun2020crush} propose a bandit policy which in fact achieves a bounded regret on such nondeceitful bandits. See also \citet{lattimore2014bounded,gupta2020unified} for similar results. We show in Remark \ref{rem:proof:nondeceitful-bandits} in the proof of our main Theorem \ref{thm:main} that the regret of DUSA also remains bounded for all $T$ on such nondeceitful bandits.
  \end{remark}
}

\section{Numerical Experiments}
\label{sec:numer-exper}

{\color{black}In this section, we evaluate the performance of DUSA in the context of several structural bandit problems discussed in Section \ref{sec:struct-band-ex}. We first focus on  linear and Lipschitz bandits, which are well-known structured bandits. Then, to illustrate the flexibility of DUSA, we further evaluate  DUSA in the context of dispersion bandits. We begin by explaining how DUSA is implemented.} 

\subsection{DUSA Implementation}

{\color{black}
We have implemented our DUSA bandit policy for the discussed linear, Lipschitz and dispersion bandits discussed in Section \ref{sec:struct-band-ex} in \texttt{Julia}. All code is available as a \texttt{Julia} package at \url{https://tinyurl.com/473y3wr5}. The code is written in a modular fashion which allows to quickly extend our DUSA package to other structures beyond those we have implemented. Notice  that every structured bandit problem is uniquely characterized by a convex set $\mc P$. Extending DUSA to be able to deal with any convex bandit problems is as simple as exposing its associated dual cone $\mc K^\star=\cone(\mc P)^\star$ to the package; see also Section \ref{sec:dual}. In every exploration round, DUSA may need to compute both the shallow and deep updates stated in Section \ref{sec:dusa}. These updates  demand the solution of a convex optimization problem. The resulting exponential cone optimization problems are subsequently solved using the commercial exponential cone interior point solver \texttt{Mosek}.

In our practical implementation, we make two minor changes in DUSA. 
First, instead of carrying out the shallow and deep updates exactly as stated in Algorithms \ref{alg:shallow} and \ref{alg:deep}, we simply return an arbitrary $\epsilon$-suboptimal solution to the optimization problems \eqref{eq:shallow-lower-bound} and \eqref{eq:lb2}, respectively. In all numerical results we set our accuracy parameter as $\epsilon =1\e{-3}$. Second, in our implementation, DUSA policy   enters the exploitation phase when the following sufficient test passes for all empirically deceitful arms $x\in \xd(P_t)$:
\(
  \overline{\dual}_t(x)\ge (1-\exp(-t/T_0)) \cdot (1+\epsilon)\,,
\) 
where we set $T_0=2000$. This information test is slightly different from the one described  in Algorithm \ref{eq:dusa}.  Our DUSA policy as described in Algorithm \ref{eq:dusa} only enters the exploitation phase when the sufficient information test $\overline{\dual}_t(x)\ge 1+\epsilon$ passes for all empirically deceitful arms $x\in \xd(P_t)$.  We found that replacing this test with this slightly modified version (i.e., $ \overline{\dual}_t(x)\ge (1-\exp(-t/T_0)) \cdot (1+\epsilon)$)  avoids over-exploration during the early rounds of our policy. Finally, we note that computing $\overline{\dual}_t(x)$ for any such arm $x$ merely demands the solution of the univariate convex optimization problem in Equation \eqref{eq:test} which we determine with a simple bisection search.
}

\subsection{Evaluating DUSA Under Well-known Structured Bandits}

\subsubsection{Linear Bandits}
\label{sssec:linear-bandits-experiments}

{\color{black}
\textbf{Setup.}
We consider linear bandits with ten arms; that is $|X| =10$.  The  reward of arm $x\in X$ is drawn from a Bernoulli distribution with the mean  $c_x^T\theta$. Here, $c_x$ and $\theta =[\hat \theta^T, 1]^T$ are 5-dimensional vectors, where $\hat \theta$ is a 4-dimensional vector and  each element  of  $\hat \theta$ is drawn independently from a standard normal distribution. 
After generating $\theta$, we 
generate $c_x$.  To do so, we first generate a  4-dimensional vector, denoted by $\hat c_x$, where each element of this vector is drawn from a standard normal distribution. We then set $c_x = [a\cdot \hat c_x^T, b]^T$, where $a,b\in \mathbb{R}$ are chosen such that $\max_{x\in X} c_x^T\theta =0.9$ and $\min_{x\in X} c_x^T\theta =0.1$. We consider $120$ problem instances, where each problem instance corresponds to a particular coefficient vector $\theta$ and $(c_x)_{x\in X}$. 
For each problem instance, we run our DUSA algorithm and the GLM-UCB algorithm of \cite{filippi2010parametric} $20$ times over $T= 5,000$ rounds.  Linear bandit problems are a special case of the parametric bandits discussed in Section \ref{sec:struct-band-ex} and are explicitly characterized by
\(
  \mc P_{\rm{lin}}=\set{Q\in \mc P_\Omega}{\forall x\in X, ~\exists \theta~\st~ \textstyle\sum_{r\in\setr} r Q(r, x) = c_x^T \theta}.
\)
Our DUSA algorithm can specialized to this class of convex bandit problems simply by considering the associated dual cone $\mc K_{\rm lin}^\star=\cone(\mc P_{\rm lin})^\star$  explicitly characterized in Appendix \ref{sec:dual:running-examples}.

The GLM-UCB algorithm of \cite{filippi2010parametric}, which uses an upper confidence bound principle,  needs to solve two optimization  problems in every round; see Equations (6) and (7) in \cite{filippi2010parametric}. The first optimization problem, which is a maximum likelihood estimation step for $\theta$,  is convex problem and is easy to solve.  The second problem---which involves a projection step for the estimated value of $\theta$ obtained from the maximum  likelihood estimation approach---is quite complex to solve. To implement  GLM-UCB, we only solve the first optimization problem. However, to ensure that we do not return solutions which are too far from the true value $\theta$, we provide upper and lower bounds on the values that $\theta$ may take in the first optimization problem.\footnote{Let $\underline \theta =\min_{i\in [4]} \theta_{i}$ and $\bar \theta =\max_{i\in [4]} \theta_{i}$. Then, in the first optimization problem, we enforce $\theta_i \le 2 \cdot \bar \theta$ and $\theta_i\ge \min(2\cdot \underline \theta, 0 )$.} Despite  this change, the GLM-UCB policy remains very slow. To make it faster, we only solve the first optimization problem every $200$ rounds. Finally, we choose the \emph{confidence} parameter of GLM-UCB  as $\rho(t) =\sqrt{\tfrac{0.3}{\log(t)}}$ where the constant  $0.3$ is chosen using cross validation.

{
  Figure \ref{fig:Linear_bandit}  depicts the normalized cumulative regret of DUSA and GLM-UCB as a function of the number of rounds.
  Here, the normalized cumulative regret is the ratio of the cumulative regret over $T$ rounds to the asymptotic regret lower bound $C(P)\log(T)$ as  in Proposition \ref{lemma:lower_bound}. Each of these box plots depict $2,400$ data points as we generate $120$ problem instances and we run each problem instance $20$ times. We observe that the median of the normalized cumulative regret of DUSA and GLM-UCB are comparable. However, the  normalized cumulative regret of DUSA is more concentrated around its median than that of GLM-UCB. For instance, for GLM-UCB, we observe an outlier whose cumulative regret is around $20$ times the regret lower bound. On the other hand, the worst cumulative regret experienced by DUSA is less than $7$ times the regret lower bound, which  suggests that DUSA is more reliable than GLM-UCB when it comes to worst-case performance. 
 See Appendix \ref{sec:add-num-experiments} for similar experiments with      $\abs{X}=5$ and $\abs{X}=15$. 
}

We now comment on computation time of DUSA.  The computational time of DUSA is primarily determined by how fast the deep update in Algorithm \ref{alg:deep} can be carried out. Crucially, the deep update is only carried out in some of  exploration  rounds.  The average computation time to carry out exploration  rounds is here $0.3$ seconds while the average computation time for exploitation rounds is considerably less at $0.06$ seconds. Finally, the overall average computation time per round measured $0.09$ seconds indicating that, as expected, the number of exploitation rounds makes up a large fraction of all $T$ rounds.

\begin{figure}[ht]
  \centering
  \begin{subfigure}[b]{0.5\textwidth}
    \centering
    \includegraphics[width=\textwidth]{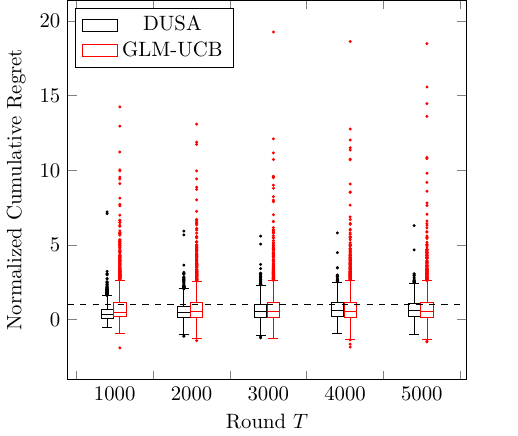}
    \caption{Linear Bandits}
    \label{fig:Linear_bandit}
  \end{subfigure}%
  \hfill
  \begin{subfigure}[b]{0.5\textwidth}
    \centering
    \includegraphics[width=\textwidth]{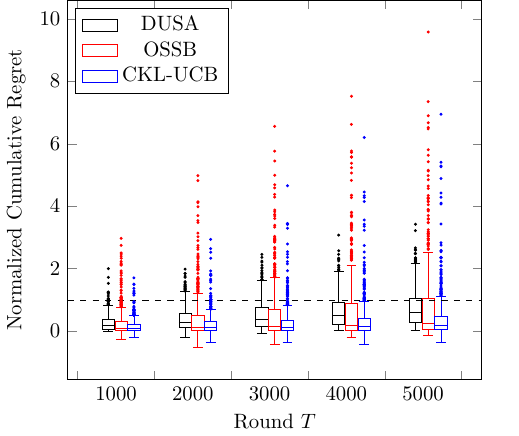}
    \caption{Lipschitz Bandits}
    \label{fig:lips_bandit}
  \end{subfigure}
  \caption{The normalized cumulative regret  of DUSA on linear and Lipschitz bandit instances as a function of the number of rounds $T$.
  }
\end{figure}
}

\subsubsection{Lipschitz Bandits}

{\color{black}
\textbf{Setup.} We consider Lipschitz bandits with $10$ arms and Lipschitz constant $L=0.5$. The reward of each arm $x$ is drawn from a Bernoulli distribution with mean $\theta(x)$. Here, we use the same setup as in \citet{pmlr-v35-magureanu14}  and set $\theta(x) = 0.8 -0.5\cdot|0.5-x|$, where $x$ is drawn from the uniform distribution on the interval $[0,1]$. We consider $150$ problem instances where we run each problem instance $5$ times over the course of $T = 5,000$ rounds. Recall that Lipschitz bandit problems as discussed in Section \ref{sec:struct-band-ex} are explicitly characterized by
\(
  \mc P_{\rm{Lips}}\defn \set{Q\in \mathcal P_\Omega}{%
    Q(1, x) - Q(1, x') \leq L \cdot d(x, x') ~~ \forall x\neq x' \in X    
  }
\)
where we consider the distance function $d(x, x')=\abs{x-x'}$.
Our DUSA policy can also be specialized to this class of convex bandit problems by considering its associated dual cone $\mc K_{\rm Lips}^\star=\cone(\mc P_{\rm Lips})^\star$ explicitly characterized in Appendix \ref{sec:dual:running-examples}. 

We compare our algorithm with two algorithms proposed by \cite{pmlr-v35-magureanu14} and  \cite{combes2017minimal}, named CKL-UCB and OSSB, which are tailored specifically for Bernoulli Lipschitz bandits.\footnote{Since the OSSB algorithm of \cite{combes2017minimal}, when tailored to Bernoulli Lipschitz problem, is virtually identical to the OSLB algorithm of \cite{pmlr-v35-magureanu14}, here  we do not consider OSLB  separately. Both OSLB and OSSB solve in every round the specialized regret lower bound problem stated in Equation \eqref{eq:lipschitz:parametric-linear} to decide what arm to pull.} Note that OSSB solves in every round the specialized regret lower bound problem stated in Equation \eqref{eq:lipschitz:parametric-linear} to decide what arm to pull. 
We solve this resulting linear optimization problem using \texttt{Mosek}. The former CKL-UCB policy is based on an upper confidence principle.

\textbf{Regret.} Figure \ref{fig:lips_bandit} shows a box plot of the normalized cumulative regret of DUSA, OSSB, and CKL-UCB over time.
Each of these box plots depicts $750$ data points as we generate $150$ problem instances and we run each problem instance $5$ times. We again observe that the median normalized cumulative regret across all policies behave quite similar. The normalized cumulative regret of the DUSA is yet again more concentrated around its median as compared to the normalized cumulative regret of its two competitors.

\begin{table}[ht]
  \centering
\footnotesize{
  \begin{tabular}{l@{\hspace{2em}}r@{\hspace{2em}}c@{\hspace{1em}}rrr}
    \cline{1-6}
     & \bfseries OSSB &&  \multicolumn{3}{c}{\bfseries DUSA}\\
    \cmidrule(r{5pt}){2-2}\cmidrule(l{5pt}r{0pt}){4-6} 
    \# Arms & Time & & Time & Time (exploitation) & Time (exploration) \\
    \cline{1-6}
    $\abs{X} = 5$ & 0.042 & & \bf{0.031} & 0.028 & 0.11 \\
    $\abs{X} = 10$ & \bf{0.16} & & 0.27 & 0.082 & 0.50  \\
    $\abs{X} = 15$ & \bf{0.42} & & 1.27 & 0.21 & 1.56  \\
    $\abs{X} = 20$ & \bf{0.77} & & 2.92 & 0.37 & 4.1 \\
    \cline{1-6}
  \end{tabular}}
  \caption{Average computation time in seconds to perform one round of the OSSB and DUSA policy.}
  \label{tab:ossb_vs_dusa_timing}
\end{table}

\textbf{Computation time of DUSA versus OSSB.} 
In Table~\ref{tab:ossb_vs_dusa_timing}, we also compare DUSA with OSSB in terms of their computation time over $T=100,000$ rounds averaged over three Lipschitz bandits instances generated as detailed earlier but now with an increasing number of arms $\abs{X} \in \{5, 10, 15, 20\}$.\footnote{As the number of arms $|X|$ increases,  the computational time of DUSA  is impacted in two distinct ways. First, the computation complexity of the dual optimization problems characterizing our shallow and deep updates which needs to be solved in an exploration round increases as $|X|$ grows. More specifically, for these updates, DUSA requires the solution to convex optimization problems whose size scales linearly with the number of arms.  Second, as the number of arms increases, the number of exploration rounds---in which a convex optimization problem needs to get solved---also grows.  
According to Theorem \ref{thm:main}, the expected number of exploration rounds scales as $\mathcal O(|X|^2 \log(T))$. 
}
The computational effort of OSSB consists predominantly of solving in every round the (semi) closed-form solution stated in Equation \eqref{eq:lipschitz:parametric-linear} to the regret lower bound problem \eqref{eq:silo2} in the context of Bernoulli Lipschitz bandits.
As discussed before, the computation time of DUSA differs significantly on whether or not an exploitation round is considered. Recall that during an exploitation round, DUSA only needs to conduct a simple sufficient information test while during an exploration round, DUSA may need to conduct shallow and deep updates in addition to the sufficient information test. 
Hence, we report the average computation time of DUSA for exploitation and exploration  rounds separately.

The computation time of DUSA averaged over both exploitation and exploration rounds is comparable to the overall computation time of OSSB. However, OSSB achieves a slightly better computation time (for finite $T$), compared with DUSA.
{On one hand, DUSA has an edge over OSSB as it only needs to conduct its shallow and deep updates in $\O(\log(T))$ rounds. On the other hand, 
as we observe in Table 1, due to the sufficient information test in DUSA, the running time of DUSA during its exploitation rounds is roughly half of  the per-round running time of OSSB. Putting these together, we see that OSSB has a slightly better running time than DUSA when $|X|> 5$. }

We further note that the fact that the computation time of OSSB for finite $T$ is slightly better than that of DUSA can be attributed to the fact that for Bernoulli Lipschitz bandits the regret lower bound (\ref{eq:silo2}) admits a simple (semi) closed-form solution and hence is computationally efficient to solve (see Equation \eqref{eq:lipschitz:parametric-linear}). \citet{combes2017minimal} indeed only define a computational procedure for OSSB for a select few convex bandit problems in which the lower regret bound problem \eqref{eq:silo2} admits such a (semi) closed-form solution. The fact that the computation times of DUSA and OSSB are comparable even for those particular problems where OSSB is applicable is quite remarkable. 
}

\subsection{Dispersion Bandits}

{\color{black}
\textbf{Setup.} We consider here dispersion bandit problems with ten arms where the rewards of each arm are supported on $\setr=\{0, 1/10, 2/10,  \dots, 1\}$. Recall that dispersion bandit problems as discussed in Section \ref{sec:struct-band-ex} are explicitly characterized by
\(
  \mathcal P_{\text{dis}}=\set{Q\in \mathcal P_\Omega}{\tfrac{\sum_{r\in\setr } r^2Q(r, x)}{\sum_{r\in\setr }r Q(r, x) }\le \gamma(x)\quad \forall x\in X}.
\)
Our DUSA algorithm can also be specialized to this class of convex bandit problems by considering its associated dual cone $\mc K_{\rm dis}^\star=\cone(\mc P_{\rm dis})^\star$ explicitly characterized in Appendix \ref{sec:dual:running-examples}.
The dispersion bound for each bandit problem  and each arm $x$ (i.e., $\gamma(x)$ in $\mathcal P_{\text{dis}}$) is obtained as $\gamma(x) \sim 1/\abs{\setr}+u(x)$, where the random variables $u(x)$ for $x\in X$ are independent and uniformly distributed on $[0, 1/5]$. To construct a particular dispersion bandit, we first draw a distribution $Q$ uniformly from the simplex $\mc P_\Omega$. A dispersion bandit instance $P$ is then obtained as the instance in $\mc P_{{\rm{dis}}}$  closest to $Q$ according to the distance $\sum_{r\in \setr, x\in X} \abs{P(r, x)-Q(r, x)}$.
We consider $100$ such bandit instances where we run each poblem instance $5$ times over the course of $T = 10,000$ rounds.

As discussed before, the computation time of DUSA differs significantly on whether or not an exploitation round is considered. The average computation time to carry out exploration rounds is here $0.8$ seconds while the average computation time for exploitation rounds is considerably less at $0.1$ seconds. The overall average computation time per round measured $0.2$ seconds. 

As remarked in Example \ref{ex:structured-problems-2}, the class of dispersion bandits has not been studied before.
Hence, we  compare the performance of DUSA with the KL-UCB bandit policy of \citet{cappe2013kullback} in terms of their regret where we remark that the latter policy does not exploit the dispersion structure fully. In Figure \ref{fig:plot-dispersion}, we present the average cumulative regret as a function of the number of rounds over all dispersion bandit instances; see the filled curves in the figure. The shaded area depicts the variation of this average over the considered runs and instances. Unlike KL-UCB bandit policy, our policy DUSA is able to exploit the dispersion bandit structure optimally.

{\color{black}Finally, we separately depict the average regret on the nondeceitful dispersion bandit instances accumulated by both DUSA and KL-UCB as dotted curves. The empirical evidence confirms here, as we pointed out in Remark \ref{remark:nondeceitful-bandits}, that on such nondeceitful bandit instances, DUSA suffers a regret which remains bounded in the number of rounds. The KL-UCB policy is not even rate-optimal here as its regret on nondeceitful instances increases in the number of rounds.
}

\begin{figure}[ht]
  \centering
  \includegraphics[height=5.8cm]{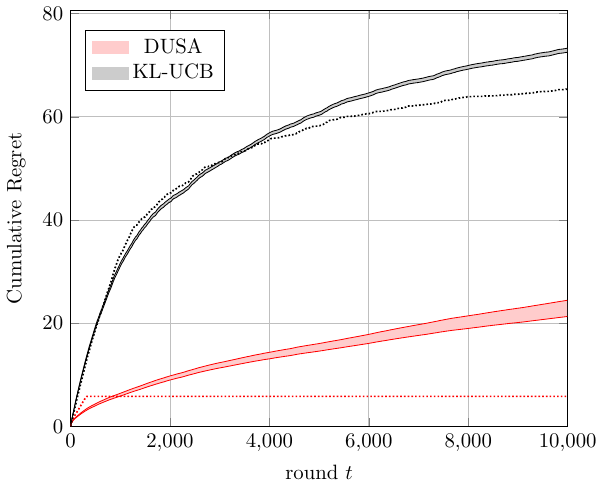}
  \caption{The filled curves show the cumulative regret of DUSA and KL-UCB for dispersion bandits, averaged over $100$ instances. The width of the curves is equal to two times the standard error of regrets. The dotted curves show the average cumulative regret of DUSA and KL-UCB for a specific instance of dispersion bandits that is non-deceitful. As expected, the regret of DUSA for the non-deceitful instance is finite and does not grow with $t$.}
  \label{fig:plot-dispersion}
\end{figure}
}

 \section{Concluding Remarks} \label{sec:conclude}
 
  In this paper, we have presented the first bandit policy that optimally exploits convex structural information in  a computationally feasible fashion.
 Our policy, DUSA, automatically exploits any known structural reward information  with the help of convex constraints on the unknown reward distribution. Hence, our powerful policy can be used in many practical problems where structural information is often available but may be peculiar to a very specific problem.
  Rather than developing  bandit policies for some idiosyncratic class of bandit problems, our paper presents a universally optimal algorithm.

  {
      We believe that our DUSA algorithm presents a useful blueprint which can be extended in several directions.
    For instance, in our setting, we assume that the set of realized rewards for each arm is finite. This assumption is made for technical reasons and we believe that it can be relaxed when the distribution of the reward is finitely parameterized (e.g., Gaussian and exponential distributions). Nonetheless, it is an interesting future research direction to explore under what conditions DUSA can be generalized to a setting with infinite reward sets. }

\section*{Acknowledgment}
N.G. was supported in part by the Young Investigator Program (YIP) Award from the Office of Naval
Research (ONR) N00014-21-1-2776 and the MIT Research Support Award.
{
  \bibliographystyle{informs2014}
  \bibliography{references}
}
\clearpage

\begin{APPENDICES}
 \renewcommand{\theHchapter}{A\arabic{chapter}}

 \section{Additional Numerical Experiments}
 \label{sec:add-num-experiments}

 {
   We remark that the asymptotic regret bound for DUSA stated in Theorem \ref{thm:main} does not explicitly depend on the number of arms. However, the non-asymptotic regret bound of for DUSA found in Appendix \ref{sec:proof} (i.e., $C_{\rm{initialize}} + C_{\text{exploit}}+C_{\text{explore}}$) is an increasing function of the number of arms. In particular, the regret $C_{\text{exploit}}$ accumulated during the exploitation phase defined in Equation \eqref{eq:c_exploit} may increase exponentially in the number of arms $\abs{X}$ even though it remains finite in the number of rounds $T$ and hence does not appear in the asymptotic regret expression of DUSA stated in Theorem \ref{thm:main}. We believe that this rather ungraceful dependence of the regret of DUSA on the number of arms can be traced back to our analysis and more specifically to  the concentration inequality in Lemma \ref{lemma:concentration}.

   To demonstrate  this, we conduct  
   additional numerical experiments in the context of linear bandit problems. We investigate how the regret of DUSA scales as the number of arms grows.  We observe that the regret of  DUSA does not seem to critically depend on 
   the number of arms $\abs{X}$ even when considering a fixed number of rounds. 
   That is, we repeat the experimental setting considered in Section \ref{sssec:linear-bandits-experiments} for linear bandits counting now $\abs{X}=\{5, 15\}$ arms. Figure \ref{fig:Linear_bandit_add} depicts the normalized cumulative regret of DUSA and GLM-UCB as a function of the number of rounds. Recall that  Figure \ref{fig:Linear_bandit} shows    the normalized cumulative regret of DUSA and GLM-UCB  with $\abs{X}=10$ arms. Similar to Figure \ref{fig:Linear_bandit},  we observe that for both $|X| =\{5, 15\}$,   GLM-UCB holds a slight edge over DUSA in terms of the median normalized cumulative regret at the expense of suffering rather severe outlier events.
}
 \begin{figure}[ht]
  \centering
  \begin{subfigure}[b]{0.5\textwidth}
    \centering
    \includegraphics[width=\textwidth]{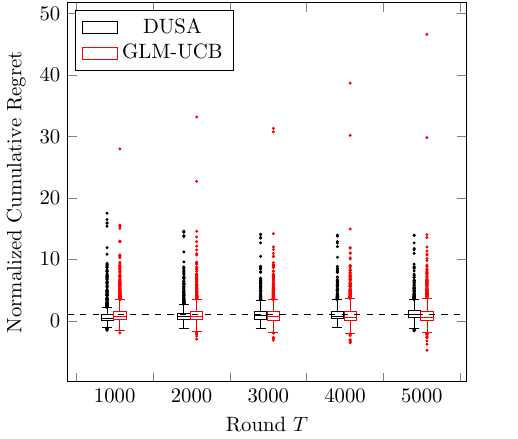}
    \caption{Linear Bandit ($\abs{X}= 5$)}
    \label{fig:Linear_bandit-5}
  \end{subfigure}%
  \hfill
  \begin{subfigure}[b]{0.5\textwidth}
    \centering
    \includegraphics[width=\textwidth]{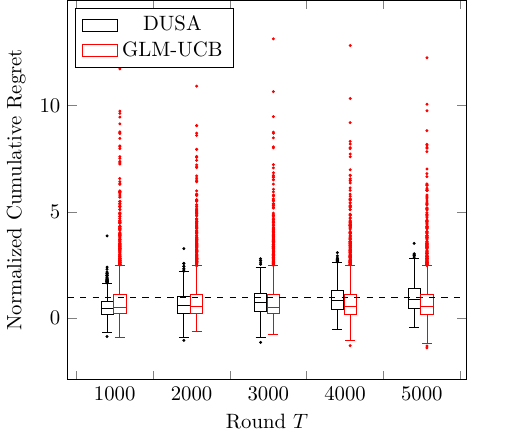}
    \caption{Linear Bandit ($\abs{X}= 15$)}
    \label{fig:Linear_bandit-15}
  \end{subfigure}
  \caption{The normalized cumulative regret of DUSA on linear bandit instances counting $\abs{X}=\{5,15\}$ arms as a function of the number of rounds $T$.}
  \label{fig:Linear_bandit_add}
\end{figure}
 
 \section{Topological Properties of the Information Distance}
\label{sec:information-topology}

With a slight abuse of notation, we denote the set of all distributions supported on a set of outcomes $\setr$ as
\begin{align}
  \label{eq:PR}
  \mathcal P_{\setr}=\tset{Q\in \real^{\abs{\setr}}_+}{\textstyle\sum_{r\in \setr} Q(r) = 1}\,.
\end{align}
In this paper, we equip the probability simplex $\mathcal P_\setr$ with the standard topology relative to its affine hull. The open sets $$\set{Q}{\textstyle\sum_{r\in \setr}Q(r)=1, ~\norm{Q-P}_\infty\defn\norm{Q-P}_\infty=\max_{r\in \setr}\,\abs{Q(r)-P(r)}< \epsilon}$$ for all $P\in \mathcal P_\setr$ and $\epsilon>0$ are a base for this topology. The set of all interior points of $\mathcal P_\setr$ in this topology is then identified with $\tset{Q\in \real^{\abs{\setr}}_{++}}{\textstyle\sum_{r\in \setr} Q(r) = 1}$. The information distance is a natural statistical notion of distance on the probability simplex $\mathcal P_{\setr}$ and satisfies several well-known key properties which we use in this paper.

\begin{lemma}[Properties of Information Distance]
  \label{prop:relative-entropy}Let $\setr$ be a discrete set and $\mathcal P_\setr$ be the set defined in (\ref{eq:PR}). 
  The information distance enjoys the following key properties:
  \begin{enumerate}
  \item \textbf{Information inequality.} $I(P, P') \geq 0$ for all $P, P'\!\in\mathcal P_{\setr}$, and  $I(P, P')= 0$ if and only if $P= P'$.
  \item \textbf{Convexity.} For all pairs $(P_1, P'_1), (P_2, P'_2)\in \mathcal P_{\setr}\times \mathcal P_\setr$ and $\lambda\in[0,1]$ we have
    \[
      I\big((1-\lambda) P_1 +  {\lambda}P_2, (1-\lambda) P'_1 +\lambda  P'_2\big)\leq (1-\lambda) I(P_1,P'_1)+\lambda I(P_2, P'_2).
    \]
  \item \textbf{Continuity in $P'$.} $I(P , P')$ is continuous in $P'$ on $\set{P'\in {\mathcal P_{\setr}}}{P\ll P'}$ for any fixed $P\in \mathcal P_\setr$.
  \item \textbf{Lower semi-continuity.} $I$ is lower semicontinuous on $\mathcal P_{\setr}\times \mathcal P_\setr$. That is, its sub-level sets defined as $\set{(P, P') \in \mathcal P_\setr\times \mathcal P_\setr}{I(P, P')\leq \alpha}$ are closed (and bounded) for any $\alpha\geq 0$.
  \end{enumerate}
\end{lemma}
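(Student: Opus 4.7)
The plan is to view the claimed identity as a strong duality statement for a convex program with affine constraints. Denote the right-hand side as the primal
\[
\pi^\star := \min_{Q \geq 0} \Big\{\textstyle \sum_{x \in \tilde X(P)} \eta(x) I(P(x), Q(x)) : Q \in \mc H(x', P; \mu)\Big\},
\]
whose feasible region is carved out by the equality $Q(x^\star(P)) = P(x^\star(P))$ and the single linear inequality appearing in the definition \eqref{eq:hyper} of $\mc H$. The objective is convex in $Q$ by Lemma \ref{prop:relative-entropy}, and by construction $\overline{\dual}(\eta, x', P; \mu) = \max_{\rho \geq 0} \dual(\eta, x', P; \rho \mu)$ is a one-dimensional concave maximization.

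I would form the partial Lagrangian by dualizing only the scalar inequality constraint of $\mc H$ with multiplier $\rho \geq 0$, keeping the equality $Q(x^\star(P)) = P(x^\star(P))$ as a domain restriction on $Q$. The resulting Lagrangian is separable over the pairs $(r, x)$ in the free block $x \in \tilde X(P)$, so its minimum over $Q \geq 0$ is solved in closed form by a pointwise first-order condition, yielding
\[
Q^\star(r, x) \;=\; \frac{\eta(x)\, P(r,x)}{\eta(x) - \rho\bigl(\lambda(r,x) + \beta + \alpha r \mb 1(x=x')\bigr)}
\]
whenever the denominator is strictly positive for every $(r, x)$, and $-\infty$ otherwise. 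Substituting $Q^\star$ back into the Lagrangian, the fixed block $x = x^\star(P)$ produces the term $-\rho \sum_r \lambda(r, x^\star(P)) P(r, x^\star(P))$ and the free block produces $\sum_{x \in \tilde X(P), r} \eta(x) P(r,x) \log((\eta(x) - \rho m(r,x))/\eta(x))$ with $m(r,x) = \lambda(r,x) + \beta + \alpha r \mb 1(x=x')$. Collecting the remaining constant terms in $\rho$ reproduces precisely the expression \eqref{eq:dual} evaluated at the ray $\rho\mu$; that is, $\min_Q L(Q; \rho) = \dual(\eta, x', P; \rho\mu)$, with the positivity condition on the denominator matched exactly by the $\chi_{-\infty}$ term. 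Weak duality then immediately gives $\overline{\dual}(\eta, x', P; \mu) \leq \pi^\star$.

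The main obstacle is the reverse inequality, i.e., strong duality. Since the primal has only affine constraints, I would invoke the standard convex-programming strong duality result for ordinary convex programs: the duality gap vanishes whenever the primal has a finite optimum and the relative interior of the effective domain of the objective meets the feasible set. Finiteness follows from $\mc H(x', P; \mu) \supseteq \prob(x', P) \neq \emptyset$ (because $x' \in \xd(P)$) together with Proposition \ref{prop:continuity-projection}, which upper bounds $\pi^\star$ by the finite quantity $\dis(\eta, x', P)$. The constraint qualification is the delicate step: since $P \in \interior(\mc P)$ and $P > 0$ by the standing positivity assumption, the effective domain of the objective is the strictly positive orthant, and a small convex combination of a strictly deceitful distribution $Q^0 \in \prob(x', P)$---which satisfies $\sum_r r Q^0(r, x') > \rew^\star(P)$ strictly---with $P$ itself yields a strictly positive point of $\mc H(x', P; \mu)$. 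This closes the gap and establishes $\pi^\star = \overline{\dual}(\eta, x', P; \mu)$, as claimed.
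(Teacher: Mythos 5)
Your proposal does not prove the statement at hand. The lemma to be proved is Lemma \ref{prop:relative-entropy}, which asserts four elementary properties of the information distance $I(\cdot,\cdot)$ on the probability simplex $\mathcal P_\setr$: the information inequality ($I(P,P')\geq 0$ with equality iff $P=P'$), joint convexity, continuity in the second argument on $\set{P'}{P\ll P'}$, and lower semi-continuity on $\mathcal P_\setr\times\mathcal P_\setr$. Your argument instead develops a Lagrangian duality analysis of the restricted dual problem $\overline{\dual}(\eta,x',P;\mu)=\max_{\rho\geq 0}\dual(\eta,x',P;\rho\mu)$ and its identification with an information projection onto the half-space $\mc H(x',P;\mu)$ --- that is, you have written a proof sketch for Lemma \ref{lemma:resolving}, a different result in the paper. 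None of the four properties in the statement is established, or even mentioned, in your proposal; in fact your argument \emph{invokes} Lemma \ref{prop:relative-entropy} (to get convexity of the objective in $Q$), so it cannot serve as a proof of it without circularity.

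For reference, the intended argument is short and elementary: the information inequality and joint convexity are textbook facts (Theorems 2.6.3 and 2.7.2 of \citet{cover2012elements}); lower semi-continuity follows by writing $I(P,P')=\sum_{r\in\setr}P(r)\log(P(r)/P'(r))$ with the conventions $0\log(0/a)=0$ and $b\log(b/0)=\infty$, noting that each summand $(u,v)\mapsto u\log(u/v)$ is lower semi-continuous and that finite sums preserve lower semi-continuity; and continuity in $P'$ on $\set{P'}{P\ll P'}$ follows because, restricting the sum to $\supp(P)$, each term $\log(P(r)/P'(r))$ is finite and continuous wherever $P'(r)>0$. You should redirect your duality argument to Lemma \ref{lemma:resolving}, where it belongs, and supply a direct proof of the four properties listed here.
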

\proof{Proof of Lemma \ref{prop:relative-entropy}.}
  The information inequality and convexity properties can be found in most textbooks on information theory. For instance, see Theorem 2.6.3 and Theorem 2.7.2 in \cite{cover2012elements}, respectively.  For showing the lower semi-continuity result, let us write $I(P, P')$ as
  \[
    I(P, P') = \textstyle\sum_{r\in \setr} P(r) \log (\tfrac{P(r)}{P'(r)})
  \]
  for $P$ and $P'$ in $\mathcal P_\setr$, where we use the convention $0 \log (\tfrac{0}{a})=0$ and $b \log (\tfrac{b}{0})=\infty$
  for all $a\geq 0$ and $b>0$.  
  The lower semi-continuity property follows from the fact that functions of the type $(u, v) \mapsto u\log(\tfrac{u}{v})$ are lower semi-continuous and summation preserves lower semi-continuity; see \cite[Proposition 1.1.5]{bertsekas2009convex}. 
  
  For continuity in $P'$, let $\supp(P)=\set{r\in\setr}{P(r)>0}$ denote the support of distribution $P$. We can write
  \[
    I(P, P') = \textstyle\sum_{r\in \supp(P)} P(r) \log (\tfrac{P(r)}{P'(r)}).
  \]
  Continuity of $I(P, P')$ in $P'$ when $P\ll P'$ follows from the immediate observation that the logarithm function $\log (\tfrac{P(r)}{P'(r)})$ is indeed finite and continuous at any $P'(r)>0$ for $r\in \supp(P)$. \hfill\Halmos
\endproof

\section{Duality Results}
\label{sec:proof_of_all_duality_results}

To show the duality results in this section, we make use of Section \ref{app:duality}, where supplementary materials for duality is presented.  

\subsection{Proof of Lemma \ref{thm:dual-feasibility}}
\label{sec:duality-proof}

At a high level, we will show that the maximization problem \eqref{eq:feasibility-dual} is  the dual problem associated with the primal minimization problem \eqref{eq:feasibility-cone} as discussed in Section \ref{sec:constraint-qualification}. However, the primal minimization problem \eqref{eq:feasibility-cone} does not satisfy the constraint qualification conditions required by Proposition \ref{prop:strong-duality-general} and hence establishing strong duality directly is problematic. To overcome this challenge, we will show strong duality for a slightly perturbed distance function. By letting the perturbation vanish asymptotically, we will then obtain the stated strong duality result.

We introduce the following perturbed distance functions
\begin{equation}
  \label{eq:perturbed-problem}
  \begin{array}{rl}
    \dis_{\gamma}( \eta, x', P) = \min & \sum_{x\in \tilde X(P)} \eta(x) I(P(x), Q(x)) + \gamma I(P(x^\star(P)), Q(x^\star(P)))\\
    \st  & Q \in \mathcal K, \\
                                       & \sum_{x\in X,\, r\in \setr} ~Q(r, x) = \abs{X}, \\
                                       & \sum_{r \in \setr} r Q(r, x')  \geq \rew^\star(P)
  \end{array}
\end{equation}
where the minimum is achieved, as by Lemma \ref{prop:relative-entropy}, its objective function is lower semi-continuous in $Q$ while its feasible domain is compact. The perturbed distance function can indeed be regarded as a penalty formulation of the conic minimization problem \eqref{eq:feasibility-cone} with positive penalty term $I(P(x^\star(P)), Q(x^\star(P)))\geq 0$. That is, we have $I(P(x^\star(P)), Q(x^\star(P)))=0$ for any $P(x^\star(P))=Q(x^\star(P))$ and $I(P(x^\star(P)), Q(x^\star(P)))>0$ otherwise. It is hence evident by comparing problems \eqref{eq:perturbed-problem} and (\ref{eq:feasibility-cone}) that
\begin{equation*}
  \begin{array}{rl@{\hspace{4em}}}
    \dis_\gamma(\eta, x', P)\leq \dis(\eta, x', P) = \min_{Q} & \sum_{x\in \tilde X(P)} \eta(x) I(P(x), Q(x))  \\
    \st  & Q \in \mathcal K, \\
                                                              & \sum_{x\in X,\, r\in \setr} Q(r, x) = \abs{X},  \\
                                                              & \sum_{r \in \setr} r Q(r, x')  \geq \rew^\star(P), \\
                                                              & Q(x^\star(P))= P(x^\star(P))
  \end{array}
\end{equation*}
for any $\gamma\geq 0$ and deceitful arm $x'\in \xd(P)$. From \citet[Theorem 1.3.1]{nesterov1998introductory} it also follows that $\dis(\eta, x', P)=\lim_{\gamma\to\infty}\dis_{\gamma}(\eta, x', P)$ as the sublevel sets of the objective function in the perturbed minimization problem \eqref{eq:perturbed-problem} are bounded for any $\gamma>0$.

We will identify a Lagrangian dual formulation for the perturbed minimization problem (\ref{eq:perturbed-problem}) with the help of the following perturbed dual function
\begin{align*}
  \dual_\gamma( \eta,x', P~;~ \mu) := & \hspace{-0.6em}\sum_{x\in \tilde X(P), r\in \setr}   \chi_{-\infty}( \eta(x)\geq \lambda(r, x)+\beta+\alpha \cdot r \mb 1(x=x')) + \alpha \rew^{\star}(P) +\abs{ X}\beta+\\
                                   &  \hspace{-0.5em}\sum_{x\in \tilde X(P), r\in \setr}   \eta (x)  \log\left( \frac{ \eta (x)-\lambda(r, x)-\beta-\alpha \cdot r \mb 1(x=x')}{ \eta (x)}\right) P(r, x) + \\
                                    &  \hspace{-0.5em} \sum_{x\in  x^\star(P), r\in \setr}   \gamma  \log\left( \frac{ \gamma-\lambda(r, x)-\beta}{ \gamma}\right) P(r, x) + \hspace{-0.6em}\sum_{x\in x^\star(P), r\in \setr} \hspace{-0.8em}\chi_{-\infty}( \gamma \geq \lambda(r, x)+\beta).
\end{align*}
Indeed, using this perturbed dual function we will show the following strong duality result.

\begin{lemma}[Dual Formulation of the Perturbed Distance Function]
  \label{thm:dual-feasibility-perturbed}
  For any reward distribution $\optp\in \mathcal P$ and  suboptimal deceitful arm  $x'\in  \xd(\optp)$, the following strong duality equality holds.
  \begin{equation}
    \begin{array}{rl}
      \dis_\gamma( \eta,  x', \optp) = \sup_{\mu} & \dual_\gamma( \eta,  x', \optp~; ~\mu) \\
      \st & \mu \in \real_+\times\real\times \mathcal K^\star.
    \end{array}
  \end{equation}
\end{lemma}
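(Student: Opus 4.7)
\textbf{Proof plan for Lemma \ref{thm:dual-feasibility-perturbed}.} The plan is to identify $\dual_\gamma$ as the Lagrangian dual of the perturbed primal \eqref{eq:perturbed-problem}, obtain weak duality by a coordinatewise inner minimization, and then upgrade to strong duality via a Slater-type constraint qualification tailored to deceitful arms.

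First, I would attach multipliers $\lambda \in \mathcal K^\star$, $\beta \in \real$, and $\alpha \in \real_+$ to the constraints $Q \in \mathcal K$, $\sum_{r,x} Q(r, x) = \abs{X}$, and $\sum_r r\, Q(r, x') \geq \rew^\star(P)$, respectively, keeping nonnegativity $Q \geq 0$ implicit through the definition of the information distance. The resulting Lagrangian is separable across coordinates $(r, x)$, so the inner minimization decomposes. For $x \in \tilde X(P)$, setting the partial derivative with respect to $Q(r, x)$ to zero gives the closed-form minimizer
\[
  Q^\star(r, x) = \frac{\eta(x)\, P(r, x)}{\eta(x) - \lambda(r, x) - \beta - \alpha\, r\, \mb 1(x = x')}
\]
whenever the denominator is strictly positive, and an unbounded infimum $-\infty$ otherwise; an analogous formula holds for $x \in x^\star(P)$ with $\gamma$ replacing $\eta(x)$ and without the $\alpha$ term. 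Substituting $Q^\star$ back into the Lagrangian and using that $-\eta(x) P(r,x) + [\eta(x) - \lambda(r,x) - \beta - \alpha r \mb 1(x=x')]\, Q^\star(r, x) = 0$ cancels the linear-in-$Q$ terms against the $I$-terms, leaving exactly the expression defining $\dual_\gamma(\eta, x', P\,;\, \mu)$, with the denominator positivity captured by the $\chi_{-\infty}$ characteristic functions and the constant part $\alpha \rew^\star(P) + \beta \abs{X}$ surviving from the constraint values. This coordinatewise computation already yields weak duality $\sup_\mu \dual_\gamma(\eta, x', P\,;\, \mu) \leq \dis_\gamma(\eta, x', P)$.

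The main obstacle is the strong duality direction, which I plan to handle by invoking the Slater-type convex strong duality result referenced earlier as Proposition \ref{prop:strong-duality-general}. Its hypothesis demands a strictly feasible point: some $Q$ in the relative interior of $\mathcal K \cap \{\sum_{r,x} Q(r, x) = \abs{X}\} = \mathcal P$ with $\sum_r r\, Q(r, x') > \rew^\star(P)$ and a finite objective. Since $x'$ is deceitful, Definition \eqref{eq:problematic-distributions} furnishes $Q^{\mathrm{dec}} \in \prob(x', P) \subseteq \mathcal P$ satisfying $\sum_r r\, Q^{\mathrm{dec}}(r, x') > \rew^\star(P)$, but $Q^{\mathrm{dec}}$ may lie on the boundary of $\mathcal P$. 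Picking any $Q^{\mathrm{int}} \in \interior(\mathcal P)$, which exists because $\mathcal P$ has nonempty interior by hypothesis, and forming the convex combination $Q_\varepsilon = (1 - \varepsilon) Q^{\mathrm{dec}} + \varepsilon\, Q^{\mathrm{int}}$, standard convexity reasoning places $Q_\varepsilon$ in $\interior(\mathcal P)$ for every $\varepsilon \in (0, 1]$, while linearity and continuity of $Q \mapsto \sum_r r\, Q(r, x')$ preserve the strict inequality for all sufficiently small $\varepsilon$. Strict positivity of $Q_\varepsilon$ then ensures $P(x) \ll Q_\varepsilon(x)$ for every $x$, so by part (3) of Lemma \ref{prop:relative-entropy} the objective is finite at $Q_\varepsilon$. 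Invoking Proposition \ref{prop:strong-duality-general} then closes the duality gap and yields $\dis_\gamma(\eta, x', P) = \sup_\mu \dual_\gamma(\eta, x', P\,;\, \mu)$ as claimed.
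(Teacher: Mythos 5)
Your proposal is correct and follows essentially the same route as the paper's proof: a coordinatewise Lagrangian minimization (your stationarity computation is equivalent to the paper's use of the conjugacy identity for $u\theta+\log\theta$) yielding $\dual_\gamma$ as the dual function, followed by Slater's condition and Proposition \ref{prop:strong-duality-general} to close the gap. The only cosmetic difference is the Slater point: you perturb a deceitful distribution toward an interior point of $\mathcal P$, while the paper perturbs an interior point toward the distribution achieving $\rew_{\max}(x',P)$ — both constructions produce the same kind of strictly feasible interior point.
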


We indicate first that the strong duality result stated in Lemma \ref{thm:dual-feasibility-perturbed} between the perturbed distance function and perturbed dual function implies the claimed result as an asymptotic case by letting $\gamma\to \infty$. Indeed, as the functions $\dis_\gamma$ and $\dual_\gamma$ are both non-decreasing in $\gamma$, and our strong duality result, we can write
\begin{align*}
  \dis(\eta, x', P) & = \lim_{\gamma\to \infty} \dis_{\gamma}(\eta,x', P) = \sup_{\gamma\geq 0} \dis_{\gamma}(\eta, x', P) \\
                    & = \sup_{\gamma\geq 0} \sup_{\mu \in \real_+\times \real\times \mathcal K^\star}\dual_\gamma( \eta,x', P~;~ \mu ) \\
                    & = \sup_{\mu \in \real_+\times \real\times \mathcal K^\star, \gamma \ge 0} \dual_\gamma( \eta,x', P~;~ \mu )\\
                    & = \sup_{\mu \in \real_+\times \real\times \mathcal K^\star} \lim_{\gamma\to\infty} \dual_\gamma( \eta, x', P~;~\mu)
\end{align*}
where the final limit is identified with
\begin{align*}
  \dual( \eta, x', P~;~ \mu) = &  \lim_{\gamma\to\infty} \dual_{\gamma}(\eta, x', P~;~ \mu)\\
  = & \textstyle\sum_{x\in \tilde X(P), r\in \setr} \eta(x) \log\left(\frac{\left( \eta(x)-\lambda(x, r)-\beta-\alpha \cdot r \mb 1(x=x')\right)}{ \eta (x)}\right) P(r,x) \\
                                   & \quad - \textstyle \sum_{x\in x^\star(P),~r\in \setr} \lambda(r, x)P(r, x) + \alpha \rew^{\star}(P) + \beta|\tilde X(P)|\\
                                   & \quad  + \textstyle \sum_{x\in \tilde X(P), r\in \setr} \chi_{-\infty}(  \eta (x)\geq \lambda(r, x)+\beta+\alpha \cdot r \mb 1(x=x'))
\end{align*}
from which the claimed result follows. \hfill \Halmos

\subsection{Proof of  Lemma \ref{thm:dual-feasibility-perturbed}}
  We will prove Lemma \ref{thm:dual-feasibility-perturbed} as a special case of the strong duality result stated in  Proposition \ref{prop:strong-duality-general} stated in Section \ref{sec:constraint-qualification}. We first derive the associated dual function for the perturbed problem \eqref{eq:perturbed-problem}. Afterwards, we verify that the perturbed problem \eqref{eq:perturbed-problem} satisfies the required constraint qualification conditions necessary to invoke Proposition \ref{prop:strong-duality-general}.

\textbf{Perturbed dual function.} The Lagrangian function associated with the perturbed  minimization problem (\ref{eq:perturbed-problem}) which we introduce in Section \ref{sec:constraint-qualification} is
\begin{align*}
  L_{\gamma} (\eta, x', P, Q~;~ \mu)=&\sum_{x\in \tilde X(P), r\in \setr} \left[ \eta (x) P(r, x) \log\left(\frac{P(r, x)}{Q(r, x)}\right) -  \eta (x) P(r, x) +  \eta (x) Q(r, x)\right] \\
  & \qquad + \sum_{x\in x^\star(P), r\in \setr} \left[ \gamma P(r, x) \log\left(\frac{P(r, x)}{Q(r, x)}\right) -  \gamma P(r, x) +  \gamma Q(r, x)\right] \\
                                    & \qquad - \sum_{x\in X, r\in \setr}  Q(r, x) \lambda(r, x)  + \beta \Big(\abs{X}-\sum_{x\in X, r\in \setr} Q(r, x)\Big) \\
                  & \qquad                 +  \alpha\Big(\rew^{\star}(P)-\sum_{r\in \setr} r Q(r, x') \Big)
\end{align*}
where $\mu= (\alpha, \beta, \lambda)\in \real_+\times \real\times \mathcal K^\star$.
The Lagrangian function can be simplified significantly by grouping terms to
\begin{align*}
&L_{\gamma} (\eta, x', P, Q~;~ \mu)=\\
 & \qquad \sum_{x\in  \tilde X(P), r\in \setr}\left[ Q(r, x) \big( \eta(x)-\lambda(r, x)-\beta-\alpha \cdot r \mb 1(x=x')\big)+ \eta (x) P(r, x) \log\left(\frac{P(r, x)}{Q(r, x)}\right)\right]\\
                                     & \qquad + \sum_{x\in  x^\star(P), r\in \setr}\left[ Q(r, x) \big( \gamma-\lambda(r, x)-\beta\big)+ \gamma P(r, x) \log\left(\frac{P(r, x)}{Q(r, x)}\right)\right]\\
      & \qquad + \alpha \rew^{\star}(P) +\abs{X}\beta-\sum_{x\in \tilde X(P)}  \eta (x) - \abs{x^\star(P)}\gamma\,
\end{align*}
where we used the fact that we only are interested in deceitful arms which by definition can not be optimal, i.e., $x'\in \xd(P)\implies x'\not\in x^\star(P)$.
The dual function associated with the perturbed problem \eqref{eq:perturbed-problem} is 
$
\dual_\gamma( \eta, x', P~;~\mu ) \defn \min_{Q\geq 0}\,   L_{\gamma} (\eta, x', P, Q~;~ \mu)
$
 and  satisfies the weak duality inequality $\dual_\gamma( \eta, x', P~;~\mu )\leq \dis_\gamma(\eta, x', P)$. The dual function can be expressed equivalently as
\begin{align*}
  & \dual_\gamma( \eta, x', P~;~ \mu)\\
  = &  \sum_{x\in  \tilde X(P), r\in \setr}\inf_{q\geq 0}\left[ q \big( \eta (x)-\lambda(r, x)-\beta-\alpha \cdot r \mb 1(x=x')\big )+ \eta (x) P(r, x) \log\left(\frac{P(r, x)}{q}\right)\right] \\
  & \qquad + \sum_{x\in  x^\star(P), r\in \setr}\inf_{q\geq 0}\left[ q \big( \gamma-\lambda(r, x)-\beta\big )+ \gamma P(r, x) \log\left(\frac{P(r, x)}{q}\right)\right] \\
                                                       & \qquad + \alpha \rew^{\star}(P) +\abs{X}\beta-\sum_{x\in \tilde X(P)}  \eta(x) - \abs{x^\star(P)}\gamma  \\
  =&  \sum_{\{x\in \tilde X(P), r: \eta(x)P(r, x)=0\} } \chi_{-\infty}( \eta (x)\geq \lambda(r, x)+\beta+\alpha \cdot r \mb 1(x=x'))\\
  &  \qquad +\sum_{\{x\in \tilde x^\star(P), r: \gamma P(r, x)=0\} } \chi_{-\infty}( \gamma\geq \lambda(r, x)+\beta+\alpha \cdot r \mb 1(x=x')) \\
  & \qquad +\sum_{\{x\in \tilde X(P), r: \eta(x)P(r, x)>0\}}P(r, x)\inf_{\theta \geq 0}  \Big[ \theta ( \eta (x)-\lambda(r, x)-\beta-\alpha \cdot r \mb 1(x=x'))- \eta (x) \log(\theta)\Big] \\
  & \qquad +\sum_{\{x\in  x^\star(P), r: \gamma P(r, x)>0\}}P(r, x)\inf_{\theta \geq 0}  \Big[ \theta ( \gamma-\lambda(r, x)-\beta)- \gamma \log(\theta)\Big]\\
                                                       & \qquad + \alpha \rew^{\star}(P) +\abs{X}\beta-\sum_{x\in \tilde X(P)}  \eta(x)- \abs{x^\star(P)}\gamma\,,
\end{align*}
where we obtain the last equality by a change of variables $q$ to $P(r, x) \theta$. Recall that $\chi_{-\infty}(A)$ takes on the value $0$  when event $A$ happens  and $-\infty$ otherwise. 
Finally, using the convex conjugacy relationship $\max_{\theta\geq 0} u \theta + \log(\theta) = -(1+\log(-u))+\chi_{+\infty}(u\leq 0)$,
we can simplify the perturbed dual function to
\begin{align*}
  & \dual_\gamma( \eta,x', P~;~ \mu)\\
  = & \sum_{x\in \tilde X(P), r\in \setr}   \chi_{-\infty}( \eta(x)\geq \lambda(r, x)+\beta+\alpha \cdot r \mb 1(x=x'))\\
  & \qquad + \sum_{x\in x^\star(P), r\in \setr}   \chi_{-\infty}( \gamma \geq \lambda(r, x)+\beta+\alpha \cdot r \mb 1(x=x'))\\
  & \qquad +\sum_{\{x\in \tilde X(P), r: \eta(x)P(r, x)>0\}} P(r, x)  \eta (x)  \left[1+\log\left( \frac{ \eta (x)-\lambda(r, x)-\beta-\alpha \cdot r \mb 1(x=x')}{ \eta (x)}\right)\right]\\
  & \qquad +\sum_{\{x\in x^\star(P), r: \gamma P(r, x)>0\}} P(r, x)  \gamma  \left[1+\log\left( \frac{ \gamma-\lambda(r, x)-\beta}{ \gamma}\right)\right]\\
                                                      & \qquad + \alpha \rew^{\star}(P) +\abs{X}\beta-\sum_{x\in \tilde X(P)}  \eta(x)- \abs{x^\star(P)}\gamma \\
  = & \sum_{x\in \tilde X(P), r\in \setr}   \chi_{-\infty}( \eta(x)\geq \lambda(r, x)+\beta+\alpha \cdot r \mb 1(x=x'))\\
  & \qquad + \sum_{x\in x^\star(P), r\in \setr}   \chi_{-\infty}( \gamma \geq \lambda(r, x)+\beta+\alpha \cdot r \mb 1(x=x'))\\
    & \qquad +\sum_{x\in \tilde X(P), r\in \setr}   \eta (x)  \log\left( \frac{ \eta (x)-\lambda(r, x)-\beta-\alpha \cdot r \mb 1(x=x')}{ \eta (x)}\right) P(r, x) \\
  & \qquad +\sum_{x\in  x^\star(P), r\in \setr}   \gamma  \log\left( \frac{ \gamma-\lambda(r, x)-\beta}{ \gamma}\right) P(r, x) \\
  & \qquad + \alpha \rew^{\star}(P) +\abs{X}\beta. 
\end{align*}

\textbf{Strong duality perturbed problem.}
We can obtain the strong duality result claimed in Lemma \ref{thm:dual-feasibility-perturbed}, i.e., the equality 
$$\sup_{\mu \in \real_+\times \real\times \mathcal K^\star} \dual_\gamma( \eta, P, x'~;~\mu ) = \dis_\gamma(\eta,  x', P)$$ by verifying that the Slater's constraint qualification conditions  required by Proposition \ref{prop:strong-duality-general} are satisfied.
In the following, we find a Slater point for problem \eqref{eq:perturbed-problem}. Precisely, we identify  a distribution $\bar Q$ in the interior of $\mathcal P$ which  is strictly feasible, i.e.,  $\sum_{r \in \setr} r \bar Q(r, x') > \rew^{\star}(P)$.
To do so, we consider any distribution $U$ in the interior  of $\mathcal P$; that is, $U\in \interior(\mathcal P)$. Finding such an interior point is possible as we take that $\interior(\mathcal P)\neq \emptyset$ as a standing assumption.

Let the distribution $Q_{\max}$ be a distribution which satisfies  $\sum_{r\in \setr} r Q_{\max}(r, x')=\rew_{\max}(x', P)>\rew^\star(P)$ were the previous strict inequality is guaranteed by the fact that $x'$ is a deceitful arm. Define $Q_\theta \defn Q_{\max} \cdot \theta +  U \cdot (1-\theta) \in \mathcal P$ for all $\theta\in [0, 1]$. We will show that for some values of $\theta$, the reward distribution ${Q_\theta}$ can serve as a Slater point. Let
  \[
    \theta' = \frac{\rew^{\star}(P)-\sum_{r\in \setr}r  U(r, x')}{\rew_{\max}(x', P)-\sum_{r\in \setr}r  U(r, x')} \in [-\infty, 1).
  \]
  Observe that for any $\theta \in (\theta', 1]\bigcap [0, 1)$ , we have that the distribution $Q_\theta\in \interior(\mathcal P)$ satisfies $\sum_{r \in \setr} r Q_\theta(x', r) > \rew^{\star}(P)$. Hence, the particular distribution $\bar Q=Q_{\bar \theta}$ with $\bar \theta= \tfrac{(\max\{\theta', 0\}+1)}{2}$ serves as a Slater point for problem \eqref{eq:perturbed-problem}. Note  that as $\bar Q\in \interior(\mathcal P)$ is also in the interior of $\mathcal P_\Omega$ and hence in the interior of the domain of the objective function of problem \eqref{eq:perturbed-problem}.
\hfill\Halmos

\subsection{ More on the Sufficient Information Condition}\label{sec:info}

The dual-test function provides a convenient procedure to verify if enough information has been collected already to distinguish the empirical  $P_t$ from its deceitful models; that is, whether or not $\dis(\tfrac{N_{t}}{\log(t)}, x', P_t)\ge  \overline{\dual}\left(\tfrac{N_{t}}{\log(t)}, x', P_t~;~\mu_t\right)\ge 1+\epsilon$ for any deceitful arm $x'\in \xd(P_t)$.
The strong duality results in Lemma \ref{thm:dual-feasibility}, relates the information distance between $P_t$ and $\dis(x', P_t)$ and the dual function as \[\dis(\tfrac{N_{t}}{\log(t)}, x', P_t)= \max_{\mu(x')} \set{\dual(\tfrac{N_{t}}{\log(t)},  x', P_t~; ~\mu(x'))}{\mu(x')\in \real_+\times\real\times\mc K^\star}.\]
Our sufficient information test however involves instead a seemingly unrelated restricted dual problem
\begin{align*}
  & \overline{\dual}\left(\tfrac{N_{t}}{\log(t)}, x', P_t~;~\mu_t\right)\\[0.5em]
  = & \max_{\rho\geq 0,\, \mu(x')} \set{\dual(\tfrac{N_{t}}{\log(t)},  x', P_t~; ~\mu(x'))}{\rho\geq 0,\, \mu(x') = \rho\cdot \mu_t(x') \in \real_+\times\real\times\mc K^\star}\\[0.5em]
  = & ~~\max_{\rho\geq 0} ~\,\set{\dual(\tfrac{N_{t}}{\log(t)},  x', P_t~; ~\rho\cdot \mu_t(x'))}{\rho\geq 0}.
\end{align*}
Nevertheless, we will argue here that also this restricted dual problem admits a satisfying interpretation as the information distance condition between $P_t$ and a certain set of deceitful distributions. Indeed, for any feasible dual variable $\mu = (\alpha, \beta, \lambda)$, arbitrary empirical reward distribution $P_t\in \mc P$, and deceitful arm $x'\in \xd(P_t)$, we define $\mc H(x', P_t~;~\mu)$ as the following set of reward distributions:
\begin{align}
  \label{eq:hyper}
    \set{Q}{
    \begin{array}{l}
      Q(x^\star(P_t))=P_t(x^\star(P_t)), \\[0.5em]
      \sum_{x\in X, \, r\in \setr} Q(r, x)\left(  \lambda(r, x) + \beta + \alpha r \mb 1(x=x') \right) \geq \beta \abs{X} + \alpha \rew^\star(P_t)
    \end{array}
  }\,.
\end{align}
We point out that the previously defined set  contains all deceitful distributions whenever the dual variable $\mu$ is dual feasible; that is, $\mc H(x', P_t~;~ \mu)\supseteq \prob(x', P_t)$, where 
the set of deceitful distributions $\prob(x', P_t)$ is presented  as the feasible set of the convex minimization problem \eqref{eq:feasibility-cone}.
Obviously, any deceitful distribution will satisfy $Q(x^\star(P_t))=P_t(x^\star(P_t))$ by construction. That is, the reward distributions of the empirically optimal arms of $P_t$ are in full agreement with those of $Q$.  We now show that any deceitful distribution will also satisfy the second constraint $\sum_{x\in X, \, r\in \setr} Q(r, x)\left(  \lambda(r, x) + \beta + \alpha r \mb 1(x=x') \right) \geq \beta \abs{X} + \alpha \rew^\star(P_t)$ characterizing the set $\mc H(x', P_t~; ~\mu)$. The first term $\sum_{x\in X, \, r\in \setr} Q(r, x)\lambda(r, x)$ is nonnegative because $Q\in \mc K$ and dual feasible variable $\lambda \in \mc K^\star$.
The second term satisfies $\sum_{x\in X, \, r\in \setr} \beta Q(r, x)=\beta \abs{X}$ irrespective of the value of the dual variable $\beta$; see the second constraints in the minimization problem \eqref{eq:feasibility-cone}. For any dual feasible variable $\alpha\geq 0$, the final term satisfies the inequality $\sum_{x\in X, \, r\in \setr} Q(r, x) r \mb 1(x=x') \geq \rew^\star(P_t)$; see the third constraints  of minimization Problem \eqref{eq:feasibility-cone}. Thus, we have indeed the claimed inclusion $\mc H(x', P_t~;~ \mu)\supseteq \prob(x', P_t)$. The following lemma  provides an interpretation  of the sufficient information condition.

\begin{lemma}[Dual-test Function]
  \label{lemma:resolving}
  Let $P\in \interior(\mc P)$ and $\eta>0$ and consider any deceitful  arm $x'\in \xd(P)$.  For any  dual feasible variable $\mu=(\alpha, \beta, \lambda)\in\real_+\times\real\times\mc K^\star$, the dual-test function is equal to
  \begin{equation}
    \label{eq:hypothesis-test-duality}
    \begin{array}{rl}
    \overline{\dual}(\eta,  x', P~; ~\mu) =  \min_{Q} & \textstyle \sum_{x\in \tilde X(P)} \eta(x) I(P(x), Q(x)) \\[0.5em]
          \st & Q \in \mc H(x', P~;~ \mu).
    \end{array}
  \end{equation} 
\end{lemma}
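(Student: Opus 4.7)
My strategy is to recognize the right-hand side of \eqref{eq:hypothesis-test-duality} as the primal of a Lagrangian duality pair whose dual problem is precisely the univariate maximization $\max_{\rho\geq 0}\dual(\eta,x',P;\rho\mu)=\overline{\dual}(\eta,x',P;\mu)$, and then invoke strong duality. Under Assumption~\ref{assumption} the optimal arm $x^\star(P)$ is unique, so the equality constraint $Q(x^\star(P))=P(x^\star(P))$ in the definition \eqref{eq:hyper} of $\mc H(x',P;\mu)$ pins down every coordinate $Q(\cdot,x^\star(P))$. Substituting this into the objective kills the $x=x^\star(P)$ summand (since $I(P(x^\star(P)),P(x^\star(P)))=0$), and substituting into the affine inequality in \eqref{eq:hyper} reduces it to a single linear constraint in $Q(\tilde X(P))\geq 0$, namely
\[
\textstyle\sum_{x\in\tilde X(P),\,r\in\setr}c(r,x)Q(r,x)\geq d',
\]
where $c(r,x):=\lambda(r,x)+\beta+\alpha r\mb 1(x=x')$ and $d':=\beta|\tilde X(P)|+\alpha\rew^\star(P)-\sum_r\lambda(r,x^\star(P))P(r,x^\star(P))$. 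The primal is thus a convex minimization over $Q(\tilde X(P))\geq 0$ with a single affine inequality constraint.

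Second, I would attach a scalar Lagrange multiplier $\rho\geq 0$ to this inequality and compute the dual in closed form. The inner infimum decouples across $x\in\tilde X(P)$, and the first-order condition on each strictly convex separable summand yields the pointwise minimizer $Q^\star(r,x)=\eta(x)P(r,x)/(\eta(x)-\rho c(r,x))$, valid whenever $\eta(x)-\rho c(r,x)>0$ for every $r$ (and producing infimum $-\infty$ otherwise). Substituting $Q^\star$ back, the non-logarithmic contributions $-\eta(x)+\eta(x)^2/(\eta(x)-\rho c(r,x))-\rho c(r,x)\eta(x)/(\eta(x)-\rho c(r,x))$ telescope to zero, leaving the dual function
\[
g(\rho)=\textstyle\sum_{x\in\tilde X(P),\,r}\eta(x)P(r,x)\log\!\bigl(\tfrac{\eta(x)-\rho c(r,x)}{\eta(x)}\bigr)+\rho d'.
\]
By direct inspection this matches $\dual(\eta,x',P;\rho\mu)$ in \eqref{eq:dual}: the term $\rho d'$ precisely reassembles the three constants $\rho\beta|\tilde X(P)|$, $\rho\alpha\rew^\star(P)$, and $-\rho\sum_r\lambda(r,x^\star(P))P(r,x^\star(P))$ appearing there. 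Hence the Lagrangian dual value equals $\max_{\rho\geq 0}\dual(\eta,x',P;\rho\mu)=\overline{\dual}(\eta,x',P;\mu)$.

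Third, I would close the loop by verifying strong duality. Since both the reduced inequality and the equality constraint are affine in $Q$, it suffices to exhibit a feasible point. The observation $\mc H(x',P;\mu)\supseteq\prob(x',P)$ recorded just before the lemma, combined with $\prob(x',P)\neq\emptyset$ for the deceitful arm $x'\in\xd(P)$ (Proposition~\ref{prop:continuity-projection}), furnishes such a point. The affine version of Slater's condition then applies, zero duality gap holds, and \eqref{eq:hypothesis-test-duality} follows.

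The main obstacle I anticipate is the algebraic bookkeeping in the second step: one must verify that the non-logarithmic terms arising when $Q^\star$ is substituted back into $\eta(x)I(P(x),Q^\star(x))-\rho\sum_r c(r,x)Q^\star(r,x)$ actually cancel, and that the remaining constant $\rho d'$ slots exactly into the form dictated by \eqref{eq:dual}. Once this calculation is executed cleanly, the appeal to affine Slater for strong duality is routine.
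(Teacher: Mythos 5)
Your reduction and the closed-form dualization follow the same route as the paper's own proof: the paper likewise keeps the equality constraint $Q(x^\star(P))=P(x^\star(P))$ and the nonnegativity of $Q$ undualized, attaches a single multiplier $\rho\geq 0$ to the half-space constraint defining $\mc H(x',P~;~\mu)$, and verifies that the resulting dual function is exactly $\dual(\eta,x',P~;~\rho\cdot\mu)$; your cancellation computation is correct and the constant you call $d'$ matches the paper's terms. The genuine gap is in your strong-duality step. The constraints are indeed affine, but the objective $\sum_{x\in\tilde X(P)}\eta(x)I(P(x),Q(x))$ is an extended-real-valued convex function whose domain excludes every $Q$ with $Q(r,x)=0$ at some $(r,x)$ where $P(r,x)>0$; the refined (affine) Slater condition therefore still requires a feasible point lying in the relative interior of the objective's domain intersected with the undualized set $\set{Q\geq 0}{Q(x^\star(P))=P(x^\star(P))}$ (compare the hypothesis $M_0\cap\relint(\dom f\cap\Gamma)\neq\emptyset$ in Proposition \ref{prop:strong-duality-general}, which is what the paper verifies). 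An arbitrary element of $\prob(x',P)$, which is all your argument supplies, may sit on the boundary: it can place zero mass where $P$ does not, in which case it is not even in the domain of the objective, so ``feasibility plus affineness'' does not by itself close the duality gap. This is precisely where the lemma's hypotheses that you never use come in: the paper builds a strictly positive Slater point $\bar Q=\theta Q^\star+(1-\theta)P$ with $Q^\star$ a minimizer of problem \eqref{eq:feasibility-cone}, exploiting $P\in\interior(\mc P)$ to get $\bar Q\in\interior(\mc P_\Omega)$ and the deceitfulness of $x'$ to keep $\sum_{r\in\setr} r\,\bar Q(r,x')>\rew^\star(P)$ for a suitable $\theta<1$. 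Without such a construction the appeal to Slater is not ``routine'' but is in fact the substantive part of the proof.

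Two smaller points. First, the lemma asserts a minimum, not an infimum: strong duality, even once repaired, identifies $\overline{\dual}(\eta,x',P~;~\mu)$ with the optimal value but does not give primal attainment; the paper proves attainment separately by showing the sublevel sets of the objective over $\mc H(x',P~;~\mu)$ are nonempty (via $\prob(x',P)\subseteq\mc H(x',P~;~\mu)$), closed (lower semicontinuity of $I$), and bounded (this is where $\eta>0$ is used). Second, you invoke Assumption \ref{assumption} for uniqueness of the optimal arm; that is not among the lemma's hypotheses and is not needed for your substitution, which works verbatim with $Q(x)=P(x)$ pinned for every $x\in x^\star(P)$.
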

\proof{}
See Appendix \ref{sec:resolving-proof}.\hfill \Halmos
\endproof

The optimization problem \eqref{eq:hypothesis-test-duality}, which states an alternative characterization of our dual-test function, bears some resemblance to the minimization problem \eqref{eq:inner-optimization}, which characterizes the  distance function $\dis$. While the objective function of two optimization problems is the same,  in the minimization Problem  \eqref{eq:inner-optimization}, we enforce the distribution $Q$ to belong to the deceitful set  $\prob (x', P)$,  whereas in the minimization problem \eqref{eq:hypothesis-test-duality}, we enforce the same distribution belong to merely belong to the superset $\mc H(x', P~;~\mu)$.  
Thus, the  condition $1\leq \overline{\dual}(\eta,  x', P~; ~\mu)$ implies that sufficient information has been obtained to distinguish the null hypothesis $P$ not only from its deceitful  distributions $\prob(x', P)$ but in fact from the entire superset $\mc H(x', P~;~ \mu)$. The simple observation that distinguishing $P$ from the larger set $\mc H(x', P~;
~\mu)$ must necessarily be harder from a statistical point of view than distinguishing $P$ merely from the smaller set of deceitful  distributions $\prob(x', P)$ explains the weak duality inequality $\overline{\dual}(\eta,  x', P~; ~\mu) \leq \dis(\eta, x', P)$ for any feasible dual variable $\mu$. Strong duality can be interpreted as the existence of a feasible dual variable $\mu^\star$ so that the distance between $P$ and the sets $\mc H(x', P; \mu^\star)$ and $\prob(x', P)$ is the same. That is, $\overline{\dual}(\eta,  x', P~; ~\mu^\star) = \dis(\eta, x', P) = \sum_{x\in \tilde X(P)}\eta(x)I(P(x), Q^\star(x))$ for some worst-case deceitful reward distribution  $Q^\star \in \prob(x', P)$. Under those circumstances, the hyperplane $\mc H(x',P~;~\mu^\star)$ separates the deceitful reward distributions $\prob(x', P)$ and the information ball $\tset{Q}{Q(x^\star(P))=P(x^\star(P)),\,\sum_{x\in \tilde X(P)}\eta(x)I(P(x), Q(x))\leq \dis(\eta, P, x')}$ as implied by  the necessary and sufficient KKT optimality conditions of minimization problem \eqref{eq:inner-optimization} and visually illustrated in Figure \ref{fig:hypothesis}.

The previous discussion also offers additional  insight into the deep and shallow update algorithms introduced before.
The deep update Algorithm \ref{alg:deep} minimizes the regret over all exploration rates $\eta$ and dual variables   so that   the reward distribution $P$ can be distinguished from its deceitful distributions $\prob(x', P)$ with sufficient statistical power; that is, the deep update ensures that
$\dis(\eta, x', P)\geq 1$. 
The shallow update, on the other hand,  merely minimizes the regret over the logarithmic rates $\eta$  so that   the reward distribution $P$ can be distinguished from the superset $\mc H(x', P~;~\mu)$ associated with the dual variable $\mu$. That is, the shallow update ensures that $\overline{\dual}(\eta,  x', P~; ~\mu)\geq 1$.

\begin{figure}[ht]
  \centering
  \includegraphics[height=4.5cm]{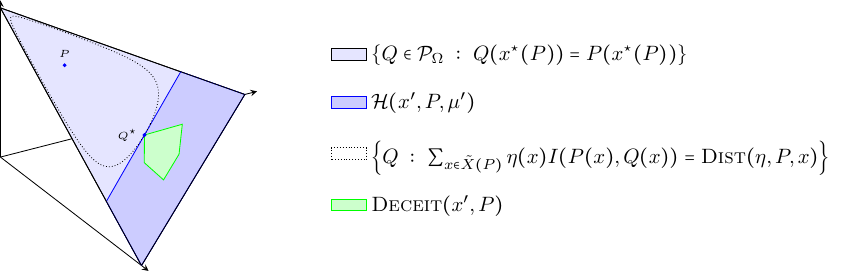}
  \caption{The sufficient information test $\overline{\dual}(\eta,  x', P~; ~\mu)\leq \dis(\eta, x', P)$ for dual feasible variable  $\mu$ quantifies the information distance between $P$ and the half-space $\mc H(x', P~; ~\mu')$ containing all deceitful distributions $\prob(x', P)$.
    When $\mc H(x', P~;~\mu^\star)$ for some $\mu^\star$ defines a separating half-space between $\prob(x', P)$ and the information ball $\tset{Q}{Q(x^\star(P))=P(x^\star(P)),\,\sum_{x\in \tilde X(P)}\eta(x)I(P(x), Q(x))\leq \dis(\eta, P, x)}$, the information distance between $P$ and either the set of deceitful distributions $\prob(x', P)$ or its superset $\mc H(x', P~; ~\mu')$ is the same. In that case, we have the strong duality equalities
    $\overline{\dual}(\eta,  x', P~; ~\mu^*) = \dis(\eta, x', P) = \sum_{x\in \tilde X(P)}\eta(x)I(P(x), Q^\star(x))$, where $Q^{\star}$ is the worst-case distribution, solving problem \eqref{eq:hypothesis-test-duality}.
  }
  \label{fig:hypothesis}
\end{figure}

\subsection{Proof of Lemma \ref{lemma:resolving}}
\label{sec:resolving-proof}

The information minimization problem \eqref{eq:hypothesis-test-duality} can be written explicitly as  
\begin{equation}
  \label{eq:hypothesis-test-duality-2}
  \begin{array}{rl@{~}l}
    \min & {\displaystyle\sum_{x\in \tilde X(P)}} \eta(x) I(P(x), Q(x)) & \\[1em]
    \st & {\displaystyle\sum_{x\in X, \, r\in \setr}} Q(r, x)\left[  \lambda(r, x) \!+\! \beta \!+\! \alpha r \mb 1(x=x') \right] \geq \beta \abs{X} + \alpha \rew^\star(P) & [{\mathrm{dual~variable\!:\,}} \rho] \\[1em]
         & Q(x^\star(P))=P(x^\star(P)) & [{\mathrm{not~dualized}}]
  \end{array}
\end{equation}
where we use the explicit characterization of $\mathcal H(x', P~;~\mu)$ given in Equation (\ref{eq:hyper}). We will claim first that the minimum in problem \eqref{eq:hypothesis-test-duality-2} is indeed attained. The remainder of the claimed results will be proven as a special strong duality result of the type discussed in Section \ref{sec:constraint-qualification}.

\paragraph{Attainment of minimum.} We first show that the minimum in \eqref{eq:hypothesis-test-duality-2} is in fact attained. The objective function of the information minimization problem has non-empty level sets
\begin{align*}
  & \set{Q\in \mathcal H(x', P~;~\mu)}{\textstyle\sum_{x\in \tilde X(P)}\eta(x) I(P(x), Q(x))\leq \gamma}
\end{align*}
for any level $\gamma\geq \dis(\eta, x', P)=\min_{Q\in \prob(x', P)}\sum_{x\in \tilde X(P)}\eta(x) I(P(x), Q(x))$ as indeed we have the set inclusion $\prob(x', P)\subseteq \mathcal H(x', P~;~\mu)$. These level sets are furthermore closed as the set $\mathcal H(x', P~;~\mu)$ is closed while the objective function is lower semi-continuous as per Lemma \ref{prop:relative-entropy}.
Recall that we assume that the logarithmic rate $\eta>0$ is positive. Hence, the level sets are also bounded as we have
\begin{align*}
  & \set{Q\in \mathcal H(x', P~;~\mu)}{\textstyle\sum_{x\in \tilde X(P)}\eta(x) I(P(x), Q(x))\leq \gamma}\\
  & \subseteq \set{Q}{\eta(x) I(P(x), Q(x))\leq \gamma ~~\forall x\in \tilde X(P), ~Q(x^\star(P))=P(x^\star(P))}.
\end{align*}
\citet[Prop. 3.2.1]{bertsekas2009convex} guarantees that the set of minima in \eqref{eq:hypothesis-test-duality-2} is non-empty and compact.
  
\paragraph{Dual function.} The associated Lagrangian to the information minimization problem \eqref{eq:hypothesis-test-duality-2} introduced in Section \ref{sec:constraint-qualification} is equal to
\[
  \begin{array}{r@{~}l}
    L(\eta, x', P, Q~;~   \rho) \defn & \sum_{x\in \tilde X(P)} \eta(x) I(P(x), Q(x))  \\
                                           & ~~ + \rho (\sum_{x\in X, \, r\in \setr} Q(r, x)\left[-\lambda(r, x) - \beta - \alpha r \mb 1(x=x') \right] + \beta \abs{X} + \alpha \rew^\star(P))
  \end{array}
\]
which can be simplified further by collecting terms to
\[
  \begin{array}{r@{~}l}
    L(\eta, x', P, Q~;~   \rho) = & \sum_{x\in \tilde X(P), r\in \setr}Q(r, x)(-\rho \lambda(r, x) - \rho \beta-\rho\alpha r \mb 1(x=x') )\\
    &\quad +\sum_{x\in \tilde X(P)}\eta(x) I(P(x), Q(x))  \\
                                           & \quad + \sum_{x\in x^\star(P), r\in \setr} Q(r, x)(-\rho \lambda(r, x) - \rho \beta)+ \rho \beta \abs{X}+\rho \alpha \rew^\star(P).
  \end{array}
\]
The dual function associated with the perturbed problem \eqref{eq:perturbed-problem} is as discussed in Section \ref{sec:constraint-qualification} identified with 
$
g( \eta, x', P~;~\rho ) \defn \min_{Q(x^\star(P))=P(x^\star(P))}\,   L (\eta, x', P, Q~;~ \rho).
$
The dual function can be expressed equivalently as
\begin{align*}
  & g( \eta, x', P~;~ \mu)\\
  = & \min_{Q(x^\star(P))=P(x^\star(P))} \sum_{x\in \tilde X(P), r\in \setr}Q(r, x)(-\rho \lambda(r, x) - \rho \beta-\rho\alpha r \mb 1(x=x') ) \\
    & \quad +\sum_{x\in \tilde X(P), r\in \setr}\eta(x) [P(r, x) \log\left(\textstyle\frac{P(r, x)}{Q(r, x)}\right) - P(r, x) +  Q(r, x)]  \\
  & \quad + \sum_{x\in x^\star(P), r\in \setr}Q(r, x)(-\rho \lambda(r, x) - \rho \beta)+ \rho \beta \abs{X}+\rho \alpha \rew^\star(P)\\
  = &    \sum_{x\in  \tilde X(P), r\in \setr}\inf_{q\geq 0}\left[ q \big( \eta (x)-\rho \lambda(r, x)-\rho \beta-\rho \alpha \cdot r \mb 1(x=x')\big )+ \eta (x) P(r, x) \log\left(\frac{P(r, x)}{q}\right)\right] \\
  & \qquad - \sum_{x\in x^\star(P)}\rho\lambda(r, x)P(r ,x)+ \rho \alpha \rew^{\star}(P) +\abs{\tilde X(P)}\beta\rho-\sum_{x\in \tilde X(P)}  \eta(x) \\
  =&  \sum_{\{x\in \tilde X(P), r: \eta(x)P(r, x)=0\} } \chi_{-\infty}( \eta (x)\geq \rho \lambda(r, x)+\rho \beta+\rho \alpha \cdot r \mb 1(x=x'))\\
  & \qquad +\sum_{\{x\in \tilde X(P), r: \eta(x)P(r, x)>0\}}\hspace{-3em}P(r, x)\inf_{\theta \geq 0}  \Big[ \theta ( \eta (x)-\rho \lambda(r, x)-\rho \beta-\rho \alpha \cdot r \mb 1(x=x'))- \eta (x) \log(\theta)\Big] \\
  & \qquad - \sum_{x\in x^\star(P)}\rho\lambda(r, x)P(r ,x)+ \rho \alpha \rew^{\star}(P) +\abs{\tilde X(P)}\rho \beta-\sum_{x\in \tilde X(P)}  \eta(x)
\end{align*}
where we obtain the last equality by a change of variables $q$ to $P(r, x) \theta$. Recall that $\chi_{-\infty}(A)$ takes on the value $0$  when event $A$ happens  and $-\infty$ otherwise. 
Finally, using the convex conjugacy relationship $\max_{\theta\geq 0} u \theta + \log(\theta) = -(1+\log(-u))+\chi_{+\infty}(u\leq 0)$,
we can simplify the perturbed dual function to
\begin{align*}
  & g( \eta,x', P~;~ \rho)\\
  = & \sum_{x\in \tilde X(P), r\in \setr}   \chi_{-\infty}( \eta(x)\geq \rho \lambda(r, x)+\rho\beta+\rho \alpha \cdot r \mb 1(x=x'))\\
  & \qquad +\sum_{\{x\in \tilde X(P), r: \eta(x)P(r, x)>0\}} P(r, x)  \eta (x)  \left[1+\log\left( \frac{ \eta (x)-\rho \lambda(r, x)-\rho \beta-\rho \alpha \cdot r \mb 1(x=x')}{ \eta (x)}\right)\right]\\
  & \qquad - \sum_{x\in x^\star(P)}\rho\lambda(r, x)P(r ,x) + \rho \alpha \rew^{\star}(P) +\rho \abs{\tilde X(P)}\beta-\sum_{x\in \tilde X(P)}  \eta(x) \\
  = & \sum_{x\in \tilde X(P), r\in \setr}   \chi_{-\infty}( \eta(x)\geq \rho \lambda(r, x)+\rho \beta+\rho \alpha \cdot r \mb 1(x=x'))\\
    & \qquad +\sum_{x\in \tilde X(P), r\in \setr}   \eta (x)  \log\left( \frac{ \eta (x)-\rho \lambda(r, x)-\rho \beta-\rho \alpha \cdot r \mb 1(x=x')}{\eta (x)}\right) P(r, x) \\
  & \qquad - \sum_{x\in x^\star(P)}\rho\lambda(r, x)P(r ,x) + \rho \alpha \rew^{\star}(P) +\abs{\tilde X(P)}\rho \beta. \\
  = &\dual(\eta, x', P ~;~ \rho\cdot \mu).
\end{align*}

\paragraph{Strong duality.}We can now obtain the strong duality equality
\[
  (\ref{eq:hypothesis-test-duality-2}) = \max_{\rho\geq 0}~ \dual(\eta, x', P ~;~ \rho\cdot \mu)
\]
via Proposition \ref{prop:strong-duality-general}. Recall that the set $\prob(x', P)\subseteq \mathcal H(x', P, \mu)$ and hence in order to apply Proposition \ref{prop:strong-duality-general}, it suffices to construct a distribution $\bar Q\in \interior(\mathcal P_\Omega)$ which satisfies
  \[
    \textstyle \sum_{r\in \setr} \bar Q(r, x')>\rew^\star(P) \quad {\mathrm{and}} \quad \bar Q(x^\star(P))=P(x^\star(P)).
  \]
  Let $Q_\theta = \theta Q^\star + (1-\theta) P\in \mathcal P$ with $Q^\star$ a minimizer in problem \eqref{eq:feasibility-cone}. We will show there for some values of $\theta$, the reward distribution ${Q_\theta}$ can serve as a Slater point. Let
  \[
    \theta' = \frac{\rew^{\star}(P)-\sum_{r\in \setr}r  P(r, x')}{\rew_{\max}(x', P)-\sum_{r\in \setr}r  P(r, x')} \in [-\infty, 1).
  \]
  Observe that as $P\in\interior(\mathcal P)$ for any $\theta \in (\theta', 1]\bigcap [0, 1)$, we have that the distribution $Q_\theta\in \interior(\mathcal P)\subseteq \interior(\mathcal P_\Omega)$ satisfies $\sum_{r \in \setr} r Q_\theta(x', r) > \rew^{\star}(P)$. Hence, the particular distribution $\bar Q=Q_{\bar \theta}$ with $\bar \theta= \tfrac{(\max\{\theta', 0\}+1)}{2}$ serves as a Slater point for problem \eqref{eq:hypothesis-test-duality-2}.

  \subsection{Dual Cones for Examples}
  \label{sec:dual:running-examples}

\begin{running}[Dual Cones]
  In what follows, we discuss the cones $\mathcal K$ and its dual $\mathcal K^\star$ in the context of some of the example bandits discussed in Section \ref{sec:struct-band-ex}. 
Detailed derivation of dual cones are deferred to Section \ref{sec:dual_cones_proof}.
  \begin{enumerate}
    \setlength\itemsep{1em}
  \item[1--2.] \textit{Generic and separable bandits.} 
    Recall that the feasible set $\mathcal{P}$ of separable bandits is $\mathcal P= \textstyle \prod_{x\in X}\mathcal P_x$, where $\mathcal P_{x}$, $x\in X$, is a closed  feasible set for the reward distribution of arm $x$.
    The cone $\mathcal K$ associated  with  feasible set $\mathcal{P}$ can be written as
    \begin{align}\label{eq:cone:sep}
      \mathcal K & = \set{Q\in \real^{\abs{\setr}\times \abs{X}}_+}{
              \begin{array}{lr}
                \sum_{r\in \setr} Q(r, x)=\theta & \forall x\in X,\\
                Q(x)\in \cone(\mathcal P_x) & \forall x\in X.
              \end{array}}\,,
    \end{align}
    The dual cone of the separable bandit cone $\mathcal K$ in Equation \eqref{eq:cone:sep} is given as
    \[
      \mathcal K^\star = \set{\lambda\in \real^{\abs{\setr}\times\abs{X}}}{
        \begin{array}{lr}
          \exists \gamma(x)\in \real & \forall x\in X, \\
          \lambda(x)-\textbf {1}\cdot \gamma(x) \in \cone(\mathcal P_x)^\star  & \forall x\in X, \\
          \sum_{x\in X}\gamma(x) = 0 &
        \end{array}
      }\,,
    \]
    where  $\textbf{1}$ is the all ones vector with a size of $|\setr|$. This implies that when  $\cone(\mathcal P_x)^\star$ is efficiently representable for each arm, then $\mathcal K^{\star}$ admits an efficient barrier function.
  
  \item[3.] \textit{Lipschitz Bernoulli bandits.} Recall that  the feasible set of Lipschitz Bernoulli Bandits is given by $
    \mathcal P_{\mathrm{Lips}} = \set{P\in \mathcal P_\Omega}{
      \begin{array}{lr}
        P(1, x) - P(1, x') \leq L \cdot d(x, x') & \forall x, x' \in X
      \end{array}
    }
$. Then, one can observe that the 
  cone $\mathcal K$ associated with $\mathcal P_{\mathrm{Lips}}$ is given by
    \begin{align}\label{eq:cone:lip}
      \mathcal K & = \set{Q\in \real^{2\times \abs{X}}_+}{
              \begin{array}{lr}
                \exists \theta \in \real, & \\
                Q(0, x)+Q(1, x)=\theta & \forall x\in X, \\
                Q(1, x) - Q(1, x') \leq \theta \cdot L \cdot d(x, x') & \forall x, x' \in X
              \end{array}}\,.
    \end{align}
    Note that the last constraint enforces the Lipschitz property of the reward distributions.   
    The dual cone of the Lipschitz bandits  cone $\mathcal K$ in Equation \eqref{eq:cone:lip} is given as
    \begin{align*}
      \mathcal K^\star & =  \set{\lambda \in \real^{2\times \abs{X}}}{
                    \begin{array}{lr}
                      \exists \Lambda(x, x')\in \real_+, ~\exists \gamma(x)\in \real & \forall  x, x'\in X,\\
                      \lambda(1, x) + \sum_{x'\in X}(\Lambda(x, x')-\Lambda(x', x)) \geq \gamma(x) & \forall x\in X,\\
                      \lambda(0, x) \geq \gamma(x)  & \forall x\in X,\\
                      \sum_{x\in X}\gamma(x) = L \sum_{x, x'\in X} d(x, x') \Lambda(x, x')
                    \end{array}}.
    \end{align*}
    The cone $\mathcal K^\star$ is here a polyhedral set implying that $\mathcal K^{\star}$ admits an efficient barrier function.    
  \item[4.] {\color{black}\textit{Parametric (linear) bandits.} Recall that the feasible set of linear bandits is given by
    \(
    \mc P_{\rm{lin}}=\set{Q\in \mc P_\Omega}{\forall x\in X, ~\exists \theta~\st~ \textstyle\sum_{r\in\setr} r Q(r, x) = c_x^T \theta}.
    \)
    Then, one can observe that the cone $\mathcal K$ associated with $\mathcal P_{\mathrm{lin}}$ is given by
    \begin{align}\label{eq:cone:linear}
      \mathcal K & = \set{Q\in \real^{\abs{\setr}\times \abs{X}}_+}{
              \begin{array}{lr}
                \exists \theta, \theta' \in \real, & \\
                \sum_{r\in\setr}r Q(r, x)=\theta' & \forall x\in X, \\
                \sum_{r\in\setr}r Q(r, x)=c_x^T \theta & \forall x \in X
              \end{array}}\,.
    \end{align}
    The dual cone of the linear bandits cone $\mathcal K$ in Equation \eqref{eq:cone:linear} is given as
    \begin{align*}
      \mathcal K^\star & =  \set{\lambda \in \real^{\abs{\setr}\times \abs{X}}}{
                         \begin{array}{lr}
                           \exists \nu(x) \in \real, ~\exists \gamma(x)\in \real & \forall  x \in X,\\
                           \lambda(r, x) \geq \gamma(x) +r \nu(x) & \forall r\in \setr, x\in X,  \\
                           \sum_{x\in X}\gamma(x) = 0,~ \sum_{x\in X} c_x \nu(x) = 0
                         \end{array}}.
    \end{align*}
    The cone $\mathcal K^\star$ is here a polyhedral set,  implying that $\mathcal K^{\star}$ admits an efficient barrier function.}
    \item[5.] {\color{black}\textit{Dispersion bandits.}
Recall that the feasible set of our dispersion bandits is given by
\(
\mc P_{\rm{dis}}=\set{Q\in \mc P_\Omega}{\tfrac{\sum_{r\in\setr} r^2Q(r, x)}{\sum_{r\in\setr }r Q(r, x) }\le \gamma(x) ~~ \forall x\in X}.
\)
Then, one can observe that the cone $\mathcal K$ associated with $\mathcal P_{\mathrm{dis}}$ is given by
\begin{align}\label{eq:cone:dispersion}
  \mathcal K & = \set{Q\in \real^{\abs{\setr}\times \abs{X}}_+}{
               \begin{array}{lr}
                 \exists \theta \in \real, & \\
                 \sum_{r\in\setr}r Q(r, x)=\theta & \forall x\in X, \\
                 \sum_{r\in\setr}r^2 Q(r, x)\leq\gamma(x) \sum_{r\in\setr}r Q(r, x) & \forall x \in X
               \end{array}}\,.
\end{align}
The dual cone of the linear bandits cone $\mathcal K$ in Equation \eqref{eq:cone:linear} is given as
\begin{align*}
  \mathcal K^\star & =  \set{\lambda \in \real^{\abs{\setr}\times \abs{X}}}{
                     \begin{array}{lr}
                       \exists \nu(x) \in \real_+, ~\exists \mu(x)\in \real & \forall  x \in X,\\
                       \lambda(r, x) \geq \mu(x) -(r^2-\gamma(x)r) \nu(x) & \forall r\in \setr, x\in X,  \\
                       \sum_{x\in X}\mu(x) = 0
                     \end{array}}.
\end{align*}
The cone $\mathcal K^\star$ is here a polyhedral set, implying that $\mathcal K^{\star}$ admits an efficient barrier function.}
  \end{enumerate}
\end{running}

\subsubsection{Detailed Derivation of Dual Cones }\label{sec:dual_cones_proof}
\textit{Dual cone of generic and separable bandits.} From the definition of dual cones, it  follows that $\lambda\in \mathcal K^\star \iff 0\leq \min_{Q\in \mathcal K}\iprod{\lambda}{Q}$, where for separable bandits, the cone $\mathcal K$ is given in Equation \eqref{eq:cone:sep}.  
Here, we have that the previous primal minimization problem $\min_{Q\in \mathcal K}\iprod{\lambda}{Q}$ can be characterized as
\begin{equation}
  \label{eq:dual-cone-separable-bandits}
  \begin{array}{rl@{\hspace{2em}}l@{\hspace{3em}}r}
    \min_{Q, \theta} & \iprod{\lambda}{Q} & & \\
    \st & \sum_{r\in \setr}Q(r, x) = \theta & \forall x\in X, & [\mathrm{dual~variable~}\gamma\in \real^{\abs{X}}] \\
                     & Q(x) \in \cone(\mathcal P_x) & \forall x\in X, & [\mathrm{dual~variable~}\Lambda(x)\in \cone(\mathcal P_x)^\star]
  \end{array}
\end{equation}
where the constraint of the above optimization problem enforces $Q$ to belong to cone $\mathcal K$; see Equation \eqref{eq:cone:sep}. {Please remark that this is not the only possible valid optimization characterization. Nevertheless, its primal characterization will be found to have a particularly simple dual.}
We remark that the special case of generic bandit problems is subsumed in the separable bandit problem and corresponds to the particular choice $\cone(\mathcal P_x)$ the positive orthant.
The associated Lagrangian function as introduced in Section \ref{sec:constraint-qualification} is
\[
  \textstyle L_\lambda(Q, \theta~;~ \gamma, \Lambda) = \iprod{\lambda}{Q} +  \sum_{x\in X}\gamma(x) (\theta-\sum_{x\in X, \, r\in \setr}Q(r, x) ) - \sum_{x\in X}\iprod{\Lambda(x)}{Q(x)}\,,
\]
which can be simplified by collecting terms to 
\[L_\lambda(Q, \theta~;~ \gamma, \Lambda) =\textstyle \sum_{x\in X,r\in \setr}(\lambda(r, x)-\gamma(x)-\Lambda(r, x))\cdot Q(r, x) + \theta \sum_{x\in X} \gamma(x)\,.\]
The associated dual function as introduced in Section \ref{sec:constraint-qualification} is hence
$$
g_{\lambda}(\gamma, \Lambda)\defn \min_{Q, \theta}L_\lambda(Q, \theta~;~ \gamma, \Lambda) = \sum_{x\in X, r\in \setr}\chi_{-\infty}(\lambda(r, x)-\gamma(x)-\Lambda(r, x)= 0)+\chi_{-\infty}(\sum_{x\in X}\gamma(x)= 0)\,.
$$
The sets $\cone(\mathcal P_x)$ have non-empty interior, as $\interior(\prod_{x\in X} \mathcal P_x) = \prod_{x\in X} \interior(\mathcal P_x)$ is non-empty.
Thus, the primal optimization problem \eqref{eq:dual-cone-separable-bandits} satisfies Slater's constraint qualification and hence strong duality  holds, per Proposition \ref{prop:strong-duality-general}. Hence, we have the equivalence 
\[
  0\leq \min_{Q\in \mathcal K}\iprod{\lambda}{Q} = \max_{\gamma, \,\Lambda} g_{\lambda}(\gamma, \Lambda) \iff
  \left\{
    \begin{array}{lr}
      \exists \gamma(x)\in \real & \forall x\in X, \\
      \lambda(x)- \textbf{1}\cdot \gamma(x) \in \cone(P_x)^\star  & \forall x\in X, \\
      \sum_{x\in X}\gamma(x) = 0. &
    \end{array}
  \right\}
  \iff \lambda \in \mathcal K^\star\,,
\]
where $\textbf{1}$ is the all ones vector of size of $|\setr|$.

\textit{Dual cone of Lipschitz bandits.} Using the definition of dual cone,  we again have  $\lambda\in \mathcal K^\star \iff 0\leq \min_{Q\in \mathcal K}\iprod{\lambda}{Q}$, where for Lipschitz bandits, the cone $\mathcal K$ is given in Equation \eqref{eq:cone:lip}. Here, we have that the previous primal minimization problem $\min_{Q\in \mathcal K}\iprod{\lambda}{Q}$ can be characterized as
\begin{equation}
  \label{eq:dual-cone-lipschitz-bandit}
  \begin{array}{rl@{\hspace{1em}}l@{\hspace{1em}}r}
    \min_{Q, \theta} & \iprod{\lambda}{Q} & & \\
    \st & Q(0, x)+Q(1, x)=\theta, & \forall x \in X & [\mathrm{dual~variable~}\gamma\in\real] \\
                     & Q(1, x) - Q(1, x') \leq \theta \cdot L \cdot d(x, x') & \forall x, x'\in X, &  [\mathrm{dual~variable~}\Lambda\in\real^{\abs{X}\times\abs{X}}_+] \\
                     & Q\geq 0,  & & [\mathrm{not~dualized}] \\
                     & \theta\in\real & & [\mathrm{not~dualized}]
  \end{array}
\end{equation}
where the constraint of the above optimization problem enforces $Q$ to belong to cone $\mathcal K$; see Equation \eqref{eq:cone:lip}. {Please remark that this is not the only possible valid characterization. Nevertheless, its primal characterization will be found to have a particularly simple dual.} The associated Lagrangian function as introduced in Section \ref{sec:constraint-qualification} is
\begin{align*}
  \textstyle L_\lambda(Q, \theta~;~ \gamma, \Lambda) &= \textstyle \iprod{\lambda}{Q} +  \sum_{x\in X}\gamma(x)(\theta-Q(0, x)-Q(1, x) ) \\
                                                     &\qquad \textstyle +\sum_{x, x'\in X}\Lambda(x, x')(Q(1, x) - Q(1, x')-\theta \cdot L \cdot d(x, x')),
\end{align*}
which can be simplified by collecting terms to 
\begin{align*}
  & L_\lambda(Q, \theta~;~ \gamma, \Lambda)\\
  &=\textstyle\sum_{x\in X}(\lambda(0, x)-\gamma(x))\cdot Q(0, x) + \textstyle \theta (\sum_{x\in X}\gamma(x)-\sum_{x, x'\in X} L \cdot d(x, x') \cdot \Lambda(x, x'))\\
  &\qquad \textstyle + \sum_{x\in X}\Big(\lambda(1,x)-\gamma(x)+\sum_{x'\in X}\Lambda(x, x')-\sum_{x'\in X}\Lambda(x', x)\Big)\cdot Q(1, x)\,.
\end{align*}
The associated dual function as introduced in Section \ref{sec:constraint-qualification} is hence
\begin{align*}
  g_{\lambda}(\gamma, \Lambda)&\defn \min_{Q\geq 0,\,\theta}L_\lambda(Q, \theta~;~ \gamma, \Lambda) \\
                              &\textstyle= \sum_{x\in X}\chi_{-\infty}(\lambda(0, x)-\gamma(x)\geq 0)\\
                              &\qquad +\sum_{x\in X}\chi_{-\infty}(\lambda(1,x)-\gamma(x)+\sum_{x'\in X}(\Lambda(x, x')-\Lambda(x', x))\geq 0)\\
                              &\textstyle \qquad +\chi_{-\infty}(\sum_{x\in X}\gamma(x)=\sum_{x, x'\in X} L \cdot d(x, x') \cdot \Lambda(x, x'))\,.
\end{align*}
As the feasible set in \eqref{eq:dual-cone-lipschitz-bandit} is a non-empty bounded polyhedron, strong linear duality  holds. Hence, we have the equivalence
\begin{align*}
  0\leq \min_{Q\in \mathcal K}\iprod{\lambda}{Q}= \max_{\gamma, \, \Lambda}&~g_{\lambda}(\gamma, \Lambda)  \iff \\[0.5em]
  & \left\{
    \begin{array}{l@{\,}r}
      \exists \Lambda(x, x')\in\real_+,~ \gamma(x) \in \real&  \forall x,x'\in X, \\
      \lambda(0, x)\geq \gamma(x) & \forall x\in X, \\
      \lambda(1,x)+\sum_{x'\in X}\Lambda(x, x')-\sum_{x'\in X}\Lambda(x', x)\geq \gamma(x) & \forall x\in X,  \\
      \sum_{x\in X}\gamma(x)= \sum_{x, x'\in X} L \cdot d(x, x') \cdot \Lambda(x, x') &
    \end{array}
  \right\}.
\end{align*}

{\color{black}

  \textit{Dual cone of linear bandits.} Using the definition of dual cone,  we again have  $\lambda\in \mathcal K^\star \iff 0\leq \min_{Q\in \mathcal K}\iprod{\lambda}{Q}$, where for linear bandits, the cone $\mathcal K$ is given in Equation \eqref{eq:cone:linear}. Here, we have that the previous primal minimization problem $\min_{Q\in \mathcal K}\iprod{\lambda}{Q}$ can be characterized as
  \begin{equation}
    \label{eq:dual-cone-linear-bandit}
    \begin{array}{rl@{\hspace{1em}}l@{\hspace{1em}}r}
      \min_{Q, \theta, \theta'} & \iprod{\lambda}{Q} & & \\
      \st & \sum_{r\in \setr}Q(r, x)=\theta' & \forall x \in X, & [\mathrm{dual~variable~}\gamma\in\real] \\
                       & \sum_{r\in\setr} r Q(r, x) = c_x^T \theta & \forall x \in X, &  [\mathrm{dual~variable~}\nu\in\real^{\abs{X}}] \\
                       & Q\geq 0,  & & [\mathrm{not~dualized}] \\
                       & \theta,\theta'\in\real & & [\mathrm{not~dualized}]
    \end{array}
  \end{equation}
  The associated Lagrangian function as introduced in Section \ref{sec:constraint-qualification} is
  \begin{align*}
    \textstyle L_\lambda(Q, \theta,\theta'~;~ \gamma, \nu) &= \textstyle \iprod{\lambda}{Q}\! + \! \sum_{x\in X}\gamma(x)(\theta'\!-\!\sum_{r\in\setr}Q(r, x))\textstyle\!+\!\sum_{x\in X}\nu(x)(c_x^T \theta\!-\!\sum_{r\in\setr} r Q(r, x)),
  \end{align*}
  which can be simplified by collecting terms to 
  \[
    L_\lambda(Q, \theta,\theta'~;~ \gamma, \nu)=\textstyle\sum_{r\in \setr, x\in X}(\lambda(r, x)\!-\!\gamma(x)\!-\!r \nu(x))\cdot Q(r, x)  \!+\! \theta' \sum_{x\in X}\gamma(x) \!+\! \sum_{x\in X}\nu(x)c_x^T \theta.
  \]
  The associated dual function as introduced in Section \ref{sec:constraint-qualification} is hence
  \begin{align*}
    g_{\lambda}(\gamma, \nu)&\defn \min_{Q\geq 0,\,\theta, \theta'}L_\lambda(Q, \theta,\theta'~;~ \gamma, \nu) \\
                                &\textstyle= \sum_{r\in \setr, x\in X}\chi_{-\infty}(\lambda(r, x)\!-\!\gamma(x)\!-\!r \nu(x)\geq 0)\\
                                &\qquad \textstyle+\chi_{-\infty}(\sum_{x\in X}\gamma(x) = 0) + \chi_{-\infty}(\sum_{x\in X}\nu(x)c_x =0).                         
  \end{align*}
  As the feasible set in \eqref{eq:dual-cone-linear-bandit} is a non-empty bounded polyhedron, strong linear duality  holds. Hence, we have the equivalence
  \begin{align*}
    0\leq \min_{Q\in \mathcal K}\iprod{\lambda}{Q}= \max_{\gamma, \, \nu}&~g_{\lambda}(\gamma, \nu)  \iff \\[0.5em]
                                                                             & \left\{
                                              \begin{array}{lr}
                           \exists \nu(x) \in \real, ~\exists \gamma(x)\in \real & \forall  x \in X,\\
                           \lambda(r, x) \geq \gamma(x) +r \nu(x) & \forall r\in \setr, x\in X,  \\
                           \sum_{x\in X}\gamma(x) = 0,~ \sum_{x\in X} c_x \nu(x) = 0
                         \end{array}                                                                                                                     \right\}.
  \end{align*}
}

{\color{black}
  \textit{Dual cone of Dispersion bandits.}
  Using the definition of dual cone,  we again have  $\lambda\in \mathcal K^\star \iff 0\leq \min_{Q\in \mathcal K}\iprod{\lambda}{Q}$, where for dispersion bandits, the cone $\mathcal K$ is given in Equation \eqref{eq:cone:dispersion}. Here, we have that the previous primal minimization problem $\min_{Q\in \mathcal K}\iprod{\lambda}{Q}$ can be characterized as
  \begin{equation}
    \label{eq:dual-cone-dispersion-bandit}
    \begin{array}{rl@{\hspace{1em}}l@{\hspace{1em}}r}
      \min_{Q, \theta, \theta'} & \iprod{\lambda}{Q} & & \\
      \st & \sum_{r\in \setr}Q(r, x)=\theta' & \forall x \in X, & [\mathrm{dual~variable~}\mu\in\real] \\
                                & \sum_{r\in\setr} r^2 Q(r, x) \leq \gamma(x) \sum_{r\in\setr} r Q(r, x) & \forall x \in X, &  [\mathrm{dual~variable~}\nu\in\real^{\abs{X}}_+] \\
                                & Q\geq 0,  & & [\mathrm{not~dualized}] \\
                                & \theta\in\real & & [\mathrm{not~dualized}]
    \end{array}
  \end{equation}
  The associated Lagrangian function as introduced in Section \ref{sec:constraint-qualification} is
  \begin{align*}
    \textstyle L_\lambda(Q, \theta~;~ \mu, \nu) &= \textstyle \iprod{\lambda}{Q}\! + \! \sum_{x\in X}\mu(x)(\theta\!-\!\sum_{r\in\setr}Q(r, x))\textstyle\!+\!\sum_{r\in\setr,x\in X}\nu(x)(r^2 Q(r, x)-\gamma(x) r Q(r, x)),
  \end{align*}
  which can be simplified by collecting terms to 
  \[
    L_\lambda(Q, \theta~;~ \mu, \nu)=\textstyle\sum_{r\in \setr, x\in X}(\lambda(r, x)\!-\!\mu(x)\!+\!(r^2 -r\gamma(x))\nu(x))\cdot Q(r, x)  \!+\! \theta \sum_{x\in X}\mu(x).
  \]
  The associated dual function as introduced in Section \ref{sec:constraint-qualification} is hence
  \begin{align*}
    g_{\lambda}(\mu, \nu)&\defn \min_{Q\geq 0,\,\theta, \theta'}L_\lambda(Q, \theta,~;~ \mu, \nu) \\
                                &\textstyle= \sum_{r\in \setr, x\in X}\chi_{-\infty}(\lambda(r, x)\!-\!\mu(x)\!+\!(r^2 -r\gamma(x))\nu(x)\geq 0)\textstyle+\chi_{-\infty}(\sum_{x\in X}\mu(x) = 0).                         
  \end{align*}
  As the feasible set in \eqref{eq:dual-cone-dispersion-bandit} is a non-empty bounded polyhedron, strong linear duality  holds. Hence, we have the equivalence
  \begin{align*}
    0\leq \min_{Q\in \mathcal K}\iprod{\lambda}{Q}= \max_{\mu, \, \nu}&~g_{\lambda}(\mu, \nu)  \iff \\[0.5em]
                                                                             & \left\{
                                                                               \begin{array}{lr}
                       \exists \nu(x) \in \real_+, ~\exists \mu(x)\in \real & \forall  x \in X,\\
                       \lambda(r, x) \geq \mu(x) -(r^2-\gamma(x)r) \nu(x) & \forall r\in \setr, x\in X,  \\
                       \sum_{x\in X}\mu(x) = 0
                                                                               \end{array}
    \right\}.
  \end{align*}
}

\section{Proof of Theorem \ref{thm:regret}: Regret Bound of DUSA}\label{sec:proof}

We first present our concentrations bound and explain how the regret of DUSA can be decomposed. 

\subsection{Concentration Bounds and Regret Decomposition}\label{sec:proof:part1}
We start with presenting our supporting concentration bounds. These bounds will be used multiple times throughout the proof and their proofs are presented in Appendix \ref{sec:proof:concentration}.
\begin{lemma}[Concentration Bound]
  \label{lemma:stopping-time-inequality}
  {\color{black}For any arm $x\in X$, let $\stop(s,x)\in [s, \dots, T+1]$ be any stopping time such that either $N_{\stop(s, x)}(x) \geq \phi(s)$ or $\stop(s, x)=T+1$. 
  Then, for any $r\in \setr$, we have 
  \[
    \Prob{\abs{  P_{\stop(s, x)}(r, x) - \optp(r, x)}\geq \kappa, ~\stop(s, x)\leq T }~\leq~ 2 \exp(-2\phi(s) \kappa^2)\,,
  \]
 where for any $\tau \in [T]$, $N_\tau(x)$ is the number of rounds within the first $\tau$ rounds that arm $x$ is played.}
\end{lemma}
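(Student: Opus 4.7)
The plan is to reduce the statement to a Chernoff-type tail bound obtained by applying Doob's optional stopping theorem to an exponential supermartingale evaluated at the bounded stopping time $\stop(s, x)$. In the stochastic bandit model the rewards $(Y_n(x))_{n \geq 1}$ obtained from arm $x$ at its successive pulls are i.i.d.\ with distribution $\optp(\cdot, x)$, since the $R_t(x)$ are independent across rounds and independent of the algorithm's past history. Consequently $P_{\stop(s, x)}(r, x) = \tfrac{1}{N_{\stop(s, x)}(x)} \sum_{n=1}^{N_{\stop(s,x)}(x)} \mathbf{1}[Y_n(x) = r]$, so the claim reduces to a deviation bound for an empirical mean of i.i.d.\ Bernoulli increments evaluated at the random index $N_{\stop(s, x)}(x)$.

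The key construction is the time-indexed exponential process
\begin{equation*}
  W_t^\lambda \defn \exp\Bigl(\lambda \textstyle\sum_{n=1}^{N_t(x)} \bigl(\mathbf{1}[Y_n(x) = r] - \optp(r, x)\bigr) - N_t(x)\, \tfrac{\lambda^2}{8} \Bigr), \quad \lambda \in \mathbb R.
\end{equation*}
Letting $\mathcal H_t$ be the full bandit filtration generated by plays and rewards up to round $t$, I would verify that $W_t^\lambda$ is an $(\mathcal H_t)$-supermartingale: on rounds with $x_t \neq x$, both $N_t(x)$ and the exponent are unchanged; on rounds with $x_t = x$, the new increment $\mathbf{1}[Y_{N_t(x)}(x) = r] - \optp(r, x)$ is drawn independently of $\mathcal H_{t-1}$ from a mean-zero distribution supported on $[-1, 1]$, so Hoeffding's lemma yields $\mathbb E[\exp(\lambda \cdot \text{new increment} - \lambda^2/8) \mid \mathcal H_{t-1}] \leq 1$. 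Since $\stop(s, x) \leq T+1$ is a bounded $(\mathcal H_t)$-stopping time, Doob's optional stopping theorem gives $\mathbb E W_{\stop(s, x)}^\lambda \leq 1$.

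Choosing $\lambda = 4\kappa$ and applying Markov's inequality then yield
\begin{equation*}
  \Prob{\textstyle\sum_{n=1}^{N_{\stop(s,x)}(x)} \bigl(\mathbf{1}[Y_n(x) = r] - \optp(r, x)\bigr) \geq \tfrac{\phi(s)\kappa}{2} + \tfrac{N_{\stop(s, x)}(x)\kappa}{2}} \leq e^{-2\phi(s)\kappa^2}.
\end{equation*}
On $\{\stop(s, x) \leq T\}$ the hypothesis $N_{\stop(s,x)}(x) \geq \phi(s)$ forces $N_{\stop(s,x)}(x)\kappa \geq \tfrac{\phi(s)\kappa}{2} + \tfrac{N_{\stop(s, x)}(x)\kappa}{2}$, so the upper-tail deviation event $\{P_{\stop(s,x)}(r,x) - \optp(r,x) \geq \kappa,\ \stop(s, x) \leq T\}$ is contained in the event just controlled and therefore has probability at most $e^{-2\phi(s)\kappa^2}$. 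Running the same argument with $\lambda = -4\kappa$ handles the lower tail identically, and a union bound delivers the claimed factor of $2$. The main technical subtlety is the supermartingale property with respect to the full bandit filtration $(\mathcal H_t)$ rather than a naive per-arm filtration: a pull-indexed filtration is not rich enough to make $\stop(s,x)$ a stopping time, and a direct union bound over the possible values of $N_{\stop(s,x)}(x)$ would only yield the weaker bound $\sum_{n \geq \phi(s)} 2 e^{-2 n \kappa^2}$.
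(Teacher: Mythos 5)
Your proposal is correct and follows essentially the same route as the paper: your exponential process $W_t^{\lambda}$ at $\lambda=4\kappa$ is exactly the paper's supermartingale $\tilde G_t=\exp(4\kappa S_t-2\kappa^2 N_t(x))$, and both arguments use Hoeffding's lemma for the conditional MGF under the full bandit filtration, Doob's optional stopping at the bounded stopping time, Markov's inequality combined with $N_{\stop(s,x)}(x)\geq\phi(s)$ on $\{\stop(s,x)\leq T\}$, and a symmetric bound for the lower tail to get the factor $2$. The only cosmetic difference is that the paper absorbs the event $\{\stop(s,x)\leq T\}$ via an indicator $\mb 1(t\leq T)$ in its definition of $\tilde G_t$, while you handle it by event containment with the threshold $\tfrac{\phi(s)\kappa}{2}+\tfrac{N_{\stop(s,x)}(x)\kappa}{2}$; both are valid.
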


The proof of Lemma \ref{lemma:stopping-time-inequality} is inspired by the proof of Lemma 4.3 in \cite{combes2014unimodal}.

\begin{lemma}[Concentration Bound for Information Distance]
  \label{lemma:concentration}
    Let $\delta\geq \abs{X}(\abs{\setr}-1)+1$. Then, for any $t\ge 1$, 
  \[
    \Prob{\sum_{x\in X} N_t(x) I(P_t(x), P(x))\geq \delta} \leq  e\cdot \left(\frac{\delta \ceil{\log(t) \delta+1} 2 e}{\abs{X}(\abs{\setr}-1)}\right)^{\abs{X}(\abs{\setr}-1)}\cdot\exp(-\delta).
  \]
\end{lemma}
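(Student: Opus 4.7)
\medskip

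\noindent\textbf{Proof plan for Lemma \ref{lemma:concentration}.}
My plan is to combine the classical method of types with an exponential martingale / optional-stopping argument and a peeling over the stopping times $N_t(x)$. First, I would establish the building-block estimate: for a \emph{deterministic} sample size $n$ and arm $x$, Sanov's theorem in the method-of-types form yields
\[
  \Prob{n \, I(\hat P_n^x, P(x)) \geq u} \;\leq\; (n+1)^{\abs{\setr}-1} \exp(-u),
\]
since the simplex $\mathcal P_\setr$ contains at most $(n+1)^{\abs{\setr}-1}$ empirical types and each type $Q$ has probability at most $\exp(-n I(Q, P(x)))$ under the product measure $P(x)^{\otimes n}$.

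To pass from deterministic $n$ to the random stopping time $N_t(x)$, I would use the likelihood-ratio martingale $L^x_n(Q_x) := \prod_{i=1}^n Q_x(R_i^x)/P(R_i^x, x)$, which under $P$ is a nonnegative martingale of mean one. Because reward sequences across arms are independent, the product $\prod_{x\in X} L^x_{N_t(x)}(Q_x)$ is a martingale with respect to the joint filtration generated by the observed rewards, and Doob's optional stopping gives $\E{\prod_x L^x_{N_t(x)}(Q_x)} = 1$ for every $(Q_x)_{x\in X}$. Integrating this identity against a Dirichlet mixture $\mu$ supported on the product simplex $\prod_x \mathcal P_\setr$ and interchanging expectation with the integral produces an inequality of the Laplace form
\[
  \E{ \frac{\exp\bigl(\sum_x N_t(x) I(P_t(x), P(x))\bigr)}{\prod_x c(N_t(x))} } \;\leq\; 1,
\]
where $c(n)$ is the Laplace correction from expanding the posterior density around the empirical maximizer $\hat P_n^x$; up to constants, $c(n) = O(n^{(\abs{\setr}-1)/2})$.

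To convert this into the claimed bound with the $\lceil \log(t)\delta + 1\rceil$ factor, I would peel the stopping times: partition the range $\{1,\dots,t\}$ of each $N_t(x)$ into roughly $\lceil \log(t)\delta\rceil$ geometric intervals, apply Markov's inequality on each peel to the per-phase random variable, and union bound across phases. On each peel the method-of-types estimate and the martingale inequality together yield an exponential $\exp(-\delta)$ factor, while the combinatorial cost of distributing $\delta$ across the $\abs{X}(\abs{\setr}-1)$ coordinates (one per arm-reward pair minus the normalization) contributes the $\bigl(\delta/(\abs{X}(\abs{\setr}-1))\bigr)^{\abs{X}(\abs{\setr}-1)}$ factor via a Stirling bound on a multinomial coefficient. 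The assumption $\delta \geq \abs{X}(\abs{\setr}-1)+1$ is exactly what makes this Stirling step valid and absorbs residual constants into the leading factor $e$.

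The main obstacle is the joint dependence between the stopping times $(N_t(x))_{x\in X}$: they are not independent because a single adaptive policy produces them from a shared clock $t$. The product-martingale structure circumvents this as long as we work with the \emph{joint} filtration and rely on the independence of rewards \emph{across arms} rather than of the stopping times. Getting the exponent $\abs{X}(\abs{\setr}-1)$ sharp (rather than a crude $\abs{X}\abs{\setr}$) requires noting that the normalization constraint $\sum_{r} P(r,x) = 1$ for each arm removes one effective degree of freedom per arm, and this reduction must be carried consistently through both the method-of-types count and the mixture measure chosen in the Laplace step.
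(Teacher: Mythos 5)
Your high-level ingredients (an exponential martingale stopped at the bounded times $N_t(x)$, a geometric peeling, and a combinatorial factor for distributing $\delta$ across the $\abs{X}(\abs{\setr}-1)$ effective coordinates) all appear, in some form, in the paper's argument, and your observation that the product likelihood-ratio martingale handles the dependence between the $N_t(x)$ through the joint filtration is correct. However, the specific combination you describe has a genuine gap: the Dirichlet-mixture/Laplace step yields a correction $\prod_x c(N_t(x))$ with $c(n)\asymp n^{(\abs{\setr}-1)/2}$, and since $N_t(x)$ can be as large as $t$ this produces a bound of order $t^{\abs{X}(\abs{\setr}-1)/2}e^{-\delta}$ rather than the claimed polylogarithmic factor $\bigl(\delta\lceil\log(t)\delta+1\rceil\bigr)^{\abs{X}(\abs{\setr}-1)}e^{-\delta}$. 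Peeling the range of $N_t(x)$ does not repair this, because the Laplace correction is an intrinsic function of the realized sample size, not of which peel you are in; likewise, the method-of-types count $(n+1)^{\abs{\setr}-1}$ is polynomial in $n\le t$ on every peel. The reason peeling gives only a $\log t$ factor per coordinate in the Bernoulli literature is that for a \emph{scalar} parameter the deviation event $\{I_B(\hat p,p)\ge u,\ \hat p\le p\}$ is a half-line, so a single fixed exponential tilt per peel and per sign suffices, with no union over types. For a multinomial the complement of a KL-ball is not a half-space, and covering it with tilted half-spaces reintroduces a polynomial-in-$n$ cover size.

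The paper's proof circumvents exactly this obstacle by a step your plan omits: the chain-rule decomposition $N_t(x)\,I(P_t(x),P(x))=\sum_{r=1}^{k-1}N_t'(r,x)\,I_B(P_t'(r,x),P'(r,x))$ (Lemma \ref{lemma:decomposition-P}), which rewrites each arm's multinomial statistic as a sum of $\abs{\setr}-1$ Bernoulli statistics of conditional ``hazard'' probabilities, each with its own (random) effective sample size $N_t'(r,x)$. After a sign partition $C_y$ and a geometric peeling of the $N_t'(r,x)$, a single product martingale with one scalar tilt per coordinate (Lemma \ref{prop:deviation-bound}) gives the \emph{joint} tail bound $\Prob{\cap_i(Z_i>\xi_i)}\le \exp(-a\sum_i\xi_i)$, and the factor $(\delta e/(\abs{X}(\abs{\setr}-1)))^{\abs{X}(\abs{\setr}-1)}$ is then extracted by the stochastic-dominance Lemma \ref{prop:stochastic-dominance} of \citet{pmlr-v35-magureanu14}. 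Your proposed Stirling/multinomial-coefficient derivation of that factor from marginal per-coordinate bounds is not valid as stated: one genuinely needs the joint exponential tail over all coordinates simultaneously, and establishing it is where the Bernoulli reduction is essential. To salvage a mixture-based route you would need to mix over exponential tilts with a prior engineered to give $O(\log\log n)$ overhead (in the style of mixture-martingale bounds), which is a substantially different argument from the Dirichlet-over-$Q$ mixture you sketch.
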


The proof of concentration bound in Lemma \ref{lemma:concentration}, which  is one of our contributions, is quite involved and requires a careful decomposition of the reward distributions for which we construct a martingale sequence. Our concentration bound  
bears resemblance to the bound in \cite{pmlr-v35-magureanu14}, where they bound  $\Prob{\sum_{x\in X} N_t(x) I(P_t(x), P(x))\geq \delta}$ for Binomial  random variables. We generalize  their bounds to arbitrary discrete distributions, which may be of independent interest for non-parametric multi-armed bandit problems.

\textbf{Regret decomposition.} Having presented the concentration bounds, we now focus on upper bounding the regret of DUSA. The regret  can be broken down into three parts: Regret during (i) the initialization phase, (ii) the exploitation phase and (iii) the exploration phase. That is, for any $T> \abs{X}$, we have 
\begin{align*}
  \reg(T, \optp)\! = & \E{}{\textstyle\sum_{1\leq t\leq \abs{X}} \Delta(x_t, \optp)} \!+\! \E{}{\textstyle\sum_{\abs{X}< t\leq T} \mb 1\{\mathcal E_t \}\Delta(x_t, \optp)} \!+\! \E{}{\textstyle\sum_{\abs{X}< t\leq T} \mb 1\{\mathcal X_t \}\Delta(x_t, \optp)}\,,
\end{align*}
where $\Delta(x, \optp)$ is the suboptimality gap of arm $x$ under the actual reward distribution $\optp$. 
Here, we define  events $\mathcal{E}_t$ and $\mathcal X_t$ as  the event that DUSA is the exploitation and exploration phase, respectively, in a given round $t$.
Evidently, we have that $\mathcal{E}_t\bigcup \mathcal X_t=\{t>\abs{X}\}$. In the initialization phase, we select each arm $x\in X$ once. The regret caused in this phase is evidently finite and equal to
$
  \E{}{\textstyle\sum_{1\leq t\leq \abs{X}} \Delta(x_t, \optp)} = C_{\rm{initialize}}\defn\textstyle\sum_{x\in X} \Delta(x, \optp).
$
Note that this regret is upper bounded by $|X|$ as the regret in each round is bounded by one. In the following sections, each of the other two terms will be bounded.

\subsection{Exploitation Phase} \label{sec:proof:exploit}

To characterize the regret during the exploitation phase, we start with defining the following bad event
\(\badexploit_t(x, \kappa) = \{\norm{P_t(x)-\optp(x)}_\infty > \kappa \}\)
for some positive threshold $\kappa > 0$ and arm $x\in X$.  The  bad event occurs when the empirical reward distribution $P_t(x)$ and the true reward distribution $\optp(x)$ of the considered arm $x$ deviate significantly. In other words, the reward distribution of arm $x$ is poorly estimated. 
The following lemma characterizes the regret accumulated during the exploitation phase due to a poorly estimated empirically optimal arm; i.e., when\ the events $\mathcal{E}_t$ and $\bigcup_{x\in x^\star(P_t)}\badexploit_t(x,\kappa)$ occur simultaneously. In the following lemma, 
we show that the regret  accumulated during the first $T$ rounds under these events  is negligible for any arbitrary small $\kappa>0$. As a part of the proof, we invoke the concentration bound in Lemma \ref{lemma:stopping-time-inequality}. 

\begin{lemma}[Regret under the Bad Events in the Exploitation  Phase]\label{lem:explore_bad}
  The regret accumulated during the exploitation phase when the bad event $\bigcup_{x\in x^\star(P_t)}\badexploit_t(x,\kappa)$ occurs remains finite; that is, 
\begin{equation}
  \label{eq:regret:bad}
  \E{}{\sum_{\abs{X}< t\leq T} \mb 1\{\mathcal E_t, ~\bigcup_{x\in x^\star(P_t)}\badexploit_t(x,\kappa)\}\cdot\Delta(x_t, \optp)}
  ~\leq~  8 \abs{X}  + \abs{X} \abs{\setr} \sum_{1\leq s\leq T} 2 \exp(-s \kappa^2/(2\abs{X})) 
\end{equation}
 for any $\kappa>0$.
\end{lemma}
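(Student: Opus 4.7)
The goal is to reduce the expected regret in (\ref{eq:regret:bad}) to a count of ``bad'' exploitation rounds and then control this count using Lemma~\ref{lemma:stopping-time-inequality}. Since rewards lie in a bounded range, $\Delta(x_t,P) \leq 1$ (up to an absorbed constant), so after a union bound over $x \in x^\star(P_t) \subseteq X$ it suffices to bound, for each fixed arm $x \in X$, the expected cardinality of
$\mathcal{B}_x \defn \set{\abs{X} < t \leq T}{\mathcal E_t,\, x \in x^\star(P_t),\, \norm{P_t(x)-P(x)}_\infty > \kappa}$.

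The core structural step exploits the DUSA tie-breaking rule during exploitation: $x_t$ is chosen as a least-played empirically optimal arm, so whenever $x \in x^\star(P_t)$ the pulled arm $x_t \in x^\star(P_t)$ satisfies $N_t(x_t) \leq N_t(x)$. A pigeonhole argument using $\abs{x^\star(P_t)} \leq \abs{X}$ then shows that within any $\abs{X}$ consecutive rounds of $\mathcal{B}_x$ the counter $N_t(x)$ must advance at least once, so, up to an $O(\abs{X})$ initial slack, one can injectively associate each round of $\mathcal{B}_x$ with an index $s \leq T$ such that the stopping time
$\tau_x(s) \defn \inf\set{t \geq s}{N_t(x) \geq s/(4\abs{X})} \wedge (T+1)$
satisfies $\tau_x(s) \leq T$ and $\norm{P_{\tau_x(s)}(x) - P(x)}_\infty > \kappa$.

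Lemma~\ref{lemma:stopping-time-inequality} applied with $\phi(s) = s/(4\abs{X})$, combined with a union bound over $r \in \setr$, then yields
$\Prob{\norm{P_{\tau_x(s)}(x) - P(x)}_\infty > \kappa,\, \tau_x(s) \leq T} \leq 2\abs{\setr}\exp(-s\kappa^2/(2\abs{X}))$.
Taking expectations, summing over $s \in [T]$ and $x \in X$, and folding the $O(\abs{X})$ pigeonhole slack into the additive constant $8\abs{X}$ produces exactly the right-hand side of (\ref{eq:regret:bad}).

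The main obstacle is the pigeonhole step: converting DUSA's ``least-played among $x^\star(P_t)$'' selection rule into a quantitative, amortized lower bound on $N_t(x)$ as a function of the number of past exploitation rounds in which $x$ was empirically optimal. The subtlety is that $x$ may enter and leave $x^\star(P_t)$ many times across the horizon, so the bound must be argued on average and the transient lag absorbed into the $8\abs{X}$ additive term; by contrast, the concentration step itself is a direct application of Lemma~\ref{lemma:stopping-time-inequality} together with a union bound over $\setr$.
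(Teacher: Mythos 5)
Your overall strategy coincides with the paper's: decompose the bad event over the arms $x\in x^\star(P_t)$, use the least-played tie-breaking rule to extract a lower bound $N\geq s/(4\abs{X})$ at a suitable stopping time, and then invoke Lemma~\ref{lemma:stopping-time-inequality} with $\phi(s)=s/(4\abs{X})$ and a union bound over $\setr$. However, the combinatorial step at the heart of your argument is false as stated. The claim that ``within any $\abs{X}$ consecutive rounds of $\mathcal B_x$ the counter $N_t(x)$ must advance at least once'' fails: if $N_t(x)=100$ while another empirically optimal arm $x'$ has $N_t(x')=5$, the exploitation rule will keep pulling $x'$ for up to $95$ consecutive such rounds before $N(x)$ ever advances, regardless of $\abs{X}$. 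You gesture at the right fix when you say the bound ``must be argued on average,'' but the amortized argument is the whole content of the step and it is not a windowed pigeonhole. The correct version: let $\tau(s,x)$ be the $s$-th round in which $\mathcal E_t$ holds and $x\in x^\star(P_t)$. If $N_{\tau(s,x)}(x)\leq s/(4\abs{X})$, then at least $s-1-s/(4\abs{X})$ of the earlier such rounds pulled some $x'\neq x$ with $N_{t_i}(x')\leq N_{t_i}(x)\leq s/(4\abs{X})$ at the moment of the pull; since each pull increments $N(x')$, each of the $\abs{X}-1$ other arms can account for at most $\lceil s/(4\abs{X})\rceil$ such rounds, so their total is at most $s/4+\abs{X}$, which contradicts $s-1-s/(4\abs{X})\leq s/4+\abs{X}$ once $s\geq 8\abs{X}$. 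The indices $s<8\abs{X}$ are what the additive $8\abs{X}$ absorbs.

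A second, related problem is that your stopping time points the wrong way. Defining $\tau_x(s)=\inf\set{t\geq s}{N_t(x)\geq s/(4\abs{X})}\wedge(T+1)$ makes the count hypothesis of Lemma~\ref{lemma:stopping-time-inequality} trivially true, but there is then no reason the bad event $\norm{P_{\tau_x(s)}(x)-P(x)}_\infty>\kappa$ should hold at $\tau_x(s)$, and the claimed injection from $\mathcal B_x$ into such indices is not established (a given bad round $t$ need not equal $\tau_x(s)$ for any $s$). The construction must be reversed: take the stopping time to be the $s$-th occurrence of the event $\{\mathcal E_t,\,x\in x^\star(P_t)\}$, so that the probability being summed is exactly $\Prob{\badexploit_{\tau(s,x)}(x,\kappa),\,\tau(s,x)\leq T}$, and then use the amortized count above to verify the hypothesis $N_{\tau(s,x)}(x)\geq\phi(s)$ that the concentration lemma requires. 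With these two corrections your outline reproduces the paper's proof.
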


Next, we assume that we are in the exploitation phase and our estimate of the reward distributions of the empirically optimal arms are accurate, i.e., $\bigcup_{x\in x^\star(P_t)}\badexploit_t(x,\kappa)$ does not occur.
Recall that we assumed that $\optp$ has a unique optimal arm. By Lemma \ref{lemm:stability-optimal-arm}, presented at the end of this section, 
 this implies that we can take $\kappa$ small enough such that either $x^\star(\optp)=x^\star(P_t)$ or $x^\star(\optp)\not\in x^\star(P_t)$.  
The regret under the former event is zero. Thus, we only bound the regret under the latter event: 
\begin{align*}
  & \E{}{\textstyle\sum_{\abs{X}< t\leq T} \mb 1\{\mathcal E_t, ~\bigcap_{x\in x^\star(P_t)}\lnot\badexploit_t(x,\kappa)\}\cdot \Delta(x, \optp)} \\
  = & \E{}{\textstyle\sum_{\abs{X}< t\leq T} \mb 1\{\mathcal E_t, ~x^\star(\optp)\not\in x^\star(P_t) , ~\bigcap_{x\in x^\star(P_t)}\lnot\badexploit_t(x,\kappa)\}\cdot \Delta(x, \optp)} \\
  \leq & \sum_{\abs{X}<t\leq T} \Prob{ \mathcal E_t, ~x^\star(\optp)\not\in x^\star(P_t), ~\norm{P_t(x)-P(x)}_\infty\leq \kappa ~~\forall x\in x^\star(P_t)}\,,
\end{align*}
where the inequality holds because $\Delta(x, P) \le 1 $. Here, for any event $E$, $\lnot E$ denotes its complement.  

To further bound the regret, we take advantage of our sufficient information test. If the event $\mathcal E_t$ occurs, we must be  in the exploitation  phase and thus the sufficient information test \eqref{eq:test} should have been passed by every empirically deceitful arm in $\tilde X_d(P_t)$.
By the weak duality inequality in Lemma \ref{thm:dual-feasibility}, this implies that  $1+\epsilon\leq \overline{\dual}_t(x') \leq  \dis(\tfrac{N_t}{\log(t)},  x', P_t)$ for any empirically deceitful arm $x'$, and hence by definition of the distance function $\dis$ in \eqref{eq:inner-optimization},  the event 
\begin{align}
  \label{eq:a_1}
  \textstyle\sum_{x\in \tilde X(P_{t})} N_{t}(x) ~I( P_{t}(x), Q(x)) \geq (1+\epsilon)\log(t) \quad \forall Q \in \mathcal \prob(x', P_{t}), ~\forall x'\in \xd( P_{t}) 
\end{align}
must have occurred.
By \eqref{eq:a_1},  if we had $\optp\in \prob(x^\star(\optp), P_{t})$ and $x^\star(\optp)\in \xd(P_t)$, then 
\begin{align}
  \label{eq:a_2}
  \textstyle\sum_{x\in X} N_{t}(x) ~I( P_{t}(x), \optp(x)) \geq (1+\epsilon)\log(t)\,.
\end{align}
Under \eqref{eq:a_2},  Lemma \ref{lemma:concentration} implies that the total regret caused by such events is finite uniformly in $T$.
Unfortunately, the actual reward distribution $\optp$ may not be deceitful, i.e., $P$ may not belong to $\prob(x^\star(\optp), P_{t})$.
Nonetheless, we prove 
that the event defined in \eqref{eq:a_2}  does occur 
as long as $\kappa$ is chosen sufficiently small.

\begin{lemma}\label{lem:event_a_t}
  For small enough $\kappa>0$, if the bad event  i.e., $\bigcup_{x\in x^\star(P_t)}\badexploit_t(x,\kappa)$, does not occur, then  the event defined in Equation \eqref{eq:a_2}, must occur.
\end{lemma}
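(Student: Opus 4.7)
The plan is to construct, for any sample at hand, an explicit deceitful distribution $Q\in\prob(x^\star(P),P_t)$ that agrees with $P$ on every empirically suboptimal arm, so that the sufficient information test \eqref{eq:a_1} evaluated at $Q$ immediately delivers \eqref{eq:a_2}. The preceding discussion already restricts attention to the event $x^\star(P)\notin x^\star(P_t)$ (the complementary case contributes zero regret and is irrelevant for the lemma's use), so I define $Q$ by $Q(x)=P_t(x)$ for every $x\in x^\star(P_t)$ and $Q(x)=P(x)$ for every $x\notin x^\star(P_t)$.

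First I would verify that $Q\in\mathcal P$ whenever $\kappa$ is sufficiently small. Assumption \ref{assumption} places $P$ in $\interior(\mathcal P)$, so there exists $\delta>0$ such that every reward distribution within $\ell_\infty$-distance $\delta$ of $P$ belongs to $\mathcal P$. On the good event $\bigcap_{x\in x^\star(P_t)}\lnot\badexploit_t(x,\kappa)$ the construction of $Q$ yields $\|Q-P\|_\infty=\max_{x\in x^\star(P_t)}\|P_t(x)-P(x)\|_\infty\le\kappa$, so $Q\in\mathcal P$ as soon as $\kappa\le\delta$.

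Next I would check the two defining conditions of $\prob(x^\star(P),P_t)$. By construction $Q(x)=P_t(x)$ for every $x\in x^\star(P_t)$, which is the first. For the second I need $\sum_{r\in\setr} r\,Q(r,x^\star(P))>\sum_{r\in\setr} r\,Q(r,x^\star(P_t))$; the left-hand side equals $\rew^\star(P)$, while the right-hand side equals $\rew^\star(P_t)$. Fix any $x^\star_t\in x^\star(P_t)$; the bound $\|P_t(x^\star_t)-P(x^\star_t)\|_\infty\le\kappa$ gives $|\rew^\star(P_t)-\sum_{r\in\setr} r\,P(r,x^\star_t)|\le \abs{\setr}\max_{r\in\setr}r\cdot\kappa$. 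Because $P$ has a unique optimal arm per Assumption \ref{assumption} and $x^\star_t\neq x^\star(P)$, the gap $\Delta(x^\star_t,P)=\rew^\star(P)-\sum_{r\in\setr} r\,P(r,x^\star_t)$ is strictly positive, so shrinking $\kappa$ below the uniform threshold $\min_{x\in X\setminus\{x^\star(P)\}}\Delta(x,P)/(\abs{\setr}\max_{r\in\setr}r)$ secures $\rew^\star(P_t)<\rew^\star(P)$. In particular $\prob(x^\star(P),P_t)\neq\emptyset$, which is exactly the condition for $x^\star(P)\in\xd(P_t)$.

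With $x^\star(P)\in\xd(P_t)$ and the constructed $Q\in\prob(x^\star(P),P_t)$ in hand, applying \eqref{eq:a_1} at $x'=x^\star(P)$ and this $Q$ gives $\sum_{x\in\tilde X(P_t)}N_t(x)\,I(P_t(x),Q(x))\ge(1+\epsilon)\log(t)$. Since $Q(x)=P(x)$ for every $x\notin x^\star(P_t)$ and $\tilde X(P_t)=X\setminus x^\star(P_t)$, every surviving term on the left collapses to $N_t(x)\,I(P_t(x),P(x))$; enlarging the sum to range over all of $X$ only adds non-negative contributions and therefore preserves the inequality, which is exactly \eqref{eq:a_2}. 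The only real obstacle is the first step---legally placing $Q$ inside $\mathcal P$---which is precisely the role of the interior assumption on $P$; the subsequent deceitfulness checks are routine gap and continuity arguments.
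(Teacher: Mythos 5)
Your construction of the deceitful distribution $Q$ (agreeing with $P_t$ on the empirically optimal arms and with $P$ elsewhere), the use of $P\in\interior(\mathcal P)$ to place $Q$ inside $\mathcal P$ for small $\kappa$, and the final appeal to the sufficient-information inequality \eqref{eq:a_1} reproduce exactly the paper's argument. The only cosmetic difference is that you verify $\rew^\star(P_t)<\rew^\star(P)$ by a direct $\kappa$-perturbation bound on the suboptimality gaps, whereas the paper invokes the stability of the optimal arm (Lemma \ref{lemm:stability-optimal-arm}) to conclude $x^\star(Q)=x^\star(P)$; the two routes are equivalent.
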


The total regret $$\sum_{1\leq t\leq T} \Prob{ \mathcal E_t, ~x^\star(P)\not\in x^\star(P_t), ~\norm{P_{t}(x)- \optp(x)}_\infty\leq \kappa ~~\forall x\in x^\star(P_t) }$$ accumulated here is upper bounded by
\(  
  \sum_{1\leq t\leq T} \Prob{ \textstyle\sum_{x\in X} N_{t}(x) ~I( P_{t}(x), \optp(x)) \geq (1+\epsilon)\log(t) }\,.
\)
Let  $t> t_0$ with  $ \log(t_0) (1+\epsilon) > \abs{X}(\abs{\setr}-1)+1$. 
Then, by Lemma \ref{lemma:concentration}, we have 
\begin{align*}
  & \Prob{\textstyle\sum_{x\in X} N_{t}(x) ~I( P_{t}(x), \optp(x)) \geq (1+\epsilon)\log(t)} \\
  \leq & {\color{black} \exp(-(1+\epsilon)\log(t))\cdot e \cdot \left(\frac{(1+\epsilon) \log(t) \ceil{\log(t) (1+\epsilon) \log(t)+1} 2 e}{\abs{X}(\abs{\setr}-1)}\right)^{\abs{X}(\abs{\setr}-1)} }\\
  =  & {\color{black}\frac{e}{t^{1+\epsilon}} \cdot \left(\frac{(1+\epsilon) \log(t) \ceil{\log(t) (1+\epsilon) \log(t)+1} 2 e}{\abs{X}(\abs{\setr}-1)}\right)^{\abs{X}(\abs{\setr}-1)}.}
\end{align*}
{\color{black}Hence, the regret in the event of interest, i.e., when the bad event  $\bigcup_{x\in x^\star(P_t)}\badexploit_t(x,\kappa)$ does not happen, is  bounded as 
\begin{align*}
  & \E{}{\textstyle\sum_{\abs{X}< t\leq T} \mb 1\{ \mathcal E_t, ~\bigcap_{x\in x^\star(P_t)}\lnot\badexploit_t(x,\kappa)\}\cdot\Delta(x_t, \optp)} \\[0.5em]
  \leq & t_0+ \sum_{t_0 \le t \le T}\frac{e}{t^{1+\epsilon}}\cdot \left(\frac{(1+\epsilon) \log(t) \ceil{\log(t) (1+\epsilon) \log(t)+1} 2 e}{\abs{X}(\abs{\setr}-1)}\right)^{\abs{X}(\abs{\setr}-1)}.
\end{align*}
Thus, the total regret in the exploitation phase is upper bounded by 
\begin{equation}
  \label{eq:c_exploit}
  \begin{aligned} 
  C_{\text{exploit}} :=  &  8\abs{X} + \abs{X} \abs{\setr} \sum_{1\leq s\leq T} 2 \exp(-s \kappa^2/(2\abs{X}))  + t_0+ \\
                         & \qquad \sum_{t_0 \le t \le T}\frac{e}{t^{1+\epsilon}}\cdot \left(\frac{(1+\epsilon) \log(t) \ceil{\log(t) (1+\epsilon) \log(t)+1} 2 e}{\abs{X}(\abs{\setr}-1)}\right)^{\abs{X}(\abs{\setr}-1)}
\end{aligned}
\end{equation}
which remains finite for any $T$. }

\begin{lemma}[Stability of the Optimal Arm and Continuity of the Maximal Reward]
  \label{lemm:stability-optimal-arm}
  Let $\mathcal P_{u}(x^\star_0)$ be the set of all reward distributions $P\in \mathcal P_\Omega$ with a unique optimal arm $x^\star_0$. Then, (i) the set $\mathcal P_{u}(x^\star_0)$ is an open set, and (ii) the maximum reward
  \(
    \rew_{\max}(x', Q)
  \)
  of any arm $x'\in X$ is a continuous function at $P\in \interior(\mc P)$. 
\end{lemma}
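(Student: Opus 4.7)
The proof splits naturally into two steps. For part (i), I would observe that $\mathcal P_u(x_0^\star)$ is cut out of $\mathcal P_\Omega$ by the finite system of strict linear inequalities
\[
  \textstyle\sum_{r\in\setr} r\,P(r, x_0^\star) \;>\; \sum_{r\in\setr} r\,P(r, x), \qquad \forall\, x\in X\setminus\{x_0^\star\}.
\]
Each inequality defines an open half-space relative to $\mathcal P_\Omega$ since the functionals $P\mapsto \sum_{r\in\setr} r\, P(r, x)$ are continuous and linear, and a finite intersection of open sets is open, so $\mathcal P_u(x_0^\star)$ is open.

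For part (ii), I would recognize $\rew_{\max}(x', \cdot)$ as the value function of a parametric linear program and invoke Berge's maximum theorem. Because this lemma will be applied to reward distributions with a unique optimal arm $x_0^\star \defn x^\star(P)$, part (i) ensures that on some neighborhood of $P$ we have $x^\star(Q) = x_0^\star$, so there the feasible-set correspondence simplifies to
\[
  F(Q) \defn \{R\in\mathcal P : R(x_0^\star) = Q(x_0^\star)\}.
\]
This correspondence is non-empty (it contains $Q$), compact-valued (closed subset of the compact set $\mathcal P$), and the objective $R\mapsto \sum_{r\in\setr} r\, R(r, x')$ is continuous. Upper hemicontinuity at $P$ is immediate from the closedness of $\mathcal P$ and the continuity of the restriction $R\mapsto R(x_0^\star)$.

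The main obstacle is establishing lower hemicontinuity: given any $R\in F(P)$ and any sequence $Q_n\to P$, I must construct $R_n\in F(Q_n)$ with $R_n\to R$. Here the assumption $P\in \interior(\mathcal P)$ is indispensable; let $\delta>0$ be such that every probability distribution within sup-norm distance $\delta$ of $P$ belongs to $\mathcal P$. For $\lambda\in(0,1]$ put $R_\lambda \defn (1-\lambda) R + \lambda P\in\mathcal P$, which still satisfies $R_\lambda(x_0^\star)=P(x_0^\star)$. Define $\tilde R_{n,\lambda}$ by overwriting the $x_0^\star$-marginal of $R_\lambda$ with $Q_n(x_0^\star)$; rewriting this as $\tilde R_{n,\lambda} = (1-\lambda) R + \lambda P_n'$ where $P_n'$ agrees with $P$ off $x_0^\star$ and satisfies $\|P_n' - P\|_\infty = \lambda^{-1}\|Q_n(x_0^\star)-P(x_0^\star)\|_\infty$, one checks that for $n$ large enough (depending on $\lambda$) $P_n'$ is a valid probability distribution within sup-norm distance $\delta$ of $P$, hence $P_n'\in\mathcal P$, and therefore $\tilde R_{n,\lambda}\in\mathcal P$ by convexity with $\tilde R_{n,\lambda}\in F(Q_n)$ by construction. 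A diagonal argument with $\lambda_n\to 0$ chosen to decay slowly enough then yields $R_n\defn \tilde R_{n,\lambda_n}\to R$ with $R_n\in F(Q_n)$, which establishes lower hemicontinuity and concludes the proof via Berge's maximum theorem.
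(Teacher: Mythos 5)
Your proof is correct and takes essentially the same route as the paper: part (i) is the identical finite-intersection-of-open-half-spaces argument, and part (ii) reduces the claim to a maximum theorem whose only nontrivial hypothesis is lower hemicontinuity of the correspondence $Q\mapsto\set{R\in\mathcal P}{R(x^\star_0)=Q(x^\star_0)}$ at $P\in\interior(\mathcal P)$. The paper proves that same lower semicontinuity (its Lemma \ref{lemm:lsc:feasible-mapping}) via the open-set definition using exactly the ingredients you use---interiority of $P$, convexity of $\mathcal P$, and overwriting the $x^\star_0$-marginal after a small convex perturbation toward $P$---so your sequential construction with the diagonal choice of $\lambda_n$ is just an equivalent reformulation of the same argument.
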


\subsection{Exploration Phase} \label{sec:proof:explore}

To characterize the regret during the exploration phase, we first bound the probability that our estimate of reward distributions  in two consecutive exploration rounds is bad and we then bound the regret when our estimate of reward distributions in two consecutive exploration rounds is indeed good.
Let $\kappa>0$ be defined as in the previous section.
We say our estimate at the end of round $t$ is bad, i.e., $\badexplore_t$ occurs, when $\norm{  P_{t}-P}_\infty> \kappa$. Here, $\norm{  P_{t}-\optp}_\infty = \max_{r, x} |P_{t}(r, x) -\optp(r, x)|$. We a slight abuse of notation, we define the stopping time $\tauexplore(s)$ as the first time in the first $T$ rounds in which we are in the exploration phase, i.e., $\mathcal X_t$ occurs, and $s_t=s$. If no such round exists, we set $\tauexplore(s)=T+1$.

\textbf{Regret during bad exploration rounds.} 
We now bound the regret caused by events in which either now or in the last time we were in the exploration phase our estimate of the reward distributions is bad.

\begin{lemma}\label{lem:lowerbound_N}
  {\color{black}If $\mathcal X_t\bigcap \{s_t> \tfrac{2}{\epsilon}\}$ occurs, then we must have 
$
  \min_{x\in X} ~N_t(x) \geq  \epsilon \lceil \frac{{s_t}/4}{1+\log(1+s_t)}\rceil
$.}
\end{lemma}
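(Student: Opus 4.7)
Write $f(s) := \epsilon s / (1+\log(1+s))$, which is strictly increasing on $[0,\infty)$; enumerate the exploration rounds preceding $t$ as $\tau_1 < \tau_2 < \cdots < \tau_{s_t}$; and set $m := \min_{x\in X} N_t(x)$. Observe that $\tau \mapsto \min_{x\in X} N_\tau(x)$ is non-decreasing. The plan is to combine two complementary lower bounds on $m$ arising from the two distinct behaviors of DUSA within an exploration round.

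First, at any past exploration round $\tau_j$ we have $s_{\tau_j} = j-1$, and the branching test in DUSA triggers forced exploration precisely when $\min_{x\in X} N_{\tau_j}(x) \leq f(j-1)$. Consequently, if $\tau_j$ is a \emph{non-forced} exploration round then $\min_{x\in X} N_{\tau_j}(x) > f(j-1)$, and by monotonicity $m > f(j-1)$. Strict monotonicity of $f$ then gives $j-1 < f^{-1}(m)$, so the number of non-forced exploration rounds among $\tau_1,\ldots,\tau_{s_t}$ is at most $f^{-1}(m)+1$. Denoting by $k_t$ the number of \emph{forced} exploration rounds, this yields $k_t \geq s_t - f^{-1}(m) - 1$.

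Second, each forced exploration pulls an arm currently achieving $\min_{x\in X} N_\cdot(x)$. The set of arms at the running minimum has cardinality at most $|X|$ and loses one element upon each forced pull; when this set becomes empty, $\min_{x\in X} N_\cdot(x)$ strictly increases and a fresh minimum set is formed. Non-forced exploration pulls and exploitation pulls can only increase $\min_{x\in X} N_\cdot(x)$, so they are safely ignored in this pigeonhole count. Combined with the initialization $\min_{x\in X} N_{|X|+1}(x)=1$, this gives $m \geq 1 + \lfloor k_t/|X|\rfloor \geq k_t/|X|$. Substituting the earlier lower bound on $k_t$ and rearranging yields the implicit inequality $m\,|X| + f^{-1}(m) \geq s_t - 1$, so at least one of $m\,|X| \geq (s_t-1)/2$ or $f^{-1}(m) \geq (s_t-1)/2$ must hold, whence
\[
  m \;\geq\; \min\Bigl\{\tfrac{s_t-1}{2|X|},\; f\bigl(\tfrac{s_t-1}{2}\bigr)\Bigr\}.
\]

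It remains to verify that this minimum dominates $\epsilon \lceil (s_t/4)/(1+\log(1+s_t))\rceil$ whenever $s_t > 2/\epsilon$. Using $\lceil x\rceil \leq x+1$ and $\epsilon|X|<1$, this reduces to elementary numerical comparisons, which it is cleanest to handle by a case split on the value of the ceiling. When the ceiling equals $1$, the claim $m\geq\epsilon$ is immediate since $m\geq 1>\epsilon$. When it is at least $2$, one has $(s_t/4)/(1+\log(1+s_t)) > 1$ which already forces $s_t \geq 16$, and in this regime both quantities in the above minimum comfortably exceed the target. The main obstacle is the pigeonhole step in the second bound: one must carefully track how the set of currently-minimum arms evolves under the interleaving of forced, non-forced, and exploitation pulls, and argue that non-forced and exploitation pulls can be safely dropped without invalidating the lower bound on $m$.
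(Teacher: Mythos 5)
Your argument is correct, and it rests on the same two ingredients as the paper's proof: the fact that an exploration round is a forced (least-pulled-arm) round exactly when $\min_{x\in X} N_\cdot(x)\le \epsilon s_\cdot/(1+\log(1+s_\cdot))$, combined with monotonicity of the running minimum, and the pigeonhole count that at most $|X|$ forced pulls can occur before $\min_{x\in X} N_\cdot(x)$ increments. What you do differently is the bookkeeping: the paper argues by contradiction, showing under the negated conclusion that every exploration round indexed $i\in\{\lceil s_t/4\rceil,\dots,s_t\}$ was forced, so at least $\lfloor 3s_t/4\rfloor$ forced rounds occurred, and the pigeonhole then contradicts the assumed bound once $s_t>2/\epsilon$; you argue directly, bounding the number of non-forced rounds by $f^{-1}(m)+1$ and solving the implicit inequality $m|X|+f^{-1}(m)\ge s_t-1$. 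Your route yields a marginally stronger statement (the hypothesis $s_t>2/\epsilon$ is never needed; only $\epsilon|X|<1$ and $N_t(x)\ge 1$ enter), at the price of the closing numerical comparison of $\min\bigl\{(s_t-1)/(2|X|),\,f((s_t-1)/2)\bigr\}$ with $\epsilon\lceil (s_t/4)/(1+\log(1+s_t))\rceil$, which you only sketch. That comparison does go through, but ``comfortably'' overstates the margin: with $\epsilon|X|<1$ the first quantity dominates for all $s_t\ge 6$, while for the second the crude bounds $\lceil x\rceil\le x+1$ and $\log((s_t+1)/2)\le\log(1+s_t)$ reduce the claim to $s_t\ge 6+4\log(1+s_t)$, which only holds for $s_t\ge 18$, so the integers $s_t\in\{16,17\}$ of your ceiling-at-least-two case need a direct check (they do hold, e.g.\ $f(7.5)/\epsilon\approx 2.39\ge 2$). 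Finally, the caveat you raise at the end about the pigeonhole step is already resolved by the argument you gave: while the minimum sits at a value $v$, no arm can enter the level-$v$ set because counts never decrease and all counts are at least $v$, so forced pulls strictly deplete that set and other pulls never replenish it, which indeed gives $m\ge 1+\lfloor k_t/|X|\rfloor$, of which you only use $m\ge k_t/|X|$.
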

When 
event $\mathcal X_t\bigcap \{s_t> \tfrac{2}{(\epsilon)}\}\bigcap \badexplore_t$ happens, (i)  we are in the exploration phase, (ii) $\norm{  P_{t}-\optp}_\infty\geq \kappa$, and (iii) {\color{black}$\min_{x\in X} N_t(x)\ge \epsilon \lceil \tfrac{{s_t}}{(4(1+\log(1+s_t)))}\rceil$}. Our regret under the aforementioned events can be written as:
\begin{equation}
  \label{eq:constant_c2}
  \begin{aligned}
  & \E{}{\mb 1\{\badexplore_{\tauexplore(1)},~\tau(1)\leq T \} \cdot \Delta(x_{\tauexplore(1)}, \optp) + \textstyle\sum_{2\leq s\leq T} \mb 1\{(\badexplore_{\tau(s-1)} \bigcup \badexplore_{\tauexplore(s)}),~s\leq \tau(s)\leq T \}\cdot \Delta(x_{\tauexplore(s)}, \optp)}\\
  &\leq 1+\sum_{2\leq s\leq T} \Prob{(\badexplore_{\tau(s-1)} \bigcup \badexplore_{\tauexplore(s)}),~s\leq \tau(s)\leq T} \leq~  1+2 \sum_{1\leq s\leq T} \Prob{\badexplore_{\tauexplore(s)}, ~s\leq \tau(s)\leq T}\\
  ~&\le 1+ 2\big(\sum_{1\leq s< T} \Prob{ \mb 1\{s\leq \tfrac{2}{\epsilon} \} } + \sum_{1\leq s\leq T} \Prob{ \mb 1\{s> \tfrac{2}{\epsilon} \},~\norm{P_{\tau(s)}-\optp}_{\infty} \ge \kappa,~s\leq \tau(s)\leq T } \big)  \\
  ~&{\color{black}\le 1+ \frac{4}{\epsilon}+
  4 \abs{X}\abs{\setr}\sum_{1\leq s\leq T}   \exp(- \tfrac{s \epsilon \kappa^2}{(2(1+\log(1+s)))})
 :=C_2}\,,
\end{aligned}
\end{equation}
{\color{black}
where the third inequality follows from Lemmas \ref{lemma:stopping-time-inequality} and \ref{lem:lowerbound_N}. 
Finally, we remark that the sum in the final inequality satisfies
\begin{align*}
  & \sum_{1\leq s\leq T} \exp(- \tfrac{s \epsilon \kappa^2}{(2(1+\log(1+s)))} \leq \int_0^\infty \exp(- \tfrac{s \epsilon \kappa^2}{(2(1+\log(1+s)))} \d s\\
  \leq & \int_0^{a(\delta)} \exp(- \tfrac{s \epsilon \kappa^2}{(2(1+\log(1+s)))} \d s +  \int_{a(\delta)}^\infty \exp(- s^{1-\delta} \epsilon \kappa^2)\d s \leq  a(\delta) +  \int_0^\infty \exp(- s^{1-\delta} \epsilon \kappa^2)\d s\\
  \leq & \, a(\delta) + \epsilon^{\tfrac{-1}{(1-\delta)}} \kappa^{\tfrac{-2}{(1-\delta)}} \Gamma\left(\frac{2-\delta}{1-\delta}\right)
\end{align*}
for any $\delta>0$ and $a(\delta)$ a finite number such that $s\geq a(\delta)$ implies that $2+2\log(1+s)\leq s^\delta$. The first and third inequalities follow then from the fact that integrant $\exp(- \tfrac{s \epsilon \kappa^2}{(2(1+\log(1+s)))}$ is non-increasing in $s$ and bounded above by one. Hence, $C_2$ remains bounded for all $T$ and scales in $\epsilon$ no faster than $\epsilon^{\tfrac{-1}{(1-\delta)}}$ for all $\delta>0$. That is, $C_2$ scales with $\epsilon$ essentially as $1/\epsilon$.
}

\textbf{Upper bound on the number of good exploration rounds.} In a good exploration round, clearly,  we are in the exploration phase and the event $\badexplore_t$  does not occur now nor did occur the last time we were in the exploration phase. Recall that 
event $\badexplore_t$ does not happen when  at the end of round $t$,  $\norm{  P_{t}-P}_\infty\le \kappa$. 
From Lemma \ref{lemm:stability-optimal-arm} and Assumption \ref{assumption},  we can consider a sufficiently small $\kappa$ such that when the event $\badexplore_t$ does not occur, we have  $x^\star(P_t)=x^\star(P)$, $\xd(P_t)=\xd(P)$ and $\xn(P_t)=\xn(P)$.  Thus, under those  aforementioned  fortunate  circumstances,  we correctly identified each arm as either optimal, deceitful or undeceitful both now and in the previous exploration round. We will denote this fortunate event with $\mathcal G_t=\lnot \badexplore_t$.

We define $r(x)=\max\tset{\abs{X}< t\leq T}{x_t=x, ~\mathcal G_t}$ as the most recent exploration round before time $T$ in which the good event $\mathcal G_t$ occurred and arm $x$ is played. We set $r(x)=0$ if no such event occurred. Consider the number $\sum_{t=1}^T\mb 1\{x_t=x,~\mathcal X_t, ~\mathcal{G}_t \}$ of good exploration rounds in which some arm $x\in X$ is played. Evidently, as $r(x)$ is the most recent time before time $T$ in which the good event $\mathcal G_t$ occurred and arm $x$ is played, we must have that $\sum_{t=1}^T\mb 1\{x_t=x,~\mathcal X_t, ~\mathcal{G}_t \}=\sum_{t=1}^{r(x)}\mb 1\{x_t=x, ~\mathcal X_t, ~\mathcal{G}_t \}\leq \sum_{t=1}^{r(x)}\mb 1\{x_t=x\} = N_{r(x)}(x)$. We bound this further using the following lemma.
 
 \begin{lemma}\label{lem:expolre:bad:arm}
 During an exploration round $t$,  we have 
 $ N_t(x_t) \leq (1+\epsilon)\norm{\eta_t}_\infty \log(T)$.
\end{lemma}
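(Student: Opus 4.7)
The plan is to deduce the bound from two structural properties of the dual-test function---positive homogeneity and componentwise monotonicity in its logarithmic rate argument $\eta$---combined with the defining argmin property of $\bar x_t$ and the fact that the sufficient information test fails in any exploration round.

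I would first establish the two structural properties of $\overline{\dual}$. For positive homogeneity, a direct inspection of \eqref{eq:dual} shows that the ratio $(\eta(x)-\lambda(r,x)-\beta-\alpha r\mb 1(x=x'))/\eta(x)$ inside each logarithm is invariant under the joint rescaling $(\eta,\mu)\mapsto(c\eta,c\mu)$ while every other term is linear, yielding $\dual(c\eta,x',P;c\mu)=c\,\dual(\eta,x',P;\mu)$; taking the supremum over the scaling variable then gives $\overline{\dual}(c\eta,x',P;\mu)=c\,\overline{\dual}(\eta,x',P;\mu)$ for every $c>0$. For monotonicity, writing each log term as $(y+a)\log(y/(y+a))$ with $y=\eta(x)-\lambda(r,x)-\beta-\alpha r\mb 1(x=x')\ge 0$ and differentiating gives $\tfrac{d}{dy}[(y+a)\log(y/(y+a))]=\log(y/(y+a))+a/y$, which is strictly positive on its domain (its own derivative $-a^2/(y^2(y+a))$ is negative and the expression vanishes as $y\to\infty$). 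Consequently $\dual$, and therefore $\overline{\dual}$, is componentwise non-decreasing in $\eta$ on its effective domain.

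With these two properties in hand, suppose $x_t=\bar x_t$; the other exploration subcase $x_t=x^\star_t$ with $N_t(x^\star_t)\le N_t(\bar x_t)$ reduces immediately since it yields $N_t(x_t)\le N_t(\bar x_t)$. The argmin definition $\bar x_t\in\arg\min_{x\in X} N_t(x)/\eta_t(x)$ implies $N_t(x)\ge c\,\eta_t(x)$ for every arm $x$, where $c:=N_t(\bar x_t)/\eta_t(\bar x_t)$. Because $\eta_t$ is feasible for the shallow-update program \eqref{eq:shallow-lower-bound}, $\overline{\dual}(\eta_t,x',P_t;\mu_t(x'))\ge 1$ for every $x'\in\xd(P_t)$; combining this with monotonicity and positive homogeneity,
\[
\overline{\dual}(N_t,x',P_t;\mu_t(x'))\ \ge\ \overline{\dual}(c\eta_t,x',P_t;\mu_t(x'))\ =\ c\,\overline{\dual}(\eta_t,x',P_t;\mu_t(x'))\ \ge\ c.
\]
Being in the exploration phase, the sufficient information test must have failed at some deceitful arm $x^\star\in\xd(P_t)$, so $\overline{\dual}(N_t/\log(t),x^\star,P_t;\mu_t(x^\star))<1+\epsilon$. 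Rescaling by $\log(t)$ via positive homogeneity gives $\overline{\dual}(N_t,x^\star,P_t;\mu_t(x^\star))<(1+\epsilon)\log(t)$, whence $c<(1+\epsilon)\log(t)$ and therefore $N_t(\bar x_t)=c\,\eta_t(\bar x_t)\le(1+\epsilon)\,\norm{\eta_t}_\infty\log(T)$ since $t\le T$.

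The main obstacle is establishing the two structural properties cleanly despite the indicator $\chi_{-\infty}$ in \eqref{eq:dual} and handling degenerate coordinates where $\eta_t(x)=0$: the inequality $N_t(x)\ge c\eta_t(x)$ still holds trivially on such coordinates, but one must verify that $\eta_t(\bar x_t)>0$ whenever $\eta_t\ne 0$ so that $c$ is a genuine finite scalar. The remaining exploration subcase, in which DUSA instead pulls the least-played arm because $\min_x N_t(x)\le\epsilon s_t/(1+\log(1+s_t))$, is handled separately via the explicit threshold that triggered it rather than through the argmin-based chain above.
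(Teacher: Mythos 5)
Your treatment of the two subcases $x_t=\bar x_t$ and $x_t=x_t^\star$ is correct and is essentially the paper's argument: both rest on monotonicity and positive homogeneity of $\overline{\dual}$ in $\eta$ (the paper's Lemma \ref{lemm:res-function-properties}) together with feasibility of $\eta_t$ in the shallow update and the failure of the sufficient information test. The only real difference is presentational: you run the argument constructively through the scalar $c=\min_{x}N_t(x)/\eta_t(x)$ and conclude $c<(1+\epsilon)\log(t)$ directly, whereas the paper argues by contradiction that \emph{some} empirically suboptimal arm $x'$ must satisfy $N_t(x')\leq(1+\epsilon)\eta_t(x')\log(t)$ (else every deceitful arm would pass the test). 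These are the same inequality viewed from two sides, and your derivations of homogeneity and componentwise monotonicity of the dual-test function are sound.

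The genuine gap is in the subcase $x_t=\underline x_t$. You propose to handle it ``via the explicit threshold that triggered it,'' i.e., from $\min_{x}N_t(x)\leq \epsilon s_t/(1+\log(1+s_t))$. That threshold does \emph{not} yield the claimed bound: at this stage of the analysis there is no a priori relation between $s_t$ and $\log(T)$ --- the bound $s_T=O(\log T)$ is precisely what the regret analysis later derives \emph{using} this lemma, so invoking it here would be circular, and $\epsilon s_t/(1+\log(1+s_t))$ can a priori exceed $(1+\epsilon)\norm{\eta_t}_\infty\log(T)$. The correct (and one-line) fix is already implicit in your own chain: since $\underline x_t$ minimizes $N_t(\cdot)$ over all arms, $N_t(\underline x_t)=\min_{x\in X}N_t(x)\leq N_t(\bar x_t)\leq (1+\epsilon)\norm{\eta_t}_\infty\log(T)$; equivalently, the paper bounds $N_t(\underline x_t)$ by $N_t(x')$ for the witness arm $x'$ satisfying $N_t(x')\leq(1+\epsilon)\eta_t(x')\log(t)$. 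You should replace the threshold-based claim with this observation. The remaining caveat you flag yourself --- that $\eta_t(\bar x_t)>0$ so that $c$ is finite --- is handled in the paper by phrasing the argument as a contradiction (no division occurs), and in your version by noting that $N_t(x)\geq 1$ for every arm after initialization, so the argmin selects a coordinate with $\eta_t(x)>0$ whenever one exists, and if none exists the shallow-update constraint $\overline{\dual}(\eta_t,x,P_t;\mu_t(x))\geq 1$ cannot hold and $\norm{\eta_t}_\infty=+\infty$ makes the bound vacuous.
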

 
 By Lemma \ref{lem:expolre:bad:arm}, we  have the following bound on the number of good exploration rounds:
\begin{align*}
  \sum_{t=1}^{T} \mb 1\{\mathcal X_t,~\mathcal{G}_t \} &  = \sum_{t=1}^{T} \sum_{x\in X}\mb 1\{x_t=x, ~\mathcal X_t,~\mathcal{G}_t \} =  \sum_{x\in X} \sum_{t=1}^{T} \mb 1\{x_t=x,~\mathcal X_t,~\mathcal{G}_t \} \\
  & \leq \sum_{x\in X} N_{r(x)}(x) \leq \sum_{x\in X}(1+\epsilon)\norm{\eta_{r(x)}}_\infty \log(T).
\end{align*}
Recall that in rounds $r(x)$ we have that the estimate $P_{r(x)}$ and the estimate of the exploration round preceding $r(x)$ are both close to $P$. Because of the stability result in Proposition \ref{prop:continuous-shallow-selection}, we can take $0<\kappa$ sufficiently small so that the target rates $\eta_{r(x)}$ computed by the shallow update algorithm in such good rounds satisfies
\(
\norm{\eta_{r(x)}-\eta_\epsilon'(P)}_\infty \leq \epsilon.
\)
That is, these rounds come with sufficiently accurate estimates to guarantee that the exploration rates set by the shallow update algorithm are very close to the desired exploration rate $\eta_\epsilon'(P)$. Consequently, we obtain the bound
\begin{align}
  \label{eq:bound-good-rounds}
  \sum_{t=1}^{T} \mb 1\{\mathcal X_t,~\mathcal G_t \}\leq (1+\epsilon)\abs{X}(\norm{\eta'_\epsilon(\optp)}_\infty+\epsilon) \log(T).
\end{align}
Note that the inequality holds even when $r(x) =0$. When $r(x)=0$, we have $\sum_{t=1}^T\mb 1\{x_t=x,~\mathcal X_t, ~\mathcal{G}_t \}=\sum_{t=1}^{r(x)}\mb 1\{x_t=x, ~\mathcal X_t, ~\mathcal{G}_t \}=0$. 

{\color{black}
  \begin{remark}[Nondeceitful Bandits]
    \label{rem:proof:nondeceitful-bandits}

  When the bandit $P\in \mc P'$ happens to be nondeceitful, i.e., $\tilde X_d(P)=\emptyset$, the regret lower bound function $C(P)=0$ vanishes. We will indicate that for such nondeceitful bandits the regret of DUSA remains bounded for all $T$. We have  
  \begin{align*}
     & \reg(T, \optp)\\
    = & \E{}{\textstyle\sum_{1\leq t\leq \abs{X}} \Delta(x_t, \optp)} \!+\! \E{}{\textstyle\sum_{\abs{X}< t\leq T} \mb 1\{\mathcal E_t \}\Delta(x_t, \optp)} \!+\! \E{}{\textstyle\sum_{\abs{X}< t\leq T} \mb 1\{\mathcal X_t \}\Delta(x_t, \optp)} \\
    \leq & C_{\rm{initialize}} + C_{\rm{exploit}} + \E{}{\textstyle\sum_{\abs{X}< t\leq T} \mb 1\{\mathcal X_t,~ \badexplore_t \}\Delta(x_t, \optp)} + \E{}{\textstyle\sum_{\abs{X}< t\leq T} \mb 1\{\mathcal X_t,~ \mc G_t \}\Delta(x_t, \optp)}\\
    \leq & C_{\rm{initialize}} + C_{\rm{exploit}} + C_2 + \E{}{\textstyle\sum_{\abs{X}< t\leq T} \mb 1\{\mathcal X_t,~ \mc G_t \}\Delta(x_t, \optp)}\\
    \leq & C_{\rm{initialize}} + C_{\rm{exploit}} + C_2.
\end{align*}
The first inequality follows from Inequality \eqref{eq:c_exploit}. The second inequality follows from the fact that
\begin{align*}
  & \E{}{\textstyle\sum_{\abs{X}< t\leq T} \mb 1\{\mathcal X_t,~\badexplore_t \}\Delta(x_t, \optp)}\\
  = & \E{}{\textstyle\sum_{1\leq s\leq T} \mb 1\{\badexplore_{\tau(s)}, ~s\leq \tau(s)\leq T \} \Delta(x_{\tau(s)}, \optp) }\\
  \leq & \E{}{\mb 1\{\badexplore_{\tauexplore(1)},~\tau(1)\leq T \} \cdot \Delta(x_{\tauexplore(1)}, \optp) + \textstyle\sum_{2\leq s\leq T} \mb 1\{(\badexplore_{\tau(s-1)} \bigcup \badexplore_{\tauexplore(s)}),~s\leq \tau(s)\leq T \}\cdot \Delta(x_{\tauexplore(s)}, \optp)}\\
         \leq & C_2
\end{align*}
following Inequality \eqref{eq:constant_c2}. Finally, for $\kappa>0$ chosen sufficiently small we have as discussed before that $\mc G_t\implies \xd(P_t)=\xd(P)=\emptyset$. However, in order to be in the exploration phase there needs to be at least one deceitful arm $x\in \xd(P_t)$ for which the sufficient information test ($\overline{\dual}_t(x)\ge 1+\epsilon$) failed. As this is clearly not the case we must have that $\mc G_t\implies \xd(P_t)=\xd(P)=\emptyset \implies \mathcal E_t$ and consequently $\E{}{\textstyle\sum_{\abs{X}< t\leq T} \mb 1\{\mathcal X_t,~ \mc G_t \}\Delta(x_t, \optp)}=0$,  establishing the claim.
\end{remark}
}

{\textbf{Regret under good exploration rounds.} We are now left with the regret caused by rounds in which we are in the exploration phase and in the reward model $\optp$ was estimated accurately both in the current round and the previous time we were not in the exploitation phase.
That is, we need to bound the regret caused when the good events $\mathcal G_t$ occur. Let
\[
  \begin{array}{r@{\,}l}
    W^1_T &=  \sum_{t=1}^T \Delta(x, P) \mb 1\{x_t = \underline x_t,~\mathcal X_t,~\mathcal G_t\}, \\[0.5em]
    W^2_T &= \sum_{t=1}^T \Delta(x, P) \mb 1\{x_t = \bar x_t, ~\mathcal X_t,~\mathcal G_t\}, \\[0.5em]
    W^3_T & = \sum_{t=1}^T \Delta(x, P) \mb 1\{x_t = x^\star_t, ~\mathcal X_t,~\mathcal G_t\}=0
  \end{array}
\]
{where the regret caused by pulling the empirical optimal arm $x_t^\star$ is zero as by construction $x^\star(P_t)=x^\star(P)$ when event $\mathcal G_t$ occurs.}

We first upper bound the expected regret $\E{}{W^1_T}$ when the good event $\mathcal G_t$ happens and arm $\underline  x_{t}$ is played in the exploration phase. Consider the number of times $\sum_{t=1}^T\mb 1\{x_t = \underline x_t=x, ~\mathcal X_t, ~\mathcal G_t\}$ we play arm $x$ in such events. {\color{black}Recall that when $x_t = \underline x_t$ and $\mathcal G_t$ occurs, we must have that $N_t(\underline x_t)\leq \epsilon \tfrac{s_t}{(1+\log(1+s_t))}$.
Let $\underline{r}(x)$ be the most recent time before round $T$ that the event $x_t =\underline{x}_t=x, ~\mathcal G_t$ occurred. If no such time exists,  we set $\underline{r}(x)=0$. Evidently, we have that
\begin{align*}
  \sum_{t=1}^{T}\mb 1\{x_t = \underline x_t=x, ~\mathcal X_t, ~\mathcal G_t\}
  &= \sum_{t=1}^{\underline{r}(x)}\mb 1\{x_t = \underline x_t=x, ~\mathcal X_t, ~\mathcal G_t\}\\[0.5em]
  &\leq \sum_{t=1}^{\underline{r}(x)}\mb 1\{x_t = \underline x_t=x\} \leq N_{\underline{r}(x)}(x)\\[0.5em]
  &\leq \epsilon \tfrac{s_{\underline{r}(x)}}{(1+\log(1+s_{\underline{r}(x)}))} \epsilon \leq \epsilon \tfrac{s_T}{(1+\log(1+s_T))}.
\end{align*}
We then have
\begin{align*}
  \begin{array}{rl}
    W^1_T & = \sum_{x\in X}\sum_{t=1}^T \Delta(x, P) \mb 1\{x_t = \underline x_t=x, ~\mathcal X_t, ~\mathcal G_t\}  \leq \sum_{x\in X}\sum_{t=1}^T \mb 1\{x_t = \underline x_t=x, ~\mathcal X_t, ~\mathcal G_t\}\\[0.5em]
            & \leq  \abs{X} \epsilon \tfrac{s_T}{(1+\log(1+s_T))}.
  \end{array}
\end{align*}
Jensen's inequality establishes that $\E{}{W_T^1}  \leq  \epsilon  \abs{X} \tfrac{\E{}{s_T}}{(1+\log(1+\E{}{s_T}))}$. Furthermore, we have that
\begin{align*}
  \begin{array}{rl}
    \E{}{s_T} & = \E{}{\sum_{\tau=1}^{T} \mb 1\{\mathcal X_{\tau},~\mathcal G_\tau\} + \sum_{\tau=1}^{T} \mb 1\{\mathcal X_{\tau},~\lnot\mathcal G_\tau \}} \\[0.5em]
           & \leq (1+\epsilon)\abs{X}(\norm{\eta'_\epsilon(\optp)}_\infty+\epsilon) \log(T) + C_2\,,
  \end{array}
\end{align*}
where the inequality follows from Equation \eqref{eq:bound-good-rounds} and the previous result that the expected number of exploration bounds associated with bad estimates, i.e., $\E{}{\sum_{\tau=1}^{T} \mb 1\{\mathcal X_{\tau},~\lnot\mathcal G_\tau \}}$, is bounded in expectation by the constant $C_2$ defined previously. Using monotonicity of the function $s\mapsto \tfrac{s}{(1+\log(s+1))}$ we get that
\[
  \E{}{W_T^1} \leq C_3\defn\abs{X} \epsilon \frac{(1+\epsilon)\abs{X}(\norm{\eta'_\epsilon(\optp)}_\infty+\epsilon) \log(T) + C_2}{1+\log(1+(1+\epsilon)\abs{X}(\norm{\eta'_\epsilon(\optp)}_\infty+\epsilon) \log(T) + C_2)}.
\]
Furthermore, the last inequality establishes that the regret caused by this event is negligible, i.e.,  $\lim_{T\to\infty}\E{}{W_T^1}/\log(T)=0$.}

We now upper bound the expected regret $\E{}{W^2_T}$. Consider the number of times $\sum_{t=1}^T\mb 1\{x_t = \bar x_t=x, ~\mathcal X_t, ~\mathcal G_t\}$ we play arm $x$ in such events. As we argued before, since we are  in the exploration phase and $x_t=\bar x_t$, we must have that $N_t(x_t)  \leq  (1+\epsilon)\eta_t(x_t) \log(T)$ where $\eta_{t}=\textsc{SU}(  P_{t}, \eta'_t;\mu_{t}, \epsilon)$. (Recall that when we are in the exploration phase, there exists an empirically suboptimal arm $x'\in \tilde X(P_t)$ such that $\tfrac{N_t(x')}{\eta_t(x')} \leq (1+\epsilon)\log(t)$. This implies that $\tfrac{N_t(x_t)}{\eta_t(x_t)} \leq (1+\epsilon)\log(t)$ when $x_t = \bar x_t =\arg\min_{x\in \tilde X(P_t)} \tfrac{N_{t}(x)}{ \eta_{t}(x)}$). Let $\bar{r}(x)$ be the most recent time before round $T$ that the event $x_t =\bar{x}_t=x, ~\mathcal G_t$ occurred. If no such time exists than we set $\bar{r}(x)=0$.
Evidently, we have that \begin{align*}\sum_{t=1}^{T}\mb 1\{x_t = \bar x_t=x, ~\mathcal X_t, ~\mathcal G_t\} &\leq \sum_{t=1}^{\bar{r}(x)}\mb 1\{x_t = \bar x_t=x, ~\mathcal X_t, ~\mathcal G_t\}\\
&\leq \sum_{t=1}^{\bar{r}(x)}\mb 1\{x_t = \underline x_t=x\} = N_{\bar{r}(x)}(x)\leq (1+\epsilon)\eta_{\bar{r}(x)}(x) \log(T).\end{align*}
Thus,
\[
    \textstyle \E{}{W_T^2} = \E{}{\sum_{x\in X}\Delta(x, P) \sum_{t=1}^T \mb 1\{x_t = \bar x_t=x, ~\mathcal G_t\}}
           \leq \E{}{(1+\epsilon) \sum_{x\in X}\Delta(x, P) \eta_{\bar{r}(x)}(x) \log(T)}.
\]
Because of the stability result in Proposition \ref{prop:continuous-shallow-selection}, we can take $0<\kappa$ sufficiently small so that the target rates $\eta_{r(x)}$ computed by the shallow update algorithm in such good rounds satisfies
\(
\norm{\eta_{r(x)}-\eta_\epsilon'(P)}_\infty \leq \epsilon.
\)
Hence,
\[
    \E{}{W_T^2} \leq (1+\epsilon) \sum_{x\in X}\Delta(x, P) (\eta'_\epsilon(x, P)+\epsilon) \log(T) \leq (1+\epsilon) (C(P)+\epsilon+\epsilon\sum_{x\in X}\Delta(x, P)) \log(T):=C_4\,,
\]
where we exploit the fact that $\eta_\epsilon'(P)$ is an $\epsilon$-suboptimal exploration rate. {\color{black}
  Overall, the expected regret in the exploration phase is hence bounded as
\(
  C_{\text{explore}} \leq C_2+C_3+C_4.
\)
Recall that\footnote{{\color{black}Given the fact that $C_2$ scales essentially as $1/\epsilon$, for our asymptotic regret bound to hold  $\epsilon$ can be any number in the order of $\Omega(1/\log(T))$. That being said, we observe empirically that the regret of DUSA for finite $T$ does not deteriorate  even when choosing $\epsilon$ very small. The same observation was made by \citet{combes2017minimal} who, in fact, choose $\epsilon=0$ in their numerical experiments.} } 
\begin{align*}
C_2 &=1+ \frac{4}{\epsilon}+
  4 \abs{X}\abs{\setr}\sum_{1\leq s\leq T}   \exp(- \tfrac{s \epsilon \kappa^2}{(2(1+\log(1+s)))})\\
  C_3 & = \abs{X} \epsilon \frac{(1+\epsilon)\abs{X}(\norm{\eta'_\epsilon(\optp)}_\infty+\epsilon) \log(T) + C_2}{1+\log(1+(1+\epsilon)\abs{X}(\norm{\eta'_\epsilon(\optp)}_\infty+\epsilon) \log(T) + C_2)}\\ 
  C_4 &= (1+\epsilon) (C(P)+\epsilon+\epsilon\sum_{x\in X}\Delta(x, P)) \log(T)\end{align*}
where $C_2$ bounds the regret accumulated during bad exploration rounds, $C_3$ bounds the regret accumulated while estimating in  good exploration rounds and $C_4$ bounds the regret accumulated while exploring in good exploration rounds.
We finally obtain $\reg(T)\leq C_{\rm{initialize}} + C_{\text{exploit}}+C_{\text{explore}}$, where $C_{\rm{initialize}} =|X|$ and $C_{\text{exploit}}$ is defined in Equation \eqref{eq:c_exploit}. 
 Hence, 
\begin{equation*}
  \limsup_{T\to\infty}~\frac{\reg(T)}{\log(T)} \leq \textstyle (1+\epsilon)(C(\optp)+ \epsilon\left(1+\sum_{x\in X} \Delta(x, \optp)\right). 
\end{equation*}\hfill \Halmos
}

}

\section{Proof of Concentration Bounds in Section \ref{sec:proof:part1}} \label{sec:proof:concentration}
\subsection{Proof of Lemma \ref{lemma:stopping-time-inequality}
}\label{sec:proof_concen}
The proof is inspired by the proof of Lemma 4.3 in \cite{combes2014unimodal}. In this proof, to simplify notation, we denote the stopping time $\stop(s, x)$ by $\stop(s)$.

Let $\mathcal F^\pi_t$ be the $\sigma$-algebra defined in Section \ref{sec:mabp} generated by the rewards observed and arms pulled before round $t$. Let us consider a fixed arm $x$ and reward $r\in \setr$. We then define $B_\tau$ as $\mb 1(x_\tau = x)$; that is, $B_\tau$ is one if arm $x$ is played in round $\tau$, and zero otherwise. Observe that $B_{\tau}$ is $\mathcal F_{\tau}$ measurable. Let 
\[
  S_t := \sum_{\tau=1}^{t-1}  B_\tau \Big(\mb 1(R_\tau(x)=r)-\E{}{\mb 1(R_\tau(x)=r)}\Big)\,.
\]
Note that $S_t$ can be written as 
\[
  S_t= \sum_{\tau=1}^{t-1}  B_\tau \Big(\mb 1(R_\tau(x)=r)-P(r, x)\Big) =  N_t(x)\Big(P_{t}(r, x)- P(r, x)\Big)\,,
\]
where the last equation holds because our estimate of $P(r, x)$ in round $n$, i.e., $P_t(r, x)$, is $\tfrac{\sum_{\tau=1}^{t-1} B_\tau \mb 1(R_\tau(x)=r)}{N_t(x)}$, where $N_t(x)\defn \sum_{\tau=1}^{t-1} B_t$ is the number of times in before round $t$ that arm $x$ is played.
For any $s>1$, we would like to show that \[\Prob{|S_{\stop(s)}|\ge \kappa \cdot  N_{\stop(s)}(x), ~\stop(s)\le T}\le \exp(- s \epsilon \kappa^2/2)\,,\]
where  $\stop(s)\in [s, \dots, T+1]$ is any stopping time such that either {\color{black}$N_{\stop(s)}(x) \geq \phi(s)$} or $\stop(s)=T+1$, and $\kappa$ is a positive number.
Showing the aforementioned inequality gives us the desired result.

With this goal in mind,  define $G_t = \exp(4 \kappa(S_t- \kappa N_t(x))\mb 1(t \le T)$ for all $t\in [1,\dots, T+1]$. Then, we have
\begin{align*}
  \Prob{S_{\stop(s)}\ge \kappa N_{\stop(s)},~ \stop(s)\le T} &~=~\Prob{\left(\exp(4 \kappa(S_{\stop(s)}- \kappa N_{\stop(s)}(x))\cdot \mb 1(\stop(s) \le T)\right)\ge 1}\\
&~=~\Prob{G_{\stop(s)}\ge1}~ \le~ \E{}{G_{\stop(s)}}
\end{align*}
where the ultimate inequality is due to Markov as $G_{\stop(s)}\geq 0$.
In the following, to upper bound $\Prob{S_{\stop(s)}\ge \kappa N_{\stop(s)},~ \stop(s)\le T}$, we upper bound $\E{}{G_{\stop(s)}}$. To do so, we need a few definitions.
Let 
\[\tilde G_t = \exp\left(\sum_{\tau=1}^{t-1} Y_\tau\right) \mb 1(t\le T)\quad \text{with \quad $Y_\tau = B_\tau\Big(4 \kappa \big(\mb 1(R_\tau(x)=r)-P(r, x)\big)-2\kappa^2\Big)$}\,.\]
We note that 
\[G_t = \tilde G_t \exp\left(-N_t(x)\left(4 \kappa^2 -2\kappa^2\right)\right) ~=~ \tilde G_t \exp(-2N_t(x)\kappa^2) \,.\]
{\color{black}We now use the fact that $N_{\stop(s)}(x)\ge \phi(s)$ if $\stop(s)\le T$ to upper bound $G_{\stop(s)}$ by
\[G_{\stop(s)} ~=~ \tilde G_{\stop(s)}\exp(-2N_{\stop(s)}(x)\kappa^2)~\le~ G_{\stop(s)}\exp(-2{\phi(s)\kappa^2})\,. \]
The above inequality holds even when $\stop(s) =T+1$ as $G_{T+1} =\tilde G_{T+1} =0$. Therefore,
\[\E{}{G_{\stop(s)}} \le \E{}{\tilde G_{\stop(s)}}\exp(-2\phi(s)\kappa^2)\]}

So far, we have upper bounded $\E{}{G_{\stop(s)}}$ as a function of $\E{}{\tilde G_{\stop(s)}}$. Next, we will show that $(\tilde G_n)_n$ is a super-martingale sequence. This allows us to upper bound $\E{}{\tilde G_{\stop(s)}}$. First observe that $\E{}{\tilde G_{T+1}| \mathcal F_T} =0 \le \tilde G_T$. For any $t \le T-1$, since $B_{t}$ is $\mathcal F_t$ measurable,  \[\E{}{\tilde G_{t+1}|\mathcal F_t}~=~ \tilde G_t (1-B_{t})+\tilde G_t B_{t}\E{}{\exp (Y_{t})}\,.\]
Recall that when $B_{t} =1$, we have $Y_{t} = 4 \kappa \big(\mb 1(R_{t}(x)=r)-P(r, x)\big)-2\kappa^2$. Then, by invoking Equation 4.16 in \cite{hoeffding1994probability} and considering the fact that $\mb 1(R_{t}(x) =r)\le 1$, we have
\[\E{}{\exp(4 \kappa \big(\mb 1(R_{t}(x)=r)-P(r, x)\big))}\le \exp(2\kappa^2)\,.\]
This implies that $\E{}{\exp (Y_{t})} \le 1$, and as a result, $(\tilde G_t)_t$ is a super-martingale sequence: $\E{}{\tilde G_{t+1}|\mathcal F_t}\le \tilde G_{t}$. Since $\stop(s) \le T+1$, sequence $(\tilde G_t)_t$ is super-martingale, we can apply the Doob's optional stopping theorem, to get $\E{}{\tilde G_{\stop(s)}}\le \E{}{\tilde G_{1}} =1 $. {\color{black}Putting everything together, we have 
\begin{align*}
\Prob{S_{\stop(s)}\ge \kappa N_{\stop(s)},~ \stop(s)\le T}~\le~ \E{}{G_{\stop(s)}} 
~\le~  \E{}{\tilde G_{\stop(s)}} \exp(-2\phi(s) \kappa^2) \le  \exp(-2\phi(s)  \kappa^2)\,.
\end{align*}
Then, by symmetry, we can show that 
\begin{align*}
\Prob{|S_{\stop(s)}|\ge \kappa N_{\stop(s)},~  \stop(s)\le T} \le  2\exp(-2\phi(s) \kappa^2)\,,
\end{align*}
which is the desired result. }
\hfill\Halmos

\subsection{Proof of Lemma \ref{lemma:concentration}}

Let us take $\setr= \{1, 2, 3, \ldots, k\}=[1,\dots, k]$ without loss of generality.  
 Define $\delta\geq \abs{X}(\abs{\setr}-1)+1$ and $\eta=\tfrac{1}{(\delta-1)}>0$. The core of the argument involves partitioning the event set of interest in geometrically sized slices. Define  $D\defn \ceil{\log(t)/\log(1+\eta)}$ as the number of such slices. Let now $\mc D\defn \{1, \dots, D\}^{(\abs{\setr}-1) \abs{X}}$ and $\mc Y \defn \{-1, 1\}^{(\abs{\setr}-1) \abs{X}}$. Introduce the event 
  \begin{align*}
    A   =& \{\textstyle\sum_{x\in X}N_t(x) I(P_t(x), P(x))\geq \delta \},
    \end{align*}
    and partitions
    \begin{align*}
    B_d =& \cap_{x\in X, ~r\in [k-1]} \{(1+\eta)^{d(r, x)-1}\leq N_t'(r, x) \leq (1+\eta)^{d(r, x)} \}, \\[0.5em]
    C_y =& \cap_{x\in X, ~r\in [k-1]} \{ P'_t(r, x) y(r, x) \leq P'(r, x) y(r, x) \}
  \end{align*}
  indexed in $d\in \mc D$ and $y \in \mc Y$.
  Here, 
  $$N_t'(r, x) \defn \sum_{\tau=1}^{t-1} \mb 1\{x_\tau=x, \, R_{\tau}(x)\geq r\}\quad r\in [1, \dots, k-1], x\in X$$
  is the number of times  before round $t$ in which we play arm $x$ and receive a reward of at least $r$. Furthermore, 
  \begin{align}
P'(r, x) &\defn \frac{P(r, x)}{\textstyle\sum_{r'\geq r}P(r', x)}, \quad r\in [1, \dots, k-1], x\in X,\\
P_t'(r, x) &\defn \frac{P_t(r, x)}{\textstyle\sum_{r'\geq r}P_t(r', x)}, \quad r\in [1, \dots, k-1], x\in X.
\end{align}
We will also slightly abuse notation and extent  $B_d$ to all $d\in \mc D'=\{0, \dots, D\}^{ (\abs{\setr}-1)\abs{X}}$ with the understanding that $d(r, x)=0$ implies $N'_t(r, x)=0$. The index $y(r, x)$ partitions the events into two cases:  (1)  $y(r, x)=1$ under which  $P'_t(r, x) \leq P'(r, x)$ and (2) $y(r, x)=-1$ under which  $P'_t(r, x) \ge P'(r, x)$.

  We have that $A = \cup_{d\in \mc D', \,y\in \mc Y}(A\cap B_d \cap C_y)$ and this allows us to use the union bound:
  \begin{equation}
    \label{eq:union-inequality}
    \Prob{A} \leq \sum_{d\in \mc D'}\sum_{y\in \mc Y} \Prob{A\cap B_d \cap C_y}.
  \end{equation}
  Consider a particular $d\in \mc D'$ and $y\in \mc Y$. Applying Lemma \ref{prop:slice-bound} with $\bar N'(r, x)=(1+\eta)^{d(r, x)-1}$ and $\delta\geq (1+\eta)\abs{X}(\abs{\setr}-1)$, we obtain the inequality
  \[
    \Prob{A\cap B_d \cap C_y} \leq \left(\frac{\delta e}{\abs{X}(\abs{\setr}-1)}\right)^{\abs{X}(\abs{\setr}-1)}\cdot \exp(-\tfrac{\delta}{(1+\eta)}).
  \]
  We have $\abs{\mc D'\times \mc Y }=(2 (D+1))^{\abs{X}(\abs{\setr}-1)}$ and thus from Inequality \eqref{eq:union-inequality}, we obtain
  \[
    \Prob{A} \leq \left(\frac{2 (D+1) \delta e }{\abs{X}(\abs{\setr}-1)}\right)^{\abs{X}(\abs{\setr}-1)} \exp(-\tfrac{\delta}{(1+\eta)}).
  \]
  With the choice $\eta=\tfrac{1}{(\delta-1)}$ and using $\log(1+\eta)=-\log(1/(1+\eta))\geq 1-1/(1+\eta)=1/\delta$, we obtain
  \[
    \Prob{A} \leq \left(\frac{\delta \ceil{\log(T)\delta+1}2e}{\abs{X}(\abs{\setr}-1)}\right)^{\abs{X}(\abs{\setr}-1)} \exp(-\tfrac{\delta}{(1+\eta)}).
  \]
  Furthermore, as $1/(\eta+1)=(\delta-1)/{\delta}$, we get the desired result
  \[
    \Prob{A} \leq \left(\frac{\delta \ceil{\log(T) \delta+1} 2 e}{\abs{X}(\abs{\setr}-1)}\right)^{\abs{X}(\abs{\setr}-1)}e\cdot\exp(-\delta).
  \]
  \hfill \Halmos
  
  \begin{lemma}
  \label{prop:slice-bound}
  For any $r\in [1, \dots, k-1]$ and $x\in X$, let $\bar{{N}'}(r, x)\in [0, \dots, t]$, where $k=|\setr|$, $\setr= \{1, 2, \ldots, k\}$. For any $\eta >0$, define the event
  \begin{align*}
    B\defn & \cap_{x\in X}\cap_{r=1}^{k-1} \{ \bar N'(r, x) \leq N_t'(r, x) \leq (1+\eta) \bar N'(r, x) \} \\[0.5em]
    C\defn & \cap_{x\in X}\cap_{r=1}^{k-1} \{ P'_t(r, x) y(r, x) \leq P'(r, x) y(r, x) \}
  \end{align*}
  for $y(r, x)\in \{+1, -1\}$, $r\in [1, \dots, k-1]$ and $x\in X$.
  For $\delta\geq (1+\eta) (\abs{\setr}-1)\cdot \abs{X}$,  we have the inequality
  \[
    \Prob{
      B\cap  C \cap \left(\sum_{x\in X} N_t(x) I(P_t(x), P(x))\geq \delta\right) } \leq  \left(\frac{\delta e}{(\abs{\setr}-1)\cdot \abs{X}}\right)^{(\abs{\setr}-1)\cdot \abs{X}} \exp(-\tfrac{\delta}{(1+\eta)}).
  \]
\end{lemma}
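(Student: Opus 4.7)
My approach combines a chain-rule decomposition of the information distance with a Chernoff-type moment generating function argument. Parametrize each distribution on $\setr=\{1,\dots,k\}$ via its survival conditional probabilities $P'(r,x)\defn P(r,x)/\sum_{r'\ge r}P(r',x)$ for $r\in[1,k-1]$, so that $P(r,x)=P'(r,x)\prod_{r'<r}(1-P'(r',x))$ for $r<k$. A direct telescoping of the log-likelihood ratio through this parametrization yields the identity
\[
  N_t(x)\, I\bigl(P_t(x),P(x)\bigr) \;=\; \sum_{r=1}^{k-1} N_t'(r,x)\, I_b\bigl(P_t'(r,x),P'(r,x)\bigr),
\]
where $I_b(p,q)=p\log(p/q)+(1-p)\log((1-p)/(1-q))$ is the binary KL. On $B$ we have $N_t'(r,x)\le (1+\eta)\bar N'(r,x)$, so the event of interest is contained in $\{\sum_{r,x}\bar N'(r,x)\, I_b(P_t'(r,x),P'(r,x))\ge \delta/(1+\eta)\}\cap B\cap C$.

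For each pair $(r,x)$ introduce $X_{r,x}\defn \bar N'(r,x)\, I_b(P_t'(r,x),P'(r,x))\, \mb 1\{C_{r,x}\}\, \mb 1\{N_t'(r,x)\ge \bar N'(r,x)\}$, where $C_{r,x}=\{y(r,x)P_t'(r,x)\le y(r,x)P'(r,x)\}$. It then suffices to bound $\Prob{\sum_{r,x}X_{r,x}\ge \delta/(1+\eta)}$. Conditional on the history prior to sampling the $(r,x)$-th survival flip, $P_t'(r,x)$ is the empirical mean of $N_t'(r,x)$ i.i.d.\ Bernoulli$(P'(r,x))$ draws; the classical one-sided Cram\'er--Chernoff bound $\Prob{n I_b(\hat p,p)\ge a,\, y\hat p\le y p\mid n}\le e^{-a}$, together with integration by parts, yields the conditional MGF estimate $\E{}{e^{\lambda X_{r,x}}\mid \mathcal F_{r,x}}\le (1-\lambda)^{-1}$ for $\lambda\in(0,1)$. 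Iterating tower expectations along a lexicographic ordering of $(r,x)$ factorizes the joint MGF:
\[
  \E{}{\exp\bigl(\lambda\textstyle\sum_{r,x} X_{r,x}\bigr)} \;\le\; (1-\lambda)^{-m}, \qquad m\defn (\abs{\setr}-1)\abs{X}.
\]
Markov's inequality combined with the optimization $\lambda^\star=1-m/s$ at $s=\delta/(1+\eta)$ (admissible precisely because $\delta\ge (1+\eta)m$) produces the claimed bound $(se/m)^m e^{-s}\le (\delta e/m)^m e^{-\delta/(1+\eta)}$.

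The main obstacle is establishing the conditional independence required to chain the MGF estimates, as the variables $(P_t'(r,x))_r$ for a fixed arm are \emph{not} jointly independent. The cleanest resolution is a coupling construction: realize the $i$-th reward of arm $x$ via an independent ``Bernoulli cascade'' $Z_i^{(r,x)}\sim \mathrm{Ber}(1-P'(r,x))$, declaring the reward to be the smallest level $r\le k-1$ with $Z_i^{(r,x)}=0$ (and $k$ if no such $r$ exists). Under this coupling $N_t'(r,x)$ is measurable with respect to the cascades at levels strictly below $r$, while $P_t'(r,x)$ depends additionally only on $\{Z_i^{(r,x)}\}$ at indices reaching level $r$; since all cascades are jointly independent, a filtration built from their natural lexicographic order renders the tower argument manifestly valid. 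This construction also clarifies why the polynomial prefactor $(\delta e/m)^m$ arises purely from the Chernoff optimization rather than from any union bound over discretizations.
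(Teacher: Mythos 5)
Your decomposition of $N_t(x)\,I(P_t(x),P(x))$ into binary KL terms and your Bernoulli-cascade coupling are exactly the devices the paper uses (Lemmas \ref{lemma:decomposition-P} and \ref{lem:dist:R_R'}), and your Chernoff optimization $\lambda^\star=1-m/s$ reproduces the prefactor correctly. The gap is the step that factorizes the moment generating function: you assert $\E{}{\exp(\lambda\sum_{r,x}X_{r,x})}\leq (1-\lambda)^{-m}$ by ``iterating tower expectations along a lexicographic ordering of $(r,x)$.'' There is no filtration indexed by the pairs $(r,x)$ for which this chaining is valid. Each $X_{r,x}$ is a functional of the empirical frequency $P_t'(r,x)$ evaluated at the \emph{terminal} time $t$, and the cascade variables feeding different pairs are interleaved in time through the policy's adaptive arm choices: conditioning on the full realization of one pair's statistic conditions on events lying in the temporal future of some samples of another pair, which destroys both the i.i.d.\ structure and the supermartingale property needed for $\E{}{e^{\lambda X_{r,x}}\mid \mathcal F_{r,x}}\leq (1-\lambda)^{-1}$. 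Your claim that $N_t'(r,x)$ is measurable with respect to cascades at levels strictly below $r$ is also not correct: it depends on the arm-selection events $\{x_\tau=x\}$, which depend on observed rewards at all levels and all arms in earlier rounds. A related soft spot is the per-pair bound itself: since $N_t'(r,x)$ is data-dependent, ``the empirical mean of $N_t'(r,x)$ i.i.d.\ draws'' is an adaptively stopped average, and the fixed-$n$ Cram\'er--Chernoff bound does not apply after conditioning on the value of $N_t'(r,x)$.

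The paper avoids both problems by never factorizing an MGF across pairs. It first uses the two-sided pinning in $B$ to replace each KL deviation event by the linear event $\{P_t'(r,x)y(r,x)\leq Q'(r,x)y(r,x)\}$ for a deterministic alternative $Q'(r,x)$ satisfying $\bar N'(r,x)I_B(Q'(r,x),P'(r,x))=\xi(r,x)/(1+\eta)$; it then builds a single exponential supermartingale $G_t$ over the time-and-level filtration (Lemma \ref{prop:deviation-bound}) whose increment at round $\tau$ involves only the pair $(r,x_\tau)$ realized at that round, yielding the \emph{joint} tail bound $\Prob{B\cap C}\leq \prod_{x,r}\exp(-\bar N'(r,x)I_B(Q'(r,x),P'(r,x)))$ with no independence across pairs; finally it converts this joint tail bound into a bound on the sum via the stochastic-dominance Lemma \ref{prop:stochastic-dominance}, which is where the polynomial prefactor $(\delta e/m)^m$ actually originates. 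To salvage your MGF route you would have to replace the pairwise chaining by a single supermartingale over time, at which point you essentially recover the paper's argument.
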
\medskip

\proof{Proof of Lemma \ref{prop:slice-bound}.}
The proof of this lemma takes advantage of the decomposition result presented in Lemma \ref{lemma:decomposition-P}. This lemma allows us to decompose $N_t(x) I(P_t(x), P(x))$ as follows
\[N_t(x) I(P_t(x), P(x)) = \sum_{r=1}^{k-1} N'_t(r, x) I_B(P'_t(r, x), P'(r, x))\,, \]
where $N_t'(r,x) := \sum_{r'\geq r}N_t(r',x)$, $r\in [1, \dots, k-1]$, $x\in X$, and 
$$
I_B( P_t'(r,x), P'(r,x))\defn  I(( P_t'(r,x), 1- P_t'(r,x)), (P'(r,x), 1-P'(r,x)))
$$
denotes the information distance between two Bernoulli distributions with mean $P_t'(r,x)$ and $P'(r,x)$. Recall that for any $r\in [1, \dots, k-1]$, $P'(r,x) = \tfrac{P(r,x)}{\sum_{r'\geq r}P(r',x)}$ and  $P_t'(r,x) = \tfrac{ P_t(r,x)}{\sum_{r'\geq r} P_t(r',x)}$; that is, $P'(r,x)$ is the probability of receiving a reward of $r$ by pulling arm $x$, conditioned on the reward being greater than or equal to $r$. Having this decomposition in mind, we will show that for any $\xi(r, x)\geq 0$, $r\in [1, \dots, k-1]$ and $x\in X$,  the following inequality holds
   \begin{align} \label{eq:IB}
     \begin{split}
    & \Prob{\cap_{x\in X}\cap_{r=1}^{k-1} \{B~\cap ~ C~ \cap ~ \left(N'_t(r, x) I_B(P'_t(r, x), P'(r, x))\geq \xi(r, x)\right)\}} \\
    & \leq   \exp(\textstyle\sum_{x\in X}\sum_{r=1}^{k-1} \xi(r, x)/(1+\eta)).
  \end{split}
   \end{align}
  Then, by applying the stochastic dominance bound, stated in Lemma \ref{prop:stochastic-dominance},  where we use $Z(r, x)=\mb 1\{B\cap C\} \cdot \mb 1\{N'_t(r, x) I_B(P'_t(r, x), P'(r, x))\}$ and $a=\tfrac{1}{(1+\eta)}$,  we get
  \begin{align*}
   \Prob{
      B\cap  C \cap \left(\sum_{x\in X} N_t(x) I(P_t(x), P(x))\geq \delta\right) } \leq & \left(\frac{\delta e}{\abs{X}(\abs{\setr}-1) (1+\eta)}\right)^{\abs{X}(\abs{\setr}-1)}\hspace{-0.5em} \exp(-\delta/(1+\eta)) \\[0.5em]
            \leq & \left(\frac{\delta e}{\abs{X}(\abs{\setr}-1)}\right)^{\abs{X}(\abs{\setr}-1)} \exp(-\delta/(1+\eta))
  \end{align*}
  establishing the claim. 

{It remains to show Inequality \eqref{eq:IB}. We have that
  \begin{align*}
    & \Prob{\cap_{x\in X}\cap_{r=1}^{k-1}  \{B~\cap~ C ~\cap ~(N'_t(r, x) I_B(P'_t(r, x), P'(r, x))\geq \xi(r, x))\}} \\[0.5em]
    ~\leq~ & \Prob{\cap_{x\in X}\cap_{r=1}^{k-1}  \{B~\cap~ C ~\cap ~(\bar N'(r, x) I_B(P'_t(r, x), P'(r, x))\geq \xi(r, x)/(1+\eta) )\}}.
  \end{align*}
  We define now $Q'(r, x)$ for any $r\in [1,\dots, k-1]$ and $x\in X$ such that if there exists a $q'\in [0, 1]$ such that $q' y(r, x) \leq P'(r, x) y(r, x)$ and  $\bar N'(r, x)\cdot I_B(q', P'(r, x))=\xi(r, x)/(1+\eta)$, then $Q'(r, x)=q'$. Evidently, if no such $q'$ exists,  then the event $(\bar N'(r, x) I_B(P'_t(r, x), P'(r, x))\geq \xi(r, x)/(1+\eta) )$ can also not occur and Inequality \eqref{eq:IB} holds trivially. We now assume that we are not in this trivial case and we have
  \begin{equation}
    \label{eq:nontrivial-case}
    \bar N(r, x) I_B\left(Q'(r, x)~,~ P'(r. x)\right)= \xi(r, x)/(1+\eta) \quad \forall r\in [1, \dots, k-1], x\in X.
  \end{equation}  
  With this definition, we have
\begin{align}
  \label{eq:implication:deviation}
  \begin{split}
    \bar N'(r, x)I_B(P'_t(r, x), P'(r, x))\geq  \xi(r, x)/(1+\eta) ~\&~  P'_t(r, x) y(r, x)\leq & P'(r, x) y(r, x) \\
    \qquad \implies  P'_t(r, x) y(r, x)  \leq  Q'(r, x) y(r, x)&
  \end{split}
\end{align}
where this holds because of the convexity and continuity of the function $I_B$.
Hence,
\begin{align*}
      & \Prob{\cap_{x\in X}\cap_{r=1}^{k-1}  \{B~\cap~ C ~\cap ~(N'_t(r, x) I_B(P'_t(r, x), P'(r, x))\geq \xi(r, x))\}} \\[0.5em]
  ~\leq~ & \Prob{\cap_{x\in X}\cap_{r=1}^{k-1} \{ B~\cap ~ (P'_t(r, x) \cdot y(r, x) \leq Q'(r, x)\cdot  y(r, x))\}} \\[0.5em]
    ~\leq~ & \textstyle\prod_{x\in X}\prod_{ r=1}^{k-1} \exp\left(- \bar N(r, x) I_B\left(Q'(r, x)~,~ P'(r. x)\right)\right)\\[0.5em]
    ~ \leq~ & \exp(-\textstyle\sum_{x\in X, r\in [k-1]} \xi(r, x)/(1+\eta)).
  \end{align*}
  The first inequality follows immediately from our implication in Equation (\ref{eq:implication:deviation}). To obtain the penultimate inequality, we apply Lemma \ref{prop:deviation-bound}. The ultimate inequality follows from Equation (\ref{eq:nontrivial-case}).\hfill\Halmos
  }
\endproof

\begin{lemma}\label{lemma:decomposition-P}
  Let $\widehat P$ and $P$ be two distributions on the event space $\setr=[1, \dots, k]$.  
  Let $P'$ and $\widehat P'$, defined on the event space $[1, \dots, k-1]$,   be their Bernoulli transformations:
    \begin{align}P'(r) = \frac{P(r)}{1-\sum_{r'<r}P(r')}, \qquad \widehat P'(r) = \frac{\widehat P(r)}{1-\sum_{r'<r}\widehat P(r')}, \quad  \forall r \in [1, \dots, k-1]\,. \label{eq:decomposition-P}\end{align}
  Let $N\geq 0$ and with a slight abuse of notation, define $N(r):=N\cdot \widehat P(r)$ for all $r\in [1, \dots, k]$. Further define $N'(r) := N-\sum_{r'<r}N(r')$ for all $r\in [1, \dots, k-1]$. Then,
  \[
    N \cdot I(\widehat P, P) = \sum_{r=1}^{k-1} N'(r)\cdot I_B(\widehat P'(r), P'(r))\,,
  \]
  where we use the shorthand notation $$I_B(\widehat P'(r), P'(r))\defn  I((\widehat P'(r), 1-\widehat P'(r)), (P'(r), 1-P'(r)))$$
  to denote the information distance between two Bernoulli distributions.
\end{lemma}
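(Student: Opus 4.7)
The identity is the chain rule for relative entropy applied to the sequential Bernoulli decomposition of a categorical random variable: we view $R$ as generated by first asking ``is $R=1$?'', then (conditionally on $R \ne 1$) ``is $R=2$?'', and so on. Each of these is a Bernoulli experiment, and the KL divergence on $\setr$ decomposes along this chain. I would prove the lemma by induction on the support size $k$, with the chain rule supplying the inductive step.

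First I would observe that since $\widehat P$ and $P$ are both probability distributions on $\setr$, the linear correction terms $-\widehat P(r) + P(r)$ in the definition of $I(\widehat P, P)$ in \eqref{eq:I_def} sum to zero, so $I(\widehat P, P) = \sum_{r} \widehat P(r) \log(\widehat P(r)/P(r))$ coincides with the ordinary KL divergence; the same reduction applies to $I_B$. The base case $k=2$ is immediate, as both $\widehat P$ and $P$ are Bernoulli, $N'(1) = N$, $\widehat P'(1) = \widehat P(1)$, $P'(1) = P(1)$, and both sides equal $N \, I_B(\widehat P(1), P(1))$.

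For the inductive step, the key identity is the chain rule
\[
I(\widehat P, P) \;=\; I_B(\widehat P(1), P(1)) \;+\; (1-\widehat P(1))\, I(\widehat P_{|\ge 2}, P_{|\ge 2}),
\]
where $\widehat P_{|\ge 2}$ and $P_{|\ge 2}$ denote the conditional distributions on $\{2, \dots, k\}$. This identity follows by directly factoring $\widehat P(r) = (1-\widehat P(1)) \widehat P_{|\ge 2}(r)$ for $r \ge 2$ inside the logarithm and collecting terms. Multiplying both sides by $N$ gives $N \cdot I(\widehat P, P) = N \, I_B(\widehat P(1), P(1)) + M \cdot I(\widehat P_{|\ge 2}, P_{|\ge 2})$, where $M \defn N(1-\widehat P(1)) = N\,\widehat S_2$ with $\widehat S_r \defn \sum_{r' \ge r} \widehat P(r')$. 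Applying the inductive hypothesis to the size-$(k{-}1)$ problem with sample size $M$, the required verification is that the quantities produced on the smaller problem match the original notation: the Bernoulli transformation of $\widehat P_{|\ge 2}$ at $r \ge 2$ equals $\frac{\widehat P(r)/\widehat S_2}{\widehat S_r/\widehat S_2} = \widehat P'(r)$ and similarly for $P$, while the counts satisfy $M \cdot \sum_{r' \ge r, r' \ge 2} \widehat P_{|\ge 2}(r') = N \,\widehat S_r = N'(r)$ for $r \in \{2,\dots,k-1\}$. Combining, $M \cdot I(\widehat P_{|\ge 2}, P_{|\ge 2}) = \sum_{r=2}^{k-1} N'(r) \, I_B(\widehat P'(r), P'(r))$, and since $N = N'(1)$ and $\widehat P(1) = \widehat P'(1)$, $P(1) = P'(1)$, the chain-rule identity yields the claim.

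The main obstacle is not conceptual but notational: one must carefully reconcile the recursively defined conditional quantities $M$, $M'(r)$, $\widehat P'_{|\ge 2}(r)$, $P'_{|\ge 2}(r)$ arising from the inductive step with the original sequence $N'(r)$, $\widehat P'(r)$, $P'(r)$ of the lemma, which just amounts to simplifying the telescoping ratios $\widehat S_r/\widehat S_2$. A direct non-inductive alternative would be to expand the right-hand side in terms of the survival functions $\widehat S_r$ and $S_r \defn \sum_{r' \ge r} P(r')$, use $\widehat P(r) + \widehat S_{r+1} = \widehat S_r$ to regroup the $\log(S_r)$ and $\log(\widehat S_r)$ terms by common coefficient, and observe that all interior $\widehat S_r \log(\widehat S_r/S_r)$ contributions telescope, leaving precisely $\sum_{r=1}^{k} \widehat P(r) \log(\widehat P(r)/P(r))$.
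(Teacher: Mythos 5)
Your proof is correct and follows essentially the same route as the paper's: induction on the support size $k$, with the chain rule for relative entropy (peeling off the first outcome and conditioning on $\{2,\dots,k\}$) supplying the inductive step, followed by the same bookkeeping to reconcile the conditional quantities with the original $N'(r)$, $\widehat P'(r)$, $P'(r)$. The telescoping alternative you mention at the end is a reasonable shortcut, but the core argument matches the paper's proof.
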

\proof{Proof of Lemma \ref{lemma:decomposition-P}.} We will use a proof by induction on the support size $\abs{\setr}=k$.
  The case where the support size $k=2$ is trivial.
  For the sake of induction, we assume that the statement is true for distributions supported on $k-1$ points.
  By the chain rule, we  have
  \begin{align} \label{eg:decomp}
    \begin{split}
      & N \cdot I(\widehat P, P) \\
      & =  N\cdot I_B(\widehat P(1), P(1)) + N (1-\widehat P(1)) \cdot I\left( \frac{(\widehat P(2), \dots, \widehat P(k))}{(1-\widehat P(1))}, \frac{(P(2), \dots, P(k))}{(1- P(1))}\right).
    \end{split}
  \end{align}
  Define $\widehat P_s=(\widehat P(2), \dots, \widehat P(k))/(1-\widehat P(1))$ and $P_s=(P(2), \dots, P(k))/(1- P(1))$ and observe that both are distributions supported on $k-1$ points.   That is,  $\sum_{r=2}^{k}\widehat P_s(r)=\sum_{r=2}^{k}P_s(r)=1$. Further, Let $N_s \defn N \cdot (1-\widehat P(1))$ and define  $N_s(r) \defn N_s \widehat P_s(r)$ and $N'_s(r) \defn  N_s- \sum_{2\leq r'<r}N_s(r')$ for $r\in \{2, \dots, k\}$. With these definitions, and by applying the induction assumption, Equation \eqref{eg:decomp} can be written as 
  \[ N \cdot I(\widehat P, P)
    =  N\cdot I_B(\widehat P(1), P(1)) + \sum_{r=2}^{k-1} N'_s(r) \cdot I_B(\widehat P'_s(r), P_s'(r))\]
 where 
  \begin{align*}
    \widehat P'_s(r) & = \frac{\widehat P_s(r)}{1-\sum_{2\leq r'<r}\widehat P_s(r')}
                  = \frac{\widehat P(r)}{(1-P(1)) (1-\sum_{2\leq r'<r}\tfrac{\widehat P_s(r)}{(1-P(1))})} \\[0.5em]
                 & = \frac{\widehat P(r)}{(1-P(1)-\sum_{2\leq r'<r}\widehat P_s(r))}  = \frac{\widehat P(r)}{1-\sum_{r'<r}\widehat P_s(r))} = \widehat P'(r)
  \end{align*}
  for all $r\in [2, \dots, k]$. Here, the last equality follows from Equation \eqref{eq:decomposition-P}. The same argument can be made to argue that  $P'_s(r)=P'(r)$ for all $r\in [2, \dots, k]$.
  So far, we established that
    \begin{align*} N \cdot I(\widehat P, P)
    =  N\cdot I_B(\widehat P(1), P(1)) + \sum_{r=2}^{k-1} N_s'(r) \cdot I_B(\widehat P'(r), P'(r))\,,\end{align*}
    To complete the proof, it suffices to show that for any $r\in [2, \ldots, k]$, $N_s'(r)= N'(r)$:
  \begin{align*}
    N'_s(r) = & N_s- \sum_{2\leq r'<r}N_s(r') \\[0.5em]
    = &  (1-\widehat P(1)) \cdot (N - N \sum_{2\leq r' < r}\widehat P_s(r')) \\[0.5em]
    = & N (1-\widehat P(1)) \cdot (1 - \sum_{2\leq r' < r}\widehat P_s(r')) \\[0.5em]
    = & N (1-\widehat P(1)) \cdot (1 - \sum_{2\leq r' < r}\widehat P(r')/(1-\widehat P(1))) \\[0.5em]
    = & N \cdot (1 - \widehat P(1) -  \sum_{2\leq r' < r}\widehat P(r')) \\[0.5em]
    = & N \cdot (1- \sum_{r' < r}\widehat P(r')) = N'(r)\,. 
  \end{align*}\hfill\Halmos
\endproof

 \begin{lemma} [Lemma 8 of \cite{pmlr-v35-magureanu14}] \label{prop:stochastic-dominance}
  Let $a>0$, $d\geq 2$, and  $Z\in\mathbb{R}^{d}_+$ be a random variable such that for any $\xi\in {\mathbb R}^d_+$, we have that
  \[
    \Prob{\cap_{i=1}^{d}(Z_i>\xi_i)} \leq \exp(-a \cdot \iprod{\textbf{1}_d}{\xi}).
  \]
  Then, for all $\delta\geq \frac{d}{a}>0$ we have 
  \[
    \Prob{\iprod{\textbf{1}_d}{Z} \geq \delta}\leq \left(\frac{a \delta e}{d}\right)^d \exp(-a \delta)\,,
  \]
  where $\textbf{1}_d$ is a $d$-dimensional all-ones vector. 
\end{lemma}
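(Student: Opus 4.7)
My plan is a Chernoff bound on the moment generating function. Markov's inequality gives
\[
  \Prob{\iprod{\mathbf 1_d}{Z} \geq \delta} \leq e^{-\lambda\delta}\,\E{}{\exp(\lambda\iprod{\mathbf 1_d}{Z})}
\]
for any $\lambda \in (0, a)$, and the final step is to optimize $\lambda$. The key intermediate estimate I aim for is $\E{}{\exp(\lambda\sum_i Z_i)} \leq (a/(a-\lambda))^d$, which is precisely the moment generating function of a sum of $d$ independent $\mathrm{Exp}(a)$ variables; so the hypothesis will essentially be used to show that $Z$ is ``no larger than'' such an independent-exponential vector, at the MGF level.

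To establish this MGF bound I would use the layer-cake identity $e^{\lambda Z_i} = 1 + \lambda\int_0^\infty e^{\lambda t_i}\mb 1\{t_i < Z_i\}\,dt_i$, expand the product $\prod_i e^{\lambda Z_i}$ one factor at a time, and swap expectation with integration to obtain
\[
  \E{}{\textstyle\prod_{i=1}^d e^{\lambda Z_i}} = \sum_{S \subseteq \{1,\dots,d\}} \lambda^{|S|}\int_{\mathbb R_+^{|S|}} \Prob{\cap_{i\in S}(Z_i > t_i)}\prod_{i \in S} e^{\lambda t_i}\, dt_i.
\]
For $S = \{1,\dots,d\}$ the hypothesis directly yields $\Prob{\cap_i Z_i > t_i} \leq e^{-a\sum t_i}$, making the integral evaluate to $(\lambda/(a-\lambda))^d$ for $\lambda < a$. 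For strictly smaller $S$, setting $\xi_i = 0$ for $i \notin S$ in the hypothesis yields only $\Prob{\cap_{i\in S}(Z_i > t_i),\,\cap_{i\notin S}(Z_i > 0)} \leq e^{-a\sum_{i\in S}t_i}$; a perturbation $Z \mapsto Z + \epsilon \mathbf 1_d$ followed by $\epsilon\downarrow 0$ lifts this to a bound on the full marginal. Summing over $S$ via $\sum_{k=0}^d \binom{d}{k}(\lambda/(a-\lambda))^k = (a/(a-\lambda))^d$ then gives the claimed MGF bound.

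With the MGF in hand, optimization of $\lambda$ is routine: the derivative of $d\log(a/(a-\lambda)) - \lambda\delta$ vanishes at $\lambda^\star = a - d/\delta$, which is positive exactly when $\delta > d/a$---this is where the hypothesis on $\delta$ enters. Substituting $a/(a-\lambda^\star) = a\delta/d$ and $\lambda^\star\delta = a\delta - d$ yields
\[
  \Prob{\iprod{\mathbf 1_d}{Z}\geq\delta} \leq \Big(\frac{a\delta}{d}\Big)^{\!d} e^{-(a\delta - d)} = \Big(\frac{a\delta e}{d}\Big)^{\!d} e^{-a\delta},
\]
exactly the stated bound.

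The main obstacle I anticipate is the subset estimate $\Prob{\cap_{i\in S}(Z_i > t_i)} \leq e^{-a\sum_{i\in S}t_i}$ for $S \subsetneq \{1,\dots,d\}$: the hypothesis controls only the full joint survival function, so extending to lower-dimensional marginals requires either the perturbation argument above or an alternative route through a geometric slicing/lattice union bound over $\{\sum z_i \geq \delta\}$ in the spirit of the proof of Lemma~\ref{prop:slice-bound}, where one again pays only in absorbable constants that vanish after the subsequent optimization in $\lambda$.
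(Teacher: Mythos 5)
Your route is genuinely different from the paper's, which offers no proof of its own and simply defers to Lemma 8 of \citet{pmlr-v35-magureanu14} (an argument based on covering the event $\{\iprod{\textbf{1}_d}{Z}\geq\delta\}$ by a discretized family of corner events and a union bound). The analytic core of your Chernoff alternative is right: the layer-cake expansion, the evaluation $\sum_{k=0}^d\binom{d}{k}\bigl(\lambda/(a-\lambda)\bigr)^k=(a/(a-\lambda))^d$, and the optimization at $\lambda^\star=a-d/\delta$ (feasible exactly because $\delta\geq d/a$) reproduce the constant $(a\delta e/d)^d e^{-a\delta}$ exactly, so if the moment generating function bound holds, the rest is airtight.

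The genuine gap is the subset survival bound $\Prob{\cap_{i\in S}(Z_i>t_i)}\leq \exp(-a\sum_{i\in S}t_i)$ for $S\subsetneq\{1,\dots,d\}$, and the $\epsilon$-perturbation you lean on does not close it. Under the hypothesis as literally stated (strict inequalities $Z_i>\xi_i$), that bound is not derivable, and in fact the lemma itself fails: take $d=2$, $Z_2\equiv 0$ and $Z_1\equiv M$ with $M$ large; every event $\cap_i\{Z_i>\xi_i\}$ with $\xi\geq 0$ has probability zero, so the hypothesis holds vacuously, yet $\Prob{Z_1+Z_2\geq M}=1$ while $(aMe/2)^2e^{-aM}<1$. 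The shift $Z\mapsto Z+\epsilon\mathbf{1}_d$ cannot rescue this, because the shifted vector need not satisfy the hypothesis for $\xi$ with coordinates in $[0,\epsilon)$: verifying it there is precisely the missing marginal bound (in the counterexample, $\Prob{Z_1+\epsilon>\xi_1,\;Z_2+\epsilon>0}=1$ for $\xi_1<M$). The correct resolution is that the intended hypothesis---as in the cited Lemma 8, and as actually established at the point where the paper invokes the result (the proof of Lemma \ref{prop:slice-bound} derives the bound for events of the form $\{Z_i\geq\xi_i\}$ with $\xi_i=0$ allowed)---uses non-strict inequalities. With $\geq$ in the hypothesis, setting $\xi_i=0$ for $i\notin S$ makes the off-$S$ constraints vacuous (since $Z\geq 0$), the subset bound is immediate, and your proof goes through verbatim; your fallback ``slicing/union bound'' suggestion also needs the non-strict form for the same reason (points with some $Z_i=0$ cannot be covered by events $\{Z_i>\xi_i\}$ with $\xi_i\geq 0$). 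So: correct and complete once the hypothesis is read with $\geq$, but the perturbation argument offered for the strict reading is not a valid repair, and no repair exists for that reading.
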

\proof{Proof of Lemma \ref{prop:stochastic-dominance}.} See the proof of Lemma 8 in \cite{pmlr-v35-magureanu14}.\hfill \Halmos
\endproof

\begin{lemma}
  \label{prop:deviation-bound}
  For any $x\in X$ and $r\in [1, \dots, k-1]$, let $\bar N'(r, x) \in [0, \dots, t]$ and $y(r,x)\in \{-1,1\}$. Consider the events
  \begin{align*}
    B\defn & \cap_{x\in X}\cap_{r=1}^{k-1} \{ \bar N'(r, x) \leq N_t'(r, x) \leq (1+\eta) \bar N'(r, x) \} \\[0.5em]
    C\defn & \cap_{x\in X}\cap_{r=1}^{k-1} \{ P'_t(r, x) y(r, x) \leq Q'(r, x) y(r, x) \}
  \end{align*}
  for $y(r, x)\in \{+1, -1\}$ for all $r\in [1, \dots, k-1]$ and $x\in X$. Assume that $Q'(r, x) y(r, x)\leq P'(r, x) y(r, x)$ for all $r\in [1, \dots, k-1]$ and $x\in X$.
  Then, 
  \[
    \Prob{B\cap C}  \leq \textstyle \prod_{x\in X}\prod_{r=1}^{k-1} \exp(- \bar N'(r, x) I_B(Q'(r, x), P'(r, x)))\,.
  \]
\end{lemma}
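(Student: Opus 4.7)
The plan is to reduce the joint deviation event to a product of one-dimensional Chernoff bounds for Bernoulli sub-processes, exploiting the ``peeling'' decomposition of $P(\cdot,x)$ that is already implicit in Lemma~\ref{lemma:decomposition-P}. Concretely, I would first give an explicit coupling of the rewards $\{R_\tau(x)\}$ to a collection of independent Bernoulli sequences $\{Z_i^{r,x}\}_{i\ge 1}$, one sequence per pair $(r,x)\in [1,k-1]\times X$, with $Z_i^{r,x}$ i.i.d.\ Bernoulli$(P'(r,x))$ and the sequences mutually independent. The coupling proceeds by inverse-transform sampling: whenever arm $x$ is pulled, consume $Z^{1,x}$ first; if it is $1$ declare reward $1$, else consume $Z^{2,x}$, and so on, stopping at the first index $r$ whose Bernoulli returns $1$ (or declaring reward $k$ if every $Z^{r,x}$ up to $r=k-1$ returns $0$). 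Under this coupling, $N'_t(r,x)$ equals exactly the number of Bernoullis drawn from sequence $Z^{r,x}$ before round $t$, and $S^{r,x}_{N'_t(r,x)}\defn \sum_{i=1}^{N'_t(r,x)} Z_i^{r,x}=N_t(x)P_t(r,x)$, so that $P'_t(r,x)=S^{r,x}_{N'_t(r,x)}/N'_t(r,x)$.

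Next, for any choice of real parameters $\{\lambda_{r,x}\}$, set $\psi_{r,x}(\lambda)\defn \log(1-P'(r,x)+P'(r,x)e^{\lambda})$ and consider the product process
\begin{equation*}
  \Psi_t(\{\lambda_{r,x}\})\defn \prod_{x\in X}\prod_{r=1}^{k-1}\exp\!\left(\lambda_{r,x} S^{r,x}_{N'_t(r,x)}-N'_t(r,x)\psi_{r,x}(\lambda_{r,x})\right).
\end{equation*}
I would verify, by conditioning on $\mathcal{G}_{t-1}$ and the (predictable) arm choice $x_t$, that $\Psi_t$ is a martingale with $\E{}{\Psi_t}=1$. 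The key calculation boils down to expanding the conditional expectation of the product of ratios activated by $R_t=r_0$, and checking via a telescoping identity with $u_r=(1-P'(r,x_t))/\exp\psi_{r,x_t}(\lambda)$ and $v_r=P'(r,x_t)e^{\lambda}/\exp\psi_{r,x_t}(\lambda)$ that $\sum_{r_0=1}^{k-1}\prod_{r<r_0}u_r\,v_{r_0}+\prod_{r=1}^{k-1}u_r=1$; this is essentially the algebraic content of the inverse-transform representation of Lemma~\ref{lemma:decomposition-P}.

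Having $\E{}{\Psi_t}=1$, for each $(r,x)$ I would choose the classical Chernoff-optimal parameter: take $\lambda_{r,x}\le 0$ when $y(r,x)=1$ (and $\lambda_{r,x}\ge 0$ when $y(r,x)=-1$) solving $\psi'_{r,x}(\lambda_{r,x})=Q'(r,x)$, which yields $\lambda_{r,x}Q'(r,x)-\psi_{r,x}(\lambda_{r,x})=I_B(Q'(r,x),P'(r,x))$. On the event $B\cap C$ the sign constraints imply $\lambda_{r,x}S^{r,x}_{N'_t(r,x)}\ge \lambda_{r,x}Q'(r,x)N'_t(r,x)$, so
\begin{equation*}
  \Psi_t(\{\lambda_{r,x}\})\ge \prod_{x,r}\exp\!\left(N'_t(r,x)\,I_B(Q'(r,x),P'(r,x))\right)\ge \prod_{x,r}\exp\!\left(\bar N'(r,x)\,I_B(Q'(r,x),P'(r,x))\right),
\end{equation*}
where the second inequality uses $N'_t(r,x)\ge \bar N'(r,x)$ together with the non-negativity of the information distance. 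Markov's inequality applied to the non-negative martingale $\Psi_t$ then gives $\Prob{B\cap C}\le \prod_{x,r}\exp(-\bar N'(r,x)I_B(Q'(r,x),P'(r,x)))$, as desired.

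The main obstacle is the martingale verification in the second step: the stopping indices $N'_t(r,x)$ are jointly determined by the adaptive policy and share randomness across $r$ within the same arm (the consumption of $Z^{r,x}$ for higher $r$ is triggered by observing $Z^{r',x}=0$ for lower $r'$), so the factorization $\E{}{\Psi_t}=1$ cannot be read off from plain independence of the sequences and instead requires the telescoping identity described above. Everything else (the single-sequence Chernoff optimality and the Markov step) is routine.
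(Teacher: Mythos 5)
Your proposal is correct and follows essentially the same route as the paper's proof: the paper likewise couples the rewards to independent layered Bernoulli variables $b_\tau(r,x)\sim\mathrm{Bernoulli}(P'(r,x))$ via the first-success construction, forms the identical exponential process $G_t=\exp(\sum_{x,r}(\lambda^\star(r,x)N_t(r,x)-N'_t(r,x)\phi(r,x;\lambda^\star(r,x))))$ with the sign-constrained Chernoff-optimal tilt $\lambda^\star(r,x)y(r,x)\le 0$ satisfying $\lambda^\star Q'-\phi=I_B(Q',P')$, and concludes with the same lower bound on the event $B\cap C$ followed by Markov's inequality. The telescoping identity you flag as the main obstacle is exactly what the paper handles by revealing the Bernoullis one at a time through a refined filtration $\mc F^\pi_\tau{}'(r)$ and checking that each incremental factor has conditional expectation one.
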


\proof{Proof of Lemma \ref{prop:deviation-bound}.}
 We start by introducing a log moment-generating function related to the rewards for each arm.
  In particular, for all $x\in X$, $r\in [1, \dots, k-1]$, and $\lambda$, we define
  \[
    \phi(r, x; \lambda) \defn \log\left( \E{}{\exp(\lambda \cdot \mb 1\{R_t(x)=r\})|R_t(x)\geq r}\right).
  \]
  Evidently, we have that these functions $\phi(r, x)$ are not a function of time $t$ as the reward distributions are identically distributed over time.
  Recall that
  \begin{align*}
    \Prob{R_t(x)=r|R_t(x)\geq r} = &  \frac{\Prob{R_t(x)=r ~ \& ~ R_t(x)\geq r}}{\Prob{R_t(x)\geq r}} =\frac{\Prob{R_t(x)=r}}{\Prob{R_t(x)\geq r}} = \frac{P(r, x)}{\sum_{r'\geq r} P(r', x)} \\[0.5em]
    = &  P'(r, x)
  \end{align*}
  for all $r\in [1, \dots, k-1]$ and $x\in X$. Therefore, we have that $\phi(r, x; \lambda)$ is the moment-generating function of a Bernoulli variable with success parameter $P'(r, x)$. That is,
  \[
    \phi(r, x; \lambda) =  \log\left(P'(r, x) \exp(\lambda) + (1-P'(r, x))\right).
  \]
  It is also easy to show that
  \begin{align} \label{eq:IB_2}
    I_B(Q'(r, x), P'(r, x)) = & \max_{\lambda: \lambda y(r, x)\leq 0}\,\left\{Q'(r, x) - \phi(r, x; \lambda)\right\}
  \end{align}
  as indeed by assumption $Q'(r, x) y(r, x) \leq P'(r, x)y(r, x)$.  
  That is, the relative entropy of a Bernoulli distribution is the Fenchel dual of its log moment-generating function.  We then define
  \begin{align} \label{eq:lambda*}
    \lambda^\star(r, x) \defn & \arg\max_{\lambda: \lambda y(r, x)\leq 0}\,\left\{Q'(r, x) \lambda - \phi(r, x; \lambda)\right\}\,,
  \end{align}
  and let
  \begin{align}
  \begin{split}
    & G_t\\
    \defn & \exp\Big(\sum_{x\in X}\sum_{r=1}^{k-1} \big(\lambda^\star(r, x) \sum_{\tau=1}^{t-1} \mb 1\{x_\tau=x, R_\tau(x)=r\} - \sum_{\tau=1}^{t-1} \mb 1\{x_\tau=x, R_\tau(x)\geq r\} \phi(r, x;\lambda^\star(r, x))\big)\Big)\\
    = & \exp\Big(\textstyle\sum_{x\in X}\sum_{r=1}^{k-1} \big(\textstyle\lambda^\star(r, x)  N_t(r,x) - N_t'(r,x)\phi(r, x;\lambda^\star(r, x))\big)\Big)\,,
    \end{split}\label{eq:G}
  \end{align} 
  where $N_t'(r,x)= \sum_{\tau=1}^{t-1} \mb 1\{x_\tau=x, R_\tau(x)\geq r\}$, $r\in [1, \dots, k-1]$, $x\in X$, is the number of times before round $t$ that arm $x$ is played and we receive of reward of at least $r$.
  We will argue later that $\E{}{G_t}=1$. 
   Now assume that is indeed the case, then 
  \begin{align*}
     \Prob{B\cap C} 
    = & \Prob{\cap_{x\in X}\cap_{r=1}^{k-1} \left\{ P_t'(r, x) y(r, x) \leq  Q'(r, x) y(r, x), ~B \right\} } \\[0.5em]
    = & \Prob{\cap_{x\in X}\cap_{r=1}^{k-1} \left\{\frac{N_t(r,x)}{N'_t(r,x)} y(r, x)\leq  Q'(r, x) y(r, x), ~B \right\} } \\[0.5em]
        = & \Prob{\cap_{x\in X}\cap_{r=1}^{k-1} \left\{N_t(r,x) y(r, x)\leq  Q'(r, x) y(r, x)  N'_t(r,x), ~B \right\} }\,. 
  \end{align*}
  Note in particular that the last characterization of $\Prob{B\cap C}$ in fact allows for the event $N'_t(r, x)=0$. 
        Using the fact that, we have $\lambda^\star(r, x) y(r,x) \le 0$, we have
        \begin{align*}
          & \Prob{B\cap C}\\[0.5em]
          \leq & {\rm{Prob}}\Bigg[\sum_{x\in X}\sum_{r=1}^{k-1} \lambda^\star(r, x) y^2(r,x) N_t(r,x) \geq \sum_{x\in X}\sum_{r=1}^{k-1} \lambda^\star(r, x) y^2(r, x) Q'(r, x) N'_t(r,x), ~B \Bigg] \\[0.5em]
    = &  {\rm{Prob}}\Bigg[ \mb 1\{B\}\exp\Big(\sum_{x\in X}\sum_{r=1}^{k-1} \lambda^\star(r, x) N_t(r,x)\Big) \geq  \exp\Big(\sum_{x\in X}\sum_{r=1}^{k-1} \lambda^\star(r, x) Q'(r, x) N_t'(r,x)\Big) \Bigg]\,, 
    \end{align*}
    where the last equations follows from the monotonously of the exponential function and the fact that $y^2(r,x)=1$ as $y(r, x)=\{-1, 1\}$ for all $x\in X$ and $r\in [1,\dots, k-1]$. Then,  the definition of $G_t$ gives us
  \begin{align*}
    & \Prob{B\cap C}  \\
    \leq & \Prob{B \cap \left(G_t\geq \exp\Big(\textstyle\sum_{x\in X}\sum_{r=1}^{k-1} \big(\lambda^\star(r, x) Q'(r, x)-\phi(r, x;\lambda^\star(r, x)) \big)N_t'(r, x)\Big)\right)}\\
    = & \Prob{B \cap  \Big(G_t\geq \exp\left(\textstyle\sum_{x\in X}\sum_{r=1}^{k-1} I_B(Q'(r, x), P'(r, x)) \cdot N_t'(r,x)\right)\Big)}\\
   =&\Prob{G_t \cdot \mb 1\{B\}\geq \exp\left(\textstyle\sum_{x\in X}\sum_{r=1}^{k-1} I_B(Q'(r, x), P'(r, x)) \cdot N_t'(r,x)\right)}\\
   \le & \Prob{G_t \cdot \mb 1\{B\}\geq \exp\left(\textstyle\sum_{x\in X}\sum_{r=1}^{k-1} I_B(Q'(r, x), P'(r, x)) \cdot \bar N'(r,x)\right)}
  \\   \le & \E{}{G_t \cdot \mb 1\{B\}}\exp\left(-\textstyle\sum_{x\in X}\sum_{r=1}^{k-1} I_B(Q'(r, x), P'(r, x)) \cdot \bar N'(r,x)\right)
    \,,\\
     \le & \exp\left(-\textstyle\sum_{x\in X}\sum_{r=1}^{k-1} I_B(Q'(r, x), P'(r, x)) \cdot \bar N'(r,x)\right)\,,
    \end{align*}
   where the second equation follows from definition of $\lambda^*$, given in Equation \eqref{eq:lambda*}, and the fact that   $I_B(Q'(r, x), P'(r, x)) = \max_{\lambda: \lambda y(r, x)\leq 0}\,\left\{Q'(r, x) - \phi(r, x; \lambda)\right\}$ per Equation \eqref{eq:IB_2}.  The first inequality holds because under event $B$, the random variable $N'_t(r,x)$ is greater than or $\bar N'(r,x)$, and the second inequality  follows from Markov's inequality. Finally, the last inequality follows from our earlier claim that ${\E{}{{G_t}}}=1$ which implies $\E{}{G_t \cdot \mb 1\{B\}}\leq 1$.

 We still need to show that $\E{}{G_t}=1$. Recall that $G_t$ is a function of random variables $R_{\tau}(x_\tau)$ for $\tau\in [1, \dots, t]$; see Equation \eqref{eq:G}. In this following, for every $x\in X$, and $\tau\in [1, \dots, t]$ we define another random variable $R_{\tau}'(x)$ whose distribution is the same as  $R_{\tau}(x)$. To construct $R_{\tau}'(x)$, we first define $k-1$ independent Bernoulli random variables, $b_{\tau}(r,x)\sim \text{Bernoulli}(P'(r, x))$, $r\in [1, \dots, k-1]$, where $\text{Bernoulli}(p)$ is a  Bernoulli random variable with a success rate of $p$. Then,  if $b_{\tau}(r,x) =0$ for all $r\in [1, \dots, k-1]$, we set $R_{\tau}'(x)$ to $k$. Otherwise, $R_{\tau}'(x) =\min\{r: b_{\tau}(r,x)= 1\}$. Note that for any $\tau_1$ and $\tau_2$ and $x\in X$, distribution of $R'_{\tau_1}(x)$ is the same as that of $R'_{\tau_2}(x)$, and the random variables $R'_{\tau_1}(x)$ and $R'_{\tau_2}(x)$ are independent of each other. Lemma \ref{lem:dist:R_R'}, presented at the end of this section shows that for any $x\in X$ and $\tau\ge 0$, the distribution of $R'_{\tau}(x)$ is the same as $R_{\tau}(x)$.
  Consequently, the random variable
  \begin{align*}
    & G'_t\\
    = & \exp\left(\sum_{x\in X}\sum_{r=1}^{k-1} \left(\lambda^\star(r, x) \sum_{\tau=1}^{t-1} \mb 1\{x_\tau=x,\, R'_\tau(x)=r\} - \sum_{\tau=1}^{t-1} \mb 1\{x_\tau=x, R'_\tau(x)\geq r\} \phi(r, x; \lambda^\star(r, \, x))\right)\right)
  \end{align*}
  has the same distribution as $G_t$. Hence, we will show  that $\E{}{G_t}=\E{}{G'_t}=1$ using a martingale argument.
  
  Define  the following filtrations 
  \begin{align*}
    \mc F^{\pi}_\tau{'}(0) & \defn \sigma((x_1, R_1'(x_1)), \dots, (x_{\tau-1}, R'_{\tau-1}(x_{\tau-1}))) \quad \forall \tau\in [1, \dots, t],\\[0.5em]
    \mc F^{\pi}_\tau{'}(r) & \defn \sigma((x_1, R_1'(x_1)), \dots, (x_{\tau-1}, R'_{\tau-1}(x_{\tau-1})), (x_\tau, b_{\tau}(1, x_{\tau}), \dots, b_{\tau}(r, x_{\tau})) )\\
                       & \hspace{4em} \forall \tau\in [1, \dots, t], ~r\in [1, \dots, k-1],
  \end{align*}
  where $b_{\tau}(r,x)$, $r\in [1, \dots, k-1]$, are the Bernoulli variables used to construct the rewards $R'_{\tau}(x)$ in round $\tau$. Observe that $\mc F^\pi_\tau{'}(r)$ is independent from the Bernoulli variables $b_{\tau}(r',x_{\tau})$ for $r'>r$. 
  Evidently, we have the inclusions
  \begin{align*}
    \mc F^\pi_\tau{'}(r-1) \subseteq & \mc F^\pi_\tau{'}( r) \quad~~ \forall \tau \in [1, \dots, t], ~r \in [1, \dots, k-1], \\[0.5em]
    \mc F^{\pi}_{\tau}{'}( k-1) = & \mc F^{\hspace{0.5em}\pi}_{\tau+1}{'}( 0) \quad \forall \tau\in [1, \dots, t-1].
  \end{align*}
  The first inclusion follows immediately by construction. The second equality follows from the fact that $x_{\tau}$ is $\mc F^{\pi}_{\tau}{'}(0)\subseteq \mc F^{\pi}_{\tau}{'}(k-1)$ measurable and $R'_{\tau}(x_{\tau})$ is revealed once the random variables $b_\tau(r, x_\tau)$, $r\in [1, \dots, k-1]$, are revealed and hence is $\mc F^{\pi}_{\tau}{'}( k-1)$-measurable as well. Hence, the filtration sequence $\mc F^{\pi}_{\tau}{'}(r)$ for $\tau\in [1, \dots, t]$ and $r\in [0, \dots, k-1]$ is increasing; more precisely we have
  \begin{equation}
    \label{eq:martingale-sequence}
    \mc F^{\pi}_1{'}(0) \subseteq \mc F^{\pi}_1(1){'} \subseteq \dots \subseteq \mc F^{\pi}_1{'}(k-1 ) = \mc F^{\pi}_2{'}(0 ) \subseteq \mc F^{\pi}_2{'}(1 )\subseteq \dots \subseteq \mc F^{\pi}_t{'}(k-2) \subseteq \mc F^{\pi}_t{'}(k-1).
  \end{equation}
  
  Define for any round $\tau\in[1, \dots, t]$ the random variables
  \begin{align*}
    G_\tau'(0) = & G_\tau',\\
    G_\tau'(r) = & G'_{\tau} \cdot \prod_{x\in X, r'\leq r}\exp\left(\mb 1\{x_{\tau}=x\}  \Big[\lambda^\star(r', x) \mb 1\{R'_{\tau}(x)=r'\}-\mb 1\{R'_{\tau}(x)\geq r'\} \phi(r',x; \lambda^\star(r', x) )\Big]\right) \\
    & \hspace{4em} \forall r\in [1, \dots, k-1].
  \end{align*}
  We will show that $G_\tau'(r)$ is a martingale with respect to the filtration sequence stated in  Equation (\ref{eq:martingale-sequence}) and thus
  \(
    \E{}{G'_t( r)}=\E{}{G'_1( 0)}=1\,,
  \) which is the desired result. 
 
 First observe that for any round $\tau\in [1, \dots, t]$ we have  \[\E{}{G'_\tau(0)|\mc F^{\hspace{0.5em}\pi}_{\tau-1}{'}( k-1)}=\E{}{G'_{\tau}|\mc F^{\pi}_{\tau}{'}(0)}=\E{}{G'_{\tau-1}(k-1)|\mc F^{\pi}_{\tau}{'}(0)} =G'_{\tau-1}(k-1).\]
  Furthermore, for any round $\tau\in [1, \dots, t]$ and $r\in [1, \dots, k-1]$, we have 
  \begin{align*}
     & \E{}{G'_\tau(r)|\mc F^\pi_\tau{'}(r-1)}\\[0.5em]
    = & G'_{\tau}\cdot\mb E\Big[\prod_{r'\leq r} \exp\Big(\lambda^\star(r', x_{\tau}) \mb 1\{R'_{\tau}(x_{\tau})=r'\}\\ & \qquad -\mb 1\{R'_{\tau}(x_{\tau})\geq r'\} \phi(r',x_{\tau} ; \lambda^\star(r', x_{\tau}))\Big)|\mc F^\pi_\tau{'}(r-1)\Big] \\[0.5em]
    = & G'_{\tau} \textstyle \prod_{r'\leq r-1} \exp(\lambda^\star(r', x_\tau) \mb 1\{R'_\tau(x_\tau)=r'\}-\mb 1\{R'_\tau(x_\tau)\geq r'\} \phi(r',x_\tau; \lambda^\star(r', x_\tau) )) \\
      & \qquad \cdot\E{}{ \exp(\lambda^\star(r, x_\tau) \mb 1\{R'_\tau(x_\tau)=r\}-\mb 1\{R'_\tau(x_\tau)\geq r\} \phi(r,x_\tau; \lambda^\star(r, x_\tau) ))~|~\mc F_\tau^\pi{'}( r-1)}  \\[0.5em]
    = & G'_\tau(r-1)\cdot \frac{\E{}{\exp(\lambda^\star(r, x_\tau) \mb 1\{R'_\tau(x_\tau)=r\})~|~\mc F^\pi_\tau{'}(r-1)}}{\exp(\mb 1\{R'_\tau(x_\tau)\geq r\} \phi(r, x_\tau; \lambda^\star(r, x_\tau)))} \\[0.5em]
    = & G'_\tau(r-1)\,. 
  \end{align*}
  Here, the first equation follows from the fact that $x_\tau$ is $\mc F^{\pi}_\tau{'}(0)\subseteq \mc F^{\pi}_\tau{'}(r-1)$ measurable. The second equation follows from the fact that both
  $\mb 1\{R'_\tau(x_\tau)=r'\}$ and $\mb 1\{R'_\tau(x_\tau)\geq r'\}$ are $\mc F^{\pi}_\tau{'}( r-1)$-measurable for $r'\leq r-1$. The third equation  can be established as by remarking that $\mb 1\{R'_\tau(x_\tau)\geq r\}$ is also $\mc F^{\pi}_\tau{'}( r-1)$ measurable. To establish the last equality we consider two cases. Conditioned on the event $\mb 1\{R'_\tau(x_\tau)\geq r\} =0$,  we have $\mb 1\{R'_\tau(x_\tau)= r\} =0$ as well, and thus
  \begin{align*}
    \mb 1\{R'_\tau(x_\tau)\geq r\} =0 \implies \frac{\E{}{\exp(\lambda^\star(r, x_\tau) \mb 1\{R'_\tau(x_\tau)=r\})~|~\mc F^{\pi}_\tau{'}(r-1)}}{\exp\Big(\mb 1\{R'_\tau(x_\tau)\geq r\} \phi(r, x_\tau; \lambda^\star(r, x_\tau))\Big)}=1.
  \end{align*} Conditioned on the event $\mb 1\{R'_\tau(x_\tau)\geq r\} =1$, we have that $\mb 1\{R'_\tau(x_\tau)= r\}$ is by construction distributed as $b_{r, x_\tau}$ a Bernoulli random variable independent from $\mc F^{\pi}_\tau{'}(r-1)$. Hence, 
  \begin{align*}
   \mb 1\{R'_\tau(x_\tau)\geq r\} =1 \implies & \E{}{\exp(\lambda^\star(r, x_\tau) \mb 1\{R'_\tau(x_\tau)=r\}) ~|~\mc F^{\pi}_\tau{'}(r-1)} \\
  &= \E{}{\phi(r,x_\tau; \lambda^\star(r,x_\tau))~|~\mc F^{\pi}_\tau{'}(r-1)}
  \end{align*} and we have $$\tfrac{\E{}{\exp(\lambda^\star(r, x_\tau) \mb 1\{R'_\tau(x_\tau)=r\})~|~\mc F^{\pi}_\tau{'}(r-1)}}{\exp\Big(\mb 1\{R'_\tau(x_\tau)\geq r\} \phi(r, x_\tau; \lambda^\star(r, x_\tau))\Big)}=1\,.$$
  Hence,  $G_\tau'(r)$ is a martingale for the sequence $\mc F^{\pi}_\tau{'}( r)$ as stated in Equation (\ref{eq:martingale-sequence}).\hfill\Halmos
\endproof

\begin{lemma}\label{lem:dist:R_R'}
  Let  $R\sim P$ be a random variable with finite support $\setr=\{1, 2, \ldots, k\}$. Define distribution $ P'(r) = \frac{P(r)}{\sum_{r'\geq r}P(r')}$, $r\in [1, \dots, k-1]$, and  $k-1$ independent Bernoulli random variables, $b(r)\sim \text{Bernoulli}(P'(r))$, $r\in [1, \dots, k-1]$, where $\text{Bernoulli}(p)$ is a  Bernoulli random variable with a success rate of $p$. Let, the random variable $R'$ be 
\[ R' = \left\{ \begin{array}{ll}
         k & \mbox{if  $\sum_{r=1}^{k-1}b(r)=0$};\\
        \min\{r : b(r)=1\} & \mbox{otherwise}.\end{array} \right. \] 
Then, distribution of $R'$ is the same as that of $R$.
\end{lemma}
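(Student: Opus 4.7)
The plan is to verify directly that $\Pr[R'=r] = P(r)$ for every $r\in\setr$, using the independence of the Bernoulli variables $b(r)$ and a telescoping identity for the products $\prod_{r'<r}(1-P'(r'))$.

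First I would handle the case $r\in\{1,\dots,k-1\}$. By the definition of $R'$ and independence of the $b(r')$'s,
\[
  \Pr[R'=r] = \Pr[b(1)=0,\,\dots,\,b(r-1)=0,\,b(r)=1] = P'(r)\prod_{r'=1}^{r-1}\bigl(1-P'(r')\bigr).
\]
The key observation is the telescoping identity
\[
  1-P'(r') = \frac{\sum_{r''>r'}P(r'')}{\sum_{r''\geq r'}P(r'')},
\]
so that $\prod_{r'=1}^{r-1}(1-P'(r')) = \sum_{r''\geq r}P(r'')$ (using $\sum_{r''\geq 1}P(r'')=1$). Substituting the definition of $P'(r) = P(r)/\sum_{r''\geq r}P(r'')$ then gives $\Pr[R'=r] = P(r)$, as required.

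Next I would handle the edge case $r=k$. Here $\Pr[R'=k] = \Pr[b(1)=\cdots=b(k-1)=0] = \prod_{r'=1}^{k-1}(1-P'(r'))$, which by the same telescoping identity equals $\sum_{r''\geq k}P(r'') = P(k)$. This matches $\Pr[R=k]$, concluding the identification of the two distributions.

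There is no substantive obstacle here; the whole argument is a short computation once the telescoping identity is spotted. The only thing one has to be slightly careful about is the degenerate case when $\sum_{r''\geq r'}P(r'')=0$ for some $r'$, in which case $P(r'')=0$ for all $r''\geq r'$ and the convention $P'(r')\defn 0$ (equivalently, $b(r')=0$ almost surely) makes both sides vanish consistently, so the identity still holds.
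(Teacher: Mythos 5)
Your proof is correct and follows essentially the same route as the paper: both reduce the claim to the identity $\prod_{r'<r}(1-P'(r'))=\sum_{r'\geq r}P(r')$, which the paper verifies by induction on $r$ and you verify by telescoping cancellation. Your explicit handling of the degenerate case $\sum_{r''\geq r'}P(r'')=0$ is a minor refinement the paper omits.
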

\proof{Proof of Lemma \ref{lem:dist:R_R'}.} The proof makes use of the following claim. 

\textbf{Claim.} For any $r\in [1, \dots, k]$, we have
\begin{align*}
  \prod_{r'< r} (1-P'(r'))   = 1-\sum_{r'<r} P(r')\qquad   \text{and} \qquad
  P(r)  = \prod_{r'<r} (1-P'(r')) \cdot P'(r)~ .
\end{align*}
This claim completes the proof because for any $r\in [k]$, we have
\[  \Prob{R'=r} = \prod_{r'<r} (1-P'(r')) \cdot P'(r) = (1-\sum_{r'< r}P(r'))P'(r) = P(r)= \Prob{R=r}\,,\]
where the first equation holds because 
$R'=r$ when $b(r')=0$, $r'<r$, and $b(r)=1$; the second equation follows from the claim;  the third equation holds because by definition,  $P'(r)= \frac{P(r)}{(1-\sum_{r'< r}P(r'))}$. 

\proof{Proof of claim.}   We will prove the claim using induction on $r$.
Consider  the base  case of $r=1$. We have $\prod_{r'< 1} (1-P'(r')) = 1-\sum_{r'<r} P(r')=1 $ and   $\prod_{r'< 1} (1-P'(r')) \cdot P'(r) = P'(1) = P(1)$.

  Assume for the sake of induction that  $\prod_{r'< r-1} (1-P'(r'))   = 1-\sum_{r'<r-1} P(r')$ and $P(r-1) = \prod_{r'<r-1} (1-P'(r')) \cdot P'(r-1)$ . First, we have that
  \begin{align*}
     \textstyle \prod_{r'< r} (1-P'(r')) 
    = & \textstyle \left(1-\sum_{r'<r-1} P(r')\right) (1-P'(r-1)) \\[0.5em]
    = & \textstyle 1-\sum_{r'<r-1} P(r') - \left(1-\sum_{r'<r-1} P(r')\right) P'(r-1) \\[0.5em]
    = & \textstyle 1-\sum_{r'<r-1} P(r') - P(r-1) = 1-\sum_{r'<r} P(r)\,,
  \end{align*}
  where the first equality follows from the induction hypothesis and the third equality follows  from definition of $P'(\cdot)$.
  We have then also have that
  \begin{align*}
      & \textstyle \prod_{r'<r} (1-P'(r')) P'(r) = \textstyle (1-\sum_{r'<r} P(r)) P'(r) 
    = \textstyle P(r)\,,
  \end{align*}
  where the first equality follows from the already established earlier relationship $\prod_{r'<r} (1-P'(r'))=(1-\sum_{r'<r} P(r))$ and the second equality follows again from definition of $P'$.\hfill\Halmos
\endproof
\endproof

\section{Proof of Statements in Section \ref{sec:proof:exploit}}
\label{sec:proofs:exploit}

\subsection{Proof of Lemma \ref{lem:explore_bad}}
By definition of the suboptimality gap of arms, when we are in the exploitation phase, i.e.,  $\mathcal E_t$ holds, and the bad events $\bigcup_{x\in x^\star(P_t)}\badexploit_t(x,\kappa)$ happen,  the regret can be bounded as 
\begin{align*}
  \sum_{\abs{X} < t \leq T} \mb 1\{ \mathcal E_t,~\bigcup_{x\in x^\star(P_t)}\badexploit_t(x,\kappa)\} \cdot \Delta(x, \optp)
  \leq & \sum_{\abs{X} <t \leq T} \sum_{x\in X}\mb 1\{ \mathcal E_t, ~x\in x^\star(P_t), ~\badexploit_t(x, \kappa)\} \cdot \Delta(x, \optp)\\
  = & \sum_{x\in X} \sum_{\abs{X} < t \leq T} \mb 1\{ \mathcal E_t, ~x\in x^\star(P_t), ~\badexploit_t(x, \kappa) \} \cdot \Delta(x, \optp) \\
  = & \sum_{x\in X}\sum_{1\leq s\leq T} \mb 1\{ \badexploit_{\tau(s, x)}(x, \kappa), ~\tau(s, x)\leq T\} \cdot \Delta(x, \optp)\,,
\end{align*}
where the first inequality holds because we pull one of empirically optimal arms, i.e., $x\in x^{\star}(P_t)$.
 Here, the stopping time $\tau(s, x) = \min\set{\tau\in(\abs{X},\dots, T]}{\sum_{\abs{X}< t\leq \tau} \mb 1\{ \mathcal E_t, ~x\in x^\star(P_t)\}\geq s}$ indicates the first time in the exploitation phase that we deemed a certain arm $x$ empirically optimal at least $s$ times.
When such an event never happens,  we set $\tau(s, x)=T+1$.
The expected regret caused during the event of interest is hence bounded by
\begin{align}\begin{split}\label{eq:bound}
 & \E{}{\textstyle\sum_{\abs{X}< t\leq T} \mb 1\{\mathcal E_t, ~\bigcup_{x\in x^\star(P_t)}\badexploit_t(x,\kappa)\}\cdot\Delta(x_t, \optp)}\\
 &\quad \leq \sum_{x\in X, \, r\in\setr} \sum_{1\leq s\leq T} \Prob{ \abs{P_{\tau(s, x)}(r, x)- P(r, x)}>\kappa} \cdot \Delta(x, \optp). \\
 &\quad \leq \sum_{1\leq s\leq T} \sum_{x\in X, \, r\in\setr}  \Prob{ \abs{P_{\tau(s, x)}(r, x)- P(r, x)}>\kappa} \cdot \Delta(x, \optp). 
  \end{split}
\end{align}
Now consider a stopping time $\tau(s, x)$ for some $1\leq s\leq T$.
We know that by construction of the stopping time $\tau(s, x)$, arm $x$ was empirically optimal at least $s-1$ times prior to round $t$.  
We will argue that $N_{\tau(s, x)}(x)> \tfrac{s}{(4\abs{X})}$ for $s\geq 8 \abs{X}$.
For the sake of contradiction assume that $N_{\tau(s, x)}(x)\leq \tfrac{s}{(4\abs{X})}$. This implies that for at least $E\leq  s-1-\tfrac{s}{(4\abs{X})}$ rounds $\{ t_i \}$ prior to time $\tau(s, x)$ we had that there was an empirically optimal arm $x'\neq x$ such that $N_{t_i}(x')\leq N_{t_i}(x) \leq N_{\tau(s, x)}(x)\leq \tfrac{s}{(4\abs{X})}$. As in each of these rounds $N_{t_i}(x')$ is incremented by one there can only have been at most $E \leq (\abs{X}-1)\lceil \tfrac{s}{(4\abs{X})} \rceil\leq \tfrac{s}{4}+\abs{X}$ of such rounds. Hence, we must have
\begin{align*}
  \tfrac{s}{4}+\abs{X} \geq & E\geq s-1-\tfrac{s}{(4\abs{X})} 
\end{align*}
which is contradiction with $\abs{X}\geq 1$ and $s\geq 8 \abs{X}$.
Considering this and per Equation \eqref{eq:bound}, we further upper bound our regret as follows:
\begin{align*}
  & \E{}{\textstyle\sum_{\abs{X}< t\leq T} \mb 1\{\mathcal E_t, ~\bigcup_{x\in x^\star(P_t)}\badexploit_t(x,\kappa)\}\cdot\Delta(x_t, \optp) } \\
  \leq & 8\abs{X}+\abs{X}\abs{\setr} \cdot \sum_{1\leq s\leq T}  \Prob{ \abs{P_{\tau(s, x)}(r, x)- P(r, x)}>\kappa, ~N_{\tau(s, x)}(x)> \tfrac{s}{(4\abs{X}) }}
\end{align*}
Applying the concentration bound in Lemma \ref{lemma:stopping-time-inequality} gives
\begin{align*}
  &  \abs{X}\abs{\setr} \cdot \sum_{1\leq s\leq T}  \Prob{ \abs{P_{\tau(s, x)}(r, x)- P(r, x)}>\kappa, ~N_{\tau(s, x)}(x)> \tfrac{s}{(4\abs{X}) }} \\
 \leq & \abs{X} \abs{\setr} \cdot \sum_{1\leq s\leq T} 2 \exp(-s  \kappa^2/(2\abs{X}))
\end{align*}
from which the claimed result follows. \hfill\Halmos

\subsection{Proof of Lemma \ref{lem:event_a_t}}

Too show the result, we consider a reward distribution $\bar Q$ such that $\bar Q(x^\star(P_t)) = P_{t}(x^\star(P_t))$ and $\bar Q(\tilde X(P_{t}))=P(\tilde X(P_{t}))$ as visualized in Figure \ref{fig:proof-construction}. 
Observe that the reward distribution $\bar Q$ agrees with the actual reward distribution $\optp$ at all the arms except for the empirically optimal arms. 
Now, considering our assumption that $\optp\in \interior (\mathcal P)$ with a unique optimal arm, we can choose $\kappa$ sufficiently small so that $\norm{\bar Q(x)- \optp(x)}_\infty<\kappa$ for all $x\in x^\star(P_t)$ implies (i) $\bar Q\in \interior(\mathcal P)$ and (ii) $x^\star(\optp)=x^\star(\bar Q)$ given that $\bar Q(x) =\optp(x)$ for any $x\in \tilde X(P_t)$; see Lemma \ref{lemm:stability-optimal-arm}. %
That is, if two distributions $\optp$ and $\bar Q$ are equal everywhere except at arms $x^\star(P_t)$ and $\norm{\bar Q(x)- \optp(x)}_\infty<\kappa$ for all $x\in x^\star(P_t)$, then they are sufficiently close to have the same optimal arm and both be in the interior of $\mathcal P$.
Given this, it is easy to see that  distribution $\bar Q$  is indeed deceitful, i.e.,  $\bar Q \in \prob(x^\star(\optp), P_{t})$, and that $x^\star(\optp)\in \xd(P_t)$ as $\prob(x^\star(\optp), P_{t})$ is non-empty.
Note that by Equation \eqref{eq:problematic-distributions},  $\bar Q \in \prob(x', P_{t})$ if (i) $\bar Q\in \mathcal P$, (ii) $\bar Q(x^\star(P_t))=P_{t}(x^\star(P_t))$, and (ii) $\textstyle\sum_{r \in \setr} r \bar Q(r, x^*(\optp))  > \sum_{r \in \setr} r \bar Q(r, x)$ for some $x\in x^\star(P_t)$.
The two first conditions hold by construction and the final condition holds because the optimal arm under $\bar Q$ is $x^*(\optp)\not\in x^\star(P_t)$; again see Figure \ref{fig:proof-construction}. 
The constructed distribution $\bar Q\in \prob(x^*(P), P_{t})$ guarantees that \eqref{eq:a_2} occurred as
\begin{align*}
  \textstyle\sum_{x\in X} N_{t}(x) ~I( P_{t}(x), P(x)) &\geq \sum_{x\in \tilde X(P_t)} N_{t}(x) ~I( P_{t}(x), P(x))\\
  &\geq  \sum_{x\in X} N_{t}(x) ~I( P_{t}(x), \bar Q(x)) \geq (1+\epsilon)\log(t)\,,
\end{align*}
where the first inequality follows from the construction of $\bar Q$ (i.e., $\bar Q(x^\star(P_t)) = P_{t}(x^\star(P_t))$) and the second inequality follows from Equation \eqref{eq:a_1} and the fact that  $\bar Q\in \prob(x^*(P), P_{t})$. \hfill\Halmos

\begin{figure}[ht]
  \centering
 {\includegraphics[height=4.5cm]{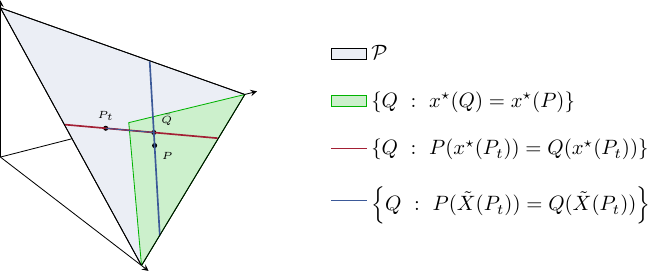}}
  {  \caption{Construction of the reward distribution $\bar Q$. By assumption $P$ is in the interior of both $\mathcal P$ as well as $\set{Q\in \mathcal P}{x^\star(Q)=x^\star(P)}$ as its optimal arm is unique.} \label{fig:proof-construction}}
\end{figure}

\subsection{Proof of Lemma \ref{lemm:stability-optimal-arm}}
We first show that  $\mathcal P_{u}(x^\star_0)$ is an open set. By definition, for a given $x^\star_0$, evidently we have that $$\mathcal P_{u}(x^\star_0)=\set{Q}{0<\sum_{r\in \setr} Q(r, x^\star_0) - \sum_{r\in \setr} Q(r, x) ~~\forall x\neq x^\star_0\in X}$$ is open as the functions $\sum_{r\in \setr} Q(r, x^\star) - \sum_{r\in \setr} Q(r, x)$ are linear and hence continuous in $Q$.

Next, we show the second result of the lemma. Using the result we just showed,  it follows that there exists a neighborhood $\mathcal N$ such  that
  \[
  Q\in \mathcal N\implies x^\star(Q)=x^\star(P)=:x^\star_0.
  \]
  whenever we have $P \in\mathcal P_u(x_0^\star)$.
  The maximum reward of arm $x'$ for any $P\in \mathcal N\bigcap \mathcal P$ in this neighborhood hence simplifies to
  \begin{equation}
    \label{eq:maximal-reward-special}
    \begin{array}{rl}
      \rew_{\max}(x', Q) \defn \max_{Q'} & \sum_{r \in \setr} r Q'(r, x') \\
      \st & Q' \in \mathcal P, \\
                                    & Q'(x^\star_0)=Q(x^\star_0).
    \end{array}
  \end{equation}  
  We wish to prove that $\rew_{\max}(x',Q)$ is continuous at $P$ by invoking Proposition \ref{prop:continuity-function-general} to its maximization characterization in stated in Equation \eqref{eq:maximal-reward-special}. In order to do so we verify that all required conditions are met.
  
  We first remark that our feasible set mapping is indeed lower semi-continuous. 
  
  \begin{lemma}
  \label{lemm:lsc:feasible-mapping}
    The mapping $M(Q)\defn \set{Q'\in \mc P}{Q'(x_0^\star)=Q(x_0^\star)}$ is lower semi-continuous at any $P\in\interior(\mc P)$.
  \end{lemma}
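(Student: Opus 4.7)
The plan is to verify the sequential characterization of lower semi-continuity at $P$: for any $Q'_0 \in M(P)$ and any sequence $Q_n \to P$, I will construct a sequence $Q'_n \in M(Q_n)$ converging to $Q'_0$. Fix $Q'_0 \in M(P)$, so $Q'_0 \in \mc P$ and $Q'_0(x_0^\star) = P(x_0^\star)$. The chief difficulty is that $Q'_0$ may sit on the boundary of $\mc P$, so a direct perturbation of $Q'_0$ at arm $x_0^\star$ to match $Q_n(x_0^\star)$ may leave $\mc P$. My strategy is to first retract $Q'_0$ slightly toward the interior point $P$, and only then adjust the $x_0^\star$-marginal to match $Q_n$.

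Concretely, consider the segment $\gamma(t)\defn(1-t)Q'_0 + t P$ for $t\in[0,1]$. Since $Q'_0$ and $P$ both satisfy $Q(x_0^\star)=P(x_0^\star)$, the same holds along $\gamma$, so $\gamma(t)\in M(P)$. Moreover, for any $t\in(0,1]$, the point $\gamma(t)$ is a convex combination of a point in $\mc P$ and an interior point of $\mc P$, hence $\gamma(t)\in\interior(\mc P)$. Now set $\Delta_n\defn Q_n(x_0^\star)-P(x_0^\star)\in\real^{\abs{\setr}}$ and define
\[
  Q'_n(x) \defn \gamma(t_n)(x)\quad\text{for }x\neq x_0^\star, \qquad Q'_n(x_0^\star)\defn \gamma(t_n)(x_0^\star)+\Delta_n = Q_n(x_0^\star),
\]
for a vanishing sequence $t_n\in(0,1]$ to be chosen. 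Then $Q'_n(x_0^\star)=Q_n(x_0^\star)$ by construction, so once we check $Q'_n\in\mc P$ we will have $Q'_n\in M(Q_n)$.

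The key step is the choice of $t_n$. For each $t\in(0,1]$, since $\gamma(t)\in\interior(\mc P)$, there exists a radius $r_t>0$ with $B_\infty(\gamma(t),r_t)\subset\mc P$. Because $Q_n\to P$, so does $\|\Delta_n\|_\infty\to 0$, and since $Q'_n=\gamma(t_n)+\Delta_n\mathbf{1}_{x_0^\star}$ differs from $\gamma(t_n)$ only at arm $x_0^\star$ by $\Delta_n$, we have $\|Q'_n-\gamma(t_n)\|_\infty=\|\Delta_n\|_\infty$. Pick $t_n\to 0$ slowly enough that $\|\Delta_n\|_\infty<r_{t_n}$; this is always possible as $\Delta_n\to 0$. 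Then $Q'_n\in B_\infty(\gamma(t_n),r_{t_n})\subset\mc P$, establishing $Q'_n\in M(Q_n)$.

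Finally, convergence follows from the triangle inequality
\[
  \|Q'_n-Q'_0\|_\infty \leq \|\gamma(t_n)-Q'_0\|_\infty+\|\Delta_n\|_\infty \leq t_n\,\|P-Q'_0\|_\infty+\|Q_n-P\|_\infty \longrightarrow 0,
\]
completing the proof. The main obstacle, handled by the interior-point retraction above, is that one cannot in general perturb $Q'_0$ directly at the $x_0^\star$ marginal and stay in $\mc P$; the retraction $\gamma(t_n)$ buys a small ball of slack within $\mc P$ into which the perturbation $\Delta_n\mathbf{1}_{x_0^\star}$ can be absorbed.
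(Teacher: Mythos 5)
Your proof is correct and rests on essentially the same argument as the paper's: retract the target point toward the interior point $P$ along a segment that remains in $M(P)$ (both endpoints share the $x_0^\star$-marginal), and absorb the swap of that marginal to $Q(x_0^\star)$ into the interior ball the retraction buys. The paper phrases this via the open-set definition of lower semi-continuity with a single fixed perturbed point $U\in\interior(\mc P)$ whose $x_0^\star$-column is replaced by $Q(x_0^\star)$, while you use the equivalent sequential characterization with a vanishing retraction parameter $t_n$ coupled to $\norm{\Delta_n}_\infty$; the difference is one of formulation only.
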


  Evidently, the set $M(Q)$ is always convex as it is an affine set for any $Q\in \mathcal P$. The mapping $M(Q)$ is closed at $P$ as its graph
  $\set{(Q, Q')}{Q\in \mathcal N', ~Q'\in \mathcal P,~Q'(x^\star_0)=Q(x^\star_0)}$ is closed for all closed neighborhoods $\mathcal N'$ containing $P$.
  
  The objective function in the maximization problem stated in Equation \eqref{eq:maximal-reward-special} is continuous at $(Q', Q)$. The set of maximizers is always non-empty and bounded as its feasible set at $Q= P$, i.e., $M(P)$, is furthermore compact and non-empty. As the objective function of the maximization problem \eqref{eq:maximal-reward-special} is also convex in $Q'$, Proposition \ref{prop:continuity-function-general} implies the continuity of its maximum $\rew_{\max}(x', Q)$ at $P$. \hfill\Halmos
  
 \proof{Proof of Lemma \ref{lemm:lsc:feasible-mapping}.}
    The mapping is clearly nonempty as $Q\in M(Q)$. We can prove the lower-semicontinuity of this mapping at $P$ directly from its definition. Consider indeed an arbitrary open set $V$ such that $M(P)\cap V\neq \emptyset$. It remains to show that the set $\set{Q}{M(Q)\cap V\neq \emptyset}$ contains an open neighborhood around $P$. From Section \ref{sec:information-topology}, we know that without loss of generality we may assume that $$V=\set{Q'}{\textstyle\sum_{r\in \setr}Q'(r)=1, ~\norm{Q'-\bar Q}_\infty< \epsilon}$$ for some $\epsilon>0$ and $\bar Q\in \mc P_\Omega$. Hence, there exists $U$ with
      \[
        U \in \mc P,\quad \norm{U(x)- \bar Q(x)}_\infty<\epsilon ~\forall x\in X \quad{\rm{and}}\quad U(x^\star_0)=P(x^\star_0).
      \]
      In fact as $P\in M(P)$ and $P\in \interior(\mc P)$ we may assume without loss of generality that $U\in \interior(\mc P)$ as well. Indeed, assume this is not the case then we may simply consider the perturbation $(1+\theta) U+\theta P$ instead for a sufficiently small $\theta>0$ exploiting the convexity of $M(P)$ and the fact that $V$ is an open set.
      Define now the parametric distribution $U_Q(x)=U(x)$ for $x\in \tilde X_0$ and $U_Q(x_0^\star)=Q(x_0^\star)$ for any $Q$. Clearly, $U_P=U$. By construction, we hence have $P\in \set{Q}{U_Q \in M(Q) \cap V}$. Furthermore,
      \begin{align*}
         & \set{Q}{U_Q \in M(Q) \cap V} \\
        = & \set{Q}{U_Q \in \mc P} \cap\set{Q}{\norm{ Q(x^\star_0)- \bar Q(x^\star_0)}_\infty<\epsilon}
      \end{align*}
      The first set in the previous intersection contains an open neighborhood around $P$ as $U_P=U \in \interior(\mc P)$ and $U_Q$ is a linear and hence continuous function in $Q$. The second set is an open set as $\norm{Q(x^\star_0)- \bar Q(x^\star_0)}_\infty$ is a continuous function in $Q$. As clearly we have $\set{Q}{U_Q \in M(Q) \cap V}\subseteq \set{Q}{M(Q)\cap V\neq \emptyset}$ the claim follows immediately.\hfill\Halmos
  \endproof

\section{Proof of Statements in Section \ref{sec:proof:explore}}
\label{sec:proofs:explore}  

\subsection{Proof of Lemma \ref{lem:lowerbound_N}}
{\color{black}
  For the sake of contradiction,  assume that this is false and hence $\min_{x\in X} ~N_t(x) < \epsilon \lceil \tfrac{(s_t/4)}{(1+\log(1+ s_t ))} \rceil$. When $\mathcal X_t$ occurs then $\tau(s_t)= t$. We now argue that there must exist  $\lceil{3s_t}/4\rceil$ rounds in the first $t$ rounds  in which  $\min_{x\in X} N_t(x)\leq \epsilon \tfrac{s_{\tauexplore(i)}}{(1+\log(1+s_{\tauexplore(i)}))}$.
We have that $\min_{x\in X} N_{\tauexplore(i)}(x) \le \epsilon \tfrac{s_{\tauexplore(i)}}{(1+\log(1+s_{\tauexplore(i)}))}$ for any $i\in \{\lceil s_t/4 \rceil, \ldots, s_t\}$.
This is because for such rounds $i\in \{\lceil (s_t/4) \rceil, \ldots, s_t\}$ we have (i) $\min_{x\in X} N_{\tauexplore(i)}(x) \le \min_{x\in X} N_{t}(x)$, (ii) $\epsilon \tfrac{s_{\tauexplore(i)}}{(1+\log(1+s_{\tauexplore(i)}))} = \epsilon \tfrac{i}{(1+\log(1+i))} \ge \epsilon \tfrac{i}{(1+\log(1+s_t))} \ge\epsilon \lceil \tfrac{(s_t/4)}{(1+\log(1+ s_t ))} \rceil$, and (iii) by our assumption, $\min_{x\in X} ~N_t(x) < \epsilon \lceil \tfrac{(s_t/4)}{(1+\log(1+ s_t ))}\rceil $.
So far we established that there  exist at least $\lfloor{ 3s_t}/4\rfloor \leq s_t-\lceil s_t/4\rceil$ rounds before time $t$ in which  $\min_{x\in X} N_t(x)\leq \epsilon \tfrac{s_{\tauexplore(i)}}{(1+\log(1+s_{\tauexplore(i)}))}$. 
After every $\abs{X}$ of such rounds $\min_{x\in X} N_t(x)$ is incremented at least by one.
Hence, $$\min_{x\in X} N_t(x)\geq \lfloor \tfrac{\lfloor{3s_t}/4\rfloor}{\abs{X}}\rfloor\,.$$
As we choose $\epsilon < \tfrac{1}{\abs{X}}$,  we have that $\min_{x\in X} N_t(x)\geq \lfloor \epsilon\lfloor{3s_t}/4\rfloor\rfloor$. This is in contradiction with $\min_{x\in X} ~N_t(x) < \epsilon \lceil \tfrac{(s_t/4)}{(1+\log(1+ s_t ))} \rceil\leq \epsilon \lceil (s_t/4) \rceil$ whenever $s_t> \tfrac{2}{\epsilon}$.  \hfill\Halmos
}

\subsection{Proof of Lemma \ref{lem:expolre:bad:arm}}
We first  show that
when we are in the exploration phase, there exists an empirically suboptimal arm $x'\in \tilde X(P_t)$ such that $N_t(x') \leq (1+\epsilon)\eta_t(x') \log(t) \leq  (1+\epsilon)\norm{\eta_t}_{\infty}\log(T)$ where the target exploration rate $\eta_t=\mathrm{SU}(P_t, \eta'_t~;~\mu_t, \epsilon)$ is determined with the help of the shallow update Algorithm \ref{alg:shallow}. Here,
 $\eta'_t$ is the reference logarithmic rate in round $t$.
Assume that this was not the case, then $N_t(x) \geq  (1+\epsilon)\eta_t(x) \log(t)$ for all $x\in X$. 
All empirically deceitful arms $x\in \xd(P_t)$ would
\begin{align*}
    \overline{\dual}(\tfrac{N_t}{\log(t)},x,    P_t~;~  \mu_{t}(x)) 
  ~&\geq~  \overline{\dual}\left((1+\epsilon) \eta_t, x,  P_{t}~;~\mu_{t}(x)\right) \\
  ~&=~ (1+\epsilon)\cdot \overline{\dual}\left( \eta_t,  x,  P_{t}~;~ \mu_{t}(x) \right)
  \ge  1+\epsilon
\end{align*}
pass the sufficient information test; a contradiction. Note that the first inequality follows from the fact that the dual-test function $\overline{\dual}(\eta,x,    P_t;~  \mu_{t}(x'))$ is an increasing function of the target rates $\eta$; see Lemma \ref{lemm:res-function-properties}, stated below.
The subsequent equality follows from the fact that the dual-test function $\overline{\dual}(\eta,x,    P_t;~  \mu_{t}(x'))$ is positively homogeneous in the target rates $\eta$; see again Lemma \ref{lemm:res-function-properties}.
The final inequality holds as $\eta_{t}=\textsc{SU}(  P_{t}, \eta'_t~;~\mu_{t}, \epsilon)$ and consequently $\overline{\dual}( \eta_t,  x,  P_{t}~;~  \mu_{t}(x) ) \ge 1$ for all empirically deceitful arms $x\in \xd(P_t)$; see the shallow update Algorithm \ref{alg:shallow}.

So far we established that when we are in the exploration phase, there exists an empirically suboptimal arm $x'\in \tilde X(P_t)$ such that $N_t(x') \leq (1+\epsilon)\eta_t(x') \log(t) \leq  (1+\epsilon)\norm{\eta_t)}_{\infty}\log(T)$. 
Next, we use this to show the desired result. 
Recall that whenever we are in the exploration round,  arm $x_t$ we play is either $\underline x_t$, $x_t^\star$ or $\bar x_t$.
If $x_t=\underbar{x}_t =\arg\min_{x\in X} N_t(x)$, then our earlier observation implies that   $N_t(x_t)\leq N_t(x') \leq (1+\epsilon) \norm{\eta_t}_{\infty}\log(T)$.
We now assume that $x_t=\bar{x}_t$, where $\bar{x}_t= \arg\min_{x\in \tilde X(  P_{t})} \tfrac{N_{t}(x)}{\eta_{t}(x)}$. Again, by our earlier observation, there must exist at least one empirically suboptimal arm $x'$ such that $\tfrac{N_t(x')}{\eta_t(x')} \leq (1+\epsilon) \log(t)$. This implies that 
$N_t(x_t) \leq (1+\epsilon) \eta_t(x_t) \log(t) \leq (1+\epsilon) \norm{\eta_t}_{\infty}\log(T)$ as well.
{\color{black} By definition $x_t=x_t^\star$ only if $N_t(x_t)\leq N_t(\bar x_t)\leq (1+\epsilon) \norm{\eta_t}_{\infty}\log(T)$.}
Thus, in all cases, we have the inequality
\begin{equation*}
  N_t(x_t) \leq (1+\epsilon)\norm{\eta_t}_\infty \log(T)\,,
\end{equation*}
as desired.
\hfill\Halmos

\begin{lemma}[Dual-test Function]
  \label{lemm:res-function-properties}
  The dual-test function
  \(
    \overline{\dual}\left(\eta,  x, P~; ~\mu\right)
  \)
  is non-decreasing and positively homogeneous in the exploration rate $\eta$. That is, we have
  \[
    \begin{array}{l@{~}r@{~}l@{\hspace{3em}}r}
      \overline{\dual}(\eta_1,  x', P~; ~\mu)&\leq& \overline{\dual}(\eta_2,  x', P~; ~\mu)& \forall 0\leq \eta_1\leq \eta_2,\\
      \overline{\dual}(a\cdot \eta,  x', P~; ~\mu)&=& a \cdot \overline{\dual}(\eta,  x', P~; ~\mu) & \forall 0\leq a,
    \end{array}
  \]
  for any dual feasible $\mu \in \real_+\times\real\times \mathcal K^\star$. 
\end{lemma}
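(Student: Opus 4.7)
Both claims will follow in a fairly mechanical manner from the definition
\[
  \overline{\dual}(\eta,x',P;\mu) \;=\; \max_{\rho \geq 0}~\dual(\eta,x',P;\rho\mu),
\]
together with two structural properties of the dual function $\dual$ given in \eqref{eq:dual}: a joint positive homogeneity in the pair $(\eta,\mu)$, and a componentwise monotonicity in $\eta$ for fixed $\mu$. My plan is to prove those two structural facts at the level of $\dual$ and then lift them to $\overline{\dual}$ by a change of variables inside the outer maximization over $\rho$.

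For positive homogeneity I would first verify directly from \eqref{eq:dual} that $\dual(a\eta,x',P;a\mu) = a\,\dual(\eta,x',P;\mu)$ for every $a>0$. This is immediate term by term: the logarithmic terms have the form $\eta(x)\log\bigl((\eta(x)-\lambda(r,x)-\beta-\alpha r\mb 1(x=x'))/\eta(x)\bigr)P(r,x)$, whose argument of the logarithm is invariant under simultaneous scaling of $\eta$ and $\mu$ by $a$, so the outside factor $\eta(x)$ contributes a single $a$; the remaining terms $-\sum \lambda P + \alpha \rew^\star + \beta|\tilde X(P)|$ are linear in $\mu$ and therefore scale by $a$; and the characteristic function is invariant under the joint scaling. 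Then for $a>0$ I would make the bijective substitution $\rho = a\rho'$ in the outer max to obtain
\(
  \overline{\dual}(a\eta,x',P;\mu) = \max_{\rho'\geq 0}\dual(a\eta,x',P;a\rho'\mu) = a\max_{\rho'\geq 0}\dual(\eta,x',P;\rho'\mu) = a\,\overline{\dual}(\eta,x',P;\mu).
\)
The boundary case $a=0$ needs a short separate argument: the lower bound $\overline{\dual}(0,x',P;\mu)\geq \dual(0,x',P;0)=0$ is immediate (all logarithmic terms vanish by the convention $0\log(\cdot)=0$ and the characteristic constraint $0\geq 0$ holds), while the upper bound $\overline{\dual}(0,x',P;\mu)\leq 0$ can be obtained from the weak duality inequality $\dual(0,x',P;\rho\mu)\leq \dis(0,x',P)$ of Lemma~\ref{thm:dual-feasibility} together with the observation that $\dis(0,x',P)$ is the infimum of the identically-zero objective over a non-empty deceitful set (for non-deceitful arms the set is empty but then the upper bound is trivially attained by choosing $\rho=0$).

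For monotonicity the cleanest route is to prove that for every fixed $\rho\geq 0$ the map $\eta\mapsto \dual(\eta,x',P;\rho\mu)$ is non-decreasing coordinatewise, and then take the maximum over $\rho$. Since $\dual$ decomposes into a sum over $(x,r)$ of terms of the form $f_{c}(\eta(x))P(r,x)$ with $c=c(r,x)=\rho\lambda(r,x)+\rho\beta+\rho\alpha r\mb 1(x=x')$ and $f_c(t)=t\log((t-c)/t)$ plus terms independent of $\eta$, it suffices to show that $t\mapsto f_c(t)$ is non-decreasing on the feasible half-line $\{t\geq \max(c,0)\}$. A direct computation of $f_c'(t)$ reduces this to the elementary inequality $\log(1-u)+u/(1-u)\geq 0$ on $u\in[0,1)$ when $c>0$ and to an analogous inequality $\log(1+w)-w/(1+w)\geq 0$ on $w\geq 0$ when $c<0$; both are classical and follow by checking the value at the endpoint and differentiating once more. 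The feasibility constraint is preserved under increasing $\eta$ since if $\eta_1(x)\geq c(r,x)$ and $\eta_2\geq \eta_1$, then $\eta_2(x)\geq c(r,x)$, so the $-\infty$ regime cannot suddenly appear for larger $\eta$. Taking the max over $\rho\geq 0$ preserves pointwise inequalities and yields monotonicity of $\overline{\dual}$.

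The main obstacle I anticipate is bookkeeping around the characteristic function and the degenerate cases where $\eta(x)=0$ or $c(r,x)\leq 0$; these corner cases are where the convention $0\log 0 = 0$ and the sign of the argument of the logarithm must be tracked carefully. Once the elementary one-dimensional analysis of $f_c$ and the substitution $\rho=a\rho'$ are in place, the lemma reduces to routine verification.
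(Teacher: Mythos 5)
Your proposal is correct and follows essentially the same route as the paper: the paper also establishes coordinatewise monotonicity of $\dual$ in $\eta$ and joint positive homogeneity of $\dual$ in $(\eta,\mu)$ (its Lemma on the dual function), and then lifts both properties to $\overline{\dual}$ through the pointwise maximum over $\rho$ and the reparametrization $\rho\mapsto a\rho$. You are somewhat more careful than the paper in two places --- the explicit derivative computation for $t\mapsto t\log((t-c)/t)$ and the separate treatment of $a=0$, where the substitution argument degenerates --- though your parenthetical upper bound for non-deceitful arms at $a=0$ is not quite right (choosing $\rho=0$ only gives a lower bound); since the lemma is only invoked for deceitful arms, this corner case is immaterial.
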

\proof{Proof of Lemma \ref{lemm:res-function-properties}.}
  The dual function $\dual\left(\eta,  x', P~; ~\mu\right)$ in $\eta$ as defined in Equation \eqref{eq:dual} is a non-decreasing function of $\eta$ for all arms $x'\in X$, reward distributions $P \in \mathcal P_\Omega$, and dual variables $\mu\in \real_+\times\real\times\mathcal K^\star$; see Lemma \ref{lemm:dual-function-properties}. Evidently, it follows that $\max_{\rho\geq 0} \dual\left(\eta,  x', P~; ~\rho\cdot \mu\right)$ is a non-decreasing in $\eta$ for all $x'\in X$ and $P \in \mathcal P_\Omega$ too. Positive homogeneity follows because
  \begin{align*}
    \overline{\dual}(a\cdot \eta,  x', P~; ~\mu)=& \max_{\rho\geq 0} ~\dual\left(a \cdot \eta,  x', P~; ~\rho\cdot \mu\right)
    =  \max_{\rho\geq 0} ~\dual\left(a \cdot \eta,  x', P~; ~\rho\cdot a \cdot \mu\right)\\
    = & \max_{\rho\geq 0} ~a \cdot \dual\left(\eta,  x', P~; ~\rho \cdot \mu\right)
        = a \cdot \overline{\dual}(\eta,  x', P~; ~\mu)\,,
  \end{align*}
  where the third equality again follows from  Lemma \ref{lemm:dual-function-properties}.\hfill\Halmos
\endproof

\begin{lemma}[Dual Function]
  \label{lemm:dual-function-properties}
 For any $x'\in \tilde X$, the dual function
  \(
    \dual(\eta, x', P~; ~\mu)
  \)
  is nondecreasing in the exploration rate $\eta$ and positively homogeneous in $(\eta, \mu)$. We have indeed for any dual feasible $\mu\in \real_+\times\real\times \mathcal K^\star$ that
  \[
    \begin{array}{l@{~}r@{~}l@{\hspace{3em}}l}
      \dual(\eta_1, x', \optp~;\mu)&\leq& \dual(\eta_2, x', \optp~;~\mu)& \forall 0\leq \eta_1\leq \eta_2,\\
      \dual(a \cdot \eta, x', \optp~;a\cdot \mu)&=& a \cdot \dual(a \cdot \eta, x', \optp~;a\cdot \mu) & \forall 0\leq a.
    \end{array}
  \]
\end{lemma}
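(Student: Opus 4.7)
The plan is to prove both the monotonicity and the positive homogeneity directly from the explicit definition of $\dual$ in Equation \eqref{eq:dual}, with the Lagrangian viewpoint supplied by Appendix \ref{sec:duality-proof} providing the cleanest argument for monotonicity. Recall from the derivation of Lemma \ref{thm:dual-feasibility} that (after collecting terms and simplifying the inner minimization) the Lagrangian of the conic problem \eqref{eq:feasibility-cone} takes the form $L(\eta,Q;\mu) = \sum_{x\in \tilde X(P),r} \eta(x)\,[P(r,x)\log(P(r,x)/Q(r,x)) - P(r,x) + Q(r,x)] + T(Q;\mu)$, where $T(Q;\mu)$ collects all the remaining terms (the $\alpha$, $\beta$, $\lambda$ contributions) and does not depend on $\eta$. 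The bracketed quantity is exactly the pointwise summand of the information distance $I(P(x),Q(x))$, which is nonnegative by Lemma~\ref{prop:relative-entropy}. Hence, for every fixed feasible $Q$ and every fixed dual variable $\mu$, the map $\eta \mapsto L(\eta,Q;\mu)$ is componentwise nondecreasing. Taking the infimum over $Q$ preserves this monotonicity, giving $\dual(\eta_1,x',P;\mu)\le \dual(\eta_2,x',P;\mu)$ whenever $0\le \eta_1\le \eta_2$.

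For positive homogeneity I would simply substitute $(a\eta, a\mu) = (a\eta, a\alpha, a\beta, a\lambda)$ with $a\ge 0$ into the closed-form \eqref{eq:dual} and check term-by-term that $\dual(a\eta,x',P; a\mu) = a\,\dual(\eta,x',P;\mu)$. The decisive observation is that the logarithmic term satisfies
\[
  a\eta(x)\,\log\!\left(\frac{a\eta(x)-a\lambda(r,x)-a\beta-a\alpha r\mb 1(x=x')}{a\eta(x)}\right) \;=\; a\cdot \eta(x)\,\log\!\left(\frac{\eta(x)-\lambda(r,x)-\beta-\alpha r\mb 1(x=x')}{\eta(x)}\right),
\]
since the factor $a>0$ cancels inside the logarithm; the three affine terms $-\sum_{x\in x^\star(P),r}\lambda(r,x)P(r,x)$, $\alpha \rew^\star(P)$, and $\beta|\tilde X(P)|$ scale linearly by inspection; and the characteristic constraint $\eta(x)-\lambda(r,x)-\beta-\alpha r\mb 1(x=x')\ge 0$ is invariant under multiplication by $a>0$, so the $\chi_{-\infty}$ term is unchanged. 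The case $a=0$ reduces to $\dual(0,x',P;0)=0$ by the standard convention $0\log 0 = 0$ (or equivalently by continuity from $a\to 0^+$), matching $a\,\dual(\eta,x',P;\mu)=0$.

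There is essentially no serious obstacle: the argument is a direct verification once the structural observation (linearity of $L$ in $\eta$ with nonnegative coefficient, and joint degree-one homogeneity in $(\eta,\mu)$) is made. The only point that warrants care is ensuring that the infimum in the Lagrangian characterization is taken over a $Q$-domain that does not itself depend on $(\eta,\mu)$ — which is the case here, since feasibility of $Q$ in \eqref{eq:feasibility-cone} involves only $Q$ and $P$. A fully self-contained alternative for monotonicity, should one prefer to avoid invoking the Lagrangian, is to differentiate $\eta(x)\log((\eta(x)-c)/\eta(x))$ in $\eta(x)$ with $c=\lambda(r,x)+\beta+\alpha r\mb 1(x=x')$ fixed and the feasibility constraint $\eta(x)>c$ in force; the derivative equals $g(c/\eta(x))$ with $g(u)=\log(1-u)+u/(1-u)$, and a one-line check via $g(0)=0$ and $g'(u)=u/(1-u)^2$ shows $g\ge 0$ on $(-\infty,1)$, so the partial derivative $\partial_{\eta(x)}\dual = \sum_r g(c/\eta(x))P(r,x)\ge 0$.
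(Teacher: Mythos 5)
Your proof is correct. For the homogeneity claim you do exactly what the paper does: substitute $(a\eta, a\mu)$ into the explicit formula \eqref{eq:dual}, observe that the factor $a$ cancels inside the logarithm, that the affine terms scale linearly, and that the $\chi_{-\infty}$ constraint is invariant under scaling by $a>0$ (with $a=0$ handled by convention). For the monotonicity claim your primary route is genuinely different from the paper's: the paper argues directly on the closed form, asserting (without computation) that $u\mapsto u\log\left(\tfrac{u-v}{u}\right)$ is increasing in $u$, whereas you pass through the Lagrangian representation $\dual=\inf_{Q\geq 0} L(\eta,Q;\mu)$, note that $L$ is linear in $\eta$ with coefficients equal to the pointwise summands $P\log(P/Q)-P+Q\geq 0$ of the information distance, and use that an infimum of nondecreasing functions over an $\eta$-independent domain is nondecreasing. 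This buys a cleaner conceptual explanation of \emph{why} monotonicity holds (the multipliers of $\eta$ in the Lagrangian are divergence terms, hence nonnegative), at the small cost of having to recall that the unperturbed $\dual$ is obtained from the Lagrangian duals $\dual_\gamma$ only as a supremum over $\gamma$ --- a step that preserves monotonicity since the $\gamma$-penalty terms do not involve $\eta$, but which you do not mention explicitly. Your self-contained fallback via the derivative $g(w)=\log(1-w)+w/(1-w)$ with $g(0)=0$ and $g'(w)=w/(1-w)^2$ is a correct and welcome filling-in of the one-line claim the paper leaves unverified; the only cosmetic slip is that in the final sum $\sum_r g(c/\eta(x))P(r,x)$ the constant $c$ depends on $r$.
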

\proof{Proof of Lemma \ref{lemm:dual-function-properties}.}
  Both properties can be proved simply by verification on the explicit representation
  \begin{align*}
      \dual(\eta,x', Q~;~\mu) = &  \textstyle\sum_{x\in \tilde X(Q),\, r\in \setr} \eta(x) \log\left( \frac{\eta (x)- \lambda(r, x)-\beta-\alpha \cdot r \mb 1(x=x')}{\eta(x)}\right) Q(r,x) \\
      &\quad - \textstyle \sum_{x\in x^\star(Q),r\in \setr} \lambda(r, x)Q(r, x)  \\
      & \quad  + \alpha \rew^{\star}(Q) + \beta|\tilde X(Q)| + \chi_{-\infty}(\eta (x)\geq \lambda(r, x)+\beta+\alpha \cdot r \mb 1(x=x')).     
  \end{align*}
  The term $\chi_{-\infty}(\eta (x)\geq \lambda(r, x)+\beta+\alpha \cdot r \mb 1(x=x'))$ is clearly nondecreasing in $\eta$. The nonlinear terms $\eta(x) \log\left( \tfrac{(\eta (x)- \lambda(r, x)-\beta-\alpha \cdot r \mb 1(x=x'))}{\eta(x)}\right)Q(r, x)$ can easily be observed to be nondecreasing in $\eta$ by noting that the functions of the form $u\log(\tfrac{(u - v)}{u})$ are increasing in $u\in\real$ for any given $u\in \real$. This proofs the first part of the lemma. The second part is trivial and follows from
  \begin{align*}
    & \dual(a \eta,x', Q~;~a \mu)\\
     = &  \textstyle\sum_{x\in \tilde X(Q),\, r\in \setr} a \eta(x) \log\left( \frac{a \eta (x)- a \lambda(r, x)-a \beta-a \alpha \cdot r \mb 1(x=x')}{a \eta(x)}\right) Q(r,x)  - \textstyle \sum_{x\in x^\star(Q),r\in \setr} a \lambda(r, x)Q(r, x)  \\
                                  & \quad  + a \alpha \rew^{\star}(Q) + a \beta|\tilde X(Q)| + \chi_{-\infty}(a \eta (x)\geq a \lambda(r, x)+a \beta+a \alpha \cdot r \mb 1(x=x')) \\
    = &  \textstyle\sum_{x\in \tilde X(Q),\, r\in \setr} a \eta(x) \log\left( \frac{ \eta (x)-  \lambda(r, x)- \beta- \alpha \cdot r \mb 1(x=x')}{ \eta(x)}\right) Q(r,x)  - \textstyle \sum_{x\in x^\star(Q),r\in \setr}  a \lambda(r, x)Q(r, x)  \\
                                  & \quad  + a \alpha \rew^{\star}(Q) + a \beta|\tilde X(Q)| +a  \chi_{-\infty}( \eta (x)\geq  \lambda(r, x)+ \beta+ \alpha \cdot r \mb 1(x=x')) \\
    = & a \cdot \dual( \eta,x', Q~;~ \mu).
  \end{align*}\hfill\Halmos
\endproof

\section{Continuity Results}\label{sec:Continuity}
We suggest the readers who are not familiar with  Slater’s Constraint Qualification Condition to review Section \ref{app:continuity} before reading the proofs presented in this section.   
\subsection{Proof Proposition \ref{prop:continuity-projection}}
\label{proof:projection-continuity}
The proof has two parts. In the first part, we show that the  distance function $\dis(\eta, x', P)$, defined in Equation \eqref{eq:inner-optimization}, is finite if and only if $x'$ is deceitful, i.e., $\rew^{\star}(P)<\rew_{\max}(x', P)$. In the second part, we show that  problem \eqref{eq:inner-optimization} and problem  \eqref{eq:dist-function-continuous} are identical when arm $x'$ is deceitful.

\textbf{First part.} By  definition of the  distance function $\dis$ in Equation \eqref{eq:inner-optimization},   a necessary requirement for the distance function $\dis( \eta, x', P)$ to be finite  is that $\rew^{\star}(P)<\rew_{\max}(x', P)$. To see why, note that the set of deceitful distributions  $\prob(x', P)$ is empty and hence $\dis(\eta, x', P)$ infinite if $\rew^{\star}(P)\geq \rew_{\max}(x', P)$.
We will show that this condition, i.e., $\rew^{\star}(P)<\rew_{\max}(x', P)$, is in fact sufficient too.  Therefore, we have  that the effective domain of the information distance function is given by 
 $\set{P\in \mathcal P}{\dis(\eta, x', P)<\infty}=\tset{P}{\rew^{\star}(P)<\rew_{\max}(x', P)}$, as desired. 
  
 When $\rew^{\star}(P)<\rew_{\max}(x', P)$, we will argue that the distance function is finite, i.e.,
 \[
   \dis( \eta, x', P) \leq \textstyle\sum_{x\in \tilde X(P)} \eta(x) I(P(x), \bar Q(x)) < \infty
 \]
 by constructing a distribution $\bar Q\in \prob(x', P)$ which satisfies $\bar Q\ll P$; see Lemma \ref{prop:relative-entropy}. 
 By definition of $\rew_{\max}(x', P)$ stated in Equation \eqref{eq:maximum-reward}, there exists a maximizing distribution  $\hat Q \in \mathcal P$ such that $\hat Q(x^\star(P))=P(x^\star(P))$ and $$\textstyle\rew^{\star}(P) < \sum_{r\in \setr} r \hat Q(r, x')$$ which does ensure that $\hat Q\in \prob(x', P)$. Unfortunately, we are not ensured that $\hat Q\ll P$. However, we may simple perturb $\hat Q$ as $\bar Q = (1-\theta) \hat Q + \theta P$ for some $\theta \in (0, 1)$ and observe that from convexity we have $\bar Q\in \mathcal P$ and $\bar Q(x^\star(P))=P(x^\star(P))$. Furthermore, for  sufficiently small perturbations $\theta$ we have from continuity of the expected reward of arm $x'$ in the reward distribution that
 $\rew^{\star}(P) < \sum_{r\in \setr} r \bar Q(r, x')$ as well. Thus, $\bar Q\in \prob(x', P)$ and $\bar Q\ll P$.

  \textbf{Second part.} Here, we show that  problem \eqref{eq:inner-optimization} and problem  \eqref{eq:dist-function-continuous} are identical when arm $x'$ is deceitful.
  It is easy to see  that the optimal value of problem \eqref{eq:inner-optimization}, i.e.,  $\dis( \eta, x', P)$, is greater than or equal to that of problem \eqref{eq:dist-function-continuous} as the latter is characterized as a minimization problem of the same objective function over a larger feasible set. Next, we show that the optimal value of problem \eqref{eq:inner-optimization} is less than or equal to that of problem \eqref{eq:dist-function-continuous}. 
  Let $Q^\star$ be the optimal solution to problem \eqref{eq:dist-function-continuous}. Note that an optimal solution of this problem exists because its objective function  is lower semicontinuous  per Lemma \ref{prop:relative-entropy} and its feasible set is a compact set; see for instance \citet[Prop. 3.2.1]{bertsekas2009convex}. 
   Now, consider the perturbed reward distributions  $Q'_\theta = \theta \cdot \bar Q  + (1-\theta)\cdot Q^\star  \in \mathcal P$ parameterized in $\theta\in [0, 1]$. Again we have that $Q'_\theta\in \mathcal P$ as the set $\mathcal P$ is convex and contains both reward distribution $Q^\star$ and $\bar Q$. Furthermore, we have that $\sum_{r \in \setr} r Q'_\theta(r, x') > \rew^{\star}(P)$ for all $\theta\in (0, 1]$ as we have both $\sum_{r \in \setr} r Q^\star(r, x') \geq \rew^{\star}(P)$ and $\sum_{r \in \setr} r \bar Q(r, x') > \rew^{\star}(P)$. Hence, the reward distributions $Q'_\theta$ are feasible in the minimization problem \eqref{eq:inner-optimization} and consequently for any $\theta\in (0, 1]$ we have
  \begin{align*}
     \eqref{eq:inner-optimization} &\textstyle\leq \textstyle\sum_{x\in \tilde X(P)} \eta(x) I(P(x), Q'_\theta(x))\\
     &\textstyle\leq \theta\cdot\sum_{x\in \tilde X(P)} \eta(x)  I(P(x), \bar Q(x)) + (1-\theta)\cdot \sum_{x\in \tilde X(P)} \eta(x) I(P(x), Q^\star(x) ).
   \end{align*}
   Considering the fact that the term $\textstyle\sum_{x\in \tilde X(P)} \eta(x) I(P(x), \bar Q(x))$ is finite, and by 
  letting $\theta$ go to zero, we have \begin{align*} &\lim_{\theta\downarrow 0} \textstyle\theta\sum_{x\in \tilde X(P)} \eta(x)  I(P(x), \bar Q(x)) + (1-\theta) \sum_{x\in \tilde X(P)} \eta(x) I(P(x), Q^\star(x) ) \\
  &\quad= \textstyle \sum_{x\in \tilde X(P)} \eta(x) \cdot I(P(x), Q^\star(x) )\,.\end{align*}
   The proof is then completed by observing that the left hand side is greater than  or equal to the optimal value of problem \eqref{eq:inner-optimization}, and the right hand side is equal to that of problem \eqref{eq:dist-function-continuous}. \hfill\Halmos
   
   \subsection{Proof of Proposition \ref{prop:continuity-regret-lower-bound}}

Here, we first show that the lower regret bound function $C(Q)$ is continuous at any reward distribution $P\in \mathcal P'$.
We show that the lower regret bound function $C(Q)$ is continuous  with the help of  Lemmas \ref{lemm:stability-optimal-arm} and Lemma \ref{lemm:continuity-distance-function}, which are found at the end of this section and two classical continuity results (Propositions \ref{prop:continuity-mapping-general} and \ref{prop:continuity-function-general}) regarding the minimizers of parametric optimization problems restated in Section \ref{app:duality} for completeness. 

From Lemma \ref{lemm:stability-optimal-arm}, we can find a neighborhood $\mathcal N\subseteq \mathcal P'$ around any $P\in \mathcal P'$ such that
\[
  Q\in \mathcal N \implies x^\star(Q)=x^\star(P):=x^\star_0, ~\tilde X(Q)=\tilde X(P):=\tilde X_0 {\mathrm{~and~}} \xd(Q)= \xd(P):=\xd^0
  \,.\]
That is, all reward distributions $Q$ in the neighborhood $\mathcal N$ share their optimal, nondeceitful and deceitful arms with the reward distribution $P$. 
  { From Proposition \ref{prop:continuity-projection}, for any $Q \in \mathcal N$,  the regret lower bound function is given as
  \begin{equation}
    \label{eq:special-lower-regret}
    \begin{array}{rl}
      C(Q):= {\displaystyle\inf_{\eta\geq 0}} & \textstyle\sum_{x\in \tilde X_0} \eta(x) \Delta(x, Q)  \\[0.4em]
      \st       & 1 \leq  \dis(\eta, x, Q)  \quad \forall x\in \xd^0 \\[0.4em]
                                              & \eta(x)=0 \quad \forall x\in x^\star_0
    \end{array}
  \end{equation}
  where the value of $\eta(x^\star_0)$ for the optimal arm is in fact arbitrary and can be set to zero.}
  We would like to establish that the lower regret bound function $C(Q)$ is continuous at $P$ by applying Proposition \ref{prop:continuity-function-general} to the minimization problem \eqref{eq:special-lower-regret}. To apply this proposition, we need to show that its feasible set mapping is a lower semi-continuous mapping. One can hope to establish lower semi-continuity by verifying the conditions of  Proposition \ref{prop:continuity-mapping-general}. One of these conditions demands 
  the inequality constraint functions $\dis(\eta,x, Q)$ to be upper semi-continuous at points $(\eta_0, P)$, which may not be the case when $\eta_0$ is not strictly positive. {To circumvent this challenge, we introduce the following  family of perturbed regret lower bound problem 
  \begin{equation}
    \label{eq:special-lower-regret-perturbed}
    \begin{array}{rl}
      C_\delta(Q):= {\displaystyle\inf} & \textstyle\sum_{x\in \tilde X_0} \eta(x) \Delta(x, Q) \\[0.4em]
      \st       & 1 \leq  \dis(\eta, x, Q)  \quad \forall x\in \xd^0, \\
                                        & \eta(x)\geq \delta \quad \forall x\in X, \\
                                        & \eta(x)= \delta \quad \forall x\in x^\star_0
    \end{array}
  \end{equation}
  and show that continuity of $C_{\delta}(Q)$ at $Q= P$ implies  continuity of the regret lower bound at $P$. We then proceed to show $C_{\delta}(Q)$ is indeed continuous  at $P$.}

   \textbf{Continuity of the perturbed regret lower bound implies that of regret lower bound.} 
  By definition of the perturbed regret lower bound, we have the following inequalities  \[
  C_\delta(Q)-\delta |\tilde X_0| \leq C_\delta(Q)-\delta \textstyle\sum_{x\in \tilde X_0}\Delta(x, Q) \leq C(Q) \leq C_\delta(Q)
  \]
  for any $\delta>0$ uniformly in $Q\in \mathcal N$. The first inequality holds because $\Delta (x, Q)\in [0,1]$ for any arm $x$ and reward distribution $Q$. {The second inequality follows from the fact that we have
  \begin{equation}
    \begin{array}{rl}
      C_\delta(Q)= {\displaystyle\inf_{\eta}} & \textstyle\sum_{x\in \tilde X_0} (\eta(x)-\delta) \Delta(x, Q) +  \delta \sum_{x\in \tilde X_0} \Delta(x, Q) \\[0.4em]
      \st       & 1 \leq  \dis(\eta, x, Q)  \quad \forall x\in \xd^0\\
                                       & \eta(x)-\delta \geq 0 \quad \forall x\in X, \\
                                              & \eta(x)-\delta= 0 \quad \forall x\in x^\star_0 \\[1em]
      \leq {\displaystyle\inf_{\eta}} & \textstyle\sum_{x\in \tilde X_0} (\eta(x)-\delta) \Delta(x, Q) +  \delta \sum_{x\in \tilde X_0} \Delta(x, Q) \\[0.4em]
      \st       & 1 \leq  \dis(\eta-\delta\cdot \textbf{1}, x, Q)  \quad \forall x\in \xd^0\\
                                       & \eta(x)-\delta \geq 0 \quad \forall x\in X, \\
                                              & \eta(x)-\delta= 0 \quad \forall x\in x^\star_0 \\[1em]
      = & C(Q) + \delta \sum_{x\in \tilde X_0} \Delta(x, Q).
    \end{array}
  \end{equation}
}

Hence, we have that $C(Q)=\lim_{\delta\downarrow 0} C_\delta(Q)$ uniformly in $Q\in \mathcal N$.  It remains to be shown that the perturbed regret lower bound functions $C_\delta(Q)$ are continuous. The claimed result then indeed follows as uniform limits of continuous functions are themselves continuous. 
  
  \textbf{The perturbed regret lower bound is continuous.} We show the perturbed regret lower functions are continuous by applying Proposition \ref{prop:continuity-function-general} to the minimization problem \eqref{eq:special-lower-regret-perturbed}. Let $M_\delta(Q)$ be the feasible set of the perturbed minimization problem \eqref{eq:special-lower-regret-perturbed} where we use as suggested in Section \ref{sec:constraint-qualification} the decomposition
  \begin{align*}
    M_\delta(Q)  =: &~  \Gamma(Q) \cap \set{\eta}{g_i(\eta, Q)\leq 0} \\
    =&~ \tset{\eta}{\eta(x)\geq \delta ~~\forall x\in X, ~\eta(x)=\delta ~~ \forall x\in x^\star_0} \cap \tset{\eta}{1 \leq  \dis(\eta, x, Q)~~ \forall x\in \xd^0}.
  \end{align*}
  We will invoke Proposition \ref{prop:continuity-function-general} with parameter $\theta = Q$, decision variable $y=\eta$,  $\Gamma(Q) = \tset{\eta}{\eta(x)\geq \delta ~~\forall x\in X, ~\eta(x)=\delta ~~ \forall x\in x_0^{\star}}$, and $\set{\eta}{g_i(\eta, Q)\leq 0}= \tset{\eta}{1-\dis(\eta, x, Q) \leq 0 ~~ \forall x\in \xd^0}$.
To invoke Proposition \ref{prop:continuity-function-general}, and establish our claim, it  simply remains to verify that all conditions of this proposition are met:

 \begin{enumerate}
 \item \textbf{Feasible region $M_\delta (Q)$ is a lower semi-continuous mapping at $P$.} We establish this result by invoking Proposition \ref{prop:continuity-mapping-general}.
     \begin{enumerate}
     \item \textit{The Slater's constraint qualification condition is satisfied ($M_{0, \delta}(P)\ne \emptyset$).} This condition, which is presented in Definition \ref{def:slater}, holds if we can show that the set of Slater points
       $$M_{0, \delta}(P)\defn \tset{\eta}{\eta(x)\geq \delta ~~\forall x\in X, ~\eta(x_0^\star)=\delta } \cap \tset{\eta}{1 <  \dis(\eta, x, P)~~ \forall x\in \xd^0}$$
       is non-empty.  To show this, we will first point out that $\dis(\hat \eta, x, P)>0$ for any $\hat \eta \ge \delta > 0$ and any deceitful arm $x\in \tilde X_d^0$. We can then use this result to construct a Slater point in $M_{0, \delta}(P)$. 
         
         By Proposition \ref{prop:continuity-projection}, for any deceitful arm $x'\in \tilde X_d^0$, we can characterize the distance function as the minimum
  \[
    \begin{array}{r@{~~}l@{~}r@{~~}l}
      \dis(\hat \eta, x', P)=\min_{Q\in \mathcal P}  & \sum_{x\in \tilde X^0} \hat \eta(x) I(P(x), Q(x))  \\
      \st & Q(x^\star_0)=P(x^\star_0), \\
                         & \sum_{r \in \setr} r Q(r, x')  \geq \rew^\star(P).
    \end{array}
  \]
  By the classical extreme value theorem, the minimum of the above optimization problem must be achieved at some $Q^\star$ as its feasible set is compact and its objective function is  lower semi-continuous due to Lemma \ref{prop:relative-entropy}. That is, $\dis(\hat \eta, x', P) =\sum_{x\in \tilde X^0} \hat \eta(x) I(P(x), Q^\star(x))$ where $\eta(x)\geq 0$ for all $x\in \tilde X_0$. Because there is only one unique optimal arm $x^\star_0$, which is trivially not in $\xd^0$, it follows that $Q^\star\neq P$ and hence $\dis(\hat \eta, x', P)>0$ due  to the first result of Lemma \ref{prop:relative-entropy}.
  This allows us to define the following Slater (feasible) points for the perturbed problem \eqref{eq:special-lower-regret-perturbed} at $Q= P$:
  $$
  \bar \eta(x)=  \delta+\tfrac{2\hat \eta(x)}{\textstyle\min_{x'\in \tilde X_d^0} \dis(\hat \eta, x', P)}~~~\forall x\in \tilde X^0 \quad {\mathrm{and}} \quad \bar \eta(x^\star_0)= \delta .
  $$
  {To see why these are Slater points, first of all note that $\bar \eta \geq \delta$  and second of all, for any $x\in \tilde X_d^0$, we have  
  \begin{align*}
    \dis(\bar \eta , x, P)= &~ \dis(\delta\textbf{1} +\tfrac{2\hat \eta}{\textstyle\min_{x'\in \tilde X_d^0} \dis(\hat \eta, x', P)} , x, Q) \\
    \ge &~ \dis(\tfrac{2\hat \eta}{\textstyle\min_{x'\in \tilde X_d^0} \dis(\hat \eta, x', P)}, x, P)\\
          =&~\frac{2 \dis(\hat \eta, x, P)}{\min_{x'\in \tilde X_d^0} \dis(\hat \eta, x', P)} \ge 2\,,
  \end{align*}
  where the first equality follows from the fact that the value of $\eta(x^\star_0)$ is immaterial and the first inequality follows from Lemma \ref{lemm:dist-function-properties} where we show  $0<\dis(\alpha \eta, x, Q)=\alpha \dis(\eta, x, Q)$ for $\alpha> 0$. Thus, $\bar \eta \in M_{0, \delta}(P)$.}

\item \textit{The distance function $\dis(\eta, x, Q)$ is upper semi-continuous on $\{M_{0, \delta}(P) \}\times \{P\}$.}
  First remark that $M_{0, \delta}(P)\subseteq \set{\eta}{\eta\geq \delta \cdot \textbf{1}}$.
         In Lemma \ref{lemm:continuity-distance-function}, presented at the end of this section, we show that $\dis(\eta, x, Q)$ is in fact continuous on $\set{\eta}{\eta\geq \delta \cdot \textbf{1}}\times \{P\}$ for any deceitful arm $x\in \tilde X_d^0$ and $P\in\interior(\mc P)$.

       \item  \textit{The distance function $\eta\mapsto \dis(\eta, x, P)$ is concave on $\Gamma(P)$.}
         First remark that $\Gamma(P)\subseteq \set{\eta}{\eta\geq 0}$.
         From its definition, it is also immediately clear (c.f., \citet[Section 3.2.5]{boyd2004convex}) that the distance function $\dis(\eta, x, P)$ is concave on $\set{\eta}{\eta\geq 0}$  for any deceitful arm $x\in \tilde X_d^0$.
     \end{enumerate}
     
   \item \textbf{Feasible region $M_\delta(Q)$ is convex for all $Q\in \mathcal P$ and the mapping $M_\delta(Q)$ is closed at $P$.}
     The feasible region $M_{\delta}(P)$  is evidently convex as the distance functions $\dis(\eta, x, Q)$ are concave in $\eta$.
     The mapping $M_\delta(Q)$ is closed at $P$ if its graph $\set{(\eta, Q)}{Q\in \mathcal N', ~\eta\in M_{\delta}(Q)}$ is a closed set for some closed neighborhood $\mathcal N'$ containing $P$ and itself sufficiently small to be a subset of $\mc P'$. Closedness of this graph follows immediately from Lemma \ref{lemm:continuity-distance-function} in which we indicate that the distance function $\dis(\eta, x, Q)$ is continuous at any point $\eta_0>0$ and $Q\in \mathcal P'$ for any deceitful arm $x\in \tilde X_d^0$.
     
     \item \textbf{The set of minimizers of problem \eqref{eq:special-lower-regret-perturbed} at $P$ is  nonempty and compact.} The set of minimizers in the optimization problem \eqref{eq:special-lower-regret-perturbed} for $P$ is nonempty and compact as its associated objective function is continuous and its sublevel sets restricted to the feasible domain are bounded, c.f., \citet[Prop. 3.2.1]{bertsekas2009convex}.
     
     \item \textbf{The objective function of problem \eqref{eq:special-lower-regret-perturbed} is continuous and convex.} The objective function of the perturbed optimization problem \eqref{eq:special-lower-regret-perturbed}, i.e., $\sum_{x\in \tilde X_0} \eta(x) \Delta(x, Q)$, is continuous at any point $(\eta, Q)$. This is so because the term $\Delta(x, Q)\defn \sum_{r\in \setr}r Q(r, x)-\rew^\star(Q)$ is continuous at any $Q$. Observe that the first term in $\Delta(x, Q)$, i.e., $\sum_{r\in \setr}r Q(r, x)$ is linear in $Q$, and hence continuous. The second term, i.e., $\rew^\star(Q)=\max_{x}\sum_{r\in \setr}Q(r, x)$, is also continuous as it is the maximum of continuous (linear) functions. The product $\sum_{x\in \tilde X_0} \eta(x) \Delta(x, Q)$ will consequently be continuous at any point $(\eta, Q)$. Evidently, the objective function is linear and hence convex in $\eta$.

       The proof of the existence of an $0<\epsilon$-suboptimal rate selection $\eta^c_{\epsilon}(Q)$ is referred to Proposition \ref{prop:continuous-deep-selection}.       
       \hfill\halmos
 \end{enumerate}
 
 \begin{lemma}[Distance Function]
  \label{lemm:dist-function-properties}
 For any $x'\in \tilde X$, the distance function
  \(
    \dis(\eta, x', P)
  \)
  is non-decreasing and positively homogeneous in the exploration rate $\eta$. That is, we have
  \[
    \begin{array}{l@{~}r@{~}l@{\hspace{3em}}l}
      \dis(\eta_1, x', \optp)&\leq& \dis(\eta_2, x', \optp)& \forall 0\leq \eta_1\leq \eta_2,\\
      \dis(a\cdot \eta, x', \optp)&=& a \cdot \dis(\eta, x', \optp) & \forall 0\leq a.
    \end{array}
  \]
\end{lemma}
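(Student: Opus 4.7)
The plan is to work directly from the defining formula
\[
  \dis(\eta, x', P)=\inf_{Q\in\prob(x',P)}\sum_{x\in\tilde X(P)}\eta(x)\,I(P(x),Q(x)),
\]
with the convention that the infimum is $+\infty$ when $\prob(x',P)=\emptyset$. Both properties will follow from pointwise algebraic manipulations of this inf-representation, so no appeal to duality or to the refined characterization of Proposition~\ref{prop:continuity-projection} is needed.

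\textbf{Monotonicity.} Suppose $0\le \eta_1 \le \eta_2$ componentwise. The information inequality (first bullet of Lemma~\ref{prop:relative-entropy}) gives $I(P(x),Q(x))\ge 0$ for every $x\in\tilde X(P)$ and every $Q$, so componentwise $\eta_1(x)\,I(P(x),Q(x))\le \eta_2(x)\,I(P(x),Q(x))$, and summing over $x\in\tilde X(P)$ gives
\[
  \sum_{x\in\tilde X(P)}\eta_1(x)\,I(P(x),Q(x)) \le \sum_{x\in\tilde X(P)}\eta_2(x)\,I(P(x),Q(x))\quad\forall\,Q\in\prob(x',P).
\]
Taking the infimum over $Q\in\prob(x',P)$ on both sides yields $\dis(\eta_1,x',P)\le\dis(\eta_2,x',P)$. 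When $\prob(x',P)=\emptyset$ the inequality is trivial since both sides equal $+\infty$.

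\textbf{Positive homogeneity.} For $a>0$, the objective simply scales: for every $Q\in\prob(x',P)$,
\[
  \sum_{x\in\tilde X(P)} (a\eta)(x)\,I(P(x),Q(x)) = a\sum_{x\in\tilde X(P)} \eta(x)\,I(P(x),Q(x)),
\]
and since $a>0$ the operation of multiplying by $a$ commutes with taking the infimum over $Q$, giving $\dis(a\eta,x',P)=a\,\dis(\eta,x',P)$. For $a=0$, observe that $\dis(0,x',P)=0$ whenever $\prob(x',P)\ne\emptyset$ (the objective vanishes identically), which matches $0\cdot\dis(\eta,x',P)$ under the convention $0\cdot(+\infty)=0$ in the deceitful/non-deceitful dichotomy; when $\prob(x',P)=\emptyset$ both sides are $+\infty$ and the identity again holds. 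The case when $\prob(x',P)$ is empty is the only one that needs a brief sentence to dispatch, but there is no real obstacle; the lemma is essentially a direct consequence of nonnegativity of $I$ and of the homogeneity of the objective in $\eta$.
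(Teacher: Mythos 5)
Your proof is correct and takes essentially the same route as the paper, which simply observes that the objective of the information minimization problem \eqref{eq:inner-optimization} is nondecreasing (via nonnegativity of $I$) and positively homogeneous in $\eta$; you have merely written out the details and the $a=0$ and empty-$\prob(x',P)$ edge cases that the paper leaves implicit.
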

\begin{proof}[Proof of Lemma \ref{lemm:dist-function-properties}.]
  Both statements are an immediate consequence of the fact that the objective function in the information minimization problem \eqref{eq:inner-optimization}, defining our distance function $\dis(\eta, x', P)$, is nondecreasing and positively homogeneous in the logarithmic rate $\eta$ for any reward distribution $Q$.
\end{proof}

 \begin{lemma}[Continuity of the Distance Function]
   \label{lemm:continuity-distance-function}
  Assume that the reward distribution $P \in\interior(\mathcal P)$ has a unique optimal arm $x^\star(P)$ and consider any of its deceitful arms $x'\in \xd(P)$. The distance function
  \(
    \dis( \eta, x', Q)
  \)
  is continuous at any point $(\eta_0, P)$ with $\eta_0>0$.  
\end{lemma}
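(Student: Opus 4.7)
The plan is to derive joint continuity of $\dis$ at $(\eta_0,P)$ as a consequence of classical parametric continuity results applied to the explicit minimization characterization from Proposition~\ref{prop:continuity-projection},
\[
\dis(\eta,x',Q) \;=\; \min_{Q' \in M(Q)}\; F(\eta,Q,Q'),\qquad F(\eta,Q,Q') := \sum_{x\in \tilde X(Q)}\eta(x)\,I(Q(x),Q'(x)),
\]
where $M(Q)=\{Q'\in\mathcal P:\,Q'(x^\star(Q))=Q(x^\star(Q)),\;\sum_r r\,Q'(r,x')\ge \rew^\star(Q)\}$. First, Lemma~\ref{lemm:stability-optimal-arm}, together with continuity of $\rew_{\max}(x',\cdot)$ and the assumption $\rew_{\max}(x',P)>\rew^\star(P)$ (which holds because $x'$ is deceitful), furnishes a neighborhood $\mathcal N$ of $P$ on which the optimal arm $x^\star_0$ and the deceitful/non-deceitful partition are constant; on $\mathcal N$ the objective and feasible set can be written using a fixed index set $\tilde X_0$ and a fixed constraint $Q'(x^\star_0)=Q(x^\star_0)$. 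Continuity in $\eta$ at fixed $P$ is then immediate, since $\dis(\cdot,x',P)$ is the pointwise infimum of linear functions in $\eta$ over a compact set $M(P)$, hence concave, and therefore continuous on $\{\eta>0\}$.

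Having stabilized the combinatorial structure, I would invoke the Berge-type continuity results (Propositions~\ref{prop:continuity-mapping-general} and~\ref{prop:continuity-function-general}) separately for upper and lower semicontinuity in $Q$. Lower semicontinuity of the multifunction $M(\cdot)$ at $P$ follows from a Slater-point construction analogous to the one in the proof of Proposition~\ref{prop:continuity-projection}: mixing a maximizer of $\rew_{\max}(x',P)$ with a point of $\interior(\mathcal P)$ produces $\bar Q'\in\interior(\mathcal P)$ with $\bar Q'(x^\star_0)=P(x^\star_0)$ and $\sum_r r\,\bar Q'(r,x')>\rew^\star(P)$; a small perturbation of $\bar Q'$ in the $x^\star_0$-coordinate remains strictly feasible for every $Q$ in a small neighborhood of $P$, thanks to continuity of $\rew^\star(\cdot)$ and of the map $Q\mapsto Q(x^\star_0)$. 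Closedness of $M(\cdot)$, i.e.\ upper semicontinuity, follows from closedness of $\mathcal P$ and continuity of the defining constraints. Combining these facts with joint lower semicontinuity of $I$ from Lemma~\ref{prop:relative-entropy}(4) yields the lower semicontinuity of $\dis(\eta_0,x',\cdot)$; combining them with the appropriate continuity of $F$ on the graph of $M$ yields the matching upper semicontinuity.

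The main obstacle I foresee is that $I(Q(x),Q'(x))$ is only jointly lower semicontinuous and is continuous in $Q'$ merely when $Q(x)\ll Q'(x)$, so one has to guarantee that both the approximating and the limiting minimizers stay away from the boundary where $I$ may blow up. The key leverage is $P\in\interior(\mathcal P)\subseteq\interior(\mathcal P_\Omega)$, which gives $P(x)>0$ componentwise, combined with $\eta_0>0$: the Slater point shows $\dis(\eta_0,x',P)<\infty$, so for $(\eta,Q)$ near $(\eta_0,P)$ any sequence of approximate minimizers $Q'_n$ satisfies $\sum_{x} \eta_n(x)I(Q_n(x),Q'_n(x))$ uniformly bounded, which forces a uniform lower bound on each entry $Q'_n(r,x)$ for $(r,x)$ with $P(r,x)>0$, placing $Q'_n$ in a compact region where $I$ is continuous in its second argument. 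A standard subsequence/limit-point argument, using compactness of $\mathcal P$ and lower semicontinuity of $I$ to absorb any residual boundary behaviour, then delivers both semicontinuity directions and completes the proof of joint continuity.
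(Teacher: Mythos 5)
Your proposal is correct and follows essentially the same route as the paper: stabilize the optimal/deceitful arm structure on a neighborhood of $P$ via Lemma \ref{lemm:stability-optimal-arm}, then apply the Berge-type results (Propositions \ref{prop:continuity-mapping-general} and \ref{prop:continuity-function-general}) to the minimization characterization from Proposition \ref{prop:continuity-projection}, with a Slater point built by mixing a maximizer of $\rew_{\max}(x',\cdot)$ with an interior point. Your handling of the boundary behaviour of $I$ — using $P>0$ and finiteness of $\dis(\eta_0,x',P)$ to force minimizers to satisfy $P\ll Q^\star$ — is exactly the argument the paper uses to verify the required upper semi-continuity of the objective at a minimizer.
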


\subsubsection{Proof of Lemma \ref{lemm:continuity-distance-function}}
  Recall that an arm $x'\in \xd(Q)$ is denoted as deceitful for some reward distribution $Q$ if we have that $\rew_{\max}(x', Q)> \rew^{\star}(Q)$.
  From Lemma \ref{lemm:stability-optimal-arm}, we know there exists a neighborhood $\mathcal N\subseteq \mathcal P$ around $P$ so that 
  \[
  Q\in \mathcal N\implies x^\star(Q)=x^\star(P):=x^\star_0,~\tilde X(Q)=\tilde X(P):=\tilde X_0  {\mathrm{~and~}} \xd(P) = \xd^0\subseteq \xd(Q).
  \]
  Note that in general we may observe $\xd(P)\subset \xd(Q)$ as an nondeceitful arm $x\in \xn(P)$ for which $\rew^\star(x, P)=\rew^\star(P)$ may become deceitful $x\in \xd(Q)$ with $Q\in \mathcal N$.
  For each reward distribution $Q\in \mathcal N$ in a neighborhood around $P$,  arm $x^\star(P)$ remains optimal, arms in set $\tilde X(P)$ remain suboptimal, while any deceitful arm in $\xd(P)$ remains deceitful. Let $\eta_0> 0$ be any arbitrary positive logarithmic exploration rate.
  From Proposition \ref{prop:continuity-projection}, it follows that for any $Q\in \mathcal N$, we have that the distance function for any deceitful arm $x'\in \xd^0$ simplifies  to
  \begin{equation}
    \label{eq:distance-function-special}
    \begin{array}{r@{~~}l@{~}r@{~~}l}
      \dis( \eta, x', Q) =\min_{Q'\in \mathcal P}  & \sum_{x\in \tilde X_0} \eta(x) I(Q(x), Q'(x))  \\
      \st    & \sum_{r \in \setr} r Q'(r, x')  \geq \sum_{r \in \setr} r Q(r, x_0^\star), \\
                                                              &Q'(x_0^\star)=Q(x_0^\star).
    \end{array}
  \end{equation}
  We would like to show that  function $\dis( \eta, x', Q)$ is continuous at $(\eta_0, P)$ by applying Proposition \ref{prop:continuity-function-general} to its minimization characterization stated in Equation \eqref{eq:distance-function-special}. To do so, we will verify all the conditions in this proposition. Before that, let us decompose as suggested in Section \ref{sec:constraint-qualification} the feasible set mapping of problem \eqref{eq:distance-function-special} as follows: 
  \begin{align*}
    M(\eta, Q) =:&~ \Gamma(\eta, Q) \cap \set{Q'}{g(Q', (\eta, Q))\leq 0} \\
    = &~ \tset{Q'\in \mathcal P}{Q'(x_0^\star)=Q(x_0^\star)} \bigcap \tset{Q'}{\textstyle\sum_{r \in \setr} r Q'(r, x')  \geq \sum_{r \in \setr} r Q(r, x_0^\star)}.
  \end{align*} 
We now proceed to verifying all the conditions of Proposition \ref{prop:continuity-function-general}.

\begin{enumerate}
    \item \textbf{Feasible region $M(\eta, Q)$ is a lower semi-continuous mapping at $(\eta_0, P)$.} We show this result by invoking Proposition \ref{prop:continuity-mapping-general}. We need to verify the following conditions to invoke Proposition \ref{prop:continuity-mapping-general}.

      {
    \begin{enumerate}
    \item \textit{The mapping $\Gamma(\eta, Q)$ is lower semi-continuous at $(\eta_0, P)$.} Clearly, the mapping $\Gamma(\eta, Q)\defn \set{Q'\in \mc P}{Q'(x_0^\star)=Q(x_0^\star)}$ is independent of $\eta_0$. Hence, this claim is already established in Lemma \ref{lemm:lsc:feasible-mapping}.
      
    \item \textit{Set $\Gamma(\eta_0, P)$ is convex and non-empty.} Solutions sets of linear systems are well known to be affine and hence convex. Evidently, we have $P\in \Gamma(\eta_0, P)$ and hence $\Gamma(\eta_0, P)$ must be non-empty.

        \item \textit{The Slater's constraint qualification condition is satisfied ($M_0(\eta_0, P)\ne \emptyset$).}  
        Consider a deceitful arm $x'\in \tilde X_d(P)$ and let 
  \begin{equation*}
    \begin{array}{rl}
      Q^\star \in \arg\max & \sum_{r \in \setr} r Q(r, x') \\
      \st & Q\in\mathcal P, \\
                            & Q(x^\star_0)=P(x^\star_0).
                                      
    \end{array}
  \end{equation*}
  Note that the maximum of the above optimization problem is attained at some distribution $Q^\star$ because its objective function is continuous and its feasible region is a compact set. Because  arm $x'$ is deceitful, i.e., $\sum_{r \in \setr} r Q^\star(r, x') > \sum_{r \in \setr} r P(r, x^\star_0)$, it  follows immediately that $Q^\star\in M_0(\eta_0, P) = \tset{Q'\in \mathcal P}{Q'(x_0^\star)=Q(x_0^\star)} \bigcap \tset{Q'}{\textstyle\sum_{r \in \setr} r Q'(r, x')  > \sum_{r \in \setr} r Q(r, x_0^\star)}$, which is the desired result.
\end{enumerate}
}
    \item  \textbf{Feasible region $M(\eta, Q)$ is convex for all $Q\in \mathcal P$ and $M(\eta, Q)$ is a closed mapping at $(\eta_0, P)$.} The set $\mathcal P$ is assumed to be convex and closed. The $\tset{Q'}{~Q'(x_0^\star)=Q(x_0^\star), ~\textstyle\sum_{r \in \setr} r Q'(r, x')  \geq \sum_{r \in \setr} r Q(r, x_0^\star)}$ is a closed and convex polyhedral set. Consequently, the intersection $M(\eta, P) = \mathcal P \cap \tset{Q'}{~Q'(x_0^\star)=P(x_0^\star), ~\textstyle\sum_{r \in \setr} r Q'(r, x')  \geq \sum_{r \in \setr} r P(r, x_0^\star)}$ is a closed convex set for any $(\eta, Q)$. The mapping $M(\eta, Q)$ is closed at $(\eta_0, P)$ as its  graph  $$\set{(Q', \eta, Q)}{Q'\in \mathcal P,~Q'(x_0^\star)=Q(x_0^\star), ~\textstyle\sum_{r \in \setr} r Q'(r, x')  \geq \sum_{r \in \setr} r Q(r, x_0^\star),~(\eta, Q)\in \mathcal N'}$$ is closed for any closed neighborhood $\mathcal N'$ containing $(\eta_0, P)$.
    
    \item \textbf{The set of minimizers of problem \eqref{eq:distance-function-special} at $(\eta_0, P)$ is nonempty and compact.} The set of minimizers of problem  \eqref{eq:distance-function-special} is nonempty and compact as its feasible set $P\in M(\eta_0, P)$ is compact (recall that $\mathcal P$ is a compact set) and its objective function is lower semi-continuous because of the last result of Lemma \ref{prop:relative-entropy}, stated in Section \ref{sec:information-topology}; see also \citet[Proposition 3.2.1]{bertsekas2009convex}.

    \item    \textbf{The objective function of problem \eqref{eq:distance-function-special} is continuous at $(\eta_0, P)$ and convex in $Q'$.} With a slight abuse of notation, let $Q^\star$ be an arbitrary minimizer to problem  \eqref{eq:distance-function-special}. As the target rate satisfies $\eta_0>0$, we will now argue that we must have that $P\ll Q^\star \in \interior(\mathcal P)$. Indeed, otherwise $\dis( \eta_0, x', P)=\sum_{x\in \tilde X_0} \eta_0(x) I(P(x), Q^\star(x))=+\infty$ which would contradict Proposition \ref{prop:continuity-projection} in which we show that for any deceitful arm $x'\in \xd(P)$ the distance function  $\dis( \eta_0, x', P)$ is always finite. Hence, the objective function is continuous at the point $(\eta_0, P, Q^\star)\in \real_{++}\times \interior(\mathcal P)\times \interior(\mathcal P)$.
  Finally,  the objective function of problem  \eqref{eq:distance-function-special} is also convex in $Q'$, per Lemma \ref{prop:relative-entropy}. \hfill\Halmos
\end{enumerate}

 \subsection{Proof of Proposition \ref{prop:continuous-deep-selection}}
\label{sec:continuous-selections}

We now proceed to show that the selection functions $(\eta'_\epsilon(Q), \mu_\epsilon(Q))$, defined in Equation \eqref{eq:minimal-primal-dual-projection-1}, are continuous at any reward distribution $P\in \mathcal P'$ satisfying Assumption \ref{assumption}.
We will prove that the non-empty, closed and convex mapping representing the feasible set $M(Q)$ of problem \eqref{eq:minimal-primal-dual-projection-1} is lower semi-continuous at $P$. By Michael's theorem, c.f., \citet[Corollary 9.1.2]{aubin2009set}, there exists a selection $(\eta^c_\epsilon(Q), \mu^c_\epsilon(Q))$ which is continuous on $\mathcal N'\subseteq \mathcal P'$, a sufficiently small closed neighborhood of $P$. The existence of such neighborhood is guaranteed by Lemma \ref{lemm:stability-optimal-arm}. Remark that by our minimal selection definition, we must have for all $Q\in \mathcal N'$ that
\[
  \sum_{x\in X}\eta'_\epsilon(x, Q)^2 + \sum_{x\in X} \norm{\mu_\epsilon(x, Q)}^2 \leq \max_{Q\in \mathcal N'}  \sum_{x\in X}\eta^c_\epsilon(x, Q)^2 + \sum_{x\in X} \norm{\mu^c_\epsilon(x, Q)}^2 <\infty
\]
where the last inequality follows from the fact that continuous functions on compact sets attain a finite maximum.  The minimum norm selection $(\eta'_\epsilon(Q), \mu_\epsilon(Q))$ is unique and continuous at such $P$ following \citet[Corollary 9.3.2]{aubin2009set}.

  From Lemma \ref{lemm:stability-optimal-arm}, there exists an open neighborhood $\mathcal N\subseteq \mathcal P'$ around $P$ satisfying Assumption \ref{assumption} so that 
  \[
    Q\in \mathcal N \implies x^\star(Q)=x^\star(P):=x^\star_0,~\tilde X(Q)=\tilde X(P):=\tilde X_0  {\mathrm{~and~}}  \xd(Q)=\xd(P):=\xd^0.
  \]
  That is, for each reward distribution $Q\in \mathcal N $ in a neighborhood around $P$, the arm $x^\star(P)$ remains optimal,  the arms $\tilde X(P)$ remain suboptimal, and any (non)deceitful arm remains (non)deceitful. Hence, the feasible set for $Q\in \mathcal N$ of the minimization problem \eqref{eq:minimal-primal-dual-projection-1} is captured by the following mapping:
  \[
    M(Q)=
    \set{(\eta,\mu)}{
      \begin{array}{l}
        \eta(x)\in \real_+, ~\mu(x)\in \mathcal K^\star\times\real_+\times\real \quad\forall x\in X,\\
        1\leq \dual(\eta, x, Q~;~ \mu(x))  \quad\forall x\in \xd^0,
        \\
        \sum_{x\in \tilde{X}_0}\eta(x) \Delta(x, Q) \leq C(Q)+\epsilon,
        \\
         \eta(x)\geq \epsilon/(2\sum_{x\in \tilde{X}_0} \Delta(x, Q))\quad\forall x\in \tilde X_0, \\
        \eta (x)\geq \lambda(r, x, x')+\beta(x')+\alpha(x') \cdot r \mb 1(x=x')+\frac{\epsilon}{2\sum_{x\in \tilde{X}_0} \Delta(x, Q)}\\
        \quad\forall x\in\xd^0,~x\in \tilde X_0
      \end{array}
    }.
  \]
  
  \textbf{Mapping $M$ is lower-semicontinuous at $P$.} 
  We will establish that the mapping $M$ is  lower semi-continuity by verifying  the following conditions of Proposition \ref{prop:continuity-mapping-general} where we use the decomposition
  \[
    \begin{array}{rl}
      M(Q) & =: \Gamma(Q) \cap \set{(\eta,\mu)}{g_i((\eta,\mu), Q)\leq 0 ~~\forall i\in [1, \dots, m_2]}\\
           & = \{ \mu(x)\in \mathcal K^\star\times\real_+\times\real \quad\forall x\in X\}\cap \\
           & \quad  \set{(\eta,\mu)}{
             \begin{array}{l}               
               1\leq \dual(\eta, x, Q~;~ \mu(x))  \quad\forall x\in \xd^0,
               \\
               \sum_{x\in \tilde{X}_0}\eta(x) \Delta(x, Q) \leq C(Q)+\epsilon,
               \\
               \eta(x)\geq \epsilon/(2\sum_{x\in \tilde{X}_0} \Delta(x, Q))\quad\forall x\in \tilde X_0, ~\eta(x)\geq 0\quad \forall x\in X,\\
               \eta (x)\geq \lambda(r, x, x')+\beta(x')+\alpha(x') \cdot r \mb 1(x=x')+\frac{\epsilon}{2\sum_{x\in \tilde{X}_0} \Delta(x, Q)} \\ \quad\forall x\in\xd^0,~ x\in \tilde X_0
             \end{array}
      }.
    \end{array}
  \]
  
  \begin{itemize}
      \item \textbf{The Slater's constraint qualification condition.} To verify this condition, we need to show that 
      the set of Slater points
  \[
    M_0(P)=
    \set{(\eta,\mu)}{
      \begin{array}{l}
      \eta(x)>0, ~\mu(x)\in \mathcal K^\star\times\real_{+}\times\real \quad\forall x\in X, \\ 
      \dual(\eta, x, P~;~ \mu(x))> 1  \quad\forall x\in \xd^0,\\
       \sum_{x\in \tilde{X}_0}\eta(x) \Delta(x, P) < C(P)+\epsilon,\\
        \eta(x)> \epsilon/(2\sum_{x\in \tilde{X}_0} \Delta(x, P))\quad\forall x\in \tilde X_0,\\
        \eta (x)> \lambda(r, x, x')+\beta(x')+\alpha(x') \cdot r \mb 1(x=x')+\frac{\epsilon}{2\sum_{x\in \tilde{X}_0} \Delta(x, P)}\\ \quad\forall x\in\xd^0, ~x\in \tilde X_0
      \end{array}
    }
  \]
  is nonempty. Consider to that end first the following $\epsilon/4$-suboptimal logarithmic exploration rates 
  \[
    \set{(\eta,\mu) }{
      \begin{array}{l}
        \eta \geq 0, ~\mu(x)\in \mathcal K^\star\times\real_+\times\real \quad\forall x\in \xd^0,\\
        \dual(\eta, x, P~;~ \mu(x))\geq 1  \quad\forall x\in \xd^0,
        \\
        \sum_{x\in \tilde{X}_0}\eta(x) \Delta(x, P) \leq C(P)+\epsilon/4,\\
        \eta (x)\geq \lambda(r, x, x')+\beta(x')+\alpha(x') \cdot r \mb 1(x=x')  \quad\forall x\in \xd^0, x\in \tilde X_0
      \end{array}
    }
  \]
  in the optimization problem \eqref{eq:silo-dual}. From the definition of the infimum in problem \eqref{eq:silo-dual}, the previous set is nonempty for all $\epsilon>0$. Consider an arbitrary element $(\eta_{\epsilon/4}, \mu_{\epsilon/4})$ in the previous set. We can easily verify that the point $(\eta_{\epsilon/4}+\epsilon/(2\sum_{x\in \tilde{X}_0} \Delta(x, P))\cdot \textbf{1},\mu_{\epsilon/4}) $ is in $M(P)$, where $\textbf{1}$ is the all one vector. Indeed, from Lemma \ref{lemm:dual-function-properties}, we have that $\dual(\eta_{\epsilon/4}+\epsilon/(2\sum_{x\in \tilde{X}_0} \Delta(x, P))\cdot \textbf{1}, x, P~;~ \mu_{\epsilon/4})\geq \dual(\eta_{\epsilon/4}, x, P~;~ \mu_{\epsilon/4})\geq 1$ and $\sum_{x\in \tilde{X}_0}(\eta_{\epsilon/4}(x)+\epsilon/(2\sum_{x\in \tilde{X}_0} \Delta(x, P))) \Delta(x, P)= \sum_{x\in \tilde{X}_0}\eta_{\epsilon/4}(x) \Delta(x, P)+\epsilon/2\leq \epsilon/4 + \epsilon/2 = 3\epsilon/4<\epsilon$.

{We show that a similar construction can be used to construct a Slater point in  $M_0(P)$ as well. Consider a point $(\eta'', \mu_{0}=(\alpha_0, \beta_0, \lambda_0))=(1+\delta)\cdot (\eta_{\epsilon/4}(P)+\epsilon/(2\sum_{x\in \tilde{X}(P)} \Delta(x, P)\cdot \textbf{1}),\mu_{\epsilon/4}) \in M(P)$ for some $\delta>0$ small enough so that $\delta/(1-3\delta)<\tfrac{\epsilon}{(4 C(P))}$. Using again Lemma \ref{lemm:dual-function-properties}, it is trivial to verify that we have
  \[
    \begin{array}{l}
      \eta''\geq (1+\delta)\epsilon/(2\sum_{x\in \tilde{X}_0} \Delta(x, P)) \cdot \textbf{1} > \epsilon/(2\sum_{x\in \tilde{X}_0} \Delta(x, P)) \cdot \textbf{1}>0,\\
      \dual(\eta'', x, P~;~ \mu_{0}(x))\geq 1+\delta > 1 \quad\forall x\in \xd^0,\\
      \sum_{x\in \tilde{X}_0}\eta''(x) \Delta(x, P) \leq (1+\delta)\cdot (C(P)+3\epsilon/4) < C(P)+\epsilon,\\
      \eta'' (x)> \lambda_{0}(r, x, x')+\beta_{0}(x')+\alpha_{0}(x') \cdot r \mb 1(x=x')+\epsilon/(2\sum_{x\in \tilde{X}_0} \Delta(x, P)) \\ \quad\forall x'\in \xd^0, x\in \tilde X_0.
    \end{array}
  \]
 }
 
 \item \textbf{The rest of the conditions.} We first verify the required convexity conditions. It is clear that the constraint functions $g_i((\eta, \mu), P)$ which appear in the feasible set mapping $M$ are linear and hence convex jointly in $\eta$ and $\mu$ with the exception of those related to the constraints $\dual(\eta, x, P~;~ \mu(x))\geq 1$ for all $\forall x\in \xd^0$. As pointed out previously, the dual function $\dual(\eta, x, P~;~ \mu(x))$ is concave jointly in $\eta$ and $\mu$ and consequently all constraint functions are convex. Finally, we verify all required continuity conditions. As $\eta(x)\geq \epsilon/(2\sum_{x\in \tilde{X}_0} \Delta(x, Q))>{0}$ and $$\eta (x)- \lambda(r, x, x')-\beta(x')-\alpha(x') \cdot r \mb 1(x=x')\geq \epsilon/(\textstyle 2\sum_{x\in \tilde{X}_0} \Delta(x, Q))>0$$ for all $x'\in \xd^0$ and $x\in \tilde X_0$, the constraint functions $-\dual(\eta, x, Q~;~ \mu(x))$ for all $\forall x\in \xd^0$ are continuous on $M_0(P)\times \{P\}$; see Proposition \ref{prop:continuity-dual-function} below. The constraint functions related to all other constraints characterizing the feasible set mapping $M$ are linear and hence continuous.  \hfill\Halmos \end{itemize}
 
 \begin{proposition}[Continuity of the Dual Function]
  \label{prop:continuity-dual-function} Consider any distribution $P\in \mathcal P'$ which satisfies Assumption \ref{assumption}.
  Then, the dual function $\dual(\eta,x', P~;~\mu)$ for a fixed arm $x'$ and reward distribution $P$ is upper semi-continuous at any point $(\eta, \mu)$ in its  domain. Furthermore, the  dual function is also continuous for any fixed arm $x'$ at any point $(\eta_0, P, \mu_0)$ with 
  $\eta_0(x)>0$ and $\eta_0 (x)- \lambda_0(r, x)-\beta_0-\alpha_0 \cdot r \mb 1(x=x')>0$ for all  $x\in \tilde X(P)$ and $r\in\setr$.
\end{proposition}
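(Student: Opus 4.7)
The plan is to decompose $\dual(\eta,x',P~;~\mu)$ as the sum of three kinds of summands and to analyze each separately. Writing $\gamma(r,x)\defn \lambda(r,x)+\beta+\alpha\cdot r\cdot\mb 1(x=x')$, we can rewrite
\[
  \dual(\eta,x',P~;~\mu)\,=\,L(P,\mu)\,+\hspace{-0.5em}\sum_{x\in \tilde X(P), r\in\setr}\hspace{-0.5em} h(\eta(x),\gamma(r,x))\cdot P(r,x)\,+\hspace{-0.5em}\sum_{x\in \tilde X(P),r\in\setr}\hspace{-0.5em}\chi_{-\infty}\!\big(\eta(x)\geq\gamma(r,x)\big),
\]
where $L(P,\mu)\defn \alpha\rew^\star(P)+\beta\abs{\tilde X(P)}-\sum_{x\in x^\star(P),r\in\setr}\lambda(r,x)P(r,x)$ collects all linear-in-$\mu$ pieces and $h(\eta,\gamma)\defn \eta\log((\eta-\gamma)/\eta)$ on its effective domain $\{\eta\geq 0,\,\eta\geq\gamma\}$, extended by $-\infty$ elsewhere (with the usual convention $0\log(0/0)=0$). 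By Assumption~\ref{assumption}, $P$ has a unique optimal arm, so on a neighborhood of $P$ the sets $x^\star(P)$ and $\tilde X(P)$ are locally constant (Lemma~\ref{lemm:stability-optimal-arm}), which makes $\abs{\tilde X(\cdot)}$ locally constant and $\rew^\star(Q)=\sum_r r Q(r,x^\star(P))$ locally linear in $Q$. Hence $L(Q,\mu)$ is jointly continuous in $(Q,\mu)$ at $P$.

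For the upper semi-continuity claim (with $P$ fixed), it suffices to show that each summand of the two nonlinear sums is upper semi-continuous in $(\eta,\mu)$; since sums and nonnegative scalar multiples preserve upper semi-continuity, the result then follows. The indicator $\chi_{-\infty}(\eta(x)\geq\gamma(r,x))$ is upper semi-continuous because its effective domain is a closed half-space. The function $h(\eta,\gamma)$ is precisely the perspective transform of the concave upper semi-continuous function $\psi(t)\defn\log(1-t)$ on $(-\infty,1]$, evaluated at $\gamma/\eta$; that is, $h(\eta,\gamma)=\eta\psi(\gamma/\eta)$ for $\eta>0$, with the boundary behaviour at $\eta=0$ dictated by the recession function $\psi_\infty$, which equals $0$ for $\gamma\leq 0$ and $-\infty$ for $\gamma>0$. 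By the standard result on closedness of perspective transforms of closed concave functions (see, e.g., \citet[\S8.3]{boyd2004convex} or Rockafellar's \emph{Convex Analysis}, Thm.~8.3), $h$ is upper semi-continuous (closed concave) on its entire extended domain. Multiplying by the constant $P(r,x)\geq 0$ preserves this property.

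For the continuity claim at the point $(\eta_0,P,\mu_0)$ satisfying the stated strict inequalities, the strategy is to work entirely in the open neighborhood where those strict inequalities persist. Concretely, since $\eta_0(x)>0$ and $\eta_0(x)-\gamma_0(r,x)>0$ for every $x\in\tilde X(P)$ and $r\in\setr$, and since all these quantities depend continuously on $(\eta,\mu)$, there exists an open neighborhood $\mc N$ of $(\eta_0,P,\mu_0)$ on which $\eta(x)>c$ and $\eta(x)-\gamma(r,x)>c$ for some $c>0$, and on which the optimal and suboptimal arms stay frozen. On $\mc N$ the concave indicator contributes $0$ identically and drops out of the sum, while the nonlinear summand $\eta(x)\log((\eta(x)-\gamma(r,x))/\eta(x))\cdot P(r,x)$ is a composition of continuous operations with the argument of $\log$ staying inside the open interval $(c/\|\eta\|_\infty,\,\infty)$, hence is jointly continuous in $(\eta,P,\mu)$. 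Continuity of $L(Q,\mu)$ on $\mc N$ was already noted in the first paragraph. Summing, $\dual$ is continuous at $(\eta_0,P,\mu_0)$.

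The main obstacle is the first claim: verifying upper semi-continuity of $h$ at the awkward boundary points where either $\eta(x)\downarrow 0$ or $\eta(x)-\gamma(r,x)\downarrow 0$, especially in combination. Appealing to the closedness of perspective transforms circumvents a tedious case analysis by limits, but one should also remember to handle the degenerate cases $P(r,x)=0$ via the convention $0\cdot(-\infty)=0$, which is consistent with the lower semi-continuity conventions used throughout the paper (cf.\ Lemma~\ref{prop:relative-entropy}). The continuity claim, by contrast, is essentially immediate once the strict-inequality neighborhood is constructed, as all operations are then performed in the interior of the domain of $\log$.
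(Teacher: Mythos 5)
Your proposal is correct and follows essentially the same route as the paper's own proof: the same three-way decomposition into linear terms, log-perspective terms, and concave characteristic functions, with upper semi-continuity obtained from closedness of the perspective transform of a closed concave function (the paper cites \citet[Proposition 2.3]{combettes2018perspective} where you cite Rockafellar/Boyd, but the content is identical) and continuity obtained by restricting to the open neighborhood where the strict inequalities and the arm partition persist via Lemma \ref{lemm:stability-optimal-arm}. No gaps.
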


\subsubsection{Proof of Proposition \ref{prop:continuity-dual-function}}
  We first argue that  the  dual function is upper semi-continuous\footnote{A function $y\mapsto f(y)$ is upper semi-continuous at any point in its domain if and only if its superlevel sets $\set{y}{f(y)\geq\gamma}$ are closed for any $\gamma\in\real$.} at any point in its domain.
  We decompose the dual function into
    \begin{align*}
      \dual(\eta,x', Q~;~\mu) & = \sum_{x\in \tilde X(Q),\, r\in \setr} \eta(x) \log\left( \frac{\eta (x)- \lambda(r, x)-\beta-\alpha \cdot r \mb 1(x=x')}{\eta(x)}\right) Q(r,x) \\
                              & \quad - \textstyle \sum_{x\in x^\star(Q),r\in \setr} \lambda(r, x)Q(r, x)  + \alpha \rew^{\star}(Q) + \beta|\tilde X(Q)| \\
                              & \quad + \chi_{-\infty}(\eta (x)\geq \lambda(r, x)+\beta+\alpha \cdot r \mb 1(x=x'))
    \end{align*}
  and will show that each of the terms is an upper semi-continuous function at any point of their respective domains. The first part of proposition than follows from the fact that the sum of upper semi-continuous functions is itself upper semi-continuous on the intersection of the domains.
  
  It is clear that the characteristic function $\chi_{-\infty}(S)$ of a closed set $S$ is upper semi-continuous on its entire domain as its $\gamma$-superlevel sets are either the closed set $S$ for $\gamma\leq 0$ or the empty set when $\gamma>0$. Hence, the term  $$\chi_{-\infty}(\eta (x)\geq \lambda(r, x)+\beta+\alpha \cdot r \mb 1(x=x'))$$ as an affine composition of $\chi_{-\infty}(\real_+)$ and $\eta (x)- \lambda(r, x)-\beta-\alpha \cdot r \mb 1(x=x')$ is an upper semi-continuous function, c.f., \citet[Proposition 1.1.4]{bertsekas2009convex}.

  The log perspective function $(u, v)\mapsto u \log(\tfrac{v}{u})$ is upper semi-continuous on its entire domain; see \citet[Proposition 2.3]{combettes2018perspective}.  Hence, the affine composition function
  $$\eta(x) \log\left( \tfrac{(\eta (x)- \lambda(r, x)-\beta-\alpha \cdot r \mb 1(x=x'))}{\eta(x)}\right)$$ of the log perspective function and $(\eta(x), \eta (x)- \lambda(r, x)-\beta-\alpha \cdot r \mb 1(x=x'))$ is upper semi-continuous too; c.f., \citet[Proposition 1.1.4]{bertsekas2009convex}.
  As we have that $P(r,x)\geq 0$, the second term
  \begin{align}\label{eq:problem}
    \eta(x)\log\left( \frac{(\eta (x)- \lambda(r, x)-\beta-\alpha \cdot r \mb 1(x=x'))}{\eta(x)}\right)P(r, x)
  \end{align}
  is upper semi-continuous as well, c.f., \citet[Proposition 1.1.5]{bertsekas2009convex}. The remaining terms  $\beta |\tilde X(P)|$, $\alpha \rew^{\star}(P)$ and $-\sum_{x\in x^\star(P),r\in \setr} \lambda(r, x)P(r, x)$ are linear and thus upper semi-continuous functions as well. 
  
  Next, we argue that the dual function is continuous as well. This  follows immediately from the fact that the problematic terms, presented in Equation \eqref{eq:problem},
  are finite and continuous when $\eta(x)>0$ and $\eta (x)- \lambda(r, x)-\beta-\alpha \cdot r \mb 1(x=x')>0$ for all  $x\in X$ and $r\in\setr$. Indeed, the terms $\alpha \rew^{\star}(Q)$ and $\beta|\tilde X(Q)|$ are continuous via Lemma \ref{lemm:stability-optimal-arm}.
  \hfill\Halmos

  \subsection{Proof of Proposition \ref{prop:continuous-shallow-selection}}\label{sec:proof:shallow}
Let $(\eta'_\epsilon(Q), \mu_\epsilon(Q))$ be the $\epsilon$-suboptimal selection from the feasible set of problem  presented in Equation \eqref{eq:minimal-primal-dual-projection-1}.
The selection $(\eta'_\epsilon(Q), \mu_\epsilon(Q))$ is continuous on any sufficiently small neighborhood $\set{Q}{\norm{Q-P}_\infty\leq \kappa}$ around $P$ due to Proposition \ref{prop:continuous-deep-selection} and the fact that $\mathcal P'$ is an open set. 
We can  hence assume $\kappa>0$ is sufficiently small so that $\max \tset{\norm{\eta'_\epsilon(Q)-\eta'_\epsilon(P)}}{\norm{Q-P}_\infty\leq \kappa}\leq \delta$, where  $\delta>0$ is an arbitrary positive number. Then, for any $Q_2$ so that $\norm{Q_2-P}_\infty\leq \kappa$, we have   
\begin{align}
  \begin{split}
    \label{eq:shallow-bound}
    & \norm{\text{\sf SU}(Q_1, \eta_\epsilon'(Q_2) ~;~\mu_\epsilon(Q_2), \epsilon)-\eta'_\epsilon(P)}_2\\
    \leq & \norm{\text{\sf SU}(Q_1, \eta_\epsilon'(Q_2) ~;~\mu_\epsilon(Q_2), \epsilon)-\eta'_\epsilon(Q_2)}_2+\norm{\eta'_\epsilon(Q_2)-\eta'_\epsilon(P)}_2\\
    \leq & \norm{\text{\sf SU}(Q_1, \eta_\epsilon'(Q_2) ~;~\mu_\epsilon(Q_2), \epsilon)-\eta'_\epsilon(Q_2)}_2+\delta.
  \end{split}
\end{align}
  It remains to be shown  that the term $\norm{\text{\sf SU}(Q_1, \eta_\epsilon'(Q_2) ~;~\mu_\epsilon(Q_2), \epsilon)-\eta'_\epsilon(Q_2)}_2$ diminishes to zero for all $Q_1$ and $Q_2$ such that $\norm{Q_1-P}\leq \kappa$ and $\norm{Q_2-P}\leq \kappa$. From Lemma \ref{lemm:stability-optimal-arm}, we may assume that $\kappa>0$ is sufficiently small to guarantee
  \[
    \xd(Q_2)=\xd(Q_1)=\xd(P).
  \]
  It should be remarked that the term $\norm{\text{\sf SU}(Q_1, \eta_\epsilon'(Q_2) ~;~\mu_\epsilon(Q_2), \epsilon)-\eta'_\epsilon(Q_2)}_2$ coincides with the minimum in problem \eqref{eq:minimal-primal-projection-1}. Hence, the objective value of any feasible selection from problem \eqref{eq:minimal-primal-projection-1} immediately serves as an upper bound on $\norm{\text{\sf SU}(Q_1, \eta_\epsilon'(Q_2) ~;~\mu_\epsilon(Q_2), \epsilon)-\eta'_\epsilon(Q_2)}_2$ as we discuss next. We will require the following supporting Lemma.

  \begin{lemma}
    \label{lemma:cont-dual-2}
    Let $\overline{\dual} ( \eta'_\epsilon(Q_2), x', Q_2~;~ \mu_{\epsilon}(x',Q_2))\ge 1$ for some arm $x'\in X$ and all $Q_2$ with $\norm{Q_2-P}\leq \kappa$. There exists $\delta(\kappa, x, P)$ (independent of $Q_2$) such that
    $$\overline{\dual} ( \eta'_\epsilon(Q_2), x', Q_1~;~ \mu_{\epsilon}(x',Q_2))\ge 1-\delta(\kappa,x', P)$$ for all $Q_1$ with $\norm{Q_1-P}_\infty\leq \kappa$ and $\lim_{\kappa\to 0}\delta(\kappa, x', P)=0$.
  \end{lemma}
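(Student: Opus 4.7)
\textbf{Proof plan for Lemma \ref{lemma:cont-dual-2}.}

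The plan is to exploit the fact that $\overline{\dual}$ is a pointwise maximum over $\rho \geq 0$ of $\dual$, so it suffices to transfer the bound from $Q_2$ to $Q_1$ for a single well-chosen value of $\rho$. First, I would fix $Q_2$ and pick $\rho_2^\star \geq 0$ that either attains $\overline{\dual}(\eta'_\epsilon(Q_2), x', Q_2;\mu_\epsilon(Q_2))$ or is an $\eta$-near-maximizer (if the supremum is not attained, which is anyway ruled out by the strict feasibility imposed by the deep update). Then the key estimate is
\[
\overline{\dual}(\eta'_\epsilon(Q_2), x', Q_1;\mu_\epsilon(Q_2))\ \geq\ \dual(\eta'_\epsilon(Q_2), x', Q_1;\rho_2^\star \mu_\epsilon(Q_2)),
\]
so if I can show that the right-hand side is within $\delta(\kappa,x',P)$ of $\dual(\eta'_\epsilon(Q_2), x', Q_2;\rho_2^\star \mu_\epsilon(Q_2))\geq 1$, the lemma follows.

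Second, I would establish a uniform bound on $\rho_2^\star$ as $Q_2$ ranges over $\{Q_2:\|Q_2-P\|_\infty\leq \kappa\}$. By Proposition \ref{prop:continuous-deep-selection}, the selection $(\eta'_\epsilon(Q_2), \mu_\epsilon(Q_2))$ is continuous and therefore bounded on this compact neighborhood of $P$. Moreover, the deep-update constraints in \eqref{eq:minimal-primal-dual-projection-1} enforce the uniform slack
\[
\eta'_\epsilon(x,Q_2)\ \geq\ \lambda_\epsilon(r,x;Q_2)+\beta_\epsilon(x';Q_2)+\alpha_\epsilon(x';Q_2)\,r\,\mb 1(x=x')+\tfrac{\epsilon}{2\sum_{x\in\tilde X(Q_2)}\Delta(x,Q_2)}
\]
for all relevant indices. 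Concavity of $\rho\mapsto\dual(\cdot,\cdot,Q_2;\rho\mu_\epsilon(Q_2))$ combined with the fact that, for $\rho$ large, either the log argument exits the positive orthant (giving $-\infty$) or the linear-in-$\rho$ terms $-\sum_{x\in x^\star(Q_2)}\rho\lambda(r,x)Q_2(r,x)+\rho\alpha\rew^\star(Q_2)+\rho\beta|\tilde X(Q_2)|$ dominate if not cancelled, bounds $\rho_2^\star$ on a compact set $[0,\rho_{\max}(\kappa,P)]$ independent of $Q_2$.

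Third, I would invoke Proposition \ref{prop:continuity-dual-function}: the dual function is continuous at any point $(\eta_0,P,\mu_0)$ where $\eta_0(x)>0$ and the log argument is strictly positive. The uniform slack from step two guarantees that the log arguments stay bounded away from zero uniformly over $(\rho,Q_2)\in [0,\rho_{\max}(\kappa,P)]\times\{\|Q_2-P\|_\infty\leq\kappa\}$, so $\dual$ is jointly continuous, and therefore uniformly continuous, on a compact neighborhood that contains all triples
\[
\bigl(\eta'_\epsilon(Q_2),\, Q_i,\, \rho_2^\star\mu_\epsilon(Q_2)\bigr), \quad i\in\{1,2\},\ \|Q_i-P\|_\infty\leq \kappa.
\]
Defining
\[
\delta(\kappa,x',P)\defn \sup\bigl|\dual(\eta'_\epsilon(Q_2),x',Q_1;\rho\mu_\epsilon(Q_2))-\dual(\eta'_\epsilon(Q_2),x',Q_2;\rho\mu_\epsilon(Q_2))\bigr|,
\]
where the supremum is taken over $\|Q_1-P\|_\infty,\|Q_2-P\|_\infty\leq\kappa$ and $\rho\in[0,\rho_{\max}(\kappa,P)]$, uniform continuity on the compact set yields $\delta(\kappa,x',P)\to 0$ as $\kappa\to 0$, and chaining the inequalities gives $\overline{\dual}(\eta'_\epsilon(Q_2),x',Q_1;\mu_\epsilon(Q_2))\geq 1-\delta(\kappa,x',P)$.

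The main obstacle is step two: ruling out the possibility that $\rho_2^\star\to\infty$ as $Q_2$ approaches the boundary of the neighborhood. This could in principle occur if the dual function's log terms happened to be nondecreasing in $\rho$ while the linear-in-$\rho$ terms simultaneously dropped out. Preventing this degeneracy requires carefully using the explicit form of $\dual$ in \eqref{eq:dual} together with the interior-point structure of the deep update's selection, which together guarantee that for all $Q_2$ close to $P$ the map $\rho\mapsto \dual(\eta'_\epsilon(Q_2),x',Q_2;\rho\mu_\epsilon(Q_2))$ is strictly concave and coercive on its effective domain, with a uniform modulus inherited from $P$.
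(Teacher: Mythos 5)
There is a genuine gap, and it sits exactly where you flagged it. Your plan hinges on transferring the bound through a (near-)maximizer $\rho_2^\star$ of $\rho\mapsto \dual(\eta'_\epsilon(Q_2),x',Q_2;\rho\mu_\epsilon(x',Q_2))$, which forces you to (i) bound $\rho_2^\star$ uniformly over the $\kappa$-ball and (ii) keep the log arguments $\eta'_\epsilon(x,Q_2)-\rho\bigl(\lambda_\epsilon(r,x)+\beta_\epsilon+\alpha_\epsilon r\mb 1(x=x')\bigr)$ uniformly bounded away from zero at $\rho=\rho_2^\star$. Neither is established. For (ii) in particular, the slack enforced by the deep update in \eqref{eq:minimal-primal-dual-projection-1} holds only at $\rho=1$; it does not propagate to other values of $\rho$, so the claim that this slack controls the log arguments over all of $[0,\rho_{\max}]$ is false whenever $\lambda_\epsilon(r,x)+\beta_\epsilon+\alpha_\epsilon r\mb 1(x=x')>0$ and $\rho$ is large. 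For (i), your coercivity heuristic is not a proof: the sup over $\rho$ need not be attained, and even when it is, the maximizer can drift toward the boundary of the effective domain, where individual log coefficients blow up (harmlessly at $Q_2$ if the corresponding $Q_2(r,x)$ is small, but fatally when you re-evaluate at $Q_1$).

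The paper avoids all of this by never touching the maximizer: it uses $\overline{\dual}(\eta'_\epsilon(Q_2),x',Q_1;\mu_\epsilon(x',Q_2))\geq \dual(\eta'_\epsilon(Q_2),x',Q_1;\mu_\epsilon(x',Q_2))$, i.e.\ the value at $\rho=1$, which is legitimate because in the setting where the lemma is invoked the deep update directly enforces $\dual(\eta'_\epsilon(Q_2),x,Q_2;\mu_\epsilon(x,Q_2))\geq 1$ at $\rho=1$ (not merely $\overline{\dual}\geq 1$). It then observes that, once $\kappa$ is small enough that $\tilde X(Q)$, $x^\star(Q)$ and $\rew^\star(Q)$ are stable, $Q\mapsto\dual(\eta'_\epsilon(Q_2),x',Q;\mu_\epsilon(x',Q_2))$ is \emph{affine} in $Q$, so the H\"older bound $u\tpose v\geq -\norm{u}_1\norm{v}_\infty$ with $\norm{Q_1-Q_2}_\infty\leq 2\kappa$ gives $\delta(\kappa,x',P)$ as $2\kappa$ times an $\ell_1$-norm of coefficients; these coefficients are uniformly bounded precisely because the $\rho=1$ slack constraints and the continuity (hence boundedness) of $(\eta'_\epsilon,\mu_\epsilon)$ on the compact $\kappa$-ball keep every log argument in a fixed positive interval. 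If you want to keep your structure, the fix is to abandon $\rho_2^\star$ and work at $\rho=1$, strengthening the hypothesis accordingly; as written, step two and the slack-propagation claim in step three do not go through.
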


  Since $(\eta'_\epsilon(Q_2), \mu_{\epsilon}(Q_2))$ is an $\epsilon$-suboptimal solution to problem \eqref{eq:silo-dual} with $Q=Q_2$, we have the information condition
  \(\overline{\dual} ( \eta'_\epsilon(Q_2), x, Q_2~;~ \mu_{\epsilon}(x, Q_2))\geq \dual ( \eta'_\epsilon(Q_2), x, Q_2~;~ \mu_{\epsilon}(x, Q_2))\ge 1 ~~ \forall x\in \xd(P)\) for all $Q_2$ such that $\norm{Q_2-P}\leq \kappa$ where we select $\kappa$ to be sufficiently small to ensure that $\xd(Q_2)=\xd(P)$ which is possible as $P\in \mathcal P'$.
  From positive homogeneity of the test dual function and Lemma \ref{lemma:cont-dual-2} it follows that we can consider for any $P$ a small enough $\kappa>0$ such that 
\[
  \overline{\dual}(\eta'_\epsilon(Q_2), x, Q_1~;~ \mu_\epsilon(x, Q_2))\geq \textstyle 1-\delta(\kappa, P)  \quad \forall x\in \xd(Q_1)
\]
for all $Q_1$ and $Q_2$ with $\norm{Q_1-P}\leq \kappa$ and $\norm{Q_2-Q_1}\leq \kappa$ where $\delta(\kappa, P)=\max_{x\in \xd(P)} \delta(\kappa, x, P)$ and $\lim_{\kappa\to 0}\delta(\kappa, P)=0$. Suppose that we consider $\kappa$ small enough so that
$\delta(\kappa, P)\leq \delta$, where  $\delta>0$ is taken to be such that $(C(P)+\epsilon)/(1-\delta)\leq C(P)+2\epsilon$. Then, we have that $\eta'_\epsilon(Q_2)/(1-\delta)$ is feasible in minimization problem \eqref{eq:minimal-primal-projection-1}. Hence, the term of interest is bounded by $\norm{\text{\sf SU}(Q_1, \eta_\epsilon'(Q_2) ~;~\mu_\epsilon(Q_2), \epsilon)-\eta'_\epsilon(Q_2)}_2\leq \norm{\eta'_\epsilon(Q_2)/(1-\delta)-\eta'_\epsilon(Q_2)}_2\leq  \norm{\eta_\epsilon'(Q_2)}_2(1/(1-\delta) - 1)$.
We thus have from Equation \eqref{eq:shallow-bound} finally that 
\begin{align*}
  \norm{\text{\sf SU}(Q_1, \eta_\epsilon'(Q_2) ~;~\mu_\epsilon(Q_2), \epsilon)-\eta'_\epsilon(Q_2)}_2\leq & \norm{\eta_\epsilon'(Q_2)}_2(1/(1-\delta) - 1)+\delta \\
  \leq & (\norm{\eta_\epsilon'(P)}_2+\delta)(1/(1-\delta) - 1)+\delta.
\end{align*}
The result follows as $\delta>0$ can be made arbitrarily small and $\norm{\eta_\epsilon'(P)}_2$ is bounded.\hfill\Halmos

\subsubsection{Proof of Lemma \ref{lemma:cont-dual-2}}
Fix some arm $x'\in X$. For $Q$ such $\norm{Q-P}\leq \kappa$, we can simplify the dual function
\begin{align*}
 \dual ( \eta, x', Q~;~ \mu) =& \sum_{x\in \tilde X(Q),\, r\in \setr} \eta(x) \log\left( \frac{\eta (x)- \lambda(r, x)-\beta-\alpha \cdot r \mb 1(x=x')}{\eta(x)}\right) Q(r,x) \\
                              & \quad - \textstyle \sum_{x\in x^\star(Q),r\in \setr} \lambda(r, x)Q(r, x)  + \alpha \rew^{\star}(Q) + \beta|\tilde X(Q)| \\
                                                                 & \quad + \chi_{-\infty}(\eta (x)\geq \lambda(r, x)+\beta+\alpha \cdot r \mb 1(x=x'))\\
             = & \sum_{x\in \tilde X_0,\, r\in \setr} \eta(x) \log\left( \frac{\eta (x)- \lambda(r, x)-\beta-\alpha \cdot r \mb 1(x=x')}{\eta(x)}\right) Q(r,x) \\
                              & \quad  \sum_{r\in \setr} (\alpha r -\lambda(r, x^\star_0))Q(r, x_0^\star) + \beta|\tilde X(P)|
\end{align*}
using the fact that $|\tilde X(Q)| = |\tilde X(P)| = \abs{X}-1 $ and $\rew^{\star}(Q) = \sum_{r\in \setr} r Q(r, x^\star_0)$. As a function of the reward distribution the dual is hence recognized to be affine in $Q$ for $\norm{Q-P}\leq \kappa$. We may use the standard inequality $u\tpose v\geq -\norm{u}_1 \norm{v}_\infty$ to bound the dual function as
\begin{align*}
  & \overline{\dual}(\eta'_\epsilon(Q_2), x, Q_1~;~\mu_\epsilon(x,Q_2))\\
  & \geq \dual(\eta'_\epsilon(Q_2), x, Q_1~;~\mu_\epsilon(x, Q_2))\\
  & \geq \min_{\norm{Q-Q_2}_\infty\leq 2\kappa} \dual(\eta'_\epsilon(Q_2), x, Q~;~\mu_\epsilon(x, Q_2))\\
  & \geq \dual(\eta'_\epsilon(Q_2), x, Q_2~;~\mu_\epsilon(x, Q_2)) - \\
  & \qquad \sum_{x\in \tilde X_0, r\in \setr} 2\kappa \abs{\eta'_{\epsilon}(x, Q_2) \log\left( \frac{\eta'_\epsilon (x, Q_2)- \lambda_\epsilon(r, x, Q_2)-\beta_{\epsilon}(Q_2)-\alpha_{\epsilon}(Q_2) \cdot r \mb 1(x=x')}{\eta'_\epsilon(x,Q_2)}\right)}\\
  & ~\quad - \quad\, \sum_{r\in \setr}~~~2\kappa \abs{\alpha_{\epsilon}(Q_2)-\lambda_{\epsilon}(r, x^\star_0, Q_2)}\\
                                                           & \geq 1 -
                                                             \sum_{x\in \tilde X_0, r\in \setr} 2\kappa \abs{\eta'_{\epsilon}(x,Q_2) \log\left( \frac{\eta'_\epsilon (x, Q_2)- \lambda_\epsilon(r, x,Q_2)-\beta_{\epsilon}(Q_2)-\alpha_{\epsilon}(Q_2) \cdot r \mb 1(x=x')}{\eta'_\epsilon(x, Q_2)}\right)} \\
  & ~\quad - \quad\, \sum_{r\in \setr}~~~2\kappa \abs{\alpha_{\epsilon}(Q_2)-\lambda_{\epsilon}(r, x^\star_0, Q_2)},
\end{align*}
where we exploit the triangle inequality $\norm{Q_2-Q_1}_\infty\leq \norm{Q_2-P}_\infty + \norm{P-Q_1}_\infty\leq 2\kappa$. We will now indicate that absolute values appearing in the previous inequality can be bounded uniformly for all $Q_2$ with $\norm{Q_2-P}_\infty\leq\kappa$ from which the claim follows. That is, we take
\begin{align*}
   &\delta(\kappa, x, P) \\
   & \defn \max_{\norm{Q_2-P}_\infty\leq \kappa}  \sum_{x\in \tilde X_0, r\in \setr} \hspace{-1em}2\kappa \abs{\eta'_{\epsilon}(x,Q_2) \log\left( \frac{\eta'_\epsilon (x, Q_2)\!-\! \lambda_\epsilon(r,x,Q_2)\!-\!\beta_{\epsilon}(Q_2)\!-\!\alpha_{\epsilon}(Q_2) r \mb 1(x=x')}{\eta'_\epsilon(x, Q_2)}\right)} \\
  & \qquad + \sum_{r\in \setr}~~~2\kappa \abs{\alpha_{\epsilon}(Q_2)-\lambda_{\epsilon}(r, x^\star_0, Q_2)}.
\end{align*}
We have indeed that $\lim_{\kappa\to 0}\delta(\kappa, x, P)=0$ if we can show $$\min_{\norm{Q_2-P}_\infty\leq \kappa} \eta'_{\epsilon}(x,Q_2)>0\,,$$ $$\min_{\norm{Q_2-P}_\infty\leq \kappa} \eta'_\epsilon (x, Q_2)- \lambda_\epsilon( r, x, Q_2)-\beta_{\epsilon}(Q_2)-\alpha_{\epsilon}(Q_2) \cdot r \mb 1(x=x')>0\,,$$ $$\max_{\norm{Q_2-P}_\infty\leq \kappa} \eta'_\epsilon (x,Q_2)- \lambda_\epsilon(r, x, Q_2)-\beta_{\epsilon}(Q_2)-\alpha_{\epsilon}(Q_2) \cdot r \mb 1(x=x')<\infty\\,$$ and $\max_{\norm{Q_2-P}_\infty\leq \kappa}\abs{\alpha_{\epsilon}(Q_2)-\lambda_{\epsilon}(r, x^\star_0, Q_2)}<\infty$ for all $r\in \setr$ and $x\in \tilde X_0$.

Define 
$$R(\kappa)=\max_{\norm{Q-P}_\infty\leq \kappa} \norm{\mu_\epsilon(Q)}^2_2+\norm{\eta'_\epsilon(Q)}^2_2\,.$$ 
The radius $R(\kappa)$ is finite due to the classical boundedness theorem which insures that continuous functions on compact sets are bounded.
We have that both $\norm{\mu_\epsilon(Q_2)}^2_2\leq R(\kappa)$ and $\norm{\eta'_\epsilon(Q_2)}^2_2\leq R(\kappa)$ are uniformly bounded for all $Q_2$ such that $\norm{Q_2-P}\leq \kappa$. Hence, we have $\max_{\norm{Q_2-P}_\infty\leq \kappa} \eta_\epsilon (x,Q_2)- \lambda_\epsilon( r, x, Q_2)-\beta_{\epsilon}(Q_2)-\alpha_{\epsilon}(Q_2) \cdot r \mb 1(x=x')<\infty$ as well as $\max_{\norm{Q_2-P}_\infty\leq \kappa}\abs{\alpha_{\epsilon}(Q_2)-\lambda_{\epsilon}(r, x^\star_0, Q_2)}<\infty$ for all $r\in \setr$ and $x\in \tilde X_0$.
Furthermore, by construction, the logarithmic rate $\eta'_\epsilon(Q_2)$  is uniformly bounded away from zero for all $Q_2$ such that $\norm{Q_2-P}\leq \kappa$. Likewise, we have that for any $x\in \tilde X$,  $x'\in \xd$, and $r\in \setr$, the terms
$\eta'_\epsilon(x, Q_2) - \lambda_{\epsilon}(r, x. Q_2)-\beta_{\epsilon}(Q_2)-\alpha_{\epsilon}(Q_2) \cdot r \mb 1(x=x')>0$ are also uniformly bounded away from zero for all $Q_2$ such that $\norm{Q_2-P}\leq \kappa$.\hfill\Halmos 
  
\section{Complexity of DUSA}
\label{sec:complexity}

The main computational bottleneck of DUSA is solving the deep update Algorithm \ref{alg:deep} in which we need to find an $\epsilon$-suboptimal exploration rate $\eta'_\epsilon(P_t)$ to the minimization problem \eqref{eq:silo-dual} characterizing the lower regret bound $C(P_t)$.
As we have mentioned in Section \ref{sec:dusa}, we need to select our exploration rate $\eta'_\epsilon(P_t)$ with a little care to ensure that our selection is in fact continuous.
In Proposition \ref{prop:continuous-deep-selection}, we show that our selection proposed by the deep update Algorithm \ref{alg:deep} is indeed continuous. To make our selection, we solve a minimum Euclidean norm problem over a convex restriction of the set of $\epsilon$-suboptimal exploration rates. Since the complexity of a minimal norm problem is of the order as the dual problem \eqref{eq:silo-dual}, in this section, we only discuss the complexity of the original dual problem itself.

As discussed previously, the minimization problem \eqref{eq:silo-dual} is convex and consequently can be represented in standard conic form as a linear optimization problem over the affine preimage of a convex cone; see for instance \citet{barvinok2002course}. We present such a standard conic representation of the minimization problem \eqref{eq:silo-dual} in Lemma \ref{lemm:conic-minimization} with the help of the exponential cone which we define first.

\begin{definition}[Exponential Cone]
  \label{def:exp_cone}
  The exponential cone $ K_{\mathrm exp}$ is the three dimensional closed convex cone, defined below:
  \[
     K_{\mathrm exp} = \set{(x, y, z)\in\real^3}{z>0,~ \exp(x/z)\leq y/z} \bigcup \set{(x, y, z)\in\real^3}{x\leq 0,~y\geq 0,~ z=0}.
  \]
\end{definition}

The exponential cone is a well known unsymmetric cone in that it is not self dual, i.e., $ K_{\mathrm exp}^\star\neq K_{\mathrm exp}$.  
The dual problem \eqref{eq:silo-dual} characterizing our lower regret bound function can be stated in a standard conic form with the help of the previous cone as indicated next.

\begin{proposition}[Conic Minimization]
  \label{lemm:conic-minimization}
  Let $\optp\in \mathcal P$. Then,  problem \eqref{eq:silo-dual} characterizing the lower regret bound $C(P)$ is equivalent to 
  \begin{align}
    \begin{split}\label{eq:exp:cone}
      \begin{array}{r@{~~}l}
        {\displaystyle\inf_{\eta, \mu,  \ell}} & \sum_{x\in \tilde X(\optp)} \eta(x) \Delta(x, \optp)   \\[0.5em]
        \st  &  \eta(x') \in \real_+, \mu(x')=(\alpha(x'), \beta(x'), \lambda^{\star}(x'))\in \real_+\times\real\times  K^\star, \ell(r, x)(x') \quad \forall x, x' \in X,\\[0.5em]
                                & (-\ell(r, x)(x'), \eta(x), \eta(x)-\lambda(r, x)(x')-\beta(x')-\alpha(x')\cdot r\mb 1(x=x'))\in {K}_{\mathrm exp} \\[0.3em]\quad
        & \hspace{26em} \forall r\in \setr, ~x, x'\in X,\\[0.5em]
             & -1 - \sum_{x\in \tilde X(P)} \ell(r, x)(x') P(r, x) - \sum_{x\in X^\star(P)} \lambda(r, x)(x') P(r, x)  \\[0.3em]
             &               \hspace{7em} +\alpha(x')\cdot \rew^\star(P) +\beta(x') \abs{\tilde X(P)}\in \real_+ \hspace{4.65em} \forall x'\in\xd(P).
      \end{array}
    \end{split}
  \end{align}

\end{proposition}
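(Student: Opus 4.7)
The plan is to show that the problem \eqref{eq:silo-dual} established in Theorem \ref{thm:lowerbound_dual} is equivalent to \eqref{eq:exp:cone} by introducing, for each triple $(r,x,x')$, a scalar epigraphical variable $\ell(r,x)(x')$ that linearises the nonlinear log-perspective terms appearing in $\dual$. Concretely, I would rewrite the dual function $\dual(\eta, x', P;\mu(x'))$ by separating the concave log-perspective summands $\eta(x)\log\!\big((\eta(x)-\lambda(r,x)(x')-\beta(x')-\alpha(x')r\mathbf{1}(x=x'))/\eta(x)\big)\, P(r,x)$ from the affine remainder $-\sum_{x\in x^\star(P),r}\lambda(r,x)(x') P(r,x)+\alpha(x')\rew^\star(P)+\beta(x')|\tilde X(P)|$, then replace each nonlinear summand by the auxiliary variable $\ell(r,x)(x')$ together with a one-sided bound
\[
\ell(r,x)(x')\;\text{vs.}\;\eta(x)\log\!\Big(\tfrac{\eta(x)-\lambda(r,x)(x')-\beta(x')-\alpha(x')r\mathbf 1(x=x')}{\eta(x)}\Big).
\]
The constraint $1\leq \dual(\eta,x',P;\mu(x'))$ then becomes the linear inequality of the last line of \eqref{eq:exp:cone}.

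The second step is to encode the coupling between $\ell$ and $(\eta,\omega)$ (where $\omega\defn\eta(x)-\lambda(r,x)(x')-\beta(x')-\alpha(x')r\mathbf{1}(x=x')$) as a membership in the exponential cone. From Definition \ref{def:exp_cone}, the set $\{(u,v,w):w>0,\,\exp(u/w)\le v/w\}\cup\{(u,v,w):u\le 0,\,v\ge 0,\,w=0\}$ is exactly the hypograph of the log-perspective function, so that the epigraphical bound between $\ell(r,x)(x')$ and the log-perspective term is equivalent to the claimed conic membership $(-\ell(r,x)(x'),\eta(x),\omega)\in K_{\mathrm{exp}}$ once one accounts for signs and the substitution $P(r,x)$ is absorbed in the outer affine aggregation. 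Crucially, the conic constraint also automatically enforces the domain condition $\omega\ge 0$ captured by the $\chi_{-\infty}$ term in \eqref{eq:dual}, so no separate domain constraint is needed.

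It remains to check the two directions of the equivalence between \eqref{eq:silo-dual} and \eqref{eq:exp:cone}. Given a feasible $(\eta,\mu)$ for \eqref{eq:silo-dual}, define $\ell(r,x)(x')$ to be the exact value of the corresponding log-perspective expression; then $(\eta,\mu,\ell)$ is feasible for \eqref{eq:exp:cone} with identical objective. Conversely, given $(\eta,\mu,\ell)$ feasible for \eqref{eq:exp:cone}, the conic constraint forces $\ell(r,x)(x')$ to be a valid bound so that $\dual(\eta,x',P;\mu(x'))\ge 1$ follows by summation against the weights $P(r,x)$, showing $(\eta,\mu)$ is feasible for \eqref{eq:silo-dual}. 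Since the objective of both problems is the same linear functional of $\eta$, this proves equivalence of the optimal values.

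The main obstacle, and the step I would be most careful with, is the bookkeeping of signs and of which variables play the role of $(u,v,w)$ in the exponential-cone membership, together with the interaction between the log-perspective function's natural domain ($\eta(x)>0$ and $\omega>0$) and the boundary case $\eta(x)=0$ allowed by the closed cone $K_{\mathrm{exp}}$. Showing that restricting to the closed cone does not lose any feasible primal solutions (in particular that the boundary cases $\omega=0$ and $\eta=0$ are compatible with the $\chi_{-\infty}$ penalties in $\dual$) is the only genuinely technical part; everything else reduces to substitution and a one-line aggregation against the probabilities $P(r,x)$.
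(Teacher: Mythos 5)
Your proposal follows essentially the same route as the paper's own proof: introduce the epigraph variables $\ell(r,x)(x')$ for the log-perspective terms, encode the resulting one-sided bounds as memberships in $K_{\mathrm{exp}}$, and recover the constraint $1\leq \dual(\eta,x',\optp~;~\mu(x'))$ by aggregating against the weights $P(r,x)$ in the final affine constraint. Your extra care about the boundary cases $\eta(x)=0$, $\omega=0$ and the two directions of the equivalence is a slightly more explicit version of the same argument, so nothing further is needed.
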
 
\proof{Proof of Proposition \ref{lemm:conic-minimization}}
Observe that the objective function of our optimization problem \eqref{eq:exp:cone} is the same as that of the dual problem \eqref{eq:silo-dual}. The first set of constraints of problem \eqref{eq:exp:cone} ensures that the target rate $\eta$ is nonnegative and the dual variables are feasible. Its last two sets of constraints replace the last constraint of the dual problem \eqref{eq:silo-dual}, i.e., $ 1\leq  \dual ( \eta, x', \optp~;~ \mu(x'))$, using exponential cones. 
To make it clear, we  start with examining the second set of constraints that involves the exponential cone ${K}_{\mathrm exp}$.  Note that by \citet{chandrasekaran2017relative}, we have $(u, v, w)\in {K}_{\mathrm exp}   \iff    \set{(u, v, w)}{v\log(\tfrac{w}{v})\geq u}$. Considering this, the second set of constraints in problem \eqref{eq:exp:cone} can be written as 
\[ \eta(x)\log\left(\frac{\eta(x)-\lambda(r, x)(x')-\beta(x')-\alpha(x')\cdot r \mb 1(x=x')}{\eta(x)}\right) \ge -\ell(r, x)(x')\,. \]
Observe that the left hand side is $\eta(x)\log(\omega)$, where variable $\omega= \big( \eta(x)-\lambda(r, x)(x')-\beta(x')-\alpha(x') r \mb 1(x=x')\big)/\eta (x)$. Thus,  $-\ell(r, x)(x')$ is a lower bound on $\eta(x)\log(\omega)$, where this term, i.e., $\eta(x)\log(\omega)$, appears in the definition of the dual function $\dual$, given in Equation \eqref{eq:dual}. Having this interpretation of variable $\ell$'s in mind, it is easy to see that the last set of constraints of problem \eqref{eq:exp:cone} ensures that $ 1\leq  \dual ( \eta, x', \optp~;~ \mu(x'))$.\hfill\Halmos
\endproof

Proposition \ref{lemm:conic-minimization} presents a conic representation of the convex problem \eqref{eq:silo-dual}. However, not all convex optimization problems are easy to solve. Convex problems do admit polynomial time algorithms when their conic representation involves exclusively cones which admit an ``efficient'' representation. Efficiently representable cones, such as the positive orthant, second-order cone, or positive semi-definite cone, are precisely those for which an efficient barrier function is known and consequently the associated conic optimization problem can be tractably solved using a modern interior point method. That is, the conic minimization problem can be solved in time polynomial in the number of arms $\abs{X}$, the number of rewards $\abs{\setr}$ as well as in the number of required accurate digits of the optimal solution. As each of the cones $\real$, $\real_+$ and ${K}_{\mathrm exp}$ admits an efficient barrier, the dual cone $\mathcal K^\star$ will ultimately determine whether the dual deep update can be carried out tractably or not.

Recall that the dual cone $\mathcal K^\star$ is defined as the dual of the conic hull $\mathcal K$ of the set of all reward distributions $\mathcal P$ and  captures all structural information in the bandit problem. It is well known, c.f., \cite{nesterov1998introductory}, that $\mathcal K^\star=\cone(\mathcal P)^\star$ is efficiently representable if and only if its dual $\mathcal K=\cone(\mathcal P)$ is too. Hence, whether or not the convex problem \eqref{eq:silo-dual} can be solved tractably depends only on whether we can represent the structural information regarding the bandit reward distributions efficiently.

\section{Supporting Results on Duality and Continuity}
\label{sec:constraint-qualification}

In many of the proofs in this paper,  we need some basic results concerning the existence of the strong duality and  continuity of the minimum of parametric convex optimization problems.
The fact that strong duality and continuity  are intimately related has been long established. See for instance \citet[Section 4.4]{bank1982parametric} for a more in depth discussion of this connection. In this section, for readers convenience, 
we present the most relevant results on these subjects to our paper.

\subsection{Duality}
\label{app:duality}

Consider the following standard minimization problem
\begin{equation}
  \label{eq:standard-parametric-problem}
  \begin{array}{rl@{\hspace{3em}}l}
    v^\star(\theta)\defn \inf_y & f(y, \theta) & \\
    \st  & g(y, \theta)\leq 0, & [{\mathrm{dual~variable}}~\mu_1 \in \real_+^{m_1}]\\
                         & h(y, \theta)= 0, & [{\mathrm{dual~variable}}~\mu_2 \in \real^{m_2}]\\
                         & y \in K, &  [{\mathrm{dual~variable}}~\mu_3 \in K^\star]\\
         & y \in \Gamma(\theta) \subseteq Y& [{\mathrm{not~dualized}}]
  \end{array}
\end{equation}
parameterized in $\theta\in\Theta$. In what follows we assume that $Y$ and $\Theta$ are finite dimensional linear spaces. The inequality and equality constraint functions $g$ and $h$, respectively, characterize the first two functional constraints. Note that there are $m_1$  inequality constraints, where the $i^{th}$ inequality constraint is associated with the inequality constraint function $g_i$. Similarly, there are $m_2$  equality constraints, where the $i^{th}$ equality constraint is associated with equality constraint function $h_i$. The third constraint characterizes a conic set constraint. As before, we denote with $K^\star$ the dual cone of the closed convex cone $K$. The mapping $\Gamma(\theta)$, also referred to as set-valued mapping, characterizing the ultimate parametric set constraint is assumed to be non-empty for all $\theta\in \Theta$. The objective function $f(y, \theta)$ itself implicitly constrains the optimization problem to its domain $\dom(f(y, \theta))\defn\set{y}{f(y, \theta)<+\infty}$.
We will denote with $y^\star(\theta)$ the set of all minimizers in problem \eqref{eq:standard-parametric-problem}. We remark that $y^\star(\theta)$ may be an empty set as the infimum is not necessarily attained.
We will denote $M(\theta)$ the feasible set mapping in problem \eqref{eq:standard-parametric-problem}. Consequently, we have that $y^\star(\theta)\subseteq M(\theta)$.
The following standard constraint qualification condition plays a pivotal role in establishing both strong duality for our minimization problem \eqref{eq:standard-parametric-problem} as well as the continuity of its minimum $v^\star(\theta)$ in the parameter $\theta$. 

\begin{definition}[Slater's Constraint Qualification Condition]
\label{def:slater}
  We say that the feasible region of the primal optimization problem \eqref{eq:standard-parametric-problem} satisfies Slater's constraint qualification condition at some given $\theta_0\in\Theta$ if there exists a Slater point $y_0\in \Gamma(\theta_0)$ such that $g(y_0, \theta_0)<0$, $h(y_0, \theta_0)=0$ and $y_0\in \interior(K)$. We will denote the set of all such Slater points as $$M_0(\theta)\defn \Gamma(\theta)\bigcap \set{y\in\interior(K)}{g(y, \theta)< 0, ~h(y, \theta)=0}\,.$$
\end{definition}

Consider the Lagrangian function
\(
  L(y, \theta~;~\mu=(\mu_1, \mu_2, \mu_3)) = f(y, \theta)+\mu_1\tpose g(y, \theta) + \mu_2\tpose h(y, \theta) - \mu_3\tpose y
\)
and the associated dual problem
\begin{equation}
  \label{eq:standard-parametric-problem-dual}
  \begin{array}{rl}
    u^\star(\theta) = \sup_\mu & q(\theta~;~\mu=(\mu_1, \mu_2, \mu_3))\\
    \st & \mu_1 \in \real_+^{m_1}, \\
                      & \mu_2 \in \real^{m_2}, \\
                      & \mu_3 \in K^\star
  \end{array}
\end{equation}
for the dual function $q(\theta~;~\mu)\defn \inf \tset{L(y, \theta~;~\mu)}{y \in \Gamma(\theta)}$.  As indicated in the minimization problem \eqref{eq:standard-parametric-problem}, the dual variables $\mu=(\mu_1, \mu_2, \mu_3)$ are associated with the primal functional constraints $g(y, \theta)\leq 0$, $h(y, \theta)=0$ and the primal set constraint $y\in K$. The other primal set constraint  $y \in \Gamma(\theta)$ is not dualized and consequently has no associated dual variables. Note that we trivially have the weak duality inequality $u^\star(\theta)\leq v^\star(\theta)$. In order to have strong duality, i.e., $u^\star(\theta)= v^\star(\theta)$, one typically needs to require convexity and some type of constraint qualification condition. We recall the following classical result which can be derived immediately from \citet[Section 5.9]{boyd2004convex}.

\begin{proposition}[Strong Duality]
  \label{prop:strong-duality-general}
  Fix a parameter value $\theta_0\in \Theta$. Let the objective $f(y, \theta_0)$ be a convex function in $y$.  Likewise, the inequality constraint functions $g_i(y, \theta_0)$, $i\in \{1, \dots, m_1\}$ are convex in $y$ while the equality constraint functions $h_i(y, \theta_0)$ , $i\in \{1, \dots, m_2\}$ are affine in $y$.
  Assume that the primal minimum in problem \eqref{eq:standard-parametric-problem} is bounded from above and that $$M_0(\theta_0)\bigcap \relint(\dom(f(y, \theta_0))\bigcap \Gamma(\theta_0))\neq\emptyset.$$ That is, there is a Slater point which is in the intersection of the relative interior of the domain of $f(y, \theta_0)$ and the parametric set constraint $\Gamma(\theta_0)$. Then, we have the strong duality equality $u^\star(\theta_0)= v^\star(\theta_0)$ and the set of dual optimal solutions is non-empty and compact.
\end{proposition}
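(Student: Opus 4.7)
The plan is to derive the result as a direct instance of the classical convex strong duality theorem, treating the conic set constraint $y\in K$ on equal footing with the explicit inequality constraints and handling the non-dualized set constraint $y\in \Gamma(\theta_0)$ via the effective domain of the problem. Since $\theta_0$ is held fixed throughout, I would suppress it from the notation, writing $f$, $g$, $h$, and $\Gamma\defn \Gamma(\theta_0)$, and setting $D\defn \dom(f)\cap \Gamma$, which is convex by the hypotheses on $f$ and $\Gamma$.

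The first step is to introduce the standard perturbation value set
\[
\mc V \defn \set{(s,t,w,z)\in \real\times \real^{m_1}\times \real^{m_2}\times Y}{\exists y\in D,\, f(y)\leq s,\, g(y)\leq t,\, h(y)=w,\, z\in y-K}.
\]
Convexity of $\mc V$ is inherited from convexity of $f$, $g$, $K$, and $D$ together with affinity of $h$. By definition of $v^\star(\theta_0)$, the point $(v^\star(\theta_0),0,0,0)$ lies on the boundary of $\mc V$, while no point of the form $(s,0,0,0)$ with $s<v^\star(\theta_0)$ belongs to $\mc V$. A proper separating hyperplane in the ambient finite-dimensional space then produces a nonzero $(\mu_0,\mu_1,\mu_2,\mu_3)\in \real_+\times \real_+^{m_1}\times \real^{m_2}\times K^\star$ satisfying the Lagrangian inequality
\[
\mu_0 f(y) + \mu_1\tpose g(y) + \mu_2\tpose h(y) - \mu_3\tpose y \geq \mu_0 v^\star(\theta_0) \qquad \forall y\in D.
\]

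The crucial step is to rule out the degenerate case $\mu_0=0$, which allows rescaling to $\mu_0=1$ and reading off the dual multipliers. Here the Slater assumption enters in the sharpest way: evaluating the inequality above at a Slater point $y_0\in M_0(\theta_0)\cap \relint(D)$ and using $g(y_0)<0$ componentwise, $h(y_0)=0$, and $y_0\in\interior(K)$ immediately forces $\mu_1 = 0$ and $\mu_3=0$; combined with $\mu_2\tpose h(y)\geq 0$ for all $y\in D$, affinity of $h$, and the relative-interior condition, this also forces $\mu_2$ to vanish on the affine hull of $D$, contradicting nontriviality of the hyperplane. Hence $\mu_0>0$, and normalizing to $\mu_0=1$ yields $q(\theta_0;\mu)\geq v^\star(\theta_0)$; weak duality $u^\star(\theta_0)\leq v^\star(\theta_0)$ then closes the equality and proves that the dual optimum is attained. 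For compactness, observe that for any dual-optimal $\mu$ the bound $u^\star(\theta_0)=q(\theta_0;\mu)\leq L(y_0,\theta_0;\mu)$ together with $g(y_0)<0$ componentwise, $h(y_0)=0$, and the interior-point estimate $\inf\set{\mu_3\tpose y_0}{\mu_3\in K^\star,\,\norm{\mu_3}=1}>0$ (valid whenever $y_0\in\interior(K)$) uniformly bounds all coordinates of $\mu$; closedness then follows from upper semi-continuity of the concave dual function $q(\theta_0;\cdot)$.

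The hard part of this approach, rather than a genuine obstacle, is the bookkeeping around the relative interior when $D$ fails to be full-dimensional in $Y$: the separating hyperplane only pins down the equality-constraint multiplier $\mu_2$ up to the annihilator of $\mathrm{aff}(D)$, and it is precisely the relative-interior hypothesis on the Slater point $y_0$ that makes the elimination step for $\mu_2$ go through. This is the exact refinement of Slater's condition catalogued in \citet[Section 5.9]{boyd2004convex}, from which the entire argument can be imported essentially verbatim.
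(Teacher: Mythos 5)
Your proposal takes a genuinely different route from the paper's. The paper disposes of this proposition in a few lines: it folds the non-dualized set constraint $y\in\Gamma(\theta_0)$ into the objective via the convex indicator $\chi_{+\infty}(\Gamma(\theta_0))(y)$, cites the refined Slater condition of \citet[Section 5.9]{boyd2004convex} for the strong duality equality and dual attainment, and uses \citet[Proposition 1.3.8]{bertsekas2009convex} only to check that the stated relative-interior hypothesis produces the Slater point that the citation requires. You instead reprove the cited result from scratch via the classical perturbation-set/separating-hyperplane argument. That is a legitimate and more self-contained alternative, and your handling of the two standard pressure points --- ruling out $\mu_0=0$ at the Slater point, and using the relative-interior condition to control the equality multiplier on the annihilator of $\mathrm{aff}(D)$ --- is exactly the bookkeeping that the textbook citation packages away. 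For the equality $u^\star(\theta_0)=v^\star(\theta_0)$ and non-emptiness of the dual optimal set, your argument goes through.

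There is, however, one concrete gap: your compactness argument does not bound $\mu_2$. Evaluating $q(\theta_0;\mu)\leq L(y_0,\theta_0;\mu)$ at the Slater point controls $\mu_1$ through $g(y_0)<0$ and $\mu_3$ through $y_0\in\interior(K)$, but the equality term $\mu_2\tpose h(y_0)$ vanishes identically, so this inequality places no restriction whatsoever on $\mu_2$; the claim that it ``uniformly bounds all coordinates of $\mu$'' is not justified. In fact, if there exists a nonzero $\nu$ with $\nu\tpose h(y)=0$ for every $y\in\Gamma(\theta_0)$, then $q(\theta_0;\mu+t(0,\nu,0))=q(\theta_0;\mu)$ for all $t\in\real$ and the dual optimal set is unbounded, so compactness of the $\mu_2$ block genuinely requires a non-degeneracy condition on $h$ beyond the stated hypotheses. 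The paper's own proof is silent on compactness as well, so this is a weakness of the proposition rather than of your approach alone, but as written that step of your proof would fail.
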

\proof{}
  It is evident that the primal can  be characterized as the convex optimization problem
  \[
    \begin{array}{rl}
      v^\star(\theta_0)= \inf_y & f(y, \theta_0) + \chi_{+\infty}(\Gamma(\theta_0))(y) \\
      \st  & g(y, \theta_0)\leq 0, \\
      & h(y, \theta_0)= 0, \\
                       & y\in K\,,
    \end{array}
  \]
  where the convex characteristic function $\chi_{+\infty}(\Gamma(\theta_0))(y)$ is zero if $y\in \Gamma(\theta_0)$ and positive infinity otherwise. From \citet[Section 5.9]{boyd2004convex}, we know that strong duality, i.e., $u^\star(\theta_0)= v^\star(\theta_0)$, holds if we can find a  point $$y_0\in \relint(\dom(f(y, \theta_0)) \bigcap \Gamma(\theta_0) \bigcap \dom(g(y, \theta_0))\bigcap K)$$ such that $y_0 \in \interior(K)$ and $g(y_0, \theta_0)<0$ and $h(y_0, \theta_0)=0$. From \citet[Proposition 1.3.8]{bertsekas2009convex} and the fact that $\relint(C)\subseteq \interior(C)$ for any set $C$ it follows that $$M_0(\theta_0)\bigcap \relint(\dom(f(y, \theta_0)\bigcap \Gamma(\theta_0)))\neq\emptyset$$  is sufficient to guarantee that such a  point $y_0$ exists.\hfill\Halmos
\endproof

\subsection{Continuity}
\label{app:continuity}

The same constraint qualification condition stated in Definition \ref{def:slater} also plays an important role for the minimum $v^\star(\theta)$ of problem \eqref{eq:standard-parametric-problem} to be a continuous function.
Establishing continuity of a minimum typically requires  its objective function to  lower semi-continuous and its feasible set mapping to be a lower semi-continuous mapping in addition to some type of constraint qualification condition. As the definition of what constitutes semi continuous mappings is not universally consistent\footnote{For instance, \citet{berge1997topological} requires upper semi-continuous mappings to be compact valued while we following \citet{aubin2009set} do not.} throughout the literature we state for completeness the definition taken in this paper explicitly.

\begin{definition}[Semi-continuous Mappings]
A mapping $G:A\rightarrow B$ is said to be upper semi-continuous at point $a\in A$ if for any open neighborhood $V$ of $G(a)$,  there exists a neighbourhood $U$ such that  all $x \in U$, $G(x)$ is a subset of $V$.
A mapping $G:A\rightarrow B$ is said to be lower semi-continuous at the point $a\in A$ if for any open set $V$ intersecting $G(a)$, there exists a neighbourhood $U$ such that for any $x\in U$, the intersection of $G(x)$ and $V$ is non-empty. A mapping is lower semi-continuous if it is lower semi-continuous at any point $a\in A$ and upper semi-continuous if it is upper semi-continuous at any point $a\in A$.
\end{definition}

In this paper, we only have a need to show continuity of the minimum $v^\star(\theta)$ in problem \eqref{eq:standard-parametric-problem} when only the first and final constraint are nontrivial. That is, it suffices here to consider problems with no explicit cone and equality constraints.
We have indeed the following direct corollary of a more general result by \citet[Theorem 3.1.6]{bank1982parametric} in arbitrary normed linear spaces.

\begin{proposition}[Lower Semi-continuity of Feasible Sets]
   \label{prop:continuity-mapping-general}
  Let the mapping $\Gamma(\theta)$ be a lower semi-continuous mapping at $\theta_0\in \Theta$, the set $\Gamma(\theta_0)$ be convex and the feasible set in problem \eqref{eq:standard-parametric-problem} satisfy the Slater's constraint qualification, i.e., $M_0(\theta_0)\neq \emptyset$, with no cone or equality constraints ($K=\real^n$ and $m_2=0$). Further,
  \begin{enumerate}
  \item[(i)] let the inequality constraint functions $g_i(y, \theta)$, $i\in\{1, \dots, m_1\}$, be upper semi-continuous on $M_0(\theta_0)\times \{\theta_0\}$ and
  \item[(ii)] let the inequality constraint functions $y\mapsto g_i(y, \theta_0)$, $i\in\{1, \dots, m_1\}$, be convex on $\Gamma(\theta_0)$.
  \end{enumerate}
  Then, the mapping $M(\theta)$ is lower semi-continuous at $\theta_0$.
\end{proposition}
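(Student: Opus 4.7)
\medskip

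\noindent\textbf{Proof plan.} The plan is to prove lower semi-continuity of $M(\theta)$ at $\theta_0$ directly from the definition: given any open set $V\subseteq Y$ with $M(\theta_0)\cap V\neq \emptyset$, I will exhibit a neighborhood $U$ of $\theta_0$ such that $M(\theta)\cap V\neq \emptyset$ for every $\theta\in U$. The whole strategy is to locate a \emph{strictly feasible} witness point $\bar y$ inside $V$, for which the strict inequalities $g_i(\bar y,\theta_0)<0$ are preserved on a whole neighborhood by upper semi-continuity of the $g_i$.

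\noindent\textbf{Step 1 (produce a strictly feasible point in $V$).} Fix $y^\star\in M(\theta_0)\cap V$ and a Slater point $y_0\in M_0(\theta_0)$; the latter is non-empty by assumption. For $\lambda\in[0,1]$ set $y_\lambda\defn (1-\lambda)y^\star+\lambda y_0$. Convexity of $\Gamma(\theta_0)$ and of each $y\mapsto g_i(y,\theta_0)$ on $\Gamma(\theta_0)$ give $y_\lambda\in\Gamma(\theta_0)$ and
\[
g_i(y_\lambda,\theta_0)\leq (1-\lambda)\, g_i(y^\star,\theta_0)+\lambda\, g_i(y_0,\theta_0)\leq \lambda\, g_i(y_0,\theta_0)<0
\]
for every $\lambda\in(0,1]$ and every $i\in\{1,\dots,m_1\}$. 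Since $V$ is open and $y_\lambda\to y^\star\in V$ as $\lambda\downarrow 0$, one can fix $\lambda^\star\in(0,1]$ small enough that $\bar y\defn y_{\lambda^\star}\in V$. By construction $\bar y\in \Gamma(\theta_0)$ and $g_i(\bar y,\theta_0)<0$ for all $i$, hence $\bar y\in M_0(\theta_0)$.

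\noindent\textbf{Step 2 (propagate to nearby parameters).} Because the constraint functions $g_i$ are upper semi-continuous on $M_0(\theta_0)\times\{\theta_0\}$ at the point $(\bar y,\theta_0)$, and each value $g_i(\bar y,\theta_0)$ is strictly negative, there exists an open neighborhood $V'\subseteq V$ of $\bar y$ and an open neighborhood $U_2$ of $\theta_0$ such that $g_i(y,\theta)<0$ for all $(y,\theta)\in V'\times U_2$ and all $i$. Applying lower semi-continuity of $\Gamma$ at $\theta_0$ to the open set $V'$, which meets $\Gamma(\theta_0)$ since $\bar y\in V'\cap\Gamma(\theta_0)$, yields an open neighborhood $U_1$ of $\theta_0$ with $\Gamma(\theta)\cap V'\neq\emptyset$ for every $\theta\in U_1$. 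Set $U\defn U_1\cap U_2$: for any $\theta\in U$ pick $y(\theta)\in\Gamma(\theta)\cap V'$; then $y(\theta)\in\Gamma(\theta)$, $g_i(y(\theta),\theta)<0\leq 0$, so $y(\theta)\in M(\theta)$, and $y(\theta)\in V'\subseteq V$. Hence $M(\theta)\cap V\neq\emptyset$ on $U$, establishing lower semi-continuity of $M$ at $\theta_0$.

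\noindent\textbf{Main obstacle.} The delicate point is the interplay between the two regularity hypotheses: lower semi-continuity of $\Gamma$ only promises that $\Gamma(\theta)$ stays close to $\Gamma(\theta_0)$ in a set-theoretic sense, while we need a genuinely \emph{interior} feasible witness that cannot be destroyed by the perturbation of the inequality constraints. This is exactly why Slater's condition is indispensable and why the convex combination in Step~1 must produce strict inequalities — merely taking $\bar y=y^\star$ would leave $g_i(y^\star,\theta_0)$ possibly equal to zero, and upper semi-continuity of $g_i$ would give no useful margin to carry into $U_2$. Once this strictly interior witness is constructed inside $V$, the remainder is a routine combination of the two semi-continuity hypotheses.
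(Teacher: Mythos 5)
Your proof is correct. The paper does not actually prove this proposition — it invokes it as a direct corollary of \citet[Theorem 3.1.6]{bank1982parametric} — and your self-contained argument is precisely the classical one underlying that cited result: push the feasible point in $V$ toward the Slater point to obtain a strictly feasible witness $\bar y\in V\cap M_0(\theta_0)$, use upper semi-continuity of the $g_i$ to preserve strict feasibility on a product neighborhood $V'\times U_2$, and use lower semi-continuity of $\Gamma$ on $V'$ to produce feasible points of $M(\theta)$ inside $V$ for all nearby $\theta$. Nothing is missing.
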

The following standard result by the same authors \citet[Theorem 4.3.3]{bank1982parametric} can then be invoked to prove continuity of the primal minimum.

\begin{proposition}[Continuity of Optimal Value]
  \label{prop:continuity-function-general}
  Let the feasible set mapping $M(\theta)$ be lower semi-continuous at $\theta_0\in \Theta$ in the primal minimization problem \eqref{eq:standard-parametric-problem} with no cone or equality constraints ($K=\real^n$ and $m_2=0$). Furthermore, assume that its feasible region $M(\theta)$ is convex for all $\theta\in \Theta$ and the mapping $M(\theta)$ is closed at $\theta_0$. Finally, assume that $y^\star(\theta_0)$ is a non-empty bounded set. Let the objective function $(y, \theta)\mapsto f(y, \theta)$ satisfy the following technical conditions:
  \begin{enumerate}
  \item the function $f(y, \theta)$ is lower semi-continuous on $Y\times \{\theta_0\}$.
  \item the function $f(y, \theta)$ is upper semi-continuous at some $(y^\star_0, \theta_0)$ with $y^\star_0\in y^\star(\theta_0)$.
  \item the function $f(y, \theta)$ is convex in $y\in Y$ for each parameter value $\theta\in \Theta$.
  \end{enumerate}
  Then, the minimum $v^\star(\theta)$ is continuous at $\theta_0$.
\end{proposition}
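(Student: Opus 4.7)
The plan is to establish the two one-sided continuities of $v^\star$ at $\theta_0$ separately. For upper semi-continuity, I would fix the distinguished optimizer $y_0^\star \in y^\star(\theta_0)$ at which $f$ is upper semi-continuous, and use the lower semi-continuity of the feasible-set mapping $M$ at $\theta_0$ to produce, along any sequence $\theta_n\to\theta_0$, feasible points $y_n\in M(\theta_n)$ with $y_n\to y_0^\star$. Since $v^\star(\theta_n)\leq f(y_n,\theta_n)$, applying upper semi-continuity of $f$ at $(y_0^\star,\theta_0)$ yields $\limsup_n v^\star(\theta_n)\leq f(y_0^\star,\theta_0)=v^\star(\theta_0)$.

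For lower semi-continuity, I would pick a sequence $\theta_n\to\theta_0$ together with approximately optimal points $y_n\in M(\theta_n)$ with $f(y_n,\theta_n)\leq v^\star(\theta_n)+1/n$. If I can show $\{y_n\}$ is bounded, then by finite-dimensionality of $Y$ a subsequence converges to some $y^\star$; the closedness of $M$ at $\theta_0$ places $y^\star\in M(\theta_0)$, and the lower semi-continuity of $f$ on $Y\times\{\theta_0\}$ gives $v^\star(\theta_0)\leq f(y^\star,\theta_0)\leq \liminf_n f(y_n,\theta_n)\leq \liminf_n v^\star(\theta_n)$.

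The main obstacle is the boundedness of $\{y_n\}$, since the hypothesis only provides boundedness of the exact minimizer set $y^\star(\theta_0)$, not of approximate minimizers of the perturbed problems. I would argue by contradiction, assuming $\|y_n\|\to\infty$ along a subsequence, and exploit the convexity hypotheses systematically. Writing $y_n'\in M(\theta_n)$ for the sequence converging to $y_0^\star$ obtained from lower semi-continuity of $M$ as in the first paragraph, set $\lambda_n\in(0,1]$ with $\lambda_n\|y_n-y_n'\|=1$, so $\lambda_n\to 0$. For every fixed $t>0$, the convex combination $z_n^{(t)}\defn (1-t\lambda_n)y_n'+t\lambda_n y_n$ lies in $M(\theta_n)$ by convexity of the perturbed feasible set, and satisfies $\|z_n^{(t)}-y_n'\|=t$, so along a further subsequence $z_n^{(t)}\to w^{(t)}$ for some $w^{(t)}$ with $\|w^{(t)}-y_0^\star\|=t$. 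Convexity of $f$ in $y$ gives $f(z_n^{(t)},\theta_n)\leq (1-t\lambda_n)f(y_n',\theta_n)+t\lambda_n f(y_n,\theta_n)$; the first term has $\limsup\leq v^\star(\theta_0)$ by upper semi-continuity of $f$, while the second vanishes because $f(y_n,\theta_n)\leq v^\star(\theta_n)+1/n$ is bounded above by the already-established upper semi-continuity of $v^\star$, and $t\lambda_n\to 0$.

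Closedness of $M$ then forces $w^{(t)}\in M(\theta_0)$, and lower semi-continuity of $f(\cdot,\theta_0)$ gives $f(w^{(t)},\theta_0)\leq v^\star(\theta_0)$, so $w^{(t)}\in y^\star(\theta_0)$ for every $t>0$. Since $\|w^{(t)}-y_0^\star\|=t$, this exhibits an unbounded family inside $y^\star(\theta_0)$, contradicting the standing boundedness hypothesis on the optimizer set and completing the boundedness argument, hence the proof.
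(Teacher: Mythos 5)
The paper does not actually prove this proposition: it is invoked verbatim as a known result, citing \citet[Theorem 4.3.3]{bank1982parametric}, so there is no in-paper argument to compare against. Your self-contained proof is correct and, as it happens, reconstructs the classical argument behind that cited theorem. The split into the two one-sided statements is the right one: upper semi-continuity of $v^\star$ is the easy direction (lower semi-continuity of $M$ supplies feasible $y_n\to y_0^\star$, and upper semi-continuity of $f$ at $(y_0^\star,\theta_0)$ caps $\limsup_n v^\star(\theta_n)$), while the whole difficulty sits in showing that near-minimizers of the perturbed problems cannot escape to infinity. Your contradiction argument for that step is exactly where all three convexity hypotheses earn their keep: convexity of $M(\theta_n)$ puts $z_n^{(t)}=(1-t\lambda_n)y_n'+t\lambda_n y_n$ in the feasible set, convexity of $f(\cdot,\theta_n)$ bounds $f(z_n^{(t)},\theta_n)$ by a combination whose second term is killed by $\lambda_n\to 0$ together with the already-established upper bound on $v^\star(\theta_n)$, and closedness of $M$ plus lower semi-continuity of $f$ on $Y\times\{\theta_0\}$ convert the limit points $w^{(t)}$ into genuine optimizers at distance exactly $t$ from $y_0^\star$, contradicting boundedness of $y^\star(\theta_0)$. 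Two small points you may wish to tighten if this is written out in full: (i) if $v^\star(\theta_n)=-\infty$ for some $n$ the selection $f(y_n,\theta_n)\leq v^\star(\theta_n)+1/n$ should be replaced by $f(y_n,\theta_n)\leq -n$, after which the same limsup bounds go through; and (ii) the claim $\limsup_n(1-t\lambda_n)f(y_n',\theta_n)\leq v^\star(\theta_0)$ deserves one extra line (extract a subsequence along which $f(y_n',\theta_n)$ converges in $[-\infty,v^\star(\theta_0)]$ and use $1-t\lambda_n\to 1$), since you invoke only upper semi-continuity there. Neither affects the validity of the argument.
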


\end{APPENDICES}

\end{document}